\title{
\toptitlebar
{{\center\baselineskip 18pt
                      {\Large\bf Why Do We Need Warm-up? A Theoretical Perspective}}
} 
\bottomtitlebar}
\date{}
\author[1]{Foivos Alimisis$^*$}
\author[1]{Rustem Islamov$^*$}
\author[1]{Aurelien Lucchi}
\affil[1]{University of Basel, Switzerland}
\begin{document}

\maketitle

\def\thefootnote{*}\footnotetext{Equal contribution. The authors are listed in the alphabetical order.}\def\thefootnote{\arabic{footnote}}
\begin{abstract}
  Learning rate warm-up -- increasing the learning rate at the beginning of training -- has become a ubiquitous heuristic in modern deep learning, yet its theoretical foundations remain poorly understood. In this work, we provide a principled explanation for why warm-up improves training. We rely on a generalization of the $(L_0, L_1)$-smoothness condition, which bounds local curvature as a linear function of the loss suboptimality and exhibits desirable closure properties. We show -- both theoretically and empirically -- that this condition is satisfied by common neural architectures and accurately captures the curvature of the optimization landscape early in training. Adapting the learning rate in response to this curvature condition naturally induces a warm-up–like schedule, and we show that this choice yields provably faster convergence guarantees than using a fixed learning rate. Experiments on language and vision models show that the resulting one-parameter warm-up schedule can match tuned linear warm-up and improve over no warm-up.
\end{abstract}

\section{Introduction}


A common learning-rate (LR) schedule linearly increases the LR at the beginning of training, {\it warm-up stage} \citep{goyal2017accurate, vaswani2017attention}, and gradually decreases it near the end, {\it decay stage} \citep{loshchilov2016sgdr, vaswani2017attention, hoffmann2022empirical, zhang2023dive, dremov2025training}.

Decaying the LR is a classical requirement in the theoretical analysis of \algname{SGD}, ensuring convergence under broad conditions~\citep{defazio2023optimal, gower2021sgd}, and it has been consistently observed to improve empirical performance~\citep{loshchilov2016sgdr, hu2024minicpm, hagele2024scaling}. Recent work further demonstrates that \emph{decaying} step-sizes can improve theoretical guarantees by yielding tighter bounds~\citep{schaipp2025surprising}. By contrast, the practice of linearly increasing the LR at the start of training (warm-up phase) has become nearly ubiquitous in modern deep learning~\citep{he2016deep, hu2024minicpm, hagele2024scaling}, yet a clear theoretical understanding of why it helps optimization remains elusive. A growing body of empirical work points to several advantages of warm-up, including: $(i)$ mitigating training instabilities \citep{kosson2024analyzing, goyal2017accurate, zhang2023dive}, reducing the variance of stochastic gradients \citep{liu2019variance}, and improving the robustness to the choice of the peak LR \citep{wortsman2023small, kalra2024warmup}. 
However, these explanations remain fragmented and likely point to secondary benefits rather than the core mechanism. This raises the central question we address in this paper:

\textit{
Is there an intrinsic property of neural network loss landscapes that justifies the use of a learning-rate warm-up schedule?
}

We identify such a property and make the following contributions:

\begin{enumerate}
    
    \item We introduce a generalized smoothness condition,$(H_0, H_1)$-smoothness, in which local curvature is bounded by an affine function of the loss suboptimality. This condition enjoys useful closure properties under finite sums and affine transformations and generalizes $(L_0, L_1)$-smoothness.

    \item We provide both theoretical and empirical evidence that the $(H_0, H_1)$-smoothness condition holds for common neural network architectures, under both mean-squared error (MSE) and cross-entropy (CE) losses. During the initial phase of training, $(H_0, H_1)$-smoothness realistically captures the loss-curvature dynamics.

    \item We theoretically demonstrate that, for functions that satisfy the $(H_0, H_1)$ condition, Gradient Descent (\algname{GD}) achieves faster convergence with a warm-up schedule than with a fixed LR. We prove this by deriving upper complexity bounds for \algname{GD} with the adaptive schedule and lower complexity bounds for fixed-step \algname{GD}.

    \item  We provide empirical evidence that the resulting tuned one-parameter schedule matches the performance of tuned linear warm-up on language and vision models, and improves upon training without warm-up.
\end{enumerate}

\section{Related Works}

\paragraph{Warm-up.}

LR scheduling plays a central role in the success of modern deep learning training pipelines. A wide range of scheduling strategies, including LR decay, annealing, and warm-up, have been developed to improve convergence and generalization \citep{mccandlish2018empirical, sutskever2013importance, touvron2023llama}.

Among these different strategies, warm-up has become a key component in modern training pipelines, particularly for Transformers~\citep{vaswani2017attention, goyal2017accurate}. It is commonly credited with enhancing training stability~\citep{kosson2024analyzing, gotmare2018closer}, improving robustness to the choice of LR~\citep{wortsman2023small}, and enabling the use of larger peak LR~\citep{kalra2024warmup}. 
Warm-up has also been linked to improved generalization, either by reducing mini-batch gradient noise~\citep{liu2019variance}, encouraging convergence to flatter minima~\citep{smith2020generalization}, or by complementing other scheduling techniques~\citep{huang2020improving, xiong2020layer, wortsman2023small}. From a geometric perspective, \citet{gilmer2021loss, roulet2024stepping} observed that warm-up induces a curvature reduction phase in which the largest Hessian eigenvalue decreases.

Although warm-up is well supported by empirical evidence \citep{vaswani2017attention, wortsman2023small, dremov2025training}, its theoretical foundations remain limited. Most existing convergence analyses of (stochastic) gradient-based optimizers focus on the decay phase. For example, \citet{wen2024understanding} uses a river-valley model to study neural loss landscapes, but their framework focuses on the stable and decay stages of the LR. Likewise, \citet{schaipp2025surprising, attia2025benefits} showed that decaying LR provides theoretical benefits and that convergence bounds closely align with empirical training curves, yet their analysis does not account for the warm-up phase.
\citet{kondo2025accelerating} analyze a scheme with exponentially increasing batch size and LR, showing faster convergence for (\algname{GD}). Yet, the requirement of rapidly growing batches limits its practicality.

Finally, several complementary explanations for the role of warm-up have been proposed. For instance, \citet{xiong2020layer} attribute the necessity of warm-up in Transformer training primarily to the placement of layer normalization. In a different vein, \citet{kosson2024analyzing} demonstrate that explicitly constraining the norm of parameter updates—similar to gradient clipping—can only partially reduce the reliance on warm-up.

Despite extensive prior research on warm-up, we are not aware of any theoretical framework that explains its benefits in terms of convergence. In this work, we address this gap by relying on a smoothness-type condition that upper bounds the curvature of the landscape using an affine expression of the function suboptimality. This condition represents well the loss landscape at the initial stage of training. For this function class, the curvature-adaptive step-size naturally increases as the loss decreases, giving a warm-up mechanism.

\paragraph{Curvature dynamics.}

The distinct phases of curvature evolution observed during neural network training have been studied extensively. \citet{cohen2021gradient} showed empirically that curvature typically increases over the course of training until it reaches a learning-rate--dependent threshold. Beyond this point, gradient descent often enters the so-called \emph{edge-of-stability} regime, in which curvature fluctuates around this threshold. These progressive sharpening and edge-of-stability phases are now relatively well understood within the deep learning community. More recent work has identified an additional \emph{initial} phase that precedes progressive sharpening and is characterized by a reduction in curvature \citep{kalra2023universal,kalra2024warmup}. This behavior is consistent with both our theoretical analysis and our experimental observations. Unlike the later phases, however, this early curvature-reduction regime remains poorly understood and is likely the phase in which learning-rate warm-up plays a critical role.

\paragraph{Generalized Smoothness.}

The conventional smoothness assumption in optimization theory requires the Hessian to satisfy a uniform bound $\|\nabla^2 f(w)\| \leq L$, but this constraint proves to be overly restrictive when applied to neural network training, as noted by \citet{zhang2019gradient}. To address this limitation, they introduced the more flexible $(L_0, L_1)$-smoothness condition, which allows the Hessian norm to grow linearly with the gradient magnitude: $\|\nabla^2 f(w)\| \leq L_0 + L_1\|\nabla f(w)\|$ for non-negative constants $L_0, L_1 \geq 0$. This relaxed framework naturally motivates gradient normalization techniques---both soft normalization and hard clipping---as optimal LR strategies that can significantly improve gradient descent convergence rates~\citep{zhang2020improved,zhao2021convergence,faw2023beyond,wang2023convergence,gorbunov2024methods,vankov2024optimizing,li2023convex}.

Despite its advantages, the $(L_0, L_1)$-smoothness condition has important limitations in practice, especially in explaining warm-up schedules. More precisely, at the beginning of training, the gradient-dependent nature of the $(L_0, L_1)$-smoothness condition leads to counterintuitive implications for LR scheduling: in some cases, the gradient norm is observed to increase during the early iterations~\citep{xie2023overlooked, defazio2023optimal, defazio2025gradients}. As a result, the $(L_0, L_1)$-bound becomes increasingly loose, which theoretically prescribes \emph{decreasing} step-sizes through gradient clipping. This stands in direct contrast to empirical best practices, where \emph{increasing} LR is typically employed at the beginning of training. 

These theoretical and practical inconsistencies highlight the need for a more sophisticated smoothness characterization that can adequately capture and explain LR warm-up dynamics. Since dependence on the gradient norm can lead to the wrong qualitative step-size prediction during early training, a natural replacement is the function-value suboptimality, which decays monotonically and gives a direct measure of the optimization target. We name this modified smoothness class as $(H_0, H_1)$-smoothness. Interestingly, a recent work by \citet{vaswani2025armijo} made a similar observation in a different context, showing that Armijo line search can achieve faster convergence than \algname{GD} with a constant step-size. Their analysis verifies this condition for several simple models but relies on additional assumptions from \citet{taheri2023fast}: $(i)$ bounding the gradient norm by the function suboptimality, $(ii)$ adopting the unrealistic exponential loss, $(iii)$ assuming data separability, and $(iv)$ restricting trainability to the input layer. In contrast, our analysis establishes the validity of the $(H_0, H_1)$-smoothness condition under a mild regularity assumption on the weights, which can be ensured either explicitly through balancedness or implicitly via standard L2 regularization. Our work proposes a curvature-based condition tailored to the early sharpness-reduction phase and uses it to justify warm-up as a stability-speed tradeoff.\footnote{A concurrent work \citep{liu2025theoretical} studies a warm-up stage using a similar condition. We discuss the differences in \Cref{sec:concurrent_work}.}

\section{The $(H_0,H_1)$-Smoothness Condition}
\label{sec:new_smoothness}

As an initial proof of concept, we analyze the \emph{derivative of the spectral norm of the Hessian} of a simple $1\times1\times1$ network with linear activation $\phi(x)=x$:
\begin{equation*}
    f(u,v) = (y - uvx)^2
\end{equation*}
over the course of the gradient flow ODE, assuming that $x,y \neq 0$. For this model, we provide the following result.
\begin{restatable}{proposition}{propsimplenetwork}
\label{prop:1x1x1 network}
During the trajectory $(u(t),v(t))$ of gradient flow starting sufficiently close to $(0,0)$, the Hessian $H(t)$ of the loss function $f(u(t),v(t)) = (y-u(t) v(t)x)^2$, satisfies

(i) $\frac{d}{dt} \|H(t)\|_2 < 0$, if $\frac{x^2 (u(0)-v(0))^2}{4} \leq (y-2u(t)v(t)x)^2$.

(ii) $\frac{d}{dt} \|H(t)\|_2 > 0$, when $y<2u(t)v(t)x$.

\end{restatable}

The proof can be found in Appendix~\ref{app:1x1x1 network}. Proposition~\ref{prop:1x1x1 network} reveals two distinct phases of curvature evolution under initialization close to~$0$, as is standard in practice. During the initial phase of training, condition~$(i)$ holds, leading to curvature reduction. This behavior has also been observed in prior work \citep{kalra2023universal,kalra2024warmup}, though it remains poorly understood. As training progresses, gradient flow drives the parameters into a regime in which condition~$(ii)$ is satisfied, resulting in progressive sharpening. This phenomenon was identified early in the study of the edge of stability \citep{cohen2021gradient} and has attracted significant attention in recent years. There are three remarks worth highlighting about this proposition:

\begin{enumerate}
\item If $u(0)=v(0)$, the two phases are continuous, and the transition from curvature reduction to progressive sharpening occurs precisely when $u(t)v(t)=y/2$.

\item The switching behavior predicted by this simple model closely matches our empirical observations in language model training (see Figure~\ref{fig:llm_verification_second_row}).

\item The proposition does not capture later training stages, such as the edge of stability, as it relies on gradient flow dynamics. These regimes can be characterized more precisely using alternative approaches, such as the central flow framework of \citet{cohenunderstanding},  which is beyond the scope of this work.
\end{enumerate}

These observations motivate us to focus on an earlier curvature reduction phase, which has been largely overlooked but is closely tied to LR warm-up. We characterize this early stage by relating curvature to function value suboptimality, as summarized in the following definition.

\begin{definition}\label{asmp:H0H1_smoothness}
    A function $f \colon\R^d\to\R$ with minimum $f^*~>~-\infty$ is called $(H_0, H_1)$-smooth for some $H_0,H_1\ge 0$, if for any $w\in\R^d$ it holds
    \begin{equation*}
    \|\nabla^2f(w)\|_2 \le H_0 + H_1(f(w) - f^*).
    \end{equation*}
    $\cH\eqdef \{f\colon\R^d\to\R\mid f\text{ is }(H_0,H_1)\text{-smooth}\}$ denotes the class of all $(H_0,H_1)$-smooth functions.
\end{definition}

Notably, any $(L_0, L_1)$-smooth function also satisfies $(H_0, H_1)$-smoothness, implying that the $(H_0, H_1)$-smooth class contains the previously studied $(L_0, L_1)$-smooth class, demonstrating the generality of \Cref{asmp:H0H1_smoothness}. Moreover, we show that $\cH$ is closed under finite sums and affine transformations, in contrast to the $(L_0, L_1)$-smooth class, for which neither operation is preserved, as demonstrated by simple counterexamples. Formal statements and proofs are deferred to \Cref{sec:more_general_class}. Finally, \Cref{asmp:H0H1_smoothness} admits a natural extension in which the linear dependence on the suboptimality $f(w)-f^*$ is replaced by $\cL(f(w)-f^*)$, where $\cL$ is a general increasing function, in the spirit of \citet{li2023convex}. We leave the study of this extension to future work.

\subsection{Theoretical Justification of $(H_0,H_1)$-Smoothness}

In this section, we show that under mild regularity conditions on the weights, enforced either explicitly via constraints or implicitly through $\ell_2$ regularization, the $(H_0, H_1)$-smoothness condition holds for several basic deep learning architectures. 

\subsubsection{Standard Feedforward Networks}
\label{sec:theory_MLP}

We begin with vanilla feedforward networks and distinguish two regimes: $(i)$ the initial phase of training and $(ii)$ the global regime. We focus on the initial phase, where both sharpness and loss suboptimality decay, making our bounds predictive. Detailed proofs are in \Cref{app:proofs_neural_nets}.

\paragraph{Results under Balancedness.} 
A well-known property of gradient flow in feedforward neural networks is the balanced evolution of the weight matrices $\{W_i\}_{i=1}^{\ell}$. For linear networks, this implies $W_i(t)^\top W_i(t) = W_{i+1}(t) W_{i+1}(t)^\top$, while for networks with homogeneous nonlinear activations it yields $\|W_i(t)\|_{\rm F} = \|W_{i+1}(t)\|_{\rm F}$~\citep[Theorem~2.2 and Corollary~2.1]{du2018algorithmic}. The latter condition is weaker than the former.

\begin{restatable}{proposition}{thmdeeplinear}
\label{prop:deep_linear}
   Consider a deep linear network with $\ell$ layers and MSE loss:
\[
    f(W) \equiv f(W_1,\dots, W_{\ell}) = \|Y- W_1 W_2 \dots W_{\ell} X\|_{\rm F}^2,
\]
where $Y \in \mathbb{R}^{c \times m}$ are the labels, $X \in \mathbb{R}^{d \times m} (d\leq m)$ is the input, and $W_i \in \mathbb{R}^{n_{i-1} \times n_i}$, where $n_0 = c$ and $n_{\ell} = d$ are layer dimensions. In the weakly balanced subspace, $\|W_1\|_F = \|W_2\|_F$, we have:

$(i)$ If $\|W_1\|_{\mathrm F}^\ell \|X\|_2 < (1-\frac{1}{\ell}) \sqrt{f(W)}$, then it holds
\begin{equation*}
\|\nabla^2 f(W)\| \leq H_0^{\text{warm-up}} + H_1^{\text{warm-up}} (f(W)-f^*)
\end{equation*}
with constants defined in Eq.~\eqref{eq:warmup_constants_linear_network}.

$(ii)$ If we additionally assume strong balancedness, then it holds
\begin{equation*}
\|\nabla^2 f(W)\| \leq H_0 + H_1 (f(W)-f^*) \hspace{2mm} \text{for all} \hspace{1mm} W,
\end{equation*}
with constants defined in Eq.~\eqref{eq:global_constants_linear_network}.

\end{restatable}

We note that condition $(i)$ is consistent with the early phase of training, where weight norms are typically small while the loss remains large, especially under commonly used near-zero initializations. Under standard conditions on the data matrices, $H_0^{\text{warm-up}}, H_1^{\text{warm-up}}$ are smaller than the global constants $H_0, H_1$ respectively. In Proposition \ref{prop:deep_non_linear_many_nonlinearities}, we generalize this result to a non-linear network with leaky-ReLU non-linearities.

\paragraph{Results under L2 Regularization.}
Analogous to balancedness, another approach to constrain the weight space is through L2 regularization. Here, we present a result that validates the $(H_0, H_1)$-smoothness condition for two-layer neural networks with general activation functions and cross-entropy loss. Its proof can be found in Appendix \ref{app:proofs_neural_nets}.

\begin{restatable}{proposition}{crossentropy}\label{prop:crossentropy}
    Consider a $2$-layer non-linear model with cross-entropy loss and L2 regularization:
    \begin{align*}
        &f(W) \equiv f(W_1, W_2) = \\& -Y \log(P)^\top-(\1-Y) \log(\1-P)^\top  \\& + \frac{\lambda_1}{2} \|W_1\|_{\rm F}^2 \;+\; \frac{\lambda_2}{2} \|W_2\|_{\rm F}^2,
    \end{align*}
where $Y\in\R^{1\times m}$ are true labels, and $P=\sigma(W_1 \phi(W_2 X))$ is the output of the model with the activation function $\phi$ such that $|\phi(s)| \leq C_1 |s|, |\phi'(s)| \leq C_2$ and $|\phi''(s)| \leq C_3$ for all $s\in\R$, sigmoid function $\sigma,$ and weight matrices $W_1\in\R^{1\times n_1}, W_2\in\R^{n_1 \times d}$.
Then, it holds

$(i)$ If $\|W_1\|_F, \|W_2\|_F$ are sufficiently small compared to the loss value in the sense of Eq.~\eqref{eq:cross_entropy_local_condition},
then $(H_0,H_1)$-smoothness holds
with constants defined as in Eq.~\eqref{eq:warmup_cross_entropy}.

$(ii)$ For all $W$, $(H_0,H_1)$-smoothness holds for constants $H_0$ and $H_1$ defined as in Eq.~\eqref{eq:CE_regularization_global_constants}.

\end{restatable}

Similarly to Proposition \ref{prop:deep_linear}, the bound derived in case (i) is sharper than the one derived in case (ii). This is natural, as case (i) concerns only the initial slice of the landscape.

Notably, we show in \Cref{app:l0l1_failure} that $(L_0, L_1)$-smoothness fails to hold even for simple two-layer networks under $\ell_2$ regularization or weight balancedness.

\subsubsection{Transformers}
\label{sec:theory_transformers}

In this section, we study a simple transformer architecture with a single attention layer trained under $\ell_2$ regularization, following the setup of \citet{zhang2024trained}. We defer the description of the model to Appendix \ref{app:transformers_proofs} and state only an informal version of the main result here: the corresponding in-context loss function is $(H_0, H_1)$-smooth. The proof of this result can be found in Appendix \ref{app:transformers_proofs} (see formal version in Prop. \ref{prop:transformers_in_context_formal}).

\begin{restatable}{proposition}{transformers}(informal)
\label{prop:in_context_transformers}
Consider a transformer model for in-context learning in continuous variables with MSE loss, or in binary variables with CE loss. Under mild regularity conditions on the distributions of the input and output spaces, it holds
\begin{equation*}
    \|\nabla^2 f(P,Q)\|_2 \leq
    H_0 + H_1 (f(P,Q)-f^*),
\end{equation*}
where $f$ is the in-context loss function, $f^*$ the minimum, $P,Q$ are parameters of the model, and $H_0$ and $H_1$ are some positive constants. When $\|P\|_F$ and $\|Q\|_F$ are small compared to the loss value, one can sharpen $H_0$ and $H_1$ similarly to Propositions \ref{prop:deep_linear} and \ref{prop:crossentropy}.
\end{restatable}

\begin{figure*}
    \centering
    \begin{tabular}{ccc}
        \includegraphics[width=0.3\linewidth]{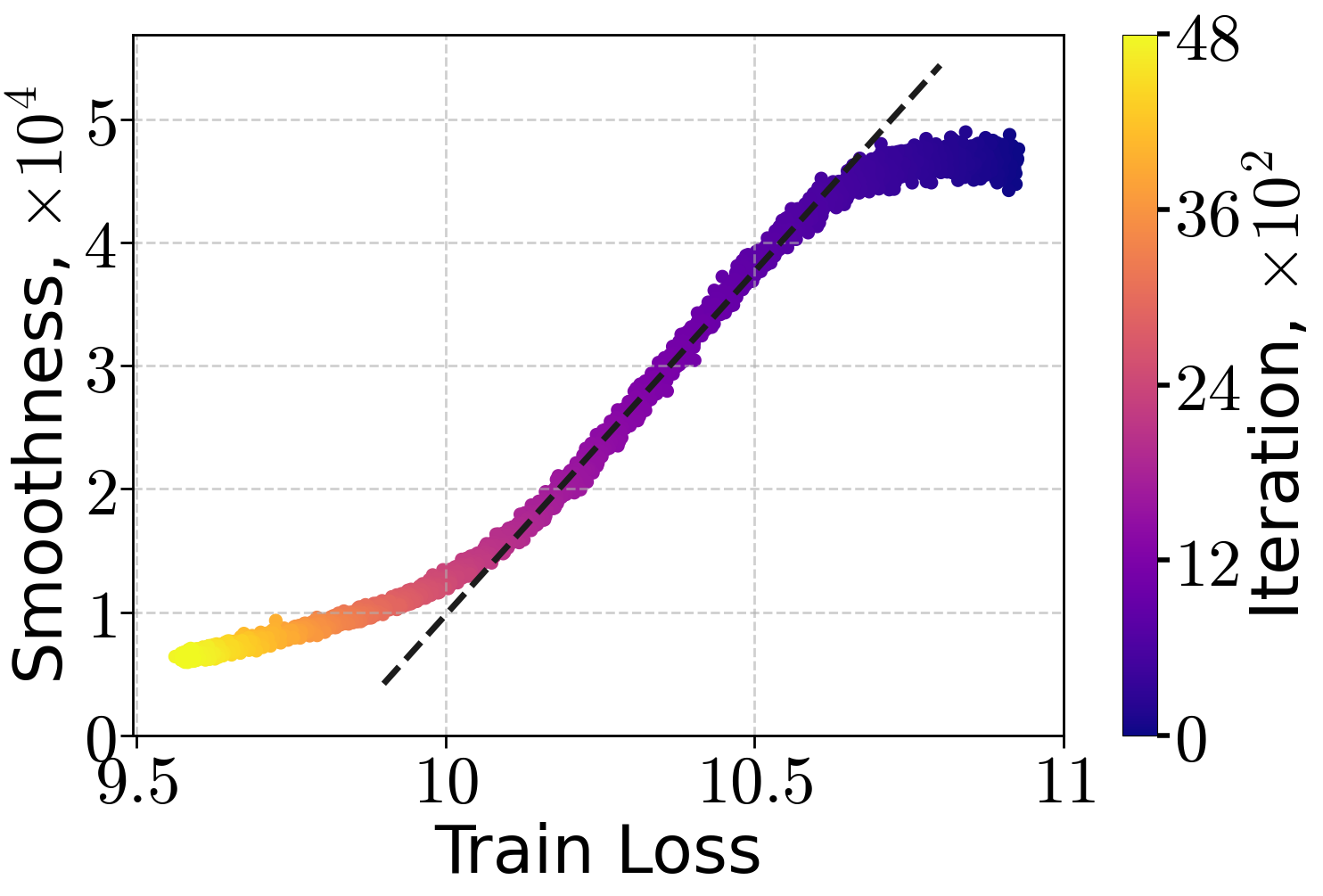} & 
        \includegraphics[width=0.3\linewidth]{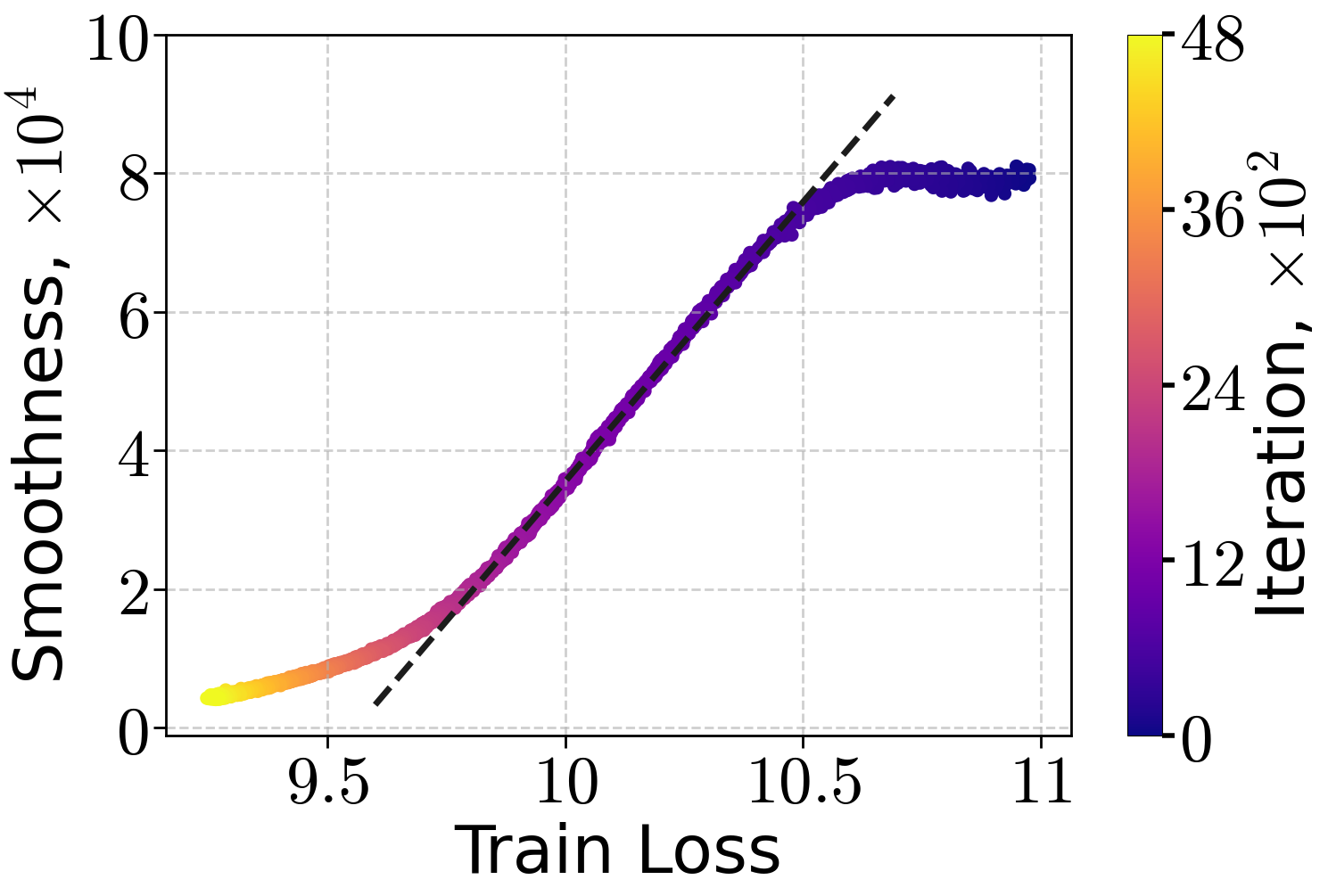} &
        \includegraphics[width=0.3\linewidth]{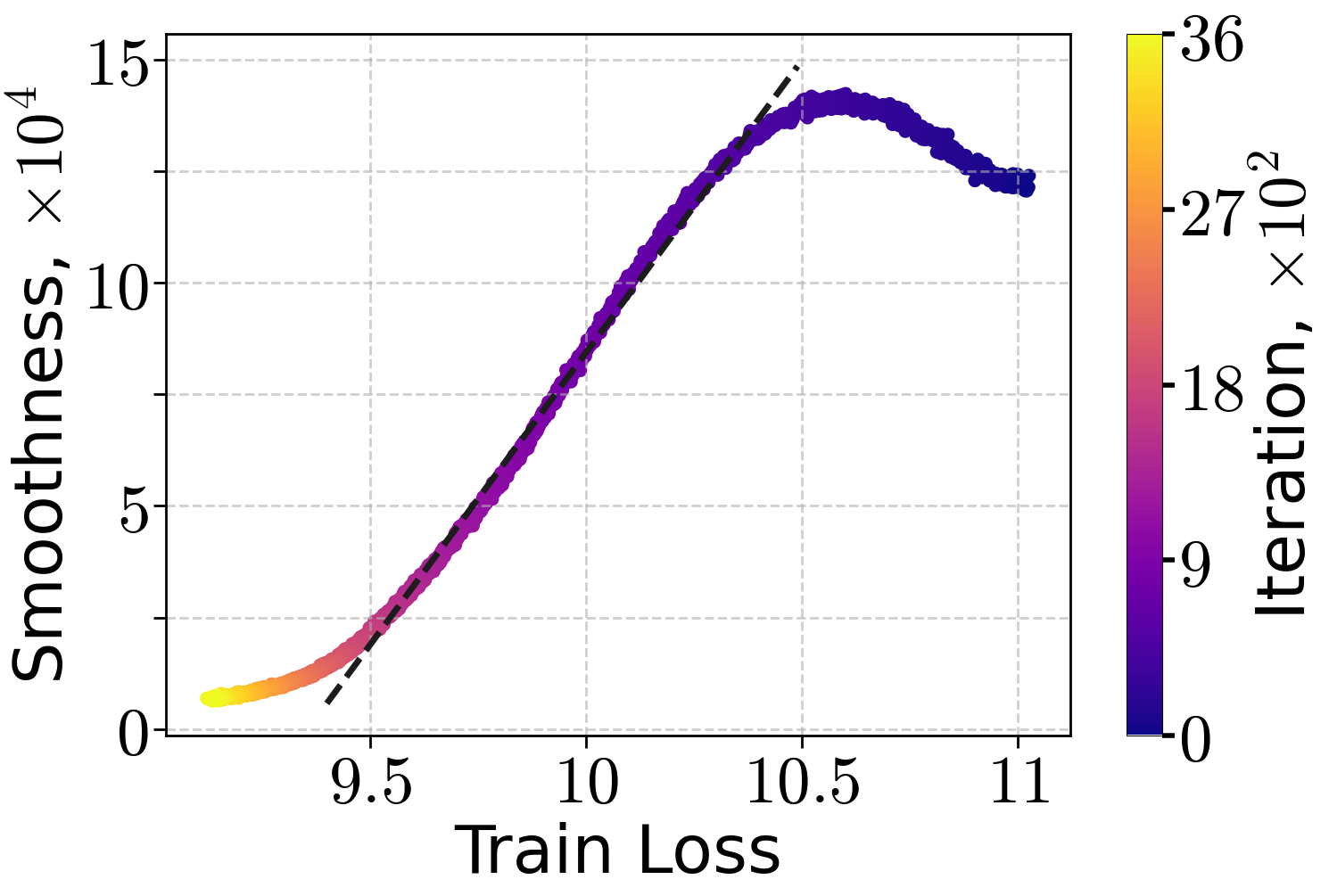} \\
        {\small 70M on FineWeb} &
        {\small 160M on FineWeb} &
        {\small 410M on FineWeb}
    \end{tabular}
    \caption{Local smoothness approximation versus training loss for language models of varying sizes on the FineWeb dataset, using \algname{SGD} with gradient clipping $1$ with a constant LR of $10^{-4}$. Each dot represents estimated local smoothness and stochastic training loss, with color indicating training progress, while the black dashed line shows the best linear fit. For much of early training, the relation is well-approximated by a line.}
    \label{fig:llm_verification}
\end{figure*}

\begin{figure*}
    \centering
    \begin{tabular}{cc}
        \includegraphics[width=0.3\linewidth]{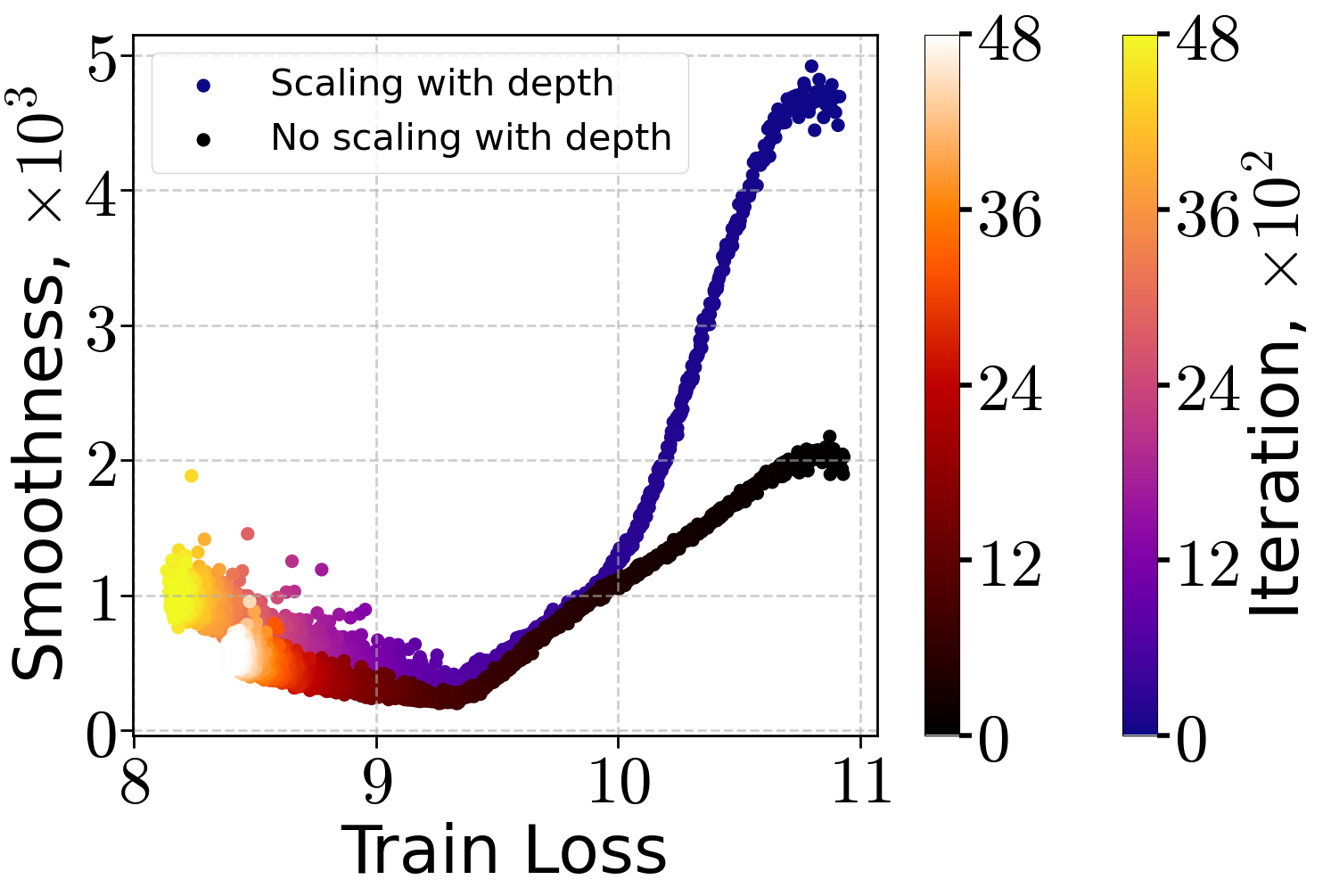} &

        \includegraphics[width=0.3\linewidth]{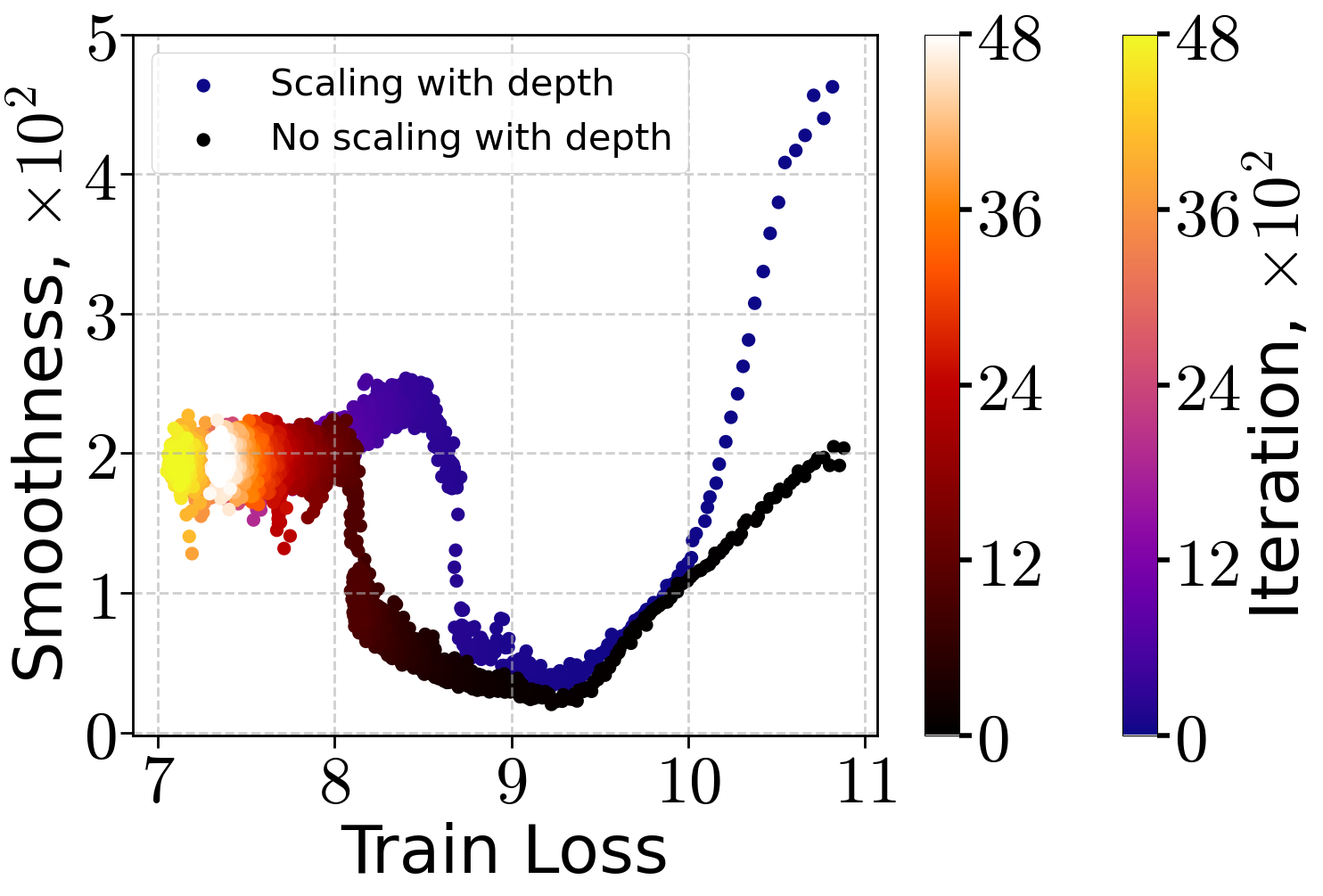} \\
        \small{LR $10^{-3}$} &
    \small{LR $10^{-2}$} 
    \end{tabular}
    \caption{Training of a 70M model on FineWeb dataset with \algname{SGD} with  gradient clipping to $1$, varying fixed learning rate and initialization scheme. Each left color bar corresponds to initialization around $0$ with variance that depends on the depth of the model (as is usually done in modern transformers), while the right one to initialization around $0$ with fixed variance.}
    \label{fig:llm_verification_second_row}
\end{figure*}

\subsection{Empirical Justification of $(H_0,H_1)$-Smoothness}\label{sec:empirical_justification}

We next turn to verifying the proposed condition in practical settings. Specifically, we examine Transformer-based language models with 70M, 160M, and 410M parameters using the NanoGPT implementation \citep{radford2019language, karpathy2022nanogpt}. Experiments are conducted on the FineWeb dataset \citep{penedo2024fineweb} using \algname{SGD} with gradient clipping at $1$ and a small constant learning rate of $10^{-4}$. Using such a conservative LR and gradient clipping allows the optimizer to progress slowly, thereby probing the landscape around initialization in more detail. As a stochastic proxy for local smoothness at iteration $k$, we compute $$\frac{\|\nabla f_{S_k}(w_{k+1})-\nabla f_{S_{k-1}}(w_k)\|}{\|w_{k+1} -w_k\|},$$ where $S_k$ denotes the mini-batch at iteration $k$, following prior work \citep{zhang2019gradient, riabinin2025gluon}. As shown in \Cref{fig:llm_verification}, the estimated smoothness decays approximately linearly, indicating that the proposed condition provides a reasonable smoothness approximation for real-world models.

To provide a more complete picture of smoothness dynamics over the full training trajectory and the role of initialization, we train a 70M language model using larger constant learning rates and two different initialization schemes. The results are shown in Figure~\ref{fig:llm_verification_second_row} . We observe an initial curvature reduction phase, previously identified in \citep{kalra2024warmup,kalra2023universal}, followed by progressive sharpening and the edge of stability phases \citep{cohen2021gradient}. 

Notably, the initial sharpness is higher when the initialization is scaled with depth, a common practice in language model training \citep{sun2025curse,zhang2019improving}. We hypothesize that this effect may partly explain the need for learning rate warm-up in Transformer training, and defer a more detailed discussion of initialization effects to \Cref{sec:initialization}. Moreover, the transition from curvature reduction to progressive sharpening consistently occurs at the same loss value, independent of the learning rate and initialization, further highlighting the link between sharpness and loss. The initial curvature reduction phase accounts for approximately $20\%$ of the total training loss reduction.

We next turn to image classification on ImageNet-32 \citep{chrabaszcz2017downsampled}, training both ResNet50 \citep{he2016deep} and ViT-Tiny \citep{dosovitskiy2020vit}. The results, shown in \Cref{fig:resnet50_vit_verification}, indicate that a linear function provides a good approximation of the relationship between local smoothness and training loss. Compared to language models, however, the points are more widely dispersed and have larger variance. Taken together, \Cref{fig:llm_verification,fig:resnet50_vit_verification} support the view that $(H_0, H_1)$-smoothness offers a reasonable approximation of smoothness in early training.

\begin{figure}
    \centering
    \begin{tabular}{cc}
    \includegraphics[width=0.3\linewidth]{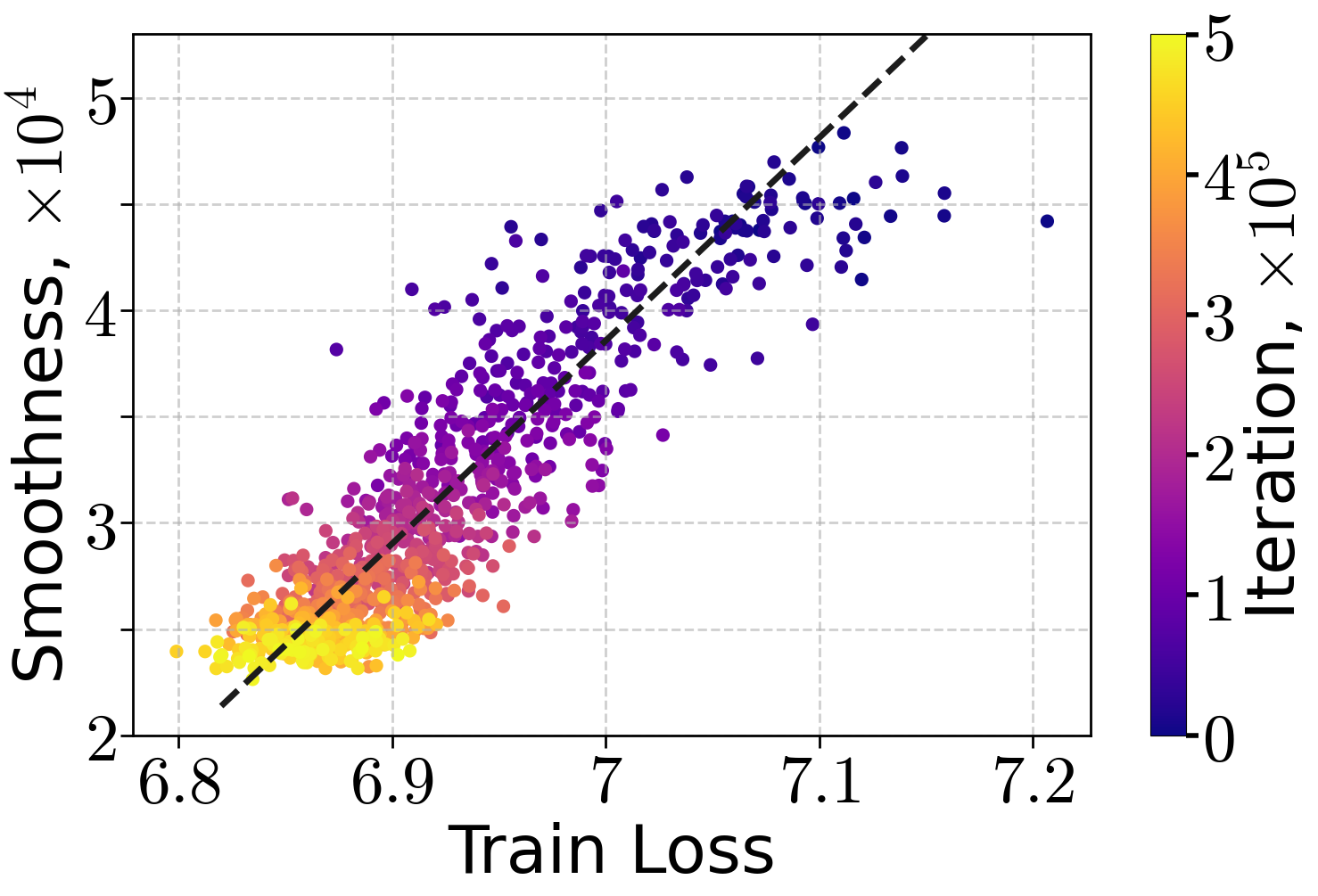} & 
    \includegraphics[width=0.3\linewidth]{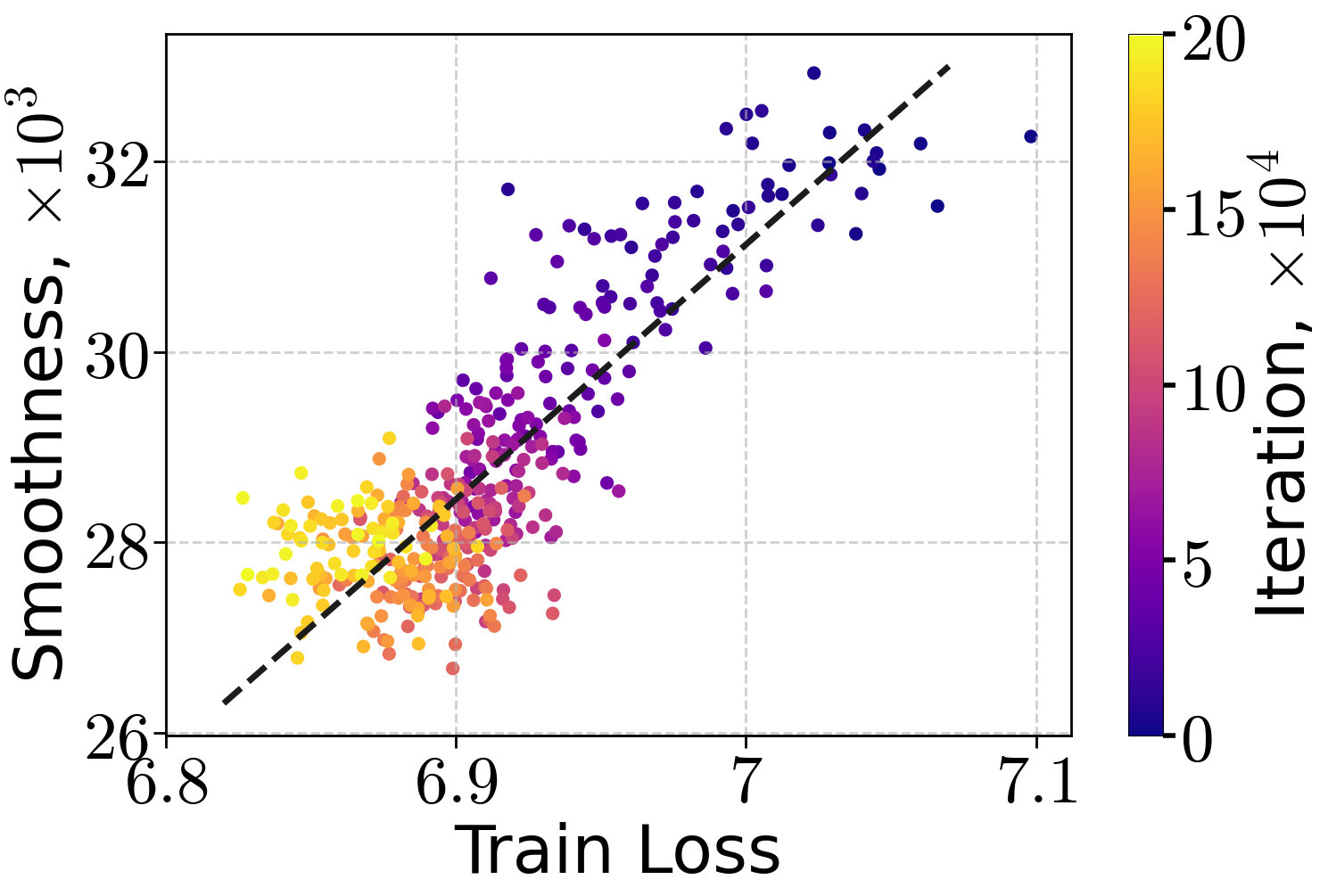} \\
    ResNet50 on ImageNet-32 &
    ViT-Tiny on ImageNet-32
    \end{tabular}
    \caption{Stochastic local-smoothness proxy versus training loss for ResNet50 (left) and ViT-Tiny (right) on ImageNet-32, trained with \algname{SGD} and a constant LR $10^{-4}.$ }
    \label{fig:resnet50_vit_verification}
    
\end{figure}

\section{Theoretical Analysis under $(H_0,H_1)$-Smoothness}\label{sec:convergence_analysis}

We study the minimization problem $\min_{w} f(w)$, which appears in various machine learning applications. Here $w\in\R^d$ denotes parameters of some model,  $d$ is the number of parameters, and $f$ is the loss that measures the performance. We define $f^*\eqdef \min_w f(w) > -\infty$ as the optimal loss. The set $\cS$ contains all global minimizers of the objective $f.$ Importantly, our theoretical results apply to the regime in which \Cref{asmp:H0H1_smoothness} holds, which in our setting is most relevant during the early sharpness-reduction phase of neural network training. 
The proofs of this section are deferred to \Cref{sec:conv_proofs_non_convex} and \ref{sec:lower_bounds_proofs}.

\subsection{Notation and Assumptions}

We conduct our analysis for well-known classes of non-convex functions, presented below.

\begin{definition}[\citet{liu2023aiming}]\label{asmp:aiming_condition}
    A function $h$ satisfies the Aiming condition with a constant $\theta > 0$ around the set $\cX$, if $\langle\nabla h(w), w - \pi_{\cX}(w) \rangle \ge \theta(h(w) - h^*)$ holds for all $w\in\R^d$. Here, $\pi_{\cX}(w)$ is the projection of $w$ onto the set $\cX$, and $h^*\eqdef \min_{w\in\R^d}h(w).$
\end{definition}

\begin{definition}[\citet{polyak1963gradient}]\label{asmp:pl_condition}
   A function $h$ satisfies Polyak-{\L}ojasiewicz  (PL) condition with a constant $\mu > 0$, if $\|\nabla h(w)\|^2 \ge 2\mu(h(w)-h^*)$ holds for all $w\in\R^d$.
\end{definition}

\subsection{Lower Bounds and Convergence of GD with Constant Step-size}

To enable a meaningful comparison between the step-size schedule suggested by the $(H_0, H_1)$-condition and an alternative fixed step-size strategy, we derive lower complexity bounds for the latter. This approach follows the idea of Theorem 4 in \cite{zhang2019gradient}: we first consider a rapidly growing function and show that, for \algname{GD} to converge, the step-size must be sufficiently small. Next, we examine a slowly growing function and demonstrate that this previously derived step-size constraint leads to slow convergence of the algorithm. The complete proof can be found in Appendix \ref{sec:lower_bounds_proofs}.

\begin{restatable}{theorem}{lowerbounds}\label{th:lower_bounds}
    Let $f$ belong to the class $\mathcal{H}$ of $(H_0,H_1)$-smooth functions. Then it holds:
\begin{enumerate}
    \item To satisfy $\|\nabla f(w_K)\| \;\leq \varepsilon$ for a general non-convex function $f$, \algname{GD} with constant step-size initialized at $w_0$, needs at least
    \begin{equation*}
        K \geq \frac{H_1 (f(w_0)-f^*)}{\log (f(w_0)-f^*)+1} \frac{f(w_0)-f^*-2 \epsilon^2}{8 \epsilon^2}
    \end{equation*}
    iterations.

    \item To satisfy $f(w_K)-f^* \leq \varepsilon$ for convex function $f$, \algname{GD} with constant step-size initialized at $w_0$, needs at least
    \begin{align*}
    K \geq \frac{H_1 (f(w_0)-f^*)}{\log(f(w_0)-f^*)+1} \frac{f(w_0)-f^*-\epsilon}{4 \epsilon}
    \end{align*}
    iterations.

    \item To satisfy $f(w_K)-f^* \leq \varepsilon$ for $\mu$-PL function $f$ (but not necessarily  convex), \algname{GD} with constant step-size initialized at $w_0$, needs at least
\begin{equation*}
    K \geq \frac{H_1}{4 \mu} \frac{(f(w_0)-f^*)}{\log(f(w_0)-f^*)+1} \log\left(\frac{f(w_0)-f^*}{\epsilon} \right)
\end{equation*}
iterations.

\end{enumerate}

\end{restatable}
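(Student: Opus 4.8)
The plan is to follow the two-instance strategy sketched before the statement: first exhibit a rapidly-growing function on which \algname{GD} diverges unless its step-size is small, and then exhibit a slowly-growing companion on which that same step-size restriction forces slow convergence. Since both instances will lie in $\cH$ with the same constants $(H_0,H_1)$, the conclusion will hold for \emph{every} fixed step-size: if the algorithm picks a large $\eta$ it fails on the first instance (so $K=\infty$), and if it picks a small $\eta$ it is slow on the second.

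Step 1 (step-size ceiling). I would take the one-dimensional function $f_1(w) = \tfrac{H_0}{H_1}\bigl(\cosh(\sqrt{H_1}\,w)-1\bigr)$, which has $f_1^*=0$ and satisfies $f_1''(w) = H_0\cosh(\sqrt{H_1}\,w) = H_0 + H_1 f_1(w)$, so it saturates the $(H_0,H_1)$-smoothness bound and lies in $\cH$. Choosing $w_0>0$ with $f_1(w_0)-f_1^*=\Delta_0 := f(w_0)-f^*$ fixes $\cosh(\sqrt{H_1}\,w_0)=1+\tfrac{H_1}{H_0}\Delta_0$. A single step reads $w_1 = w_0 - \eta\,\tfrac{H_0}{\sqrt{H_1}}\sinh(\sqrt{H_1}\,w_0)$, and using symmetry and convexity of $f_1$ I would show the iterates stay bounded only if $w_1\ge -w_0$, i.e. only if $\eta\le\eta_{\max} := \tfrac{2 w_0\sqrt{H_1}}{H_0\sinh(\sqrt{H_1}\,w_0)}$; for $\eta>\eta_{\max}$ one has $|w_1|>|w_0|$, and a monotonicity argument on the map $w\mapsto w-\eta f_1'(w)$ shows the gap grows without bound. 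The quantitative core is to translate $\eta_{\max}$ into the form in the theorem: since $\sqrt{H_1}\,w_0 \approx \log(\tfrac{2H_1}{H_0}\Delta_0)$ and $\sinh(\sqrt{H_1}\,w_0)\approx \tfrac{H_1}{H_0}\Delta_0$ for large $\Delta_0$, this gives $\eta_{\max}\asymp \tfrac{\log\Delta_0+1}{H_1\Delta_0}$, which is exactly the reciprocal of the prefactor $\tfrac{H_1\Delta_0}{\log\Delta_0+1}$ appearing in all three bounds.

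Step 2 (slow instance). With $\eta\le\eta_{\max}$ I would run \algname{GD} on a slowly-growing function in $\cH$ initialized at suboptimality $\Delta_0$, chosen per regularity class. For the PL case (part 3) I take the quadratic $f_2(w)=\tfrac{\mu}{2}w^2$, which is exactly $\mu$-PL and lies in $\cH$ whenever $\mu\le H_0$; here $f_2(w_k)-f^*=(1-\eta\mu)^{2k}\Delta_0$, so reaching accuracy $\epsilon$ needs $k\gtrsim \tfrac{1}{\eta\mu}\log\tfrac{\Delta_0}{\epsilon}\ge \tfrac{1}{\eta_{\max}\mu}\log\tfrac{\Delta_0}{\epsilon}$, matching the stated bound. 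For the convex and general non-convex cases (parts 2 and 1) the quadratic contracts too fast, so I would instead use a flat, slowly-growing instance — a scaled Huber-type function $f_2=s\cdot\mathrm{Huber}_\delta$ with $\delta\ge s/H_0$ to keep it in $\cH$ — along whose linear region \algname{GD} advances only $\eta s$ per step. Counting the steps needed to push the suboptimality (resp. the gradient norm) below the threshold, with scale $s$ tuned to $\epsilon$ (resp. $\epsilon^2$), produces the factors $\tfrac{\Delta_0-\epsilon}{\eta_{\max}\,\epsilon}$ and $\tfrac{\Delta_0-2\epsilon^2}{\eta_{\max}\,\epsilon^2}$.

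I expect the main obstacle to be Step 1: making the overshoot-to-divergence argument rigorous and extracting the \emph{exact} logarithmic dependence of $\eta_{\max}$ on $\Delta_0$. The divergence claim needs that once $|w_1|>|w_0|$ the discrepancy grows monotonically, which requires care because $f_1'$ is convex and superlinear; and the logarithm — the distinctive feature separating these bounds from the classical $(L_0,L_1)$ analysis — arises precisely from inverting $\cosh$, so it must be controlled with matching two-sided estimates rather than loose asymptotics. The remaining work in Step 2 consists of routine descent and contraction estimates together with the bookkeeping of constants.
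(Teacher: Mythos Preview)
Your proposal is correct and follows essentially the same two-instance strategy as the paper: the paper uses a piecewise exponential/quadratic function in place of your $\cosh$ for the step-size ceiling and a quadratic--linear--exponential ``runway'' construction in place of your Huber function for the slow instances, but the argument structure, the divergence-by-overshoot monotonicity step, and the resulting $\eta_{\max}\asymp(\log\Delta_0+1)/(H_1\Delta_0)$ ceiling are identical. Your $\cosh$ choice is in fact slightly cleaner since it exactly saturates the $(H_0,H_1)$ bound and is $C^\infty$, whereas the paper's piecewise functions are only $C^1$ and require a specific normalization ($H_0=H_1/2$ for the ceiling, $H_0=1$ for the runway).
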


This result covers the one in \citep{zhang2019gradient} as a special case, and it also covers convex and $\mu$-PL functions. These lower bounds concern fixed step-sizes chosen to be stable over the relevant $(H_0, H_1)$-smooth class; the initialization dependence appears because such a step-size must account for the worst early curvature of a non-uniformly smooth landscape. We show next that, for $(H_0, H_1)$-smooth functions, a gradually increasing step-size schedule can accelerate convergence while maintaining training stability.

\subsection{Convergence of GD with Adaptive Warm-up}

Next, we turn to the analysis of \algname{GD} under Assumption~\ref{asmp:H0H1_smoothness} with an adaptive step-size of the form
\begin{equation}
\label{eq:adaptive_stepsize}
   \eta_k := \frac{1}{10H_0 + 20H_1(f(w_k)-f^*)} 
\end{equation}
 prescribed by $(H_0, H_1)$-smoothness. The constants $H_0, H_1, f^*$ are unknown in practical training; \Cref{sec:experiments_main} therefore uses a one-parameter surrogate in which a threshold C controls when the loss-dependent denominator becomes active. Since the function suboptimality decreases at the beginning of training, the theoretical step-size follows a warm-up-like scheme. In the general non-convex case, the derived upper bound in \Cref{th:general_nonconvex_adaptive} provides only numerical (non-asymptotic) improvement over a constant schedule.

To achieve tangible improvements, additional convexity-like assumptions are necessary. This is justified by the fact that the loss landscape of neural networks exhibits additional structure; prior studies indicate that neural network loss surfaces often display a convex-like geometry locally \citep{kleinberg2018alternative, guille2023no, islamov2024loss, tran2024empirical}. This observation has motivated relaxations of convexity, such as the aiming condition \citep{liu2023aiming} and quasar-convexity \citep{hardt2018gradient}, which have been leveraged in the analysis of various gradient-based algorithms \citep{gower2021sgd, hinder2020near, fu2023accelerated}. Importantly, these conditions are satisfied by certain classes of non-convex functions \citep{hardt2018gradient, liu2023aiming}.

\begin{restatable}{theorem}{theoremaimingdeterministic}\label{th:aiming_deterministic_adaptive_step-size}
    Assume that $f$ is $(H_0,H_1)$-smooth, and it satisfies the Aiming condition with constant $\theta$ around the set of global minimizers $\cS$. Then the iterates of \algname{GD} with adaptive step-size $\theta \cdot \eta_k$ 
    satisfy  
    \begin{align*}
    & f(w_K) - f^* \le \varepsilon \quad \text{after at most} \\ &\frac{40H_0{\rm dist}(w_0,\cS)^2}{\theta^2\varepsilon} + \frac{40H_1{\rm dist}(w_0,\cS)^2}{\theta^2} \quad \text{iterations.}
    \end{align*}
     
\end{restatable}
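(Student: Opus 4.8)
The plan is to reduce the statement to two ingredients: a self-bounding control on the gradient norm that holds for every $(H_0,H_1)$-smooth function, and a one-step contraction of the distance to $\cS$ driven by the Aiming condition. Throughout I write $\delta_k \eqdef f(w_k)-f^*$ and $w_{k+1}=w_k-\theta\eta_k\nabla f(w_k)$ with $\eta_k$ as in \eqref{eq:adaptive_stepsize}. The first step is the self-bounding inequality
\[
\|\nabla f(w)\|^2 \;\le\; 2\big(H_0+H_1(f(w)-f^*)\big)\,(f(w)-f^*)\qquad\text{for all }w .
\]
To prove it I would probe $f$ along the ray $s\mapsto \psi(s)\eqdef f(w-s\nabla f(w))$, so that $\psi(0)=f(w)$, $\psi'(0)=-\|\nabla f(w)\|^2$, and $\psi''(s)\le \|\nabla^2 f\|\,\|\nabla f(w)\|^2 \le \big(H_0+H_1(\psi(s)-f^*)\big)\|\nabla f(w)\|^2$ by Definition~\ref{asmp:H0H1_smoothness}. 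A continuity argument shows that on the interval where $\psi(s)\le f(w)$ one has $\psi''(s)\le M\eqdef (H_0+H_1\delta)\|\nabla f(w)\|^2$, so Taylor's inequality gives $\psi(s)\le f(w)-s\|\nabla f(w)\|^2+\tfrac{M}{2}s^2$; evaluating at $s=\|\nabla f(w)\|^2/M$ and using $\psi\ge f^*$ yields the claim.

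Next I would control the landscape along the \emph{actual} descent step. Applying the same bootstrap to $t\mapsto f(w_k-t\theta\eta_k\nabla f(w_k))$ and using that $\eta_k=\tfrac{1}{10H_0+20H_1\delta_k}$ is small, I would show $f(w)-f^*\le 2\delta_k$ for every $w$ on the segment $[w_k,w_{k+1}]$ — this is precisely what the factor $20H_1$ in the denominator buys — so that the curvature there is at most $\bar L_k\eqdef H_0+2H_1\delta_k$ and consequently $\eta_k\bar L_k\le \tfrac1{10}$. The same segment bound simultaneously yields monotone descent $f(w_{k+1})\le f(w_k)$, i.e.\ $\delta_{k+1}\le\delta_k$, which I will need in order to pass from a best-iterate guarantee to a last-iterate guarantee.

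I then expand the squared distance to the projection $\pi_{\cS}(w_k)\in\cS$,
\[
{\rm dist}(w_{k+1},\cS)^2 \le {\rm dist}(w_k,\cS)^2 - 2\theta\eta_k\langle\nabla f(w_k),\,w_k-\pi_{\cS}(w_k)\rangle + \theta^2\eta_k^2\|\nabla f(w_k)\|^2 ,
\]
apply the Aiming condition to the inner product (contributing another factor $\theta$, hence the overall $\theta^2$), and use the self-bounding inequality together with $\eta_k\bar L_k\le\tfrac1{10}$ to absorb the quadratic term into the linear one. This leaves a recursion of the form ${\rm dist}(w_{k+1},\cS)^2 \le {\rm dist}(w_k,\cS)^2 - \tfrac12\theta^2\eta_k\delta_k$ (with room to spare). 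Telescoping over $k=0,\dots,K-1$ bounds $\sum_k \eta_k\delta_k$ by $2\,{\rm dist}(w_0,\cS)^2/\theta^2$. Since $\delta\mapsto \tfrac{\delta}{10H_0+20H_1\delta}$ is increasing, assuming $\delta_k>\varepsilon$ for all $k<K$ lower-bounds every summand by $\tfrac{\varepsilon}{10H_0+20H_1\varepsilon}$, which forces $K\le \tfrac{2(10H_0+20H_1\varepsilon){\rm dist}(w_0,\cS)^2}{\theta^2\varepsilon}=\tfrac{20H_0{\rm dist}(w_0,\cS)^2}{\theta^2\varepsilon}+\tfrac{40H_1{\rm dist}(w_0,\cS)^2}{\theta^2}$; monotonicity of $\delta_k$ then upgrades ``some $\delta_k\le\varepsilon$'' to $\delta_K\le\varepsilon$, which is the assertion.

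I expect the main obstacle to be the bootstrap lemma controlling $f$ along each step, since it does double duty: it certifies that the prescribed step-size stays below $1/(10\bar L_k)$ for the \emph{true} local curvature along the segment (not merely the curvature at $w_k$), and it secures the monotone descent needed for the last-iterate conclusion. Making the continuity argument rigorous — ruling out that the probe escapes the region where the Hessian bound is valid before the step completes — and tracking the numerical constants so that they collapse exactly to $20$ and $40$ is where the care is needed; the self-bounding inequality and the Aiming-based distance recursion are then comparatively routine.
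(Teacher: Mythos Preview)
Your proposal is correct and rests on the same three pillars as the paper's proof: a gradient self-bounding inequality (your $\|\nabla f(w)\|^2\le 2(H_0+H_1\delta)\delta$ is in fact tighter than the paper's Lemma~\ref{lem:gradient_bound1}), a bootstrap controlling curvature along the step (the paper's Lemma~\ref{lem:foundational_bound1}/\ref{lem:local_smoothness_gradient1}), and the distance recursion ${\rm dist}(w_{k+1},\cS)^2\le{\rm dist}(w_k,\cS)^2-\theta^2\eta_k\delta_k$ obtained from the Aiming condition.

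The genuine difference is in the endgame. The paper extracts the iteration bound by a two-phase split: when $\delta_k\ge H_0/(2H_1)$ it lower-bounds $\eta_k$ by $\theta/(40H_1\delta_k)$, getting a constant decrement $\theta^2/(40H_1)$ in the squared distance and hence at most $40H_1{\rm dist}(w_0,\cS)^2/\theta^2$ such iterations; when $\delta_k\le H_0/(2H_1)$ it lower-bounds $\eta_k$ by $\theta/(20H_0)$ and telescopes to get the $20H_0{\rm dist}(w_0,\cS)^2/(\theta^2\varepsilon)$ term. You instead telescope once, observe that $g(\delta)\eqdef\delta/(10H_0+20H_1\delta)$ is increasing, and argue by contradiction that all $\delta_k>\varepsilon$ forces $K\cdot g(\varepsilon)\le 2{\rm dist}(w_0,\cS)^2/\theta^2$, which immediately yields the stated bound. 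Your route is shorter and avoids the artificial threshold $H_0/(2H_1)$; the paper's split, on the other hand, makes the two regimes (warm-up vs.\ steady state) visually explicit, which is pedagogically aligned with the paper's message. Either way the constants land on $20$ and $40$ exactly, so your worry about ``tracking the numerical constants'' is unfounded---your $\tfrac12$ coefficient already suffices.
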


To analyze convergence under increasing step-size, we split the iterations into two regimes corresponding to large and small function values and study them separately. The convex rate is recovered by setting $\theta = 1$. Notably, the $1/\varepsilon$ term depends only on $H_0$, as in standard convex \algname{GD} theory, while $H_1$ affects only the constant term. Comparing the bounds in \Cref{th:aiming_deterministic_adaptive_step-size} and \Cref{th:lower_bounds}, we find that \emph{\algname{GD} with a warm-up adaptive step-size converges faster than with a fixed step-size when the initialization factor $H_1(f(w_0)-f^*)/\varepsilon$ is large}, that is, when the initial loss is large. This factor can be substantial, potentially even exponential in $H_1{\rm dist}(w_0,\cS)$ \citep{gaash2025convergence}.

Next, we consider another widely studied class of structured non-convex functions, which encompasses the $\mu$-PL functions--known to hold for sufficiently over-parameterized networks \citep{liu2022loss}. Moreover, PL is considered the weakest sufficient condition ensuring linear convergence of \algname{GD} \citep{karimi2016linear}.
\begin{restatable}{theorem}{theorempleterministic}\label{th:adaptive_PL}
Assume that $f$ is $(H_0,H_1)$-smooth, and it satisfies $\mu$-PL condition. Then the iterates of \algname{GD} with adaptive step-size $\eta_k$ 
satisfy 
    \begin{align*}
        & f(w_K) - f^* \le \varepsilon \quad \text{after at most} \\ & \frac{40H_1}{\mu}(f(w_0)-f^*) + \frac{20H_0}{\mu}\log\frac{H_0}{2H_1\varepsilon} \quad \text{iterations.}
    \end{align*}

\end{restatable}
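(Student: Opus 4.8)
The plan is to combine a one-step descent guarantee tailored to $(H_0,H_1)$-smoothness with the $\mu$-PL inequality, and then to analyze the resulting nonlinear recursion on the suboptimality gap $\delta_k \eqdef f(w_k)-f^*$ in two regimes, mirroring the large-value/small-value split already used for \Cref{th:aiming_deterministic_adaptive_step-size}.

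First I would establish a descent lemma: with $\eta_k$ as in \eqref{eq:adaptive_stepsize}, the gap decreases as $\delta_{k+1}\le \delta_k - c\,\eta_k\|\nabla f(w_k)\|^2$ for an explicit constant $c\in(0,1)$, and in particular $\delta_{k+1}\le\delta_k$. The delicate point is that \Cref{asmp:H0H1_smoothness} only bounds $\|\nabla^2 f\|_2$ pointwise by $H_0+H_1(f-f^*)$, so the curvature along the \algname{GD} segment $t\mapsto w_k-t\eta_k\nabla f(w_k)$, $t\in[0,1]$, is a priori uncontrolled. I would handle this by a continuity/bootstrapping argument: on the set of $t$ for which $f(w_k-t\eta_k\nabla f(w_k))-f^*\le 2\delta_k$, the Hessian norm is at most $H_0+2H_1\delta_k = \tfrac{1}{10}\eta_k^{-1}$; integrating the second-order Taylor expansion of $g(t)=f(w_k-t\eta_k\nabla f(w_k))$ with this bound shows the value does not actually increase, so the bootstrap set is closed, open, and nonempty, hence all of $[0,1]$. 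Substituting $\eta_k=\tfrac{1}{10(H_0+2H_1\delta_k)}$ into the standard quadratic upper bound then gives $\delta_{k+1}\le \delta_k-\eta_k\bigl(1-\tfrac{1}{20}\bigr)\|\nabla f(w_k)\|^2$, so $c=\tfrac{19}{20}$ works.

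Next I apply the PL inequality (\Cref{asmp:pl_condition}) $\|\nabla f(w_k)\|^2\ge 2\mu\delta_k$ to the descent lemma, obtaining $\delta_{k+1}\le(1-c'\mu\eta_k)\delta_k$ with $c'=2c$, equivalently the additive form $\delta_k-\delta_{k+1}\ge \tfrac{c'\mu\delta_k}{10H_0+20H_1\delta_k}$. Because the step-size itself depends on $\delta_k$, this recursion is nonlinear and changes character according to whether $H_1\delta_k$ or $H_0$ dominates the denominator, so I split the iterations at the threshold $\delta_k=H_0/H_1$. In the large-gap phase ($\delta_k\ge H_0/H_1$) the denominator is at most $30H_1\delta_k$, giving a constant additive decrease $\delta_k-\delta_{k+1}\ge \tfrac{c'\mu}{30H_1}$, which drives the gap below the threshold within $O\!\bigl(H_1\delta_0/\mu\bigr)$ steps, matching the $\tfrac{40H_1}{\mu}(f(w_0)-f^*)$ term. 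In the small-gap phase ($\delta_k\le H_0/H_1$) the denominator is at most $30H_0$, so $\delta_{k+1}\le(1-\tfrac{c'\mu}{30H_0})\delta_k$ is a genuine linear contraction requiring $O\!\bigl(\tfrac{H_0}{\mu}\log\tfrac{H_0/H_1}{\varepsilon}\bigr)$ steps to reach accuracy $\varepsilon$, matching the $\tfrac{20H_0}{\mu}\log\tfrac{H_0}{2H_1\varepsilon}$ term. The monotonicity $\delta_{k+1}\le\delta_k$ from the descent lemma ensures that once the gap enters the small-value regime it never leaves, so the two step counts simply add to yield the claimed bound.

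I expect the descent lemma to be the main obstacle: everything downstream (invoking PL and unrolling the two-phase recursion) is routine, but controlling the Hessian along the whole \algname{GD} step under the merely pointwise, value-dependent bound is exactly the non-uniformity that distinguishes $(H_0,H_1)$-smoothness from classical $L$-smoothness. The constants $10$ and $20$ in \eqref{eq:adaptive_stepsize} are tuned precisely so that the bootstrap threshold $2\delta_k$ closes the continuity argument while still leaving $c$ bounded away from zero; the remaining bookkeeping is reconciling $c$ and $c'$ with the phase thresholds so that the leading constants $40$ and $20$ come out as stated.
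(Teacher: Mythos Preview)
Your proposal is correct and follows the paper's overall architecture exactly: a one-step descent inequality, then PL, then the two-regime split on the gap. The one genuine difference is how you obtain the descent lemma. The paper does not argue by continuity; instead it proves three auxiliary results: \Cref{lem:foundational_bound1} bounds $M_f=\sup_{z\in[w,y]}(f(z)-f^*)$ by solving an algebraic self-referential inequality (valid whenever the step length is below $\sqrt{2/H_1}$), \Cref{lem:gradient_bound1} gives $\|\nabla f(w)\|^2\le\tfrac{9}{4}(H_0+3H_1\delta)\delta$, and these combine in \Cref{lem:local_smoothness_gradient1} to yield the quadratic upper bound once one verifies $\eta_k\|\nabla f(w_k)\|\le 1/\sqrt{H_1}$. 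Your bootstrap sidesteps all of this: because you show directly that $g(t)\le f(w_k)$ on the closed connected set where $g(t)-f^*\le 2\delta_k$, you never need an a~priori gradient bound or a step-length hypothesis, which is a cleaner and more self-contained route for this particular theorem. The trade-off is that the paper's auxiliary lemmas are reused elsewhere (e.g.\ in \Cref{th:aiming_deterministic_adaptive_step-size} and \Cref{th:adaptive_aiming_stochastic}), so the chain-of-lemmas approach amortizes across the whole section. Finally, the paper splits at $\delta_k=H_0/(2H_1)$ and uses the loose descent constant $c=\tfrac12$ rather than your $H_0/H_1$ and $c=\tfrac{19}{20}$; those specific choices are what produce the exact constants $40$ and $20$, and your sharper constants would in fact give smaller leading coefficients.
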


Similar to the convex case, the $\varepsilon$-dependent term in \algname{GD} with a warm-up adaptive step-size leads to faster convergence whenever $H_1(f(w_0)-f^*)$ is substantially larger than $H_0$. 

In \Cref{sec:concurrent_work}, we show that our proof techniques in both Theorems \ref{th:aiming_deterministic_adaptive_step-size} and \ref{th:adaptive_PL} extend to a more general class of functions, where the function suboptimality in Definition \ref{asmp:H0H1_smoothness} is raised to the power $\rho\ge 1$, extending the benefits of the theoretical warm-up to a broader class of functions.

\begin{figure*}
    \centering
    \begin{tabular}{ccc}

    \hspace{-2mm}\includegraphics[width=0.3\linewidth]{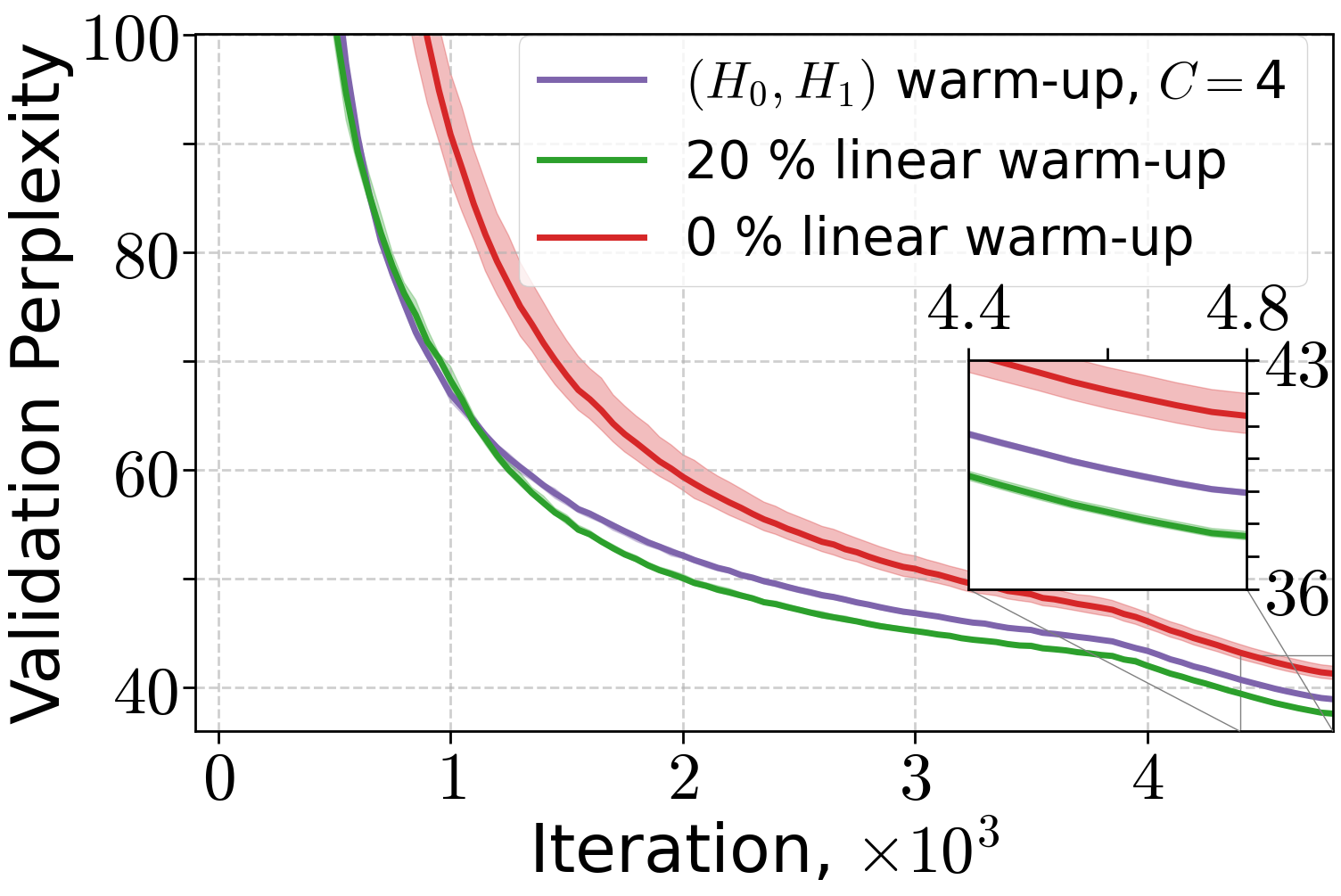} &
    \hspace{-3mm}\includegraphics[width=0.3\linewidth]{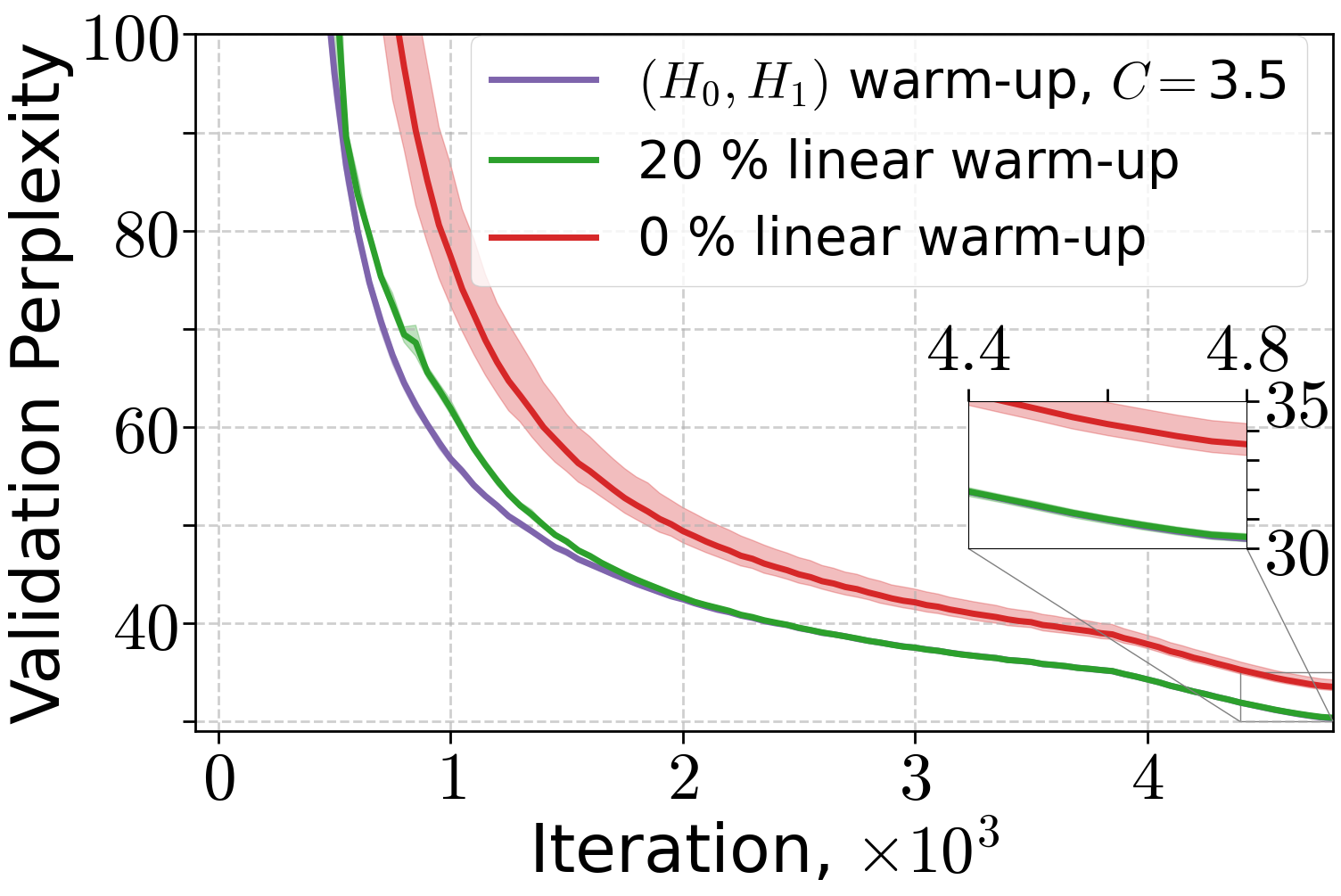}  &
    \hspace{-3mm}\includegraphics[width=0.3\linewidth]{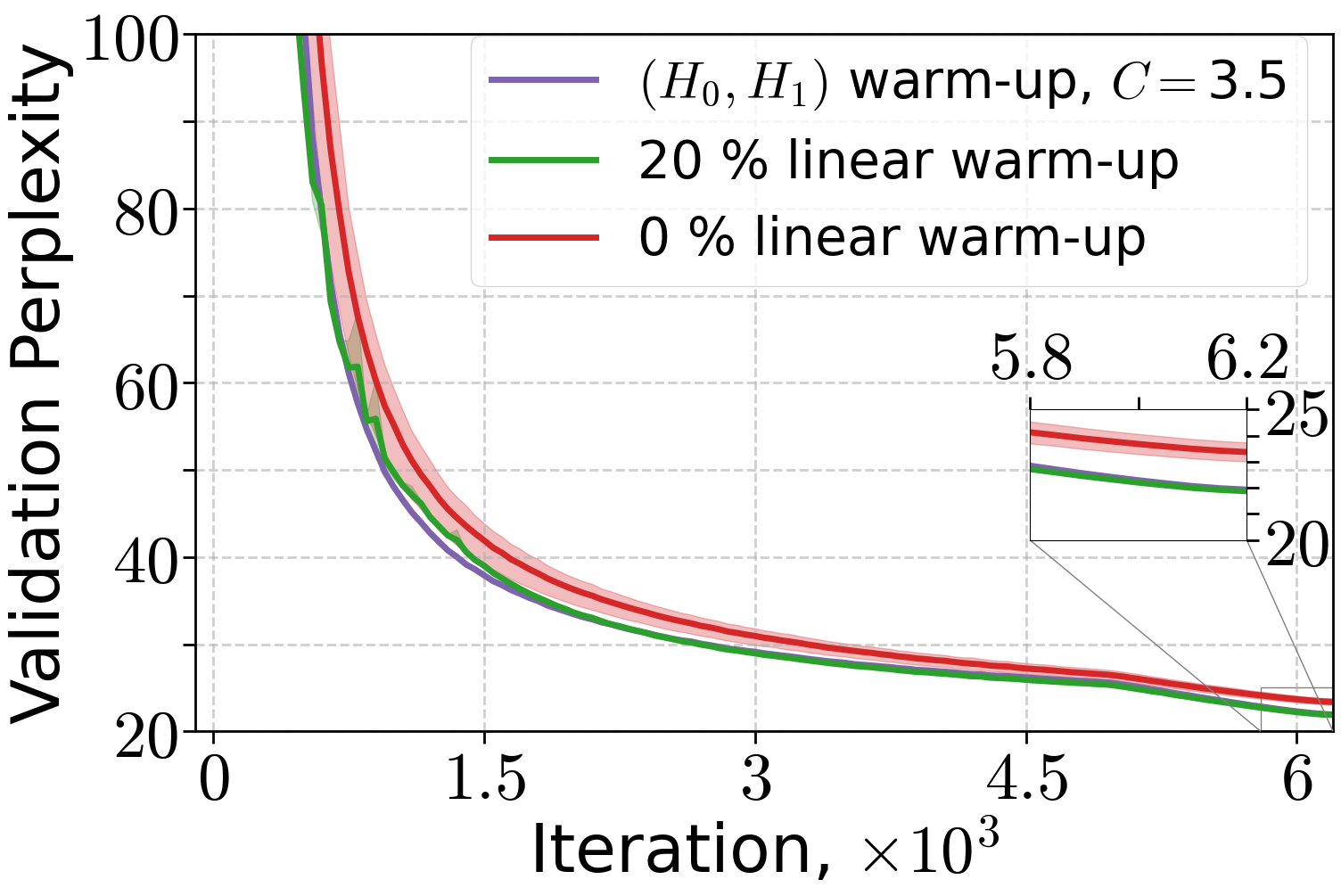}\\
    \hspace{-3mm}\includegraphics[width=0.3\linewidth]{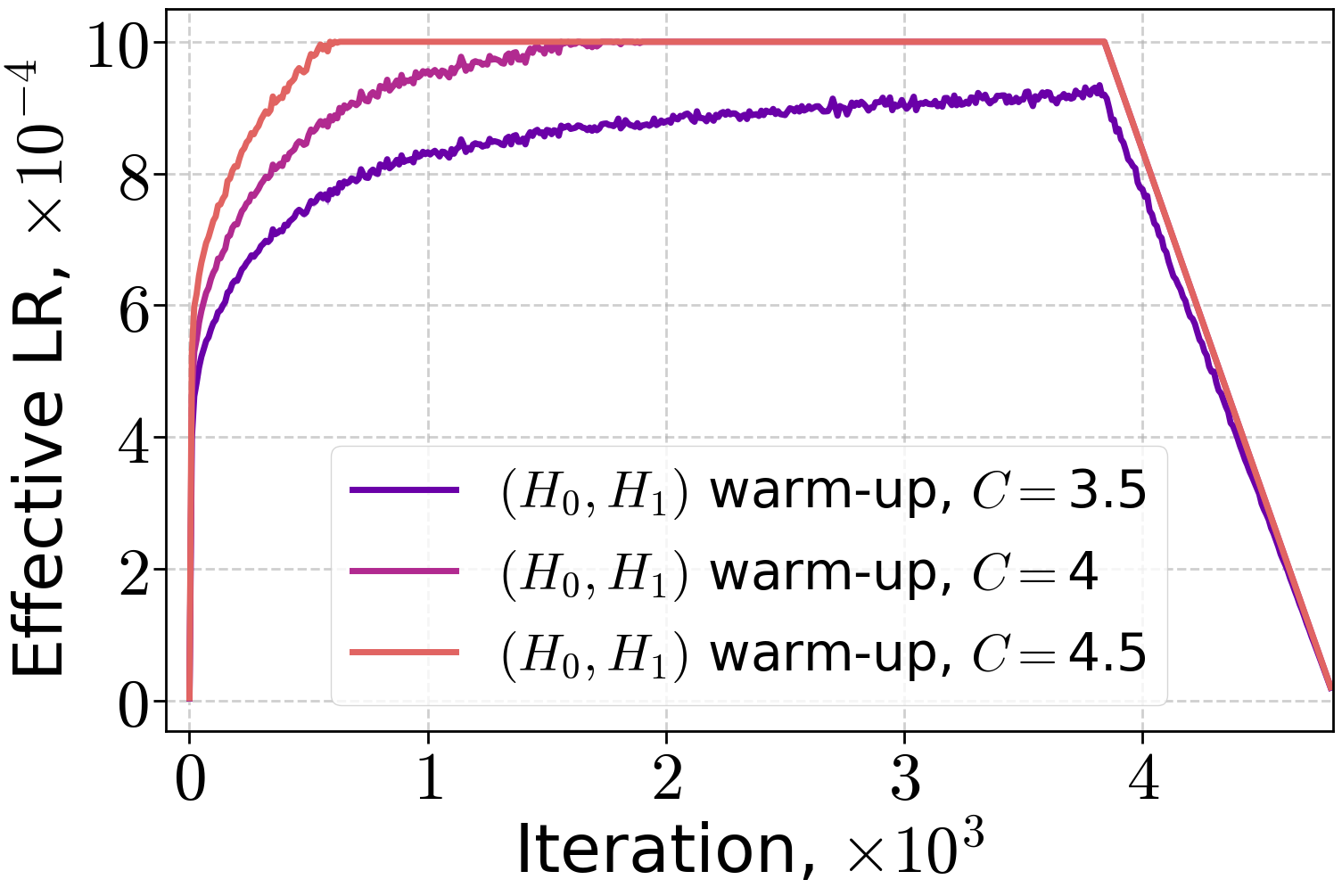} &
    \hspace{-3mm}\includegraphics[width=0.3\linewidth]{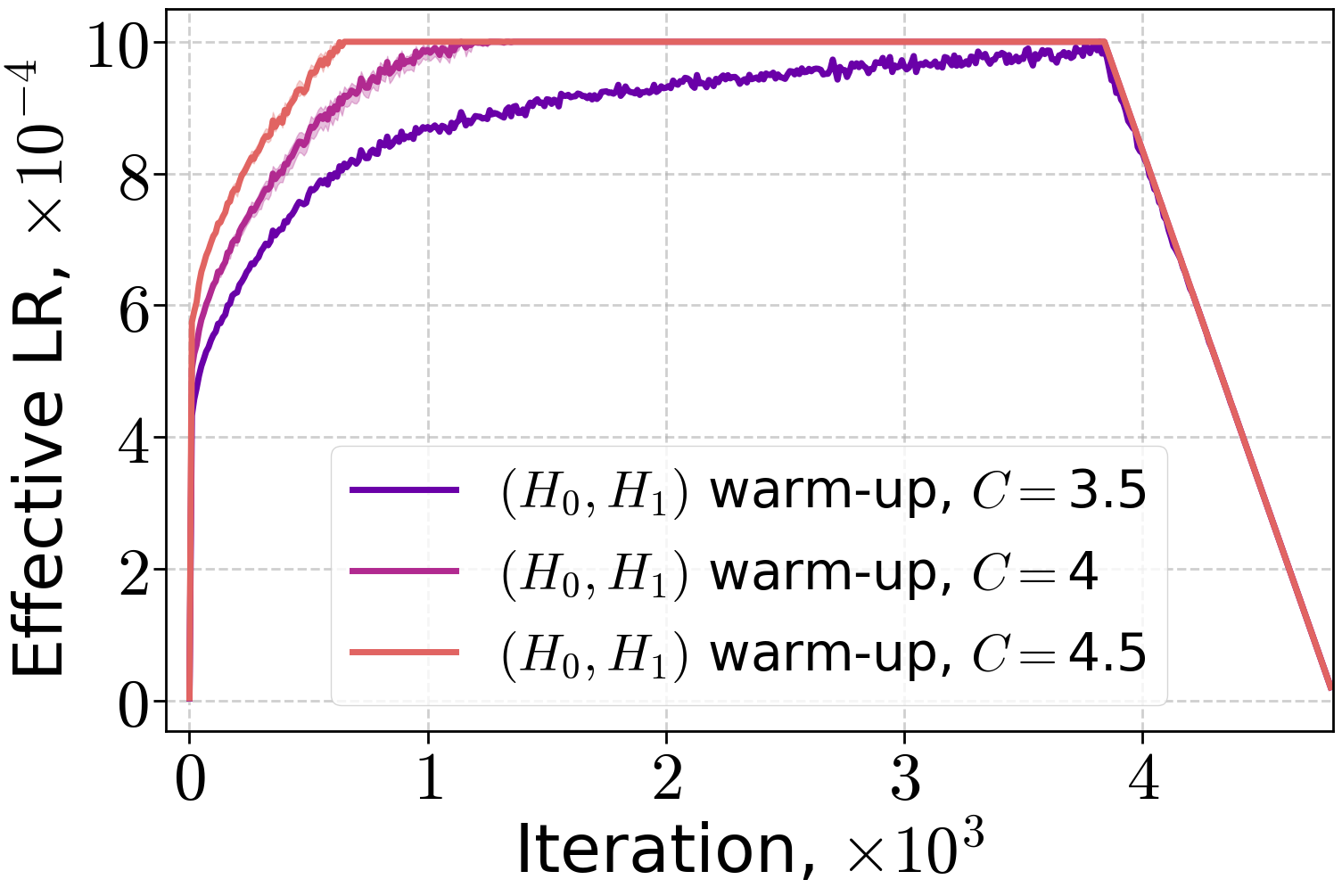} &
    \hspace{-3mm}\includegraphics[width=0.3\linewidth]{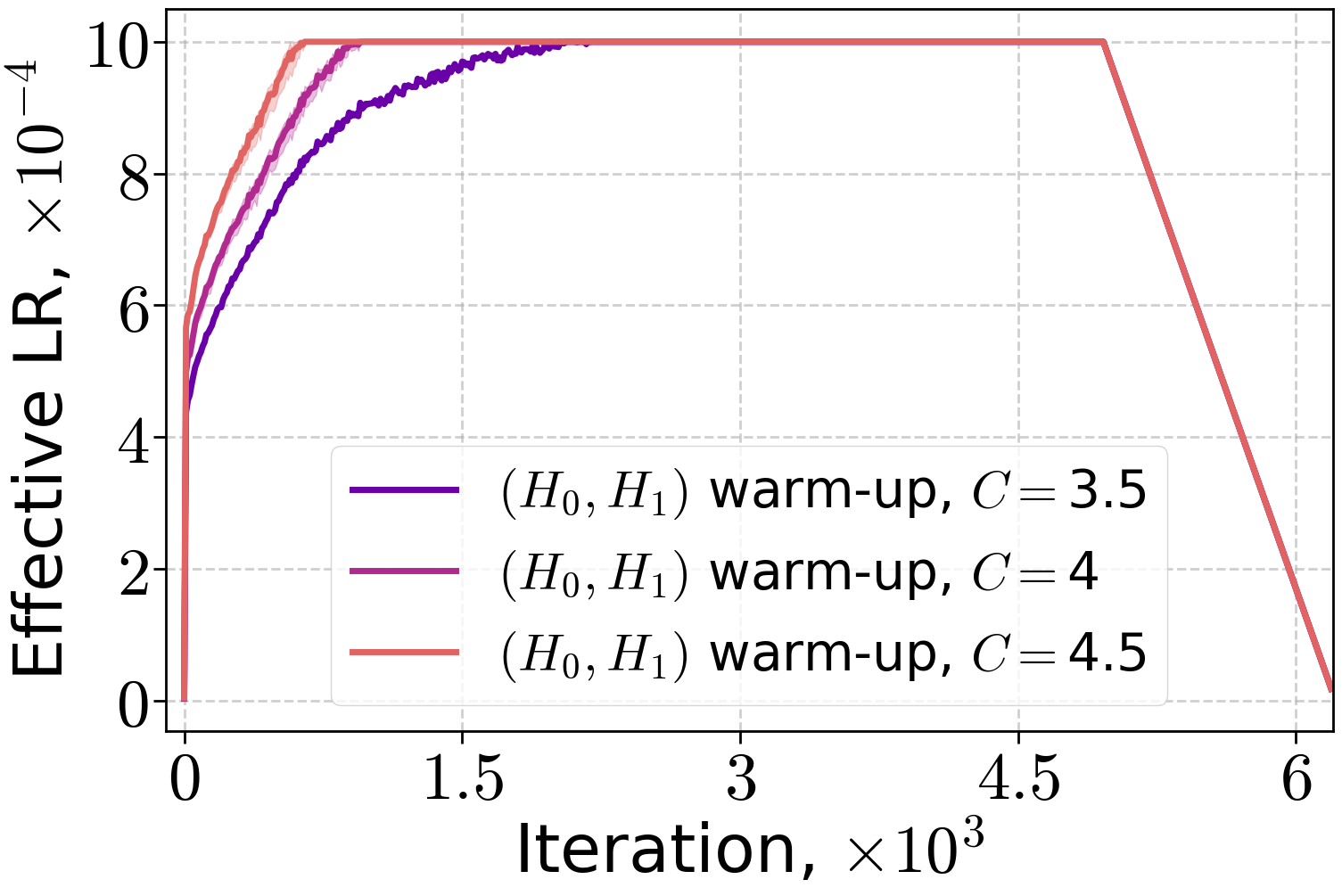} 
    \\
    70M on FineWeb &
    160M on FineWeb &
    410M on FineWeb 
    \end{tabular}
    
    \caption{{\bf Top row:} validation perplexity for 70M, 160M, and 410M language models on FineWeb under no warm-up, tuned linear warm-up, and $(H_0, H_1)$ warm-up. {\bf Bottom row:} effective LR induced by $(H_0, H_1)$ warm-up for peak LR $10^{-3}$ and different values of C. All runs use a final 20\% linear decay to $10^{-5}$.}
    \label{fig:language_models_comparison}
    
\end{figure*}

\subsection{Extension to the Stochastic Setting}

In a standard training setup, the function $f$ has a finite sum structure, namely,
\begin{align}\label{eq:finite_sum}
    f(w) \eqdef \frac{1}{n}\sum_{i=1}^nf_i(w) \tag{P}
\end{align}
where $n$ is the size of the training dataset, and each $f_i$ represents a loss on $i$-th sample. We define the minimum of each loss $f_i^* = \min_{w}f_i(w).$ To study the convergence in the stochastic setting, we need an interpolation condition, which is typically satisfied for over-parameterized networks \citep{ma2018power}. Analytically, it means that $f^*=f_i^*$ for all~$i\in[n].$ 

\begin{restatable}{theorem}{theoremaimingstochastic}\label{th:adaptive_aiming_stochastic}
    Assume that the problem \eqref{eq:finite_sum} satisfies the interpolation condition. Assume that each $f_i$ is $(H_0,H_1)$-smooth and satisfies the Aiming condition around the set of global minimizers $\cS.$ Then the iterates of \algname{SGD} $w_{k+1} = w_k - \eta_k\nabla f_{S_k}(w_k)$ with batch $S_k\subseteq [n]$ and a step-size 
    $$\eta_k = \frac{\theta}{10H_0+20H_1(f_{S_k}(w_k)-f_{S_k}^*)}$$ satisfy 
    \begin{align*}
         &\frac{1}{K+1}\sum_{k=0}^{K}\E{\min\left\{f(w_k)-f^*, \frac{H_0}{2n H_1}\right\}}\\ 
         &\hspace{5cm}\le \tfrac{20H_0{\rm dist}(w_0,\cS)^2}{\theta^2(K+1)}.
    \end{align*}
\end{restatable}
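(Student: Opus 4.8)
The plan is to track the squared distance to the solution set $D_k \eqdef {\rm dist}(w_k,\cS)^2$ and show it decreases in conditional expectation by an amount controlled by the truncated sub-optimality. First I would record two consequences of interpolation: since some $w^*\in\cS$ satisfies $f_i(w^*)=f_i^*=f^*$ for every $i$, each mini-batch loss obeys $f_{S_k}^*=f^*$, so that $\Delta_k\eqdef f_{S_k}(w_k)-f^*\ge 0$ and the step-size is well-defined; moreover every global minimizer of $f$ minimizes each $f_i$, so the Aiming inequalities for the summands can be averaged over $S_k$ with the \emph{same} projection point to give $\langle\nabla f_{S_k}(w_k),\,w_k-\pi_{\cS}(w_k)\rangle\ge\theta\Delta_k$. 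Closure of $\cH$ under finite sums then guarantees $f_{S_k}\in\cH$, so the gradient bound implied by $(H_0,H_1)$-smoothness applies to $f_{S_k}$.

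Expanding the update around $\pi_{\cS}(w_k)$ yields ${\rm dist}(w_{k+1},\cS)^2\le D_k-2\eta_k\langle\nabla f_{S_k}(w_k),w_k-\pi_{\cS}(w_k)\rangle+\eta_k^2\|\nabla f_{S_k}(w_k)\|^2$. I would insert the averaged Aiming inequality into the cross term and the $(H_0,H_1)$ gradient bound $\|\nabla f_{S_k}(w_k)\|^2\lesssim (H_0+H_1\Delta_k)\Delta_k$ into the last term. The step-size $\eta_k=\theta/(10H_0+20H_1\Delta_k)$ is calibrated precisely so that the positive quadratic term absorbs at most half of the negative linear term, leaving a net per-step decrease proportional to $\theta^2\,g(\Delta_k)$ with $g(t)\eqdef t/(10H_0+20H_1 t)$. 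Hence $\Ek{{\rm dist}(w_{k+1},\cS)^2}\le D_k-\alpha\theta^2\,\Ek{g(\Delta_k)}$ for an explicit constant $\alpha$.

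The main obstacle is that $g$ is \emph{concave}, so a direct application of Jensen to $\Ek{g(\Delta_k)}$ points the wrong way. I would resolve this with two elementary estimates. First, the pointwise bound $g(t)\ge \tfrac{1}{30}\min\{t/H_0,\,1/H_1\}$, obtained by treating the regimes $H_1t\le H_0$ and $H_1 t> H_0$ separately, replaces $g$ by a truncated linear function. Second -- and this is where the factor $n$ and the threshold $H_0/(2nH_1)$ enter -- I would use the truncation inequality $\tfrac1n\sum_{i=1}^n\min\{a_i,c\}\ge\min\{\tfrac1n\sum_i a_i,\,\tfrac{c}{2n}\}$, valid for all $a_i\ge 0$: if every $a_i\le c$ the left side equals the mean, while if some $a_j>c$ the left side is at least $c/n>c/(2n)$. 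Applying it with $a_i=f_i(w_k)-f^*$, $c=H_0/H_1$, and using $\Ek{f_{S_k}(w_k)}=f(w_k)$, converts the batch sub-optimality into $\min\{f(w_k)-f^*,\,H_0/(2nH_1)\}$.

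Combining these gives $\Ek{{\rm dist}(w_{k+1},\cS)^2}\le D_k-\tfrac{c'\theta^2}{H_0}\,\min\{f(w_k)-f^*,\,H_0/(2nH_1)\}$. Taking total expectations, telescoping over $k=0,\dots,K$, dropping the nonnegative terminal distance, and dividing by $(K+1)$ produces the claimed bound once the constants $\alpha$ and $c'$ are fixed so that $c'=1/20$. I expect the routine work to be verifying the exact numerical constant in the gradient bound and confirming that the margins $10,20$ in $\eta_k$ indeed leave half of the linear term; the conceptual crux is the concavity workaround via the two truncation estimates.
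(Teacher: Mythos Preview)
Your proposal is correct and takes essentially the paper's route: the one-step decrease ${\rm dist}(w_{k+1},\cS)^2\le{\rm dist}(w_k,\cS)^2-\eta_k\theta\Delta_k$ via Aiming plus the $(H_0,H_1)$ gradient bound is identical, and the paper's dichotomy ``$p_k\eqdef\Prob(\Delta_k\ge c\mid w_k)$ is either $0$ or at least $1/n$'' is precisely your truncation inequality in indicator form. Your specific choices (split at $c=H_0/H_1$, slack factor $2$) land on constant $30$ rather than $20$; splitting instead at $c=H_0/(2H_1)$ and using the tight version $\tfrac{1}{n}\sum_i\min\{a_i,c\}\ge\min\{\bar a,\,c/n\}$ of your inequality recovers the stated constant exactly.
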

We observe that the convergence rate depends on $H_0$, mirroring the deterministic result in \Cref{th:aiming_deterministic_adaptive_step-size}. The convergence metric we use is non-standard, adopted because uniform convergence over all component functions $\{f_i\}_{i=1}^n$ cannot be ensured. Equivalently, the probability that all iterates remain outside the small-error region $f(w_k) -f^* \le \frac{H_0}{2nH_1}$ is at most $\frac{40nH_0{\rm dist}(w_0,\cS)^2}{\theta^2(K+1)}$ the suboptimality $f(w_k)-f^*$ can be larger than $\frac{H_0}{2nH_1}$ for any $k\in\{0,\dots, K\}.$ Thus, this failure probability vanishes as $K\to \infty$, implying convergence after a sufficiently large number of iterations with high probability.

As discussed earlier, our guarantees are predictive when the $(H_0,H_1)$ condition captures early sharpness reduction. Our theory shows that \algname{SGD} with an increasing step-size stabilizes by adapting to local curvature, whereas a fixed step-size must be unrealistically small under poor initialization.

\begin{remark}
    Interestingly, both Aiming \citep{liu2023aiming} and PL \cite{liu2022loss} conditions hold at least locally around the initial point of training. This matches well the study of warm-up, which concerns the initial phase of training.
\end{remark}

\section{Experiments}\label{sec:experiments_main}

We next evaluate the warm-up schedule derived from $(H_0,H_1)$-smoothness on two benchmarks: transformer language modeling on FineWeb and ViT-Tiny training
on ImageNet-32. In both settings, warm-up is standard and empirically beneficial, as confirmed by our experiments.  This section aims to highlight the merits of warm-up, particularly the gains obtained from the $(H_0, H_1)$-smoothness–driven schedule rather than to achieve state-of-the-art performance. To test the theoretically motivated warm-up schedule, we compare no warm-up and tuned linear warm-up against the following $(H_0, H_1)$-condition-based schedule: $$\frac{\eta_k}{\max\left\{1, f_{S_k}(w_k)/C\right\}}.$$ Here $f_{S_k}(w_k)$ is the stochastic loss at iteration $k$, and $C$ is a tunable threshold that controls when the denominator becomes active and therefore determines the effective warm-up length. Here $\eta_k$ follows the WSD schedule (language modeling) or cosine annealing (ViT training) with no warm-up. All training details are reported in \Cref{sec:additional_exp}.

\paragraph{Language Modeling.}
 
We train language models of three sizes: 70M, 160M, and 410M near Chinchilla optimum \citep{hoffmann2022chinchilla}. For the 70M and 160M models, we use \algname{Adam} with a WSD schedule, a 20\% decay phase, and a tuned linear warm-up length in $\{0\%,10\%,20\%\}$. For the 410M model, we use \algname{AdamW} with weight decay $\lambda=0.1$ \citep{loshchilov2017decoupled}. We report the mean of 3 runs, with the shaded area showing the min–max range.

In this setup, $\eta_k$ in $(H_0, H_1)$ warm-up follows the WSD schedule without warm-up (i.e., the LR starts directly at its peak) with a 20\% decay phase. This can be viewed as a hard counterpart of the theoretical step-size considered in our convergence analysis. We tune the parameter $C$, which determines the length of the $(H_0, H_1)$ warm-up, over the set $\{3.5, 4, 4.5\}$ (which was found to yield good results empirically). For all warm-up schedules, we tune the peak LR over $\{3 \cdot 10^{-4}, 10^{-3}, 3 \cdot 10^{-3}, 10^{-2}\}$. \Cref{fig:language_models_comparison} shows that the theoretically motivated $(H_0, H_1)$ warm-up performs competitively with linear warm-up, which is the standard choice in practice, and both warm-up schedules improve over training without warm-up. \Cref{fig:language_models_comparison}, bottom row, plots the effective LR. Unlike linear warm-up, the $(H_0, H_1)$-based schedule increases nonlinearly because it is driven by the loss trajectory.

\paragraph{Image Classification.}

Next, we repeat the study on ViT-Tiny using cosine annealing for $\eta_k$ (replacing WSD) while keeping the same warm-up mechanisms. For the $(H_0, H_1)$ warm-up, we sweep $C\in\{3, 3.5, 4\}$; for linear warm-up, we vary the warm-up length in $\{0\%,5\%,10\%\}$. For each schedule, we grid-search the peak LR over $\{3 \cdot 10^{-4}, 10^{-3}, 3 \cdot 10^{-3}, 10^{-2}, 3\cdot 10^{-2}\}$. As in the previous setting, \Cref{fig:vit_comparison} shows that $(H_0, H_1)$ warm-up matches linear warm-up, and both outperform training with no warm-up. The right panel of \Cref{fig:vit_comparison} plots the effective LR, showing again that the loss-driven warm-up profile is nonlinear. Similar to the previous case, the warm-up substantially differs from the linear warm-up. We report the mean of three runs, with the shaded area showing the min–max range.

\begin{figure}
    \centering
    \begin{tabular}{ccc}

   \includegraphics[width=0.3\linewidth]{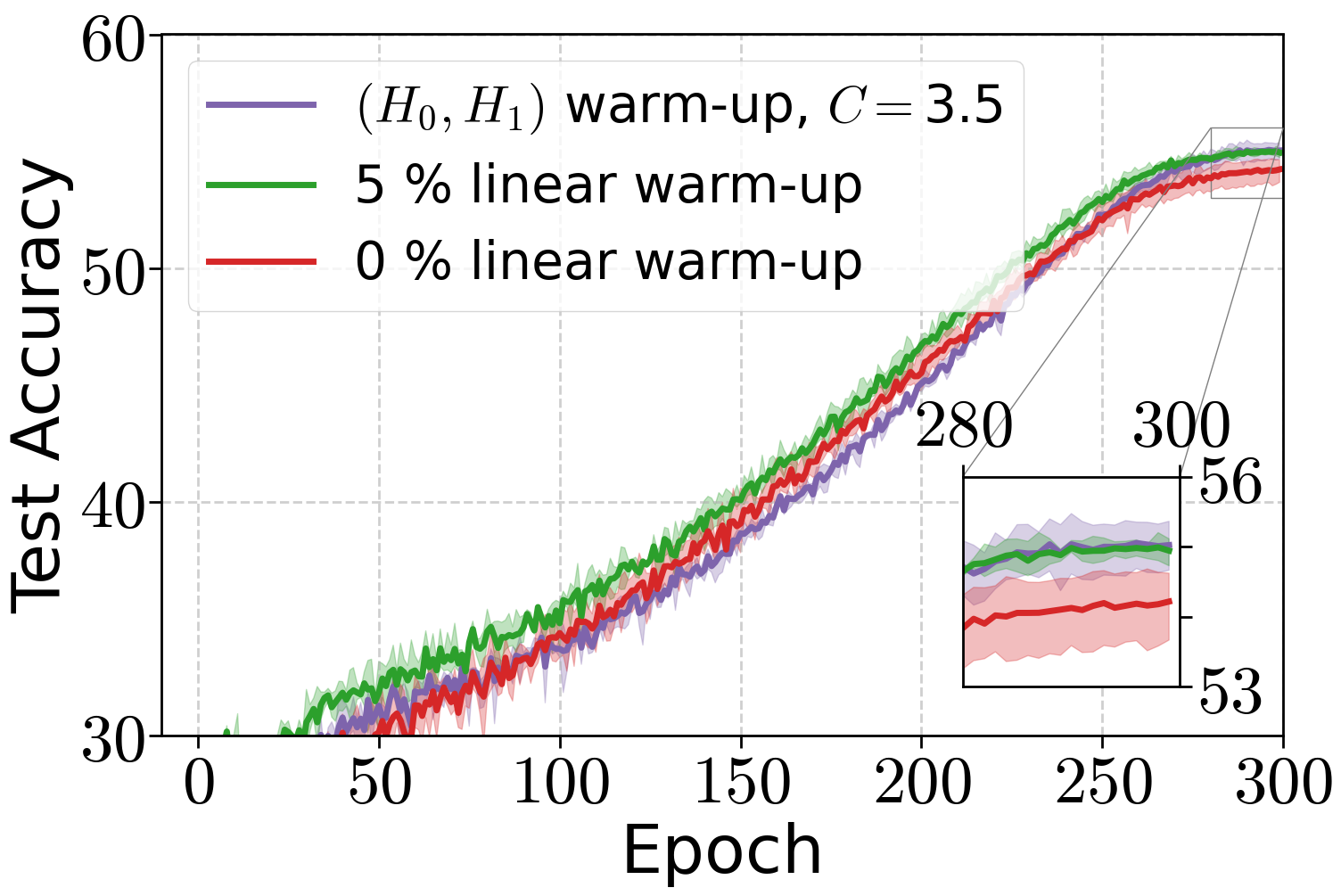} & 
    \includegraphics[width=0.3\linewidth]{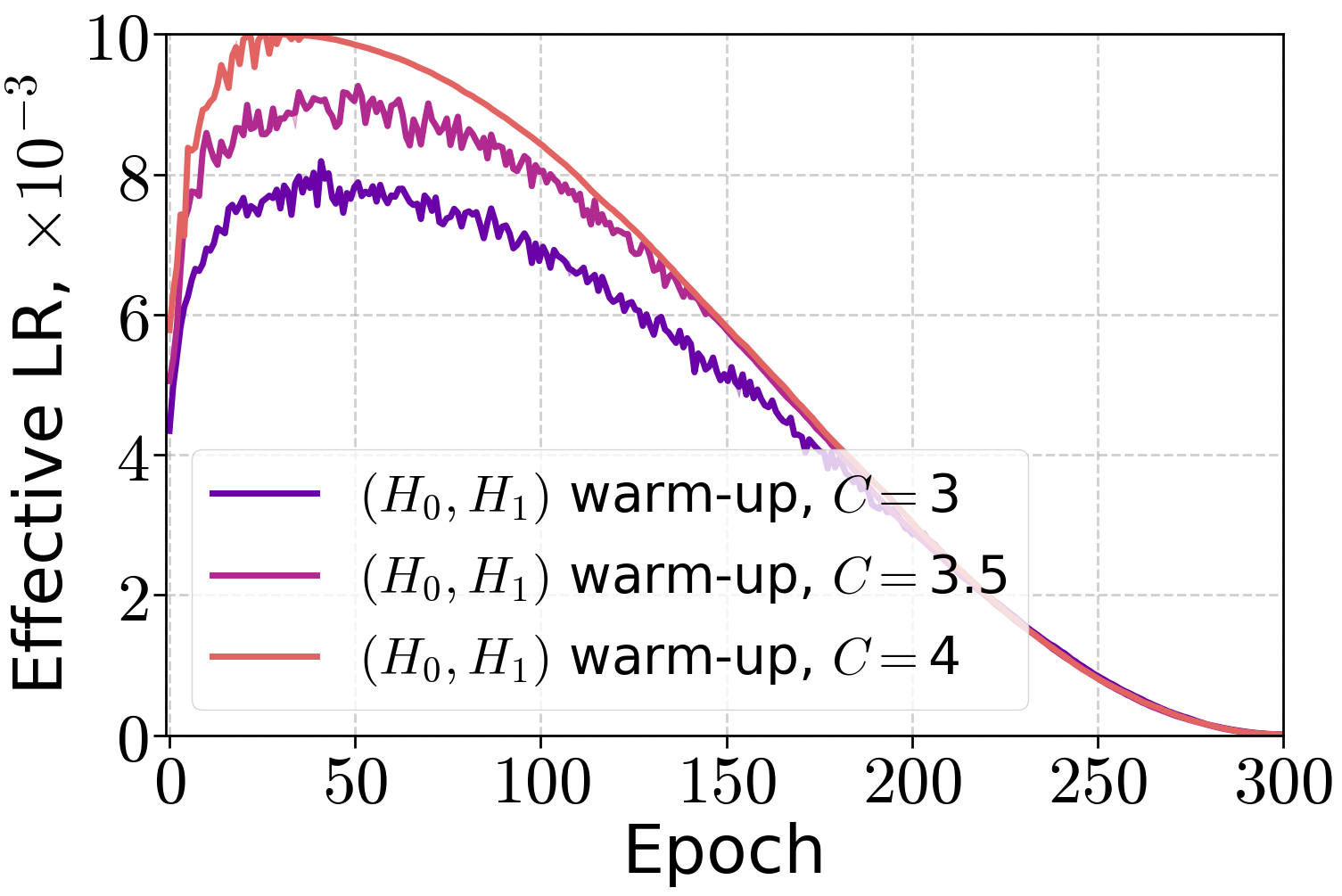}
    \end{tabular}
    \caption{Test accuracy and effective LR for ViT-Tiny on ImageNet-32 with \algname{AdamW} ($\lambda=0.05$) under no warm-up, tuned linear warm-up, and $(H_0, H_1)$ warm-up. All schedules use cosine decay after the warm-up phase.}
    \label{fig:vit_comparison}
    
\end{figure}

\section{Conclusion, Limitations, and Future Work}

We proposed \((H_0,H_1)\)-smoothness as a curvature--loss model for the early sharpness-reduction phase of neural-network training. The condition yields a curvature-adaptive step-size that increases as the loss decreases, providing a theoretical mechanism for learning-rate warm-up: fixed step-sizes must be conservative near initialization, whereas the adaptive schedule can remain stable while increasing the step size in lower-curvature regions. We proved faster convergence guarantees than fixed-step gradient descent under convex, Aiming, and PL-type structure, extended the analysis to a stochastic finite-sum setting under interpolation, and showed empirically that the resulting loss-driven schedule can match tuned linear warm-up on language and vision tasks.

Future work should characterize the duration of this phase, develop layer-wise or block-wise variants of the condition, and generalize the results beyond Euclidean geometry \citep{bernstein2024old,islamov2026noneuclidean}. Another important direction is to extend the analysis to practical optimizers such as \algname{Adam} \citep{kingma2015adam}, \algname{AdamW} \citep{loshchilov2017decoupled}, \algname{Muon} \citep{jordan2024muon}, \algname{Scion} \citep{pethick2025training}, and \algname{SCG} \citep{mokhtari2020stochastic}. Finally, analyzing the connection between Polyak-type step-sizes
\citep{loizou2021sps,orvieto2024adaptive,islamov2026enhancing} and warm-up is a promising direction, as recent work suggests that such step-sizes can adapt to curvature and enable automatic warm-up \citep{defazio2026schedulefree}.

\section*{Acknowledgements}

Foivos Alimisis, Rustem Islamov, and Aurelien Lucchi acknowledge the financial support of the Swiss National Foundation, SNF grants No 207392 and 223031. The authors thank Eduard Gorbunov for fruitful discussions, which allowed us to improve the work.

\section*{Impact Statement}

This work studies theoretical and empirical explanations of learning-rate warm-up. Its most direct potential benefit is improved training stability and reduced hyperparameter-search cost, which may reduce wasted compute. The work does not introduce new datasets, deployed systems, or capabilities; its societal risks are therefore indirect and primarily those associated with more efficient training of machine-learning models.

\bibliography{refs}
\bibliographystyle{plainnat}

\newpage
\appendix
\counterwithin{figure}{section}
\counterwithin{table}{section}

\vbox{
  {\hrule height 2pt \vskip 0.15in \vskip -\parskip}
  \centering
  {\LARGE\bf Appendix\par}
  {\vskip 0.2in \vskip -\parskip \hrule height 0.5pt \vskip 0.09in}
}

\newcommand\invisiblepart[1]{%
  \refstepcounter{part}%
  \addcontentsline{toc}{part}{\protect\numberline{\thepart}#1}%
}

\invisiblepart{Appendix}
\setcounter{tocdepth}{2}
\localtableofcontents

\appendixtrue

\section{Comparison to \citet{liu2025theoretical}}\label{sec:concurrent_work}

In this section, we provide a detailed comparison against a concurrent work by \citet{liu2025theoretical}. In the following section, we discuss in more detail the differences between our work and their work. 

\subsection{The proposed Conditions}

\citet{liu2025theoretical} proposed the following condition with a general power $\rho > 0$:
\begin{equation}\label{eq:h0h1_rho}
\|\nabla^2f(w)\| \le K_0 + K_{1}(f(w)-f^*)^\rho.
\end{equation}
The condition we study in the main part of the paper is a special case of \eqref{eq:h0h1_rho} with $\rho=1$. \citet{liu2025theoretical} proves the convergence in the convex setting under \eqref{eq:h0h1_rho}, demonstrating benefits of the theoretical warm-up schedule. The proposed theoretical step-size is similar to ours in \eqref{eq:adaptive_stepsize}. However, their results can be simply recovered from our analysis for the $\rho=1$ case.

Indeed, assuming $\rho >1$ and that the iterates $\{w_k\}_{k=0}^{K}$ stay in the set $\{w \hspace{1mm} | \hspace{1mm} f(w)-f^* \le f(w_0)-f^*\}$, which is the case for \algname{GD}, we can simplify \eqref{eq:h0h1_rho} as follows 
\begin{equation}\label{eq:loser_bound}
\|\nabla^2f(w)\|_2 \le K_0 + K_\rho(f(w)-f^*)^\rho \le K_0 + K_\rho(f(w_0)-f^*)^{\rho-1}(f(w)-f^*),
\end{equation}
i.e., \Cref{asmp:H0H1_smoothness} holds with $H_0 = K_0$ and $H_1 = K_\rho(f(w_0)-f^*)^{\rho-1}.$ Therefore, the results of \Cref{th:aiming_deterministic_adaptive_step-size} apply, leading to the iteration complexity of \algname{GD} with adaptive warm-up schedule of the form
\[
K = \cO\left(\frac{K_0{\rm dist}(w_0,\cS)^2}{\theta^2\varepsilon} + \frac{K_\rho(f(w_0)-f^*)^{\rho-1}{\rm dist}(w_0,\cS)^2}{\theta^2}\right).
\]
This matches the bound in \citet{liu2025theoretical} up to constants when $\theta=1$, and shows that the adaptive schedule converges faster whenever $(f(w_0)-f^*)/\varepsilon \gg 1$. Given the simplification in \eqref{eq:loser_bound}, it remains open whether the convergence under the general condition \eqref{eq:h0h1_rho} can be further tightened.

In Proposition \ref{prop:deep_non_linear_many_nonlinearities}, we show that deep non-linear networks with Leaky-ReLU activations satisfy \eqref{eq:h0h1_rho}, albeit under stronger assumptions than \Cref{prop:deep_linear}. Moreover, \Cref{prop:crossentropy} covers L2-regularized networks with two layers and arbitrary activations. If one considers deeper networks, $\rho$ increases with the number of layers $\ell$.

\subsection{Theoretical Evidence}

We show that \Cref{asmp:H0H1_smoothness} holds for several standard architectures: $(i)$ an $\ell$-layer linear network with MSE loss under balancedness (\Cref{prop:deep_linear}); $(ii)$ an $\ell$-layer nonlinear network with leaky-ReLU activations (\Cref{prop:deep_non_linear_many_nonlinearities}); and $(iii)$ two-layer network with cross-entropy losses (\Cref{prop:crossentropy}) under L2 regularization. These results provide clear theoretical evidence that the proposed condition is a good proxy for neural-network smoothness. By contrast, concurrent work by \citet{liu2025theoretical} validates the condition only on a toy MLP model and a recurrent model with fewer than four parameters and sigmoid activation, leaving its extension beyond such simple cases unclear. Moreover, the representativeness of the proposed condition at the initial phase of training does not seem to be discussed.

\subsection{Experimental Evidence}

\citet{liu2025theoretical} empirically test the proposed condition on two setups: ResNet18 trained on CIFAR-10 \citep{krizhevsky2009learning} and a small NanoGPT-style transformer (6 blocks, 384-dimensional embeddings, 6 attention heads) trained on the Tiny Shakespeare dataset \citep{karpathy2015tinyshakspeare}. They present smoothness versus training loss in a log-log scale, which leaves open how well the condition aligns with empirical behavior. Moreover, they show that these models train efficiently without learning-rate warm-up, and that warm-up offers no benefit (see Table 1 in \citet{liu2025theoretical}). These settings leave open how the condition behaves in larger regimes where warm-up is empirically important. Our experiments complement theirs by evaluating larger language and vision models and by comparing against tuned linear warm-up. In contrast, we validate the condition on much larger models, where warm-up is crucial for achieving improved performance (see \Cref{fig:language_models_comparison,fig:vit_comparison}).

\subsection{Theoretical Analysis}

\paragraph{Deterministic Setting.} \citet{liu2025theoretical} establish convergence guarantees for \algname{GD} in both convex and general non-convex regimes. In the non-convex case, their worst-case analysis shows that an adaptive warm-up schedule offers no advantage over a constant step-size beyond numerical factors, which is consistent with our findings. Benefits emerge in the convex setting, which is again similar to our results. In contrast, we extend the analysis beyond convexity, demonstrating the benefits of warm-up under Aiming and PL conditions, which are known to hold for sufficiently wide networks. 

\paragraph{Stochastic Setting.} The convergence guarantees in \citet{liu2025theoretical} and our work are not directly comparable. We focus on finite-sum minimization in expectation under the Aiming condition and interpolation, showing that, under these assumptions, \algname{SGD} with an adaptive warm-up schedule attains performance comparable to \algname{GD} in the deterministic case. By contrast, \citet{liu2025theoretical} analyzes general non-convex objectives under almost surely bounded variance or the almost-sure ABD assumption \citep{khaled2020better} with high probability, where their results do not demonstrate warm-up benefits analogous to those in the deterministic setting. 

\paragraph{Lower Bounds.} Both works provide lower bounds for constant step-size \algname{GD}, however, the ``hard-to-optimize" functions and proof techniques are different. 

\subsection{Empirical Results} 

We test the theoretically inspired LR warm-up schedule of the form $\frac{\eta_k}{\max\{1, f_{S_k(w_k)}/C\}}$, where  $\eta_k$ follows WSD or cosine annealing schedule, $f_{S_k}(w_k)$ is the current stochastic loss and $C$ is the parameter of $(H_0, H_1)$ warm-up schedule. The proposed $(H_0, H_1)$ warm-up schedule is used when training language models of size 70M, 160M, and 410M on the FineWeb dataset and the ViT-Tiny model on the ImageNet-32 dataset. Our empirical results demonstrate that the proposed theoretical warm-up with tuned $C$ matches the performance of a linear warm-up and improves over a no-warm-up baseline. In contrast, \citet{liu2025theoretical} conducts experiments when training the ResNet18 model on the CIFAR10 dataset. Their results demonstrate that the theoretical warm-up schedule $\frac{1}{4\sqrt{2}+4}\max\{\frac{1}{K_0}, \frac{1}{3K_1f_{S_k}(w_k)}\}$ with tuned $K_0$ and $K_1$ matches the performance of linear and no warm-up baselines.

\section{Arithmetics of $(H_0,H_1)$-smooth Functions }\label{sec:more_general_class}

First, we provide a formal proof of the conjecture mentioned in \Cref{sec:new_smoothness}. In other words, the following result demonstrates that the class of $(H_0, H_1)$-smooth functions contains all $(L_0, L_1)$-smooth functions.

\begin{proposition}
    Assume that $f$ is $(L_0,L_1)$-smooth and bounded from below, i.e., $\|\nabla^2f(w)\| \le L_0 + L_1\|\nabla f(w)\|$ and $f^* > -\infty.$ Then $f$ satisfies \Cref{asmp:H0H1_smoothness} with 
    \[
    H_0 = L_0 + \frac{L_0L_1}{\nu},\quad H_1 = \frac{4L_1^2+\nu L_1}{2\nu},
    \]
    where $\nu$ satisfies the equality $\nu = e^{-\nu}$\footnote{One can check numerically that $\nu\in(0.56,0.57).$}.
\end{proposition}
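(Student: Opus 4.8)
The statement to prove is $\|\nabla^2 f(w)\|_2 \le H_0 + H_1(f(w)-f^*)$, while the hypothesis already gives $\|\nabla^2 f(w)\|_2 \le L_0 + L_1\|\nabla f(w)\|$. The plan is therefore to reduce everything to a single \emph{gradient-norm-to-suboptimality} inequality: if I can show $\|\nabla f(w)\| \le \frac{2L_1}{\nu}(f(w)-f^*) + \frac{L_0}{L_1}$ (or something of this affine shape), then substituting into $L_0 + L_1\|\nabla f(w)\|$ immediately yields an affine-in-$(f-f^*)$ bound on the Hessian norm, from which the explicit $H_0,H_1$ are read off. The case $\nabla f(w)=0$ is trivial, since then $\|\nabla^2 f(w)\|_2 \le L_0 \le H_0$, so I assume $g \eqdef \nabla f(w) \neq 0$.

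To get the gradient bound, I would probe $f$ along the steepest-descent ray. Set $u = g/\|g\|$, $w(t) = w - tu$, and $\phi(t) = f(w(t))$, so that $\phi'(0) = -\|g\|$ and $|\phi''(t)| \le \|\nabla^2 f(w(t))\|_2$. The key difficulty is that $\|\nabla^2 f\|$ is controlled only through $\|\nabla f\|$, which itself drifts along the ray. I would close this loop with a Gr\"onwall argument: writing $r(t)=\|\nabla f(w(t))\|$, one has $|r'(t)| \le \|\nabla^2 f(w(t))\|_2 \le L_0 + L_1 r(t)$, hence $r(t) \le (r(0)+L_0/L_1)e^{L_1 t}-L_0/L_1$, which after simplification gives the clean curvature control $L_0 + L_1 r(t) \le (L_0 + L_1\|g\|)e^{L_1 t}$ along the whole ray.

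Next I integrate twice. Using the (deliberately crude) constant bound $|\phi''(t)| \le M \eqdef (L_0+L_1\|g\|)e^{L_1 s}$ on $[0,s]$ gives $\phi(s)-\phi(0) \le -\|g\|s + \tfrac{1}{2}Ms^2$, and since $\phi(s)=f(w(s)) \ge f^*$ I obtain
\[
f(w)-f^* \;\ge\; \|g\|s - \tfrac12 (L_0+L_1\|g\|)e^{L_1 s} s^2 .
\]
Writing $a = L_1 s$, the coefficient of $\|g\|$ is $\tfrac{1}{L_1}\bigl(a - \tfrac12 a^2 e^a\bigr)$. Here the transcendental constant enters: choosing $a=\nu$ with $\nu e^{\nu}=1$ (equivalently $\nu = e^{-\nu}$) makes $a^2 e^a = \nu(\nu e^\nu)=\nu$, so the coefficient collapses to the clean positive value $\tfrac{\nu}{2L_1}$, and the residual term becomes $\tfrac{\nu L_0}{2L_1^2}$. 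Rearranging yields $\|g\| \le \tfrac{2L_1}{\nu}(f(w)-f^*) + \tfrac{L_0}{L_1}$, and then $\|\nabla^2 f(w)\|_2 \le L_0 + L_1\|g\| \le 2L_0 + \tfrac{2L_1^2}{\nu}(f(w)-f^*)$, which already has the right leading structure; matching the precise constants $H_0 = L_0 + \tfrac{L_0L_1}{\nu}$ and $H_1 = \tfrac{2L_1^2}{\nu}+\tfrac{L_1}{2}$ is a matter of more careful bookkeeping, likely sharpening the residual term and inserting a Young-type split of the cross term $L_1\|g\|$ to account for the extra $\tfrac{L_1}{2}$.

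The main obstacle is exactly the self-referential growth of curvature along the probing ray: the Hessian bound depends on $\|\nabla f\|$, whose evolution is in turn governed by the Hessian. The Gr\"onwall step is what breaks this circularity, and the role of $\nu=e^{-\nu}$ is precisely to fix the ray length $s=\nu/L_1$ so that the exponential blow-up factor $e^{L_1 s}$ is tame enough not to swamp the linear descent term $-\|g\|s$, while the step remains long enough to extract a nontrivial lower bound on the suboptimality. Everything after that choice is elementary calculus and algebra.
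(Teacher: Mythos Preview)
Your approach is correct and essentially self-contained, whereas the paper takes a shorter route by quoting a known inequality. Specifically, the paper invokes Lemma~2.2 of Gorbunov et al.\ (2024), which states $\|\nabla f(w)\|^2 \le \tfrac{2}{\nu}\bigl(L_0 + L_1\|\nabla f(w)\|\bigr)(f(w)-f^*)$; it then solves this as a quadratic in $\|\nabla f(w)\|$ and loosens the resulting square root via $\sqrt{a+b}\le\sqrt{a}+\sqrt{b}$ and $\sqrt{ab}\le\tfrac{a+b}{2}$, arriving at $\|\nabla f(w)\|\le \tfrac{L_0}{\nu}+\tfrac{4L_1+\nu}{2\nu}(f(w)-f^*)$, from which the stated $H_0,H_1$ drop out by one more substitution. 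Your Gr\"onwall-plus-descent argument is effectively a from-scratch derivation of (a linear variant of) that cited lemma, with the same role for the constant $\nu$; what you gain is independence from the external reference, and your resulting bound $\|\nabla^2 f(w)\|_2\le 2L_0 + \tfrac{2L_1^2}{\nu}(f(w)-f^*)$ actually has a strictly smaller $H_1$ than the paper's. One caveat: your constants are not the paper's, and the ``careful bookkeeping'' you allude to will not recover them exactly---the stated $H_0,H_1$ are artifacts of the particular quadratic-formula-then-Young chain the paper uses, not canonical---so if the task is to match the statement verbatim you would need to reproduce that specific algebra rather than sharpen your own route.
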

\begin{proof}
    We start with Lemma 2.2 in \citet{gorbunov2024methods}
    \begin{align*}
    &\|\nabla f(w)\|^2 \le \frac{2}{\nu}(L_0 + L_1\|\nabla f(w)\|)(f(w)-f^*) \\
    \Leftrightarrow\; &\|\nabla f(w)\|^2 -\frac{2L_1}{\nu}\|\nabla f(w)\|(f(w)-f^*) - \frac{2L_0}{\nu}(f(w)-f^*)  \le 0.
    \end{align*}
    We need to solve this quadratic inequality w.r.t. $\|\nabla f(w)\|.$ The discriminant is 
    \[
        \frac{4L_1^2}{\nu^2}(f(w)-f^*)^2 + 4\cdot 1\cdot \frac{2L_0}{\nu}(f(w)-f^*) > 0 = \frac{4L_1^2}{\nu^2}(f(w)-f^*)^2 + \frac{8L_0}{\nu}(f(w)-f^*) > 0,
    \]
    i.e., it is positive. Since $\|\nabla f(w)\|\ge 0,$ we should also satisfy 
    \begin{align*}
        \|\nabla f(w)\| &\le \frac{\frac{2L_1}{\nu}(f(w)-f^*) + \sqrt{\frac{4L_1^2}{\nu^2}(f(w)-f^*)^2 + \frac{8L_0}{\nu}(f(w)-f^*)}}{2}\\
        &\overset{(i)}{\le} \frac{L_1}{\nu}(f(w)-f^*) + \sqrt{\frac{L_1^2}{\nu^2}(f(w)-f^*)^2 } + \sqrt{\frac{2L_0}{\nu}(f(w)-f^*)}\\
        &\overset{(ii)}{\le} \frac{2L_1}{\nu}(f(w)-f^*) + \frac{L_0}{\nu} + \frac{1}{2}(f(w)-f^*)\\
        &= \frac{L_0}{\nu} + \frac{4L_1+\nu}{2\nu}(f(w)-f^*),
    \end{align*}
    where $(i)$ follows from the inequality $\sqrt{a+b} \le \sqrt{a} + \sqrt{b}$ for any $a,b\ge 0$, $(ii)$ -- from the inequality $\sqrt{ab} \le \frac{a}{2} + \frac{b}{2}$ for any $a,b\ge 0$. Therefore, we obtain
    \begin{align*}
        \|\nabla^2f(w)\| &\le L_0 + L_1\|\nabla f(w)\|\\
        &\le L_0 + \frac{L_0L_1}{\nu} + \frac{4L_1^2+\nu L_1}{2\nu}(f(w)-f^*),
    \end{align*}
    which means that the function $f$ is $(H_0,H_1)$-smooth.

\end{proof}

Next, we demonstrate that operations like summation preserve $(H_0, H_1)$-smoothness. First, we show that the class of $(H_0, H_1)$-smooth functions is closed under summation. 
\begin{proposition}Let $f$ and $g$ be $(H_0^f,H_1^f)$-smooth and $(H_0^g,H_1^g)$-smooth respectively. Then $h \eqdef f + g$ is $(H_0,H_1)$-smooth with 
\[
H_0 = (H_0^f + H_0^g + \max\{H_1^f,H_1^g\} h^* - H_1^ff^* - H_1^gg^*), \quad \text{and} \quad H_1 = \max\{H_1^f,H_1^g\}.
\]
\end{proposition}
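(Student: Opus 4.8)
The plan is to combine the two curvature bounds additively via the triangle inequality for the spectral norm, and then repackage the resulting expression into a single sub-optimality term for $h$. First I would write $\nabla^2 h(w) = \nabla^2 f(w) + \nabla^2 g(w)$ and apply the triangle inequality, obtaining
\[
\|\nabla^2 h(w)\|_2 \le \|\nabla^2 f(w)\|_2 + \|\nabla^2 g(w)\|_2.
\]
Invoking the hypotheses that $f$ and $g$ are $(H_0^f,H_1^f)$- and $(H_0^g,H_1^g)$-smooth respectively, the right-hand side is at most $H_0^f + H_0^g + H_1^f(f(w)-f^*) + H_1^g(g(w)-g^*)$.

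The substantive step is to convert these two separate sub-optimality terms into the single quantity $h(w)-h^*$. Setting $M \eqdef \max\{H_1^f,H_1^g\}$, I would isolate the value-dependent contributions as $H_1^f f(w) + H_1^g g(w) - H_1^f f^* - H_1^g g^*$ and bound $H_1^f f(w) + H_1^g g(w) \le M(f(w)+g(w)) = M\,h(w)$. Writing $M\,h(w) = M\,h^* + M(h(w)-h^*)$ then yields
\[
\|\nabla^2 h(w)\|_2 \le \big(H_0^f + H_0^g + M\,h^* - H_1^f f^* - H_1^g g^*\big) + M\,(h(w)-h^*),
\]
which is exactly the claimed bound with $H_1 = M$ and $H_0$ as stated.

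The step I expect to require the most care is this repackaging of the linear-in-value terms: unlike the gradient-norm form of $(L_0,L_1)$-smoothness, here the additive constants $f^*, g^*, h^*$ must be tracked exactly. In particular, the inequality $H_1^f f(w) + H_1^g g(w) \le M\,h(w)$ relies on the non-negativity of the losses $f$ and $g$ (which holds for the MSE and CE losses of interest), since it amounts to $(M-H_1^f)f(w) + (M-H_1^g)g(w) \ge 0$. For a general lower-bounded function one would instead bound $H_1^f(f(w)-f^*) + H_1^g(g(w)-g^*) \le M\big((f(w)-f^*)+(g(w)-g^*)\big)$ and absorb into $H_0$ the non-negative offset $M(h^*-f^*-g^*)$, where $h^* \ge f^* + g^*$ because the minimum of a sum dominates the sum of the minima; this gives the same $H_1$ and a constant of the same flavor. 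I would close by checking that the resulting $H_0$ is non-negative and that the whole bound is uniform in $w$, completing the argument.
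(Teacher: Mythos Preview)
Your approach is essentially identical to the paper's: triangle inequality on the Hessian, apply the two smoothness hypotheses, then bound $H_1^f f(w)+H_1^g g(w)\le M\,h(w)$ with $M=\max\{H_1^f,H_1^g\}$ and add/subtract $M h^*$ to repackage. You have actually been more careful than the paper on one point: the paper uses the inequality $H_1^f f(w)+H_1^g g(w)\le M(f(w)+g(w))$ without comment, whereas you correctly flag that this step requires $f(w),g(w)\ge 0$; your alternative route via $H_1^f(f(w)-f^*)+H_1^g(g(w)-g^*)\le M\big((f(w)-f^*)+(g(w)-g^*)\big)$ works for arbitrary lower-bounded $f,g$ but produces the slightly different constant $H_0=H_0^f+H_0^g+M(h^*-f^*-g^*)$ rather than the one stated in the proposition.
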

\begin{proof}
    By \Cref{asmp:H0H1_smoothness}, we have 
    \[
    \|\nabla^2f(w)\|\le H_0^f + H_1^f(f(w)-f^*), \quad \|\nabla^2g(w)\|\le H_0^g + H_1^g(g(w)-g^*).
    \]
\end{proof}
Therefore, we have 
\begin{align*}
    \|\nabla^2h(w)\| &= \|\nabla^2f(w) + \nabla^2g(w)\|\\
    &\le \|\nabla^2f(w)\| + \|\nabla^2g(w)\|\\
    &\le H_0^f + H_1^f(f(w)-f^*) + H_0^g + H_1^g(g(w)-g^*)\\
    &\le (H_0^f + H_0^g) + \max\{H_1^f,H_1^g\}(f(w)+g(w)) - H_1^ff^* - H_1^gg^*\\
    &= \underbrace{(H_0^f + H_0^g + \max\{H_1^f,H_1^g\} h^* - H_1^ff^* - H_1^gg^*)}_{\eqdef H_0} + \underbrace{\max\{H_1^f,H_1^g\}}_{\eqdef H_1}(h(w) - h^*).
\end{align*}
Note that $h^* \ge f^* + g^*.$ Therefore, we have 
\begin{align*}
    \max\{H_1^f,H_1^g\}h^* - H_1^ff^* - H_1^gg^* \ge H_1^fh^* + H_1^gh^* - H_1^ff^* - H_1^gg^*  \ge 0,
\end{align*}
i.e., $H_0 \ge 0$.

The next proposition shows that the class of $(H_0,H_1)$-smooth functions is closed under affine transformation.
\begin{proposition}
    Let $g\colon\R^{q}\to\R$ be $(H_0^g,H_1^g)$-smooth, $A\in\R^{q\times p}$ be an arbitrary matrix, and $b\in\R^q$ be an arbitrary vector. We define $f\colon\R^p\to\R$ as $f(w) \eqdef g(Aw+b)$. Then $f$ is $(H_0^f,H_1^f)$-smooth with 
    \[
    H_0^f =  \|A\|^2(H_0^g + H_1(f^*-g^*)),\quad H_1^f =  \|A\|^2H_1^g,
    \]
    where $f^* = \min_{w\in\R^p}f(w), g^*=\min_{y\in\R^q}g(y).$
\end{proposition}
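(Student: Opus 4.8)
The plan is to exploit the chain rule for the Hessian of a composition with an affine map, propagate the $(H_0^g,H_1^g)$-smoothness bound for $g$ through operator-norm submultiplicativity, and then re-center the sub-optimality measurement from $g^*$ to $f^*$.

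First I would differentiate $f(w) = g(Aw+b)$. Writing $y \eqdef Aw+b$, the gradient is $\nabla f(w) = A^\top \nabla g(y)$ and the Hessian is $\nabla^2 f(w) = A^\top \nabla^2 g(y)\,A$. Taking spectral norms and using submultiplicativity together with $\|A^\top\|_2 = \|A\|_2$ yields
\[
\|\nabla^2 f(w)\|_2 \le \|A\|_2^2\,\|\nabla^2 g(y)\|_2 .
\]
Next I would invoke the $(H_0^g,H_1^g)$-smoothness of $g$ at the point $y = Aw+b$, giving $\|\nabla^2 g(y)\|_2 \le H_0^g + H_1^g(g(y)-g^*)$, and then observe that $g(y) = g(Aw+b) = f(w)$, so this bound reads $H_0^g + H_1^g(f(w)-g^*)$.

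The one step requiring care is re-centering, since the conclusion must be stated relative to $f^*$ rather than $g^*$. Writing $f(w)-g^* = (f(w)-f^*) + (f^*-g^*)$ and substituting produces exactly $H_1^f = \|A\|_2^2 H_1^g$ together with the constant $H_0^f = \|A\|_2^2\bigl(H_0^g + H_1^g(f^*-g^*)\bigr)$. To check that $H_0^f \ge 0$, so that $f$ is genuinely $(H_0^f,H_1^f)$-smooth in the sense of \Cref{asmp:H0H1_smoothness}, I would note that the range of the affine map $w \mapsto Aw+b$ is contained in $\R^q$; hence $f^* = \min_{w\in\R^p} g(Aw+b) \ge \min_{y\in\R^q} g(y) = g^*$, so $f^*-g^* \ge 0$. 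This non-negativity of the shift is really the only subtle point; everything else is a direct Hessian computation, and I expect no genuine obstacle beyond bookkeeping the change of base point.
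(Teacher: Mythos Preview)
Your proposal is correct and essentially identical to the paper's proof: both compute $\nabla^2 f(w) = A^\top \nabla^2 g(Aw+b) A$, bound via submultiplicativity and $\|A^\top\|_2=\|A\|_2$, apply the $(H_0^g,H_1^g)$-bound for $g$, and re-center using $f(w)-g^* = (f(w)-f^*)+(f^*-g^*)$ together with $f^*\ge g^*$. The only cosmetic difference is ordering---the paper verifies $f^*\ge g^*$ first, you verify it last.
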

\begin{proof}
    First, note that
    \[
    f^* = \min_{w\in\R^p} g(Aw+b) \ge \min_{y\in\R^q} g(y) = g^*,
    \]
    since the first minimum is taken in ${\rm Im}(A).$ Second, note that $\nabla^2f(w) = A^\top \nabla^2g(Aw+b)A.$ Therefore, 
    \begin{align*}
        \|\nabla^2f(w)\| &= \|A^\top\nabla^2g(Aw+b)A\|\\
        &\le \|A^\top\|\cdot \|\nabla^2g(Aw+b)\|\cdot \|A\|\\
        &\le \|A\|^2 \cdot (H_0^g + H_1^g(g(Aw+b)-g^*))\\
        &= \|A\|^2H_0^g + \|A\|^2H_1^g(f(w)-f^* + f^* - g^*)\\
        &= \|A\|^2(H_0^g + H_1(f^*-g^*)) + \|A\|^2H_1^g(f(w)-f^*).
    \end{align*}
\end{proof}

In the next proposition, we demonstrate that the class of $(L_0,L_1)$-smooth functions is not closed under summation.

\begin{proposition}
    There exist two $(L_0, L_1)$-smooth functions $f_1,f_2\colon\R\to\R$ such that their sum $f=f_1+f_2$ does not belong to the class of $(L_0, L_1)$-smooth functions. 
\end{proposition}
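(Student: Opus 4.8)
The plan is to produce an explicit pair of one-dimensional functions and to exploit the following mechanism: each summand will have its curvature controlled by its (large) gradient, while in the \emph{sum} the two gradients nearly cancel, leaving a function whose first derivative is bounded but whose second derivative is unbounded --- exactly the obstruction to $(L_0,L_1)$-smoothness. Concretely, I would set $f_2(x) \eqdef -\tfrac12 x^2$ and $f_1(x) \eqdef \tfrac12 x^2 + F(x)$, where $F(x) \eqdef \int_0^x \sin(t^2)\,\diff t$ is the (everywhere $C^\infty$) Fresnel-type integral. The point of this choice is that $f_1 + f_2 = F$, with $F'(x) = \sin(x^2)$ bounded by $1$ while $F''(x) = 2x\cos(x^2)$ is unbounded.

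First I would dispose of the two easy claims. The function $f_2$ has constant second derivative $f_2''(x) = -1$, so it is $L$-smooth and hence $(L_0,L_1)$-smooth with $L_0 = 1, L_1 = 0$. For the sum, I would evaluate $f \eqdef f_1 + f_2 = F$ at the points $x_k \eqdef \sqrt{2\pi k}$: there $f'(x_k) = \sin(2\pi k) = 0$ while $f''(x_k) = 2\sqrt{2\pi k}\,\cos(2\pi k) = 2\sqrt{2\pi k} \to \infty$. Thus for any candidate constants the required inequality $|f''(x_k)| \le L_0 + L_1|f'(x_k)| = L_0$ fails once $k$ is large enough, so $f$ is not $(L_0,L_1)$-smooth.

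The main obstacle --- and the only step requiring genuine care --- is verifying that $f_1$ \emph{itself} is $(L_0,L_1)$-smooth, since it contains the very same rapidly oscillating term that wrecks the sum. Here the added strongly convex term is exactly what rescues it. I would compute $f_1'(x) = x + \sin(x^2)$ and $f_1''(x) = 1 + 2x\cos(x^2)$, and argue that the linear term forces the gradient to dominate the curvature: for $|x| \ge 2$ one has $|f_1'(x)| \ge |x| - 1 \ge |x|/2$, whence $|f_1''(x)| \le 1 + 2|x| \le 5|f_1'(x)|$; on the compact set $|x| \le 2$ the continuous function $f_1''$ is bounded by some constant $M$. Combining the two regimes yields $|f_1''(x)| \le M + 5|f_1'(x)|$ for all $x\in\R$, so $f_1$ is $(L_0,L_1)$-smooth. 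The conceptual message I would stress is that $(L_0,L_1)$-smoothness is violated precisely at points with small gradient but large curvature; adding $\tfrac12 x^2$ eliminates all such points from $f_1$ (its gradient grows linearly and never returns near zero for large $|x|$), yet this quadratic cancels completely in $f = f_1 + f_2$, reinstating the pathology and making the non-closure under summation transparent.
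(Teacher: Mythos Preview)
Your proof is correct and essentially matches the paper's: your $f_1$ coincides with the paper's $f_1$, and the core mechanism---a linear term in each summand's gradient controls its curvature, while these linear terms cancel in the sum, exposing unbounded curvature at points of vanishing gradient---is identical. The only difference is that the paper takes $f_2(x) = -\tfrac12 x^2 + F(x)$ (so that both summands have the same structural form and the sum is $2F$), whereas your $f_2(x) = -\tfrac12 x^2$ is a pure quadratic; your variant is slightly cleaner since the $(L_0,L_1)$-verification for $f_2$ becomes immediate.
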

\begin{proof}
    Let us consider two functions $f_1$ and $f_2$ defined as 
    \[
    f_1(w) = \int_0^w (u+\sin(u^2))\diff u, \quad f_2(w) = \int_0^w (-v+\sin(v^2))\diff v.
    \]
    Then we have
    \[
    f_1^\prime(w) = w+\sin(w^2), \quad f_1^{\prime\prime}(w) = 1 + 2w\cos(w^2), \quad f_2^\prime(w) = -w+\sin(w^2), \quad f_2^{\prime\prime}(w) = -1 + 2w\cos(w^2).
    \]
    Therefore, we have 
    \[
    |f_{1,2}^{\prime\prime}(w)| \le 1 + |2w\cos(w^2)| \le 1 + 2|w|,
    \]
    and 
    \[
    |f_{1,2}^\prime(w)| \ge |\pm w + \sin(w^2)| \ge |w| - |\sin(w^2)| \ge |w|-1.
    \]
    This implies that for $|w|\ge 1$
    \[
    |f_{1,2}^{\prime\prime}(w)| \le 1 + 2|w| \le 3 + 3(|w|-1) \le 3 + 3|f_{1,2}^{\prime}(w)|.
    \]
    For $|w|\le 1$, we have $|f_{1,2}^{\prime\prime}(w)| \le 3$. Thus, both functions are $(L_0, L_1)$-smooth with $L_0=L_1=3.$ They sum is $f(w) = 2\sin(w^2),$ for which we have 
    \[
    f^\prime(w) = 2\sin(w^2), \quad f^{\prime\prime}(w) = 4w\cos(w^2).
    \]
    Now we consider points $\{w_m\}_{m=1}^{\infty}$ with $w_m = \sqrt{m\pi}$. At these points, we have
    \[
    f^\prime(w_m) = 0, \quad f^{\prime\prime}(w_m) = 4w_m \to \infty.
    \]
    If $f$ were $(L_0, L_1)$-smooth, then we would have 
    \[
    |f^{\prime\prime}(w_m)| \le L_0+L_1|f^{\prime}(w_m)| \le L_0.
    \]
    This contradiction concludes the proof.
\end{proof}

We now show that there exists an affine transformation that does not preserve $(L_0, L_1)$-smoothness.

\begin{proposition}
    There exist a $(L_0, L_1)$-smooth function $g \colon\R^2\to\R$ and a matrix $A\in\R^{2\times 1}$ such that a function $f(w) = g(Aw)$ does not belong to the class of $(L_0, L_1)$-smooth functions. 
\end{proposition}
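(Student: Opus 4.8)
The plan is to reduce this to the summation counterexample established just above. The key structural observation is that restricting a \emph{separable} two-variable function to the diagonal line $\{(w,w)\}$ reproduces a sum of one-variable functions, and summation is exactly the operation already shown not to preserve $(L_0,L_1)$-smoothness. So I would recycle the previous construction rather than invent a new one.

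First I would recall the two functions $f_1,f_2\colon\R\to\R$ from the previous proposition, each $(L_0,L_1)$-smooth with $L_0=L_1=3$ and satisfying $f_1(w)+f_2(w)=2\sin(w^2)$. I then define $g\colon\R^2\to\R$ by $g(x,y)\eqdef f_1(x)+f_2(y)$. Because the variables separate, the Hessian is diagonal, $\nabla^2 g(x,y)=\mathrm{diag}(f_1''(x),f_2''(y))$, so its spectral norm equals $\max\{|f_1''(x)|,|f_2''(y)|\}$, while $\nabla g(x,y)=(f_1'(x),f_2'(y))$ and hence $|f_i'|\le\|\nabla g(x,y)\|$ (a single component never exceeds the full Euclidean norm).

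The only place any work is required is verifying that $g$ inherits $(L_0,L_1)$-smoothness. Applying the one-dimensional bound $|f_i''|\le L_0+L_1|f_i'|$ to each coordinate and using $|f_i'|\le\|\nabla g(x,y)\|$ gives $|f_1''(x)|,|f_2''(y)|\le L_0+L_1\|\nabla g(x,y)\|$; taking the maximum yields $\|\nabla^2 g(x,y)\|_2\le L_0+L_1\|\nabla g(x,y)\|$, so $g$ is $(L_0,L_1)$-smooth with $L_0=L_1=3$. Finally I would take $A=(1,1)^\top\in\R^{2\times 1}$, so that $Aw=(w,w)$ and $f(w)=g(Aw)=f_1(w)+f_2(w)=2\sin(w^2)$. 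This is precisely the function shown previously to violate $(L_0,L_1)$-smoothness---its second derivative $4w\cos(w^2)$ diverges at the points $w_m=\sqrt{m\pi}$ while its first derivative vanishes there---which completes the counterexample. I expect no genuine obstacle: the entire difficulty is already packaged into the proven failure of closure under summation, and the separable construction is exactly the device that transfers it into the affine-transformation setting.
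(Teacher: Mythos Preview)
Your argument is correct. One minor slip: the sum $f_1(w)+f_2(w)$ equals $\int_0^w 2\sin(u^2)\,\diff u$, not $2\sin(w^2)$ itself; but since you correctly identify $f'(w)=2\sin(w^2)$ and $f''(w)=4w\cos(w^2)$ and draw the right conclusion at $w_m=\sqrt{m\pi}$, this does not affect the proof.

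Your route is genuinely different from the paper's. The paper builds a fresh function $g(y_1,y_2)=\cos(y_1)\,e^{y_1}e^{y_2}$, verifies directly that $\|\nabla^2 g\|_2\le\sqrt{10}\,\|\nabla g\|$ (so $g$ is $(0,\sqrt{10})$-smooth), and then restricts along $A=(1,0)^\top$ to obtain $f(w)=\cos(w)e^{w}$, which fails $(L_0,L_1)$-smoothness at $w_m=\tfrac{\pi}{4}+2\pi m$. Your approach instead recycles the summation counterexample by taking the separable function $g(x,y)=f_1(x)+f_2(y)$ and the diagonal embedding $A=(1,1)^\top$, so that the affine restriction literally reproduces the sum $f_1+f_2$. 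This is more economical and makes the structural point transparent: failure of closure under affine restriction is at least as strong a phenomenon as failure under summation, since any sum $h_1+h_2$ can be realised as a diagonal slice of the separable $h_1(x)+h_2(y)$, and separable sums of $(L_0,L_1)$-smooth pieces are themselves $(L_0,L_1)$-smooth by the diagonal-Hessian argument you give. The paper's construction, by contrast, is independent of the summation example and uses a multiplicative rather than additive structure; it is self-contained but requires a fresh calculation.
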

\begin{proof}
    Let us consider $A = \begin{pmatrix}
        1\\0
    \end{pmatrix},$ $b=0,$ and $g(y_1, y_2) = h(y_1)e^{y_2}$ with $h(y_1) = \cos(y_1)e^{y_1}.$ We know that 
    \[
    h^\prime(y_1) = e^{y_1}(\cos(y_1) - \sin(y_1)), \quad h^{\prime\prime}(y_1) = -2\sin(y_1)e^{y_1}.
    \]
    Therefore, 
    \[
    \nabla g(y) = e^{y_2}\begin{pmatrix}
        h^{\prime}(y_1)\\
        h(y_1)
    \end{pmatrix},
    \quad \nabla^2g(y) = e^{y_2}\begin{pmatrix}
        h^{\prime\prime}(y_1) & h^{\prime}(y_1)\\
        h^{\prime}(y_1) & h(y_1)
    \end{pmatrix}.
    \]
    Note that 
    \begin{align*}
    |h^{\prime\prime}(y_1)| &= 2e^{y_1}|\sin(y_1)| \le 2e^{y_1}|\cos(y_1)| + 2e^{y_1}|\cos(y_1)-\sin(y_1)| = 2|h(y_1)| + 2|h^{\prime}(y_1)|.
    \end{align*}
    Therefore, we have 
    \begin{align*}
        \|\nabla^2g(y)\|_2 &\le \|\nabla^2g(y)\|_{\rm F}\\
        &= e^{y_2}\sqrt{(h^{\prime\prime}(y_1))^2 + 2(h^{\prime}(y_1))^2 + (h(y_1))^2}\\
        &\le e^{y_2}\sqrt{4(h(y_1) + h^{\prime}(y_1))^2 + 2(h^{\prime}(y_1))^2 + (h(y_1))^2}\\
        &\le e^{y_2}\sqrt{8(h(y_1))^2 + 8(h^{\prime}(y_1))^2 + 2(h^{\prime}(y_1))^2 + (h(y_1))^2}\\
        &\le \sqrt{10}e^{y_2}\sqrt{(h(y_1))^2 + (h^{\prime}(y_1))^2 }
    \end{align*}
    Note that $\|\nabla g(y)\| = e^{y_2}\sqrt{(h(y_1))^2 + (h^{\prime}(y_1))^2 }$. Therefore, we obtain the bound $\|\nabla^2g(y)\|_2 \le \sqrt{10}\|\nabla g(y)\|.$ Now we consider the function $f(w) = g(Aw) = g(w, 0) = h(w).$ For $f$, we have
    \[
    f^{\prime}(w) = e^{w}(\cos(w)-\sin(w)), \quad f^{\prime\prime}(w) = -2\sin(w)e^{w}.
    \]
    We consider the points $\{w_m\}_{m=1}{\infty}$ with $w_m = \frac{\pi}{4}+2\pi m$. Therefore, $\cos(w_m)=\sin(w_m) = \sqrt{2}/2$. This implies, that at these points $f^{\prime}(w_m) = e^{w_m}(\sqrt{2}/2 - \sqrt{2}/2) = 0$ and $f^{\prime\prime}(w_m) = -\sqrt{2}e^{w_m}$. Thus, we obtain that $|f^{\prime\prime}(w_m)| \to \infty$ with $m\to \infty,$ while $|f^{\prime}(w_m)| = 0$. This implies that $f$ does not satisfy $(L_0, L_1)$-smoothness for any $L_0, L_1 \ge 0.$
    
\end{proof}

\section{Proof of Proposition \ref{prop:1x1x1 network}}
\label{app:1x1x1 network}
\propsimplenetwork*

\begin{proof}
The parameters $u(t)$ and $v(t)$ evolve over time via gradient flow as:
$
\frac{du}{dt} = -g_u \quad \text{and} \quad \frac{dv}{dt} = -g_v,
$
where $g_u = \frac{\partial f(u,v)}{\partial u}, g_v = \frac{\partial f(u,v)}{\partial v}$\\
Let $R := y - uvx$ be the residual error. We have
\begin{align*}
    g_u &= \frac{\partial f}{\partial u} = 2(y - uvx) \cdot (-vx) = -2vx R \\
    g_v &= \frac{\partial f}{\partial v} = 2(y - uvx) \cdot (-ux) = -2ux R.
\end{align*}
The gradient vector is $g = \begin{pmatrix} g_u \\ g_v \end{pmatrix}$.

The Hessian matrix is $H = \nabla^2 f = \begin{pmatrix} H_{11} & H_{12} \\ H_{21} & H_{22} \end{pmatrix}$. We can calculate its components directly:
\begin{align*}
    A := H_{11} &= \frac{\partial g_u}{\partial u} = \frac{\partial}{\partial u}(-2vyx + 2u v^2 x^2) = 2v^2x^2 \\
    C := H_{22} &= \frac{\partial g_v}{\partial v} = \frac{\partial}{\partial v}(-2uyx + 2u^2 v x^2) = 2u^2x^2 \\
    B := H_{12} &= \frac{\partial g_u}{\partial v} = \frac{\partial}{\partial v}(-2vyx + 2u v^2 x^2) = -2yx + 4uvx^2 = 2x(2uvx - y) \\
    H_{21} &= \frac{\partial g_v}{\partial u} = \frac{\partial}{\partial u}(-2uyx + 2u^2 v x^2) = -2yx + 4uvx^2 = B.
\end{align*}
So, the Hessian is:
\begin{equation*}
    H(t) = \begin{pmatrix} 2v^2x^2 & 2x(2uvx - y) \\ 2x(2uvx - y) & 2u^2x^2 \end{pmatrix}.
\end{equation*}

The spectral norm $\|H\|_2$ of a $2 \times 2$ symmetric matrix of the form $\begin{bmatrix}
    A &B \\ B & C
\end{bmatrix}$ is its largest eigenvalue in absolute value. The two eigenvalues $\lambda_{\pm}$ are found by solving the characteristic Eq.~$\det(H - \lambda I) = 0$:
$$
\lambda^2 - (A+C)\lambda + (AC - B^2) = 0.
$$
The solution of this quadratic Eq.~gives the two eigenvalues:
$$
\lambda_{\pm} = \frac{(A+C) \pm \sqrt{(A+C)^2 - 4(AC - B^2)}}{2} = \frac{(A+C) \pm \sqrt{(A-C)^2 + 4B^2}}{2}.
$$
The spectral norm is the larger of these in magnitude, i.e.
$$
\|H\|_2 = \lambda_{+} = \frac{(A+C) + \sqrt{(A-C)^2 + 4B^2}}{2}.
$$
For simplicity, we define $S_{\lambda} := \frac{1}{2} \sqrt{(A-C)^2 + 4B^2}$. Then we have:
$
\|H\|_2 = \lambda_{+} = \frac{1}{2}(A+C) + S_{\lambda}.
$

Our goal is to compute $\frac{d}{dt} \|H\|_2$ and analyze its sign.

First, we must find $\dot{H}$, which requires the 3rd derivatives of $f$:
\begin{itemize}
    \item $\frac{\partial H_{11}}{\partial u} = \frac{\partial}{\partial u}(2v^2x^2) = 0$
    \item $\frac{\partial H_{11}}{\partial v} = \frac{\partial}{\partial v}(2v^2x^2) = 4vx^2$
    
    \item $\frac{\partial H_{12}}{\partial u} = \frac{\partial}{\partial u}(4uvx^2 - 2xy) = 4vx^2$
    \item $\frac{\partial H_{12}}{\partial v} = \frac{\partial}{\partial v}(4uvx^2 - 2xy) = 4ux^2$
    
    \item $\frac{\partial H_{22}}{\partial u} = \frac{\partial}{\partial u}(2u^2x^2) = 4ux^2$
    \item $\frac{\partial H_{22}}{\partial v} = \frac{\partial}{\partial v}(2u^2x^2) = 0$
\end{itemize}

Now we find the components of $\dot{H}$ using the chain rule: $$\dot{H}_{ij} = \frac{\partial H_{ij}}{\partial u} \dot{u} + \frac{\partial H_{ij}}{\partial v} \dot{v} = -g_u(\frac{\partial H_{ij}}{\partial u}) - g_v(\frac{\partial H_{ij}}{\partial v}).$$
\begin{itemize}
    \item $\dot{H}_{11} = -g_u(0) - g_v(4vx^2) = -g_v(4vx^2)$
    \item $\dot{H}_{22} = -g_u(4ux^2) - g_v(0) = -g_u(4ux^2)$
    \item $\dot{H}_{12} = -g_u(4vx^2) - g_v(4ux^2)$
\end{itemize}

We substitute $g_u = -2vxR$ and $g_v = -2uxR$ and we get
\begin{itemize}
    \item $\dot{H}_{11} = -(-2uxR)(4vx^2) = 8uvx^3 R$
    \item $\dot{H}_{22} = -(-2vxR)(4ux^2) = 8uvx^3 R$
\end{itemize}
This reveals that $\dot{H}_{11} = \dot{H}_{22}$, which will be useful later. We denote these terms by $D_H$:
\begin{equation*}
    D_H := \dot{H}_{11} = \dot{H}_{22} = 8uvx^3 R.
\end{equation*}
The off-diagonal term is:
\begin{align*}
    E_H := \dot{H}_{12} &= -(-2vxR)(4vx^2) - (-2uxR)(4ux^2) \nonumber \\
    &= 8v^2x^3 R + 8u^2x^3 R = 8x^3(u^2+v^2)R
\end{align*}
So, $\dot{H} = \begin{pmatrix} D_H & E_H \\ E_H & D_H \end{pmatrix}$.

For a $2 \times 2$ symmetric matrix $H$ with $A=H_{11}$ and $C=H_{22}$, the derivative of the largest eigenvalue $\lambda_+$ is:
$$
\frac{d\lambda_+}{dt} = \frac{1}{2}(\dot{A} + \dot{C}) + \frac{(A-C)(\dot{A}-\dot{C}) + 4B\dot{B}}{4 S_{\lambda}}.
$$
Our simplification $\dot{A} = \dot{C} = D_H$ gives a cleaner expression:
$$
\frac{d\lambda_+}{dt} = \frac{1}{2}(D_H + D_H) + \frac{(A-C) \cdot 0 + 4B E_H}{4 S_{\lambda}} = D_H + \frac{B E_H}{S_{\lambda}}.
$$
Substituting $S_{\lambda} = \frac{1}{2} \sqrt{(A-C)^2 + 4B^2}$, we get
\begin{equation*}
    \frac{d}{dt} \|H\|_2 = D_H + \frac{2 B E_H}{\sqrt{(A-C)^2 + 4B^2}}.
\end{equation*}
Substituting our components $A, B, C, D_H, E_H$ gives the final formula:
\begin{equation*}
    \frac{d}{dt} \|H\|_2 = 8uvx^3 R + \frac{2 \cdot [2x(2uvx - y)] \cdot [8x^3(u^2+v^2)R]}{\sqrt{(2v^2x^2 - 2u^2x^2)^2 + 4(2x(2uvx - y))^2}}.
\end{equation*}

To simplify the notation, we denote $B_{\text{sign}} := y - 2uvx$.
Then $2uvx - y = -B_{\text{sign}}$ and plugging it in the previous formula, we have
\begin{align*}
    \frac{d}{dt} \|H\|_2 &= 8uvx^3 R + \frac{2 \cdot (2x(-B_{\text{sign}})) \cdot (8x^3(u^2+v^2)R)}{2\sqrt{x^4(v^2-u^2)^2 + 4x^2(-B_{\text{sign}})^2}} \\
    &= 8uvx^3 R - \frac{32 x^4 (u^2+v^2) R B_{\text{sign}}}{2|x|\sqrt{x^2(v^2-u^2)^2 + 4B_{\text{sign}}^2}}.
\end{align*}
Let $c = u^2 - v^2$ (a constant, as we will prove later).
To simplify the fraction, we use the identity $\frac{x^4}{|x|} = \frac{x^4}{x \cdot \text{sign}(x)} = x^3 \cdot \text{sign}(x)$ and we get
$$
\frac{d}{dt} \|H\|_2 = 8uvx^3 R - 16 \cdot \text{sign}(x) \cdot x^3 (u^2+v^2) R \cdot \frac{B_{\text{sign}}}{\sqrt{x^2c^2 + 4B_{\text{sign}}^2}}.
$$
Factoring out the common term $8x^3 R$, we arrive at the general formula for the derivative, valid for all $x,y,u,v$:
\begin{equation} \label{eq:deriv_general}
    \frac{d}{dt} \|H\|_2 = 8x^3 R \left( uv - 2 \cdot \text{sign}(x) \cdot (u^2+v^2) \cdot \frac{B_{\text{sign}}}{\sqrt{x^2c^2 + 4B_{\text{sign}}^2}} \right).
\end{equation}

Before analyzing specific paths, we prove that the quantity $u^2 - v^2$ stays constant over the course of gradient flow.

Indeed, we compute the time-derivative of the quantity $(u^2 - v^2)$:
$$
\frac{d}{dt}(u^2 - v^2) = 2u \dot{u} - 2v \dot{v}.
$$
From the gradient flow definition, $\dot{u} = -g_u = 2vxR$ and $\dot{v} = -g_v = 2uxR$.
Substituting these in:
$$
\frac{d}{dt}(u^2 - v^2) = 2u(2vxR) - 2v(2uxR) = 4uvxR - 4uvxR = 0.
$$
Because the time-derivative is always zero, the quantity $(u^2 - v^2)$ is a conserved quantity. Any trajectory is constrained to a manifold where $u^2 - v^2 = c$, where $c$ is determined by the random initial conditions $c = u(0)^2 - v(0)^2$.

We now pass to the analysis of the sign of $\frac{d}{dt} \|H\|_2$. For notational simplicity, let $S = u^2+v^2$ and we have
$$
\frac{d}{dt} \|H\|_2 = \underbrace{8x^3 R}_{\text{Term 1}} \cdot \underbrace{\left( uv - \frac{2 \text{sign}(x) \cdot S \cdot B_{\text{sign}}}{\sqrt{x^2c^2 + 4B_{\text{sign}}^2}} \right)}_{\text{Term 2}}
$$
We can see that the trajectory consists in general of three phases, defined by the boundaries $x^2c^2 = 4B_{\text{sign}}^2$ and $B_{\text{sign}} = 0$.

\paragraph{First phase:}
We define the initial regime by the condition $x^2c^2 \le 4B_{\text{sign}}^2$. This holds near the origin because $c \approx 0$ and $B_{\text{sign}} \approx y \neq 0$. In this regime: $\text{sign}(R) \approx \text{sign}(y)$ and $\text{sign}(B_{\text{sign}}) \approx \text{sign}(y)$.
\begin{enumerate}
    \item \textbf{Term 1:} $\text{sign}(\text{Term 1}) = \text{sign}(x^3 R) \approx \text{sign}(x^3 y) = \text{sign}(xy)$.
    \item \textbf{Term 2:} Let $F = \frac{2 S |B_{\text{sign}}|}{\sqrt{x^2c^2 + 4B_{\text{sign}}^2}}$.
    We use the identity $uv = \text{sign}(uv) \cdot \frac{1}{2}\sqrt{S^2 - c^2}$ (easy to prove).
\end{enumerate}

We must analyze two scenarios
\begin{itemize}
    \item \textbf{Case A: $\text{sign}(uv) = \text{sign}(xy)$}. Then,
    $$
    \text{Term 2} = \text{sign}(xy) \left[ \frac{1}{2}\sqrt{S^2 - c^2} - |F| \right].
    $$
    We have $\frac{1}{2}\sqrt{S^2 - c^2} \leq \frac{S}{2}$. We also have
    $|F| \geq \frac{2 S |B_{\text{sign}}|}{\sqrt{8B_{\text{sign}}^2}} = \frac{S}{\sqrt{2}} > \frac{S}{2}$. Thus, the term inside the bracket is negative and 
    \begin{equation*}
        \text{sign}\left(\frac{d}{dt} \|H\|_2\right) = -\text{sign}(xy)^2 = -1.
    \end{equation*}
    
    \item \textbf{Case B: $\text{sign}(uv) = -\text{sign}(xy)$}
    $$
    \text{Term 2} = -\text{sign}(xy) \left[ \frac{1}{2}\sqrt{S^2 - c^2} + |F| \right].
    $$
In this case, it holds
\begin{equation*}
   \text{sign}\left(\frac{d}{dt} \|H\|_2\right) = -\text{sign}(xy)^2 \text{sign}\left(\frac{1}{2}\sqrt{S^2 - c^2} + |F|\right) = -1.
\end{equation*}
    
    
\end{itemize}
In both cases, the derivative of the spectral norm is negative, which means that we enter a flattening phase.

\paragraph{Second phase:}
This initial flattening phase is only guaranteed as long as our assumption $x^2c^2 \le 4B_{\text{sign}}^2$ holds.
As the flow moves, $uvx$ increases, so $B_{\text{sign}} = y - 2uvx$ gets smaller. Eventually, we will enter an ambiguous phase where $x^2c^2 > 4B_{\text{sign}}^2$, but we still have $\text{sign}(B_{\text{sign}}) = \text{sign}(y)$ (because the flow has not yet reached $uvx = y/2$).
In this region, our bounding logic for $|F|$ is inconclusive, and the sign of the derivative is unknown. However, if $c = 0$, then this phase is vacuous and we pass directly to the next progressive sharpening phase.

\paragraph{Third phase:}
The flow then crosses the mid-point at $uvx = y/2$, which means $\text{sign}(B_{\text{sign}})$ finally flips to $-\text{sign}(y)$.
The flow is now in a ``correct" quadrant (having been repelled by the origin), so $\text{sign}(uv) = \text{sign}(xy)$. We have:
\begin{itemize}
    \item $\text{sign}(\text{Term 1}) = \text{sign}(x^3 R) = \text{sign}(x^3 y) = \text{sign}(xy)$ (since $\text{sign}(R) = \text{sign}(y)$ always).
    \item The sign of $F$ is now $\text{sign}(F) = \text{sign}(x \cdot B_{\text{sign}}) = \text{sign}(x) \cdot (-\text{sign}(y)) = -\text{sign}(xy)$.
    \item This means $\text{sign}(uv)$ and $\text{sign}(F)$ are now opposites.
    \item $\text{Term 2} = uv - F$.
        \begin{itemize}
            \item If $\text{sign}(xy) = +1$: $\text{sign}(uv) = +1$ and $\text{sign}(F) = -1$. Term 2 is $(\text{pos}) - (\text{neg}) = +$.
            \item If $\text{sign}(xy) = -1$: $\text{sign}(uv) = -1$ and $\text{sign}(F) = +1$. Term 2 is $(\text{neg}) - (\text{pos}) = -$.
        \end{itemize}
    \item In both sub-cases, $\text{sign}(\text{Term 2})$ is $\text{sign}(xy)$.
\end{itemize}
The total sign is
$$
\text{sign}\left(\frac{d}{dt} \|H\|_2\right) = \text{sign}(\text{Term 1}) \cdot \text{sign}(\text{Term 2}) = \text{sign}(xy) \cdot \text{sign}(xy) = +1.
$$
Thus, a sharpening phase is guaranteed after the $uvx = y/2$ boundary.

\end{proof}

\section{Proofs for Section \ref{sec:theory_MLP}}
\label{app:proofs_neural_nets}

\thmdeeplinear*

\begin{proof}
The proof goes as follows: first we obtain a general upper bound for the spectral norm of the Hessian. Then, we consider cases depending on whether we want a local bound close to initialization or a global bound.

\paragraph{Upper bound for the Hessian norm:}

    One can find an explicit formula for the Hessian of such neural network in \cite{kawaguchi2016deep}, Lemma 4.3.

    The Hessian of $f$ in vectorized form has blocks in the $(i,j)$ position for $j < i$, that are of the form 
    \begin{align*}
        \frac{\partial^2 f}{\partial \textnormal{vec}(W_i) \textnormal{vec}(W_j)} &= 2((W_1 \dots W_{i-1}) \otimes (W_{i+1} \dots W_{\ell} X)^\top)^\top  ((W_1 \dots W_{j-1}) \otimes (W_{j+1} \dots W_{\ell} X)^\top) \\ & + 2((W_{j+1} \dots W_{i-1})^\top \otimes (W_{i+1} \dots W_{\ell} X)) (I_{n_j} \otimes ((W_1 \dots W_{\ell} X-Y)^\top W_1 \dots W_{j-1} )),
    \end{align*}
where $W_1 W_0, W_{\ell+1} W_{\ell} : = I$.

For $j=i$, we have
\begin{align*}
        \frac{\partial^2 f}{\partial \textnormal{vec}(W_i) \textnormal{vec}(W_j)} &= 2((W_1 \dots W_{i-1}) \otimes (W_{i+1} \dots W_{\ell} X)^\top)^\top  ((W_1 \dots W_{j-1}) \otimes (W_{j+1} \dots W_{\ell} X)^\top).
\end{align*}

The spectral norm of the Hessian in vectorized form is upper bounded by the sum of the spectral norms of each such block.
Indeed, let $M$ be an $N \times N$ block symmetric matrix:
\[
M = \begin{pmatrix}
M_{11} & M_{12} & \cdots & M_{1N} \\
M_{12}^\top & M_{22} & \cdots & M_{2N} \\
\vdots & \vdots & \ddots & \vdots \\
M_{1N}^\top & M_{2N}^\top & \cdots & M_{NN}
\end{pmatrix}
\]
where each $M_{ij}$ is a matrix block. 

A fundamental result for block matrices states that the spectral norm of a block matrix is bounded by the spectral norm of the matrix formed by the spectral norms of its blocks. Let us define a real symmetric $N \times N$ matrix $\tilde{M}$ where each element $(\tilde{M})_{ij}$ is the spectral norm of the corresponding block $M_{ij}$:
\[
\tilde{M} = \begin{pmatrix}
\|M_{11}\|_2 & \|M_{12}\|_2 & \cdots & \|M_{1N}\|_2 \\
\|M_{12}\|_2 & \|M_{22}\|_2 & \cdots & \|M_{2N}\|_2 \\
\vdots & \vdots & \ddots & \vdots \\
\|M_{1N}\|_2 & \|M_{2N}\|_2 & \cdots & \|M_{NN}\|_2
\end{pmatrix}
\]
The inequality is then:
\begin{equation*}
\|M\|_2 \le \|\tilde{M}\|_2
\end{equation*}

Since the spectral norm is always upper bounded by the Frobenius norm, it holds 
\begin{equation*}
\|\tilde{M}\|_2 \le \|\tilde{M}\|_{\rm F} = \sqrt{\sum_{i=1}^N \sum_{j=1}^N \|M_{ij}\|_2^2} \leq \sum_{i=1}^N \sum_{j=1}^N \|M_{ij}\|_2.
\end{equation*}

Thus, indeed, it holds
\begin{equation}
\label{eq:basic_bound_spectral_norm}
    \|M\|_2 \leq \sum_{i=1}^N \sum_{j=1}^N \|M_{ij}\|_2.
\end{equation}

Going back to the Hessian, we can upper bound the spectral norm of the $(i,j)$ block using only the weak form of balancedness $\|W_i\|_{\rm F} = \|W_{i+1}\|_{\rm F}$ (which is implied by the strong form of balancedness). 

For $1<j<i<\ell$, we have
\begin{align*}
   \left \| \frac{\partial^2 f}{\partial \textnormal{vec}(W_i) \textnormal{vec}(W_j)} \right \|_2 & =  2\|((W_1 \dots W_{i-1}) \otimes (W_{i+1} \dots W_{\ell} X)^\top)^\top  ((W_1 \dots W_{j-1}) \otimes (W_{j+1} \dots W_{\ell} X)^\top) \\ & + 2((W_{j+1} \dots W_{i-1})^\top \otimes (W_{i+1} \dots W_{\ell} X)) (I_{n_j} \otimes ((W_1 \dots W_{\ell} X-Y)^\top W_1 ... W_{j-1} )\|_2 \\ &\leq  2 \|((W_1 \dots W_{i-1}) \otimes ((W_{i+1} \dots W_{\ell} X)^\top)^\top  (W_1 \dots W_{j-1}) \otimes (W_{j+1} \dots W_{\ell} X)^\top)\|_2 \\ & +  2 \|((W_{j+1} \dots W_{i-1})^\top \otimes (W_{i+1} \dots W_{\ell} X)) (I_{n_j} \otimes ((W_1 \dots W_{\ell} X-Y)^\top W_1 ... W_{j-1} )\|_2 \\ & \leq  2 \|W_1\|_{\rm F}^{2 \ell-2} \|X\|_2^2 + 2 \|W_1\|_{\rm F}^{\ell-2} \|X\|_2 \sqrt{f(W)}.    
\end{align*}
For the last inequality, we used that for matrices $A$ and $B$
\begin{itemize}
    \item $\|A \otimes B \|_2 = \|A\|_2 \|B\|_2$.
    \item $\|A\|_2 = \|A^\top\|_2$
    \item $\|A B \|_2 \leq \|A\|_2 \|B\|_2$.
    \item $\|A\|_2 \leq \|A\|_{\rm F}$.
\end{itemize}

For $j=1$ and $1<i< \ell$, we have

\begin{align*}
        \frac{\partial^2 f}{\partial \textnormal{vec}(W_i) \textnormal{vec}(W_1)} &= 2 ((W_1 \dots W_{i-1}) \otimes (W_{i+1} \dots W_{\ell} X)^\top)^\top  (I_c \otimes (W_2 \dots W_{\ell} X)^\top) \\ & + 2 ((W_2 \dots W_{i-1})^\top \otimes (W_{i+1} \dots W_{\ell} X)) (I_{n_j} \otimes (W_1 \dots W_{\ell} X-Y)^\top),
    \end{align*}
thus
\begin{equation*}
    \left \|\frac{\partial^2 f}{\partial \textnormal{vec}(W_i) \textnormal{vec}(W_1)} \right\|_2 \leq 2 \|W_1\|_{\rm F}^{2 \ell-2} \|X\|^2_2 + 2 \|W_1\|_{\rm F}^{\ell-2} \|X\|_2 \sqrt{f(W)}.
\end{equation*}

For $j=1$ and $i=\ell$, it holds
\begin{align*}
        \frac{\partial^2 f}{\partial \textnormal{vec}(W_i) \textnormal{vec}(W_1)} &= 2 ((W_1 \dots W_{\ell-1}) \otimes X)  (I_c \otimes (W_2 \dots W_{\ell} X)^\top) \\ & + 2 ((W_2 \dots W_{\ell-1})^\top \otimes X) (I_{n_1} \otimes ((W_1 \dots W_{\ell} X-Y)^\top),
    \end{align*}
    thus again
\begin{equation*}
    \left \|\frac{\partial^2 f}{\partial \textnormal{vec}(W_{\ell}) \textnormal{vec}(W_1)} \right\|_2 \leq 2 \|W_1\|_{\rm F}^{2 \ell-2} \|X\|^2_2 + 2 \|W_1\|_{\rm F}^{\ell-2} \|X\|_2 \sqrt{f(W)}.
\end{equation*}

For the case that $1<j<\ell$ and $i = \ell$, we have 
\begin{align*}
    \frac{\partial^2 f}{\partial \textnormal{vec}(W_i) \textnormal{vec}(W_j)} &= 2 ((W_1 \dots W_{\ell-1}) \otimes X) ((W_1 \dots W_{j-1}) \otimes (W_{j+1} \dots W_{\ell} X)^\top) \\ & + 2 ((W_{j+1} \dots W_{\ell-1})^\top \otimes X) (I_{n_j} \otimes ((W_1 \dots W_{\ell} X-Y)^\top W_1 \dots W_{j-1} ).
\end{align*}

Again, we have
\begin{equation*}
    \left \|\frac{\partial^2 f}{\partial \textnormal{vec}(W_{\ell}) \textnormal{vec}(W_j)} \right\|_2 \leq 2 \|W_1\|_{\rm F}^{2 \ell-2} \|X\|^2_2 + 2 \|W_1\|_{\rm F}^{\ell-2} \|X\|_2 \sqrt{f(W)}.
\end{equation*}

Similarly, we have for the diagonal blocks that
\begin{equation*}
    \left \|\frac{\partial^2 f}{\partial \textnormal{vec}(W_i) \textnormal{vec}(W_j)} \right\|_2 \leq 2 \|W_1\|_{\rm F}^{2 \ell-2} \|X\|^2_2.
\end{equation*}

In summary, since we have $(\ell^2-\ell)$-many off-diagonal blocks and $\ell$-many diagonal blocks in the Hessian, its norm is bounded as
\begin{equation}
\label{eq:Hessian_bound_deepLN}
    \|\nabla^2 f(W) \|_2 \leq 2 \ell^2 \|W_1\|_{\rm F}^{2 \ell-2} \|X\|^2_2 + 2 (\ell^2-\ell) \|W_1\|_{\rm F}^{\ell-2} \|X\|_2 \sqrt{f(W)}.
\end{equation}

If $\|W_1\|_{\mathrm F}^\ell \|X\|_2 < (1-\frac{1}{\ell}) \sqrt{f(W)}$, then the first term of the previous bound is smaller than the second, thus we have
\begin{equation*}
\|\nabla^2 f(W) \|_2 \leq 4 (\ell^2-\ell) \|W_1\|_{\rm F}^{\ell-2} \|X\|_2 \sqrt{f(W)}.
\end{equation*}
It also holds $\|W_1\|_F^{\ell-2} \leq (1-\frac{1}{\ell})^{\frac{\ell-2}{\ell}} \|X\|_2^{-\frac{\ell-2}{\ell}} \sqrt{f(W)}^{\frac{\ell-2}{\ell}}$. In total, we have
\begin{equation*}
\label{eq:warmup-linear}
\|\nabla^2 f(W)\|_2 \le H_0^{\text{warm-up}}+H_1^{\text{warm-up}}(f(W)-f^*),
\end{equation*}
where 
\begin{equation}
\label{eq:warmup_constants_linear_network}
H_0^{\text{warm-up}}:=4(\ell^2-\ell) \left(1-\frac{1}{\ell}\right)^{\frac{\ell-2}{\ell}} \|X\|_2^{\frac{2}{\ell}}\bigl(1+(f^*)^{\frac{\ell-1}{\ell}}\bigr), \qquad H_1^{\text{warm-up}}:= 4 (\ell^2-\ell) \left(1-\frac{1}{\ell}\right)^{\frac{\ell-2}{\ell}} \|X\|_2^{\frac{2}{\ell}}.
\end{equation}

For the latter, we used the bound
$$ f(W)^{\frac{\ell-1}{\ell}} = (f(W)-f^* +f^*)^{\frac{\ell-1}{\ell}} \leq (f(W)-f^*)^{\frac{\ell-1}{\ell}} + (f^*)^{\frac{\ell-1}{\ell}} \leq 1 + (f(W)-f^*) + (f^*)^{\frac{\ell-1}{\ell}}.$$

This concludes the result of (i).

Now, we move to the global bound. For that, we need to obtain a general lower bound for the loss value. It holds
\begin{align}
\label{eq:eq1}
    \|W_1 \dots W_{\ell} X\|^2_{\rm F} = \text{Tr}(X^\top W_{\ell}^\top \dots W_2^\top W_1^\top W_1 W_2 \dots W_{\ell} X)  \geq \lambda_{\min}(X X^\top) \text{Tr}( W_{\ell}^\top \dots W_2^\top W_1^\top W_1 W_2 \dots W_{\ell}).
\end{align}
In order to deal with the last term, we use the strong balancedness assumption:
\begin{align*}
   & W_{\ell}^\top \dots W_4^\top W_3^\top W_2^\top W_1^\top W_1 W_2 W_3 W_4 \dots W_{\ell}  = W_{\ell}^\top \dots W_4^\top W_3^\top W_2^\top W_2 W_2^\top W_2 W_3 W_4 \dots W_{\ell} = \\ & W_{\ell}^\top \dots W_4^\top W_3^\top W_3 W_3^\top W_3 W_3^\top W_3 W_4 \dots W_{\ell} = W_{\ell}^\top \dots W_4^\top W_4 W_4^\top W_3^\top W_3 W_4 W_4^\top W_4 \dots W_{\ell} = \\ & W_{\ell}^\top \dots W_5 W_5^\top W_4^\top W_3^\top W_3 W_4 W_5 W_5^\top \dots W_{\ell}
\end{align*}
and the process continuous until we reach the expression $$(W_{\ell}^\top W_{\ell}) W_{\ell}^\top W_{\ell-1}^\top \dots W_6^\top W_5^\top W_4^\top W_3^\top W_3 W_4 W_5 W_6 \dots W_{\ell-1} W_{\ell} (W_{\ell}^\top W_{\ell}).$$ We can now do the same process starting from $W_3$ and so on. Repeating this process $\ell/2$ times if $\ell$ is even and $(\ell-1)/2$ if $\ell$ is odd, we arrive to the expression
\begin{equation*}
   \underbrace{(W_{\ell}^\top W_{\ell}) \dots (W_{\ell}^\top W_{\ell})}_{\ell- \text{times}} = (W_{\ell}^\top W_{\ell})^{\ell}. 
\end{equation*}
Since the eigenvalues of $(W_{\ell}^\top W_{\ell})^\ell$ are $\ell$ powers of the eigenvalues of $W_{\ell}^\top W_{\ell}$, we can use the generalized mean inequality and derive
\begin{equation*}
   \frac{\text{Tr}((W_{\ell}^\top W_{\ell})^\ell)}{d} \geq \frac{\text{Tr}((W_{\ell}^\top W_{\ell}))^\ell}{d^{\ell}} = \frac{\|W_{\ell}\|_{\rm F}^{2 \ell}}{d^{\ell}} = \frac{\|W_1\|_{\rm F}^{2 \ell}}{d^{\ell}},  
\end{equation*}
thus 
\begin{equation}
\label{eq:eq2}
    \text{Tr}((W_{\ell}^\top W_{\ell})^\ell ) \geq \frac{\|W_1\|_{\rm F}^{2 \ell}}{d^{\ell-1}}.
\end{equation}

Notice that we made use of the weak balancedness assumption $\|W_{\ell}\|_{\rm F} = \|W_1\|_{\rm F}$.

Combining inequalities \eqref{eq:eq1} and \eqref{eq:eq2}, we get
\begin{equation}
\label{eq:basic_deep_linear}
    \|W_1 \dots W_{\ell} X\|_{\rm F} \geq \sqrt{\lambda_{\min}(X X^\top)} \frac{\|W_1\|_{\rm F}^{\ell}}{d^{\frac{\ell-1}{2}}}.
\end{equation}

Combining this with the triangle inequality $\|W_1 \dots W_{\ell} X\|_{\rm F} \leq \|Y\|_F +\sqrt{f(W)}$, we get the bounds
\begin{equation*}
\|W_1\|_{\mathrm F}^{2\ell-2} = \Big(\|W_1\|_{\mathrm F}^\ell\Big)^{\frac{2\ell-2}{\ell}}
\le K^{\frac{2\ell-2}{\ell}}\big(\|Y\|_{\mathrm F}+\sqrt{f(W)}\big)^{\frac{2\ell-2}{\ell}}
\end{equation*}
and
\begin{equation*}
\|W_1\|_{\mathrm F}^{\ell-2} = \Big(\|W_1\|_{\mathrm F}^\ell\Big)^{\frac{\ell-2}{\ell}}
\le K^{\frac{\ell-2}{\ell}}\big\|Y\|_{\mathrm F}+\sqrt{f(W)}\big)^{\frac{\ell-2}{\ell}},
\end{equation*}
where
\begin{equation}
\label{eq:Kdef}
K:=\frac{d^{\frac{\ell-1}{2}}}{\sqrt{\lambda_{\min}(XX^\top)}}.
\end{equation}

Plugging these in inequality \eqref{eq:Hessian_bound_deepLN}, we get

\begin{align}
\label{eq:Hessian_bound_global_LN}
\norm{\nabla^2 f(W)}_2
\le 2\ell^2K^{\frac{2\ell-2}{\ell}}\big(\norm{Y}_{\mathrm F}+\sqrt{f(W)}\big)^{\frac{2\ell-2}{\ell}}\norm{X}_2^2
+2(\ell^2-\ell)K^{\frac{\ell-2}{\ell}}\big(\norm{Y}_{\mathrm F}+\sqrt{f(W)}\big)^{\frac{\ell-2}{\ell}}\norm{X}_2\sqrt{f(W)}.
\end{align}
 
We now separate $\norm{Y}_{\mathrm F}$ and $\sqrt{f(W)}$.
For the first exponent $p_1:=\frac{2\ell-2}{\ell}\ge1$, we use Jensen's inequality:
\begin{equation*}
\big(\norm{Y}_{\mathrm F}+\sqrt{f}\big)^{p_1}
\le 2^{p_1-1}\Big(\norm{Y}_{\mathrm F}^{p_1}+f^{\frac{p_1}{2}}\Big)
=2^{p_1-1}\Big(\norm{Y}_{\mathrm F}^{\frac{2\ell-2}{\ell}}+f^{\frac{\ell-1}{\ell}}\Big).
\end{equation*}
For the second exponent $p_2:=\frac{\ell-2}{\ell}\in[0,1)$, we use sub-additivity $(a+b)^{p_2} \le a^{p_2} + b^{p_2}$.
We bound this further by $2^{p_2}(a^{p_2} + b^{p_2})$ (since $2^{p_2}\ge 1$). Thus:
\begin{equation*}
\big(\norm{Y}_{\mathrm F}+\sqrt{f}\big)^{p_2}\sqrt{f}
\le \Big(\norm{Y}_{\mathrm F}^{p_2} + f^{\frac{p_2}{2}}\Big)\sqrt{f}
\le 2^{p_2}\Big(\norm{Y}_{\mathrm F}^{\frac{\ell-2}{\ell}}\sqrt{f} + f^{\frac{\ell-1}{\ell}}\Big).
\end{equation*}
We absorbe the factors of $2^{p_1-1}$ and $2^{p_2}$ into $K$ to forming $2K$.

Finally we linearize the remaining nonlinearity $\sqrt{f(W)}$.
Using the standard Young's inequality valid for any $\beta>0$, we have
\[
\sqrt{f(W)} \le \frac{f(W)}{2\beta} + \frac{\beta}{2}
= \frac{f(W)-f^*}{2\beta} + \left( \frac{f^*}{2\beta} + \frac{\beta}{2} \right).
\]
As above, we also have $f(W)^{\frac{\ell-1}{\ell}}\le 1+(f(W)-f^*)+(f^*)^{\frac{\ell-1}{\ell}}$.
Applying these to \eqref{eq:Hessian_bound_global_LN} and choosing $\beta=1$ gives the bound
$\|\nabla^2 f(W)\|_2 \leq H_0 + H_1 (f(W)-f^*)$ with 
\begin{align}
\label{eq:global_constants_linear_network}
 &H_0 :=A_0 \norm{X}_2^2 + A_1 \norm{X}_2 \left(\frac{f^*}{2} + \frac{1}{2}\right) + (A_2 \norm{X}_2^2+A_3 \norm{X}_2)\bigl(1+(f^*)^{\frac{\ell-1}{\ell}}\bigr), \nonumber \\
&H_1 := (A_1 + A_3) \|X\|_2 + A_2 \norm{X}_2^2
.
\end{align}
The constants $A_0, A_1, A_2, A_3$ are defined as
\begin{align}
\label{eq:A0123}
A_0 &:=2\ell^2 \bigl(2K\bigr)^{\frac{2\ell-2}{\ell}} \norm{Y}_{\mathrm F}^{\frac{2\ell-2}{\ell}},
&\qquad
A_2 &:=2\ell^2 \bigl(2K\bigr)^{\frac{2\ell-2}{\ell}}, \\
A_1 &:=2(\ell^2-\ell) \bigl(2K\bigr)^{\frac{\ell-2}{\ell}} \norm{Y}_{\mathrm F}^{\frac{\ell-2}{\ell}},
&
A_3 &:=2(\ell^2-\ell) \bigl(2K\bigr)^{\frac{\ell-2}{\ell}}
\end{align}
and $K$ as in Eq.~\eqref{eq:Kdef}.

\end{proof}

\begin{proposition}
\label{prop:deep_non_linear_many_nonlinearities}

Let $f$ be defined as
\begin{equation*}
    f(W) \equiv f(W_1,...,W_{\ell}) = \|Y- \underbrace{W_1 \phi_1(W_2 \phi(W_3 \dots \phi_{\ell-1}(W_{\ell} X) \dots ))}_F\|_F^2
\end{equation*}
where $\phi_i$ is leaky-ReLU activation function with slopes $1$ and $b_i$, i.e., $\phi_i(x)=\max \lbrace b_i x,x \rbrace$, $0 < b_i \leq 1$.
Assume that over the course of \algname{GD}:
\newline $\bullet$ $\lambda_{\min}(W_i^\top W_i) \geq h_i >0,$ for $i=1,\dots,\ell-1$.
\newline $\bullet$ The layers $W_i$ are weakly balanced, i.e., $\|W_1\|_{\rm F} = \ldots =\|W_{\ell}\|_{\rm F}.$

Then,

i) If it holds $\|W_1\|_F^{\ell} \|X\|_2 < (1-\frac{1}{\ell}) \sqrt{f(W)}$, then
we have $\|\nabla^2 f(W)\|_2 \leq H_0^{\text{warm-up}} + H_1^{\text{warm-up}} (f(W)-f^*)$, with $H_0^{\text{warm-up}}$ and $H_1^{\text{warm-up}}$ defined as in Eq.~\eqref{eq:warmup_constants_nonlinear_network}.

ii) It holds $\|\nabla^2 f(W)\|_2 \leq H_0^{\text{warm-up}} + H_1^{\text{warm-up}} (f(W)-f^*)$ for all $W$, with $H_0$ and $H_1$ defined as in Eq.~\eqref{eq:H01-post-final}.

\end{proposition}

\begin{proof}
The proof is divided into two parts, similarly to the proof of Proposition \ref{prop:deep_linear}: the first obtains an upper bound for the norm of the Hessian, while the second obtains a lower bound on the loss value.

The first part in the proof of Proposition \ref{prop:deep_linear} was easy, as one has ready formulas for the Hessian. In this case, the situation is more involved and we come up with a more general process to estimate the spectral norm of the Hessian based on the gradient finite differences.

\paragraph{Upper bound for the Hessian norm:}
To simplify the notation, we set
    \begin{align*}
        &Z_{\ell} = W_{\ell} X \\
        &A_{\ell-1} = \phi_{\ell-1} (Z_{\ell}) \\
        & Z_{\ell-1} = W_{\ell-1} A_{\ell-1} \\
        &\vdots 
        \\
        & Z_2 = W_2 A_2 \\ &
        A_1 = \phi_1(Z_2) \\ &
        Z_1 = W_1 A_1 = F.
    \end{align*}
    
    By the backpropagation algorithm for the gradient, we have that the gradient of $f$ can be computed as
    \begin{align*}
        \frac{\partial f}{\partial W_i} = \delta_i A_i^\top
    \end{align*}
    where $\delta_i$ is defined recursively as
    \begin{align*}
        &\delta_1 = -2(Y-F) \\ &
        \delta_2 = W_1^\top \delta_1 \odot \phi_1'(Z_2) \\ &
        \vdots \\ &
        \delta_i = W_{i-1}^\top \delta_{i-1} \odot \phi_{i-1}'(Z_i).
    \end{align*}

We need to upper bound the difference of the gradient defined in two distinct, sufficiently close points $W=(W_1,\dots,W_{\ell})$ and $\bar W = (\bar W_1,\dots, \bar W_{\ell})$. We also define
\begin{equation*}
    \textnormal{dist}(W, \bar W) := \sqrt{\sum_{i=1}^{\ell} \|W_i- \bar W_i\|^2_{\rm F}}.
\end{equation*}

It holds that
\begin{equation*}
    \|\nabla f(W)-\nabla f(\bar W)\|_{\rm F} \leq \sum_{i=1}^{\ell} \left\|\frac{\partial f}{\partial W_i}(W)-\frac{\partial f}{\partial W_i}(\bar W) \right\|_{\rm F}.
\end{equation*}

We have
\begin{equation}
\label{eq:basic_bound_partial_derivatives}
    \left\|\frac{\partial f}{\partial W_i}(W)-\frac{\partial f}{\partial W_i}(\bar W) \right\|_{\rm F} = \|\delta_i A_i^\top - \bar \delta_i \bar A_i^\top\|_{\rm F} \leq \|\delta_i\|_{\rm F} \|A_i-\bar A_i\|_{\rm F}+\|\bar A_i\|_{\rm F} \|\delta_i-\bar \delta_i\|_{\rm F}.
\end{equation}
Here we use a bar to denote the sequences of matrices related to the point $\bar W$. We deal with the four sequences appearing in this upper bound one by one, starting from $\bar A_i$. We can equivalently deal with $\bar A_i$ as the only difference will be to substitute $\bar W$ in place of $W$.

We have
\begin{align*}
    A_i =  \phi_i(W_{i+1}A_{i+1}), \hspace{1mm} \text{for} \hspace{1mm} i=1, \dots ,\ell-2,
\end{align*}
 thus
\begin{align*}
    \|A_i\|_{\rm F} =  \|\phi_i(W_{i+1}A_{i+1})\|_{\rm F} \leq \|W_{i+1} A_{i+1}\|_{\rm F} = \|W_1\|_{\rm F} \|A_{i+1}\|_{\rm F}. 
\end{align*}
The inequality follows from the fact that $\phi_i$ is leaky-ReLU, thus $|\phi_i(x)| \leq |x|$ and the last equality by the weakly balanced assumption, i.e. that $\|W_i\|_{\rm F} = \|W_1\|_{\rm F}$.

This implies that
\begin{equation}
\label{eq:bound_A_i}
    \|A_i\|_{\rm F} \leq \|W_1\|^{\ell-1-i} \|A_{\ell-1}\| = \|W_1\|^{\ell-1-i} \|\phi_{\ell-1}(W_{\ell} X)\|_{\rm F} \leq \|W_1\|_{\rm F}^{\ell-i} \|X\|_2.
\end{equation}

Similarly, it holds
\begin{equation}
\label{eq:bar_A_i}
    \|\bar A_i\|_{\rm F} \leq \|\bar W_1\|_{\rm F}^{\ell-i} \|X\|_2.
\end{equation}

Now, we deal with $A_i-\bar A_i$:
\begin{align*}
    &\|A_i-\bar A_i\|_{\rm F} = \nonumber\|\phi_i(W_{i+1}A_{i+1}) - \phi_i(\bar W_{i+1} \bar A_{i+1}) \|_{\rm F} \leq \|W_{i+1}A_{i+1} - \bar W_{i+1} \bar A_{i+1}\|_{\rm F} \leq \\ & \|A_{i+1}\|_{\rm F} \|W_{i+1}-\bar W_{i+1}\|_{\rm F} + \|\bar W_{i+1}\|_{\rm F} \|A_{i+1} - \bar A_{i+1}\|_{\rm F}  \leq \\ & \|A_{i+1}\|_{\rm F} \textnormal{dist}(W,\bar W) + \|\bar W_1\|_{\rm F} \|A_{i+1}-\bar A_{i+1}\|_{\rm F}.
\end{align*}

By an induction argument, we can get the bound
\begin{equation*}
    \|A_i-\bar A_i\|_{\rm F} \leq \left(\sum_{k=i+1}^{\ell-1} \|A_k\| \|\bar W_1\|^{k-i-1}\right) \textnormal{dist}(W,\bar W) + \|\bar W_1\|_{\rm F}^{\ell-i-1} \|A_{\ell-1}-\bar A_{\ell-1} \|_{\rm F}
\end{equation*}
and by inequality \eqref{eq:bound_A_i}, we have
\begin{align}
\label{eq:A_i-barA_i}
     \|A_i-\bar A_i\|_{\rm F} & \leq \nonumber \left(\sum_{k=i+1}^{\ell-1} \|W_1\|_{\rm F}^{\ell-k} \|\bar W_1\|^{k-i-1}\right) \textnormal{dist}(W,\bar W) \|X\|_2 + \|\bar W_1\|_{\rm F}^{\ell-i-1} \|W_{\ell}-\bar W_{\ell}\|_{\rm F} \|X\|_2 \\ & \leq \left(\sum_{k=i+1}^{\ell-1} \|W_1\|_{\rm F}^{\ell-k} \|\bar W_1\|^{k-i-1}\right) \textnormal{dist}(W,\bar W) \|X\|_2 + \|\bar W_1\|_{\rm F}^{\ell-i-1} \textnormal{dist}(W,\bar W) \|X\|_2.
\end{align}

Now we move to $\delta_i$. It holds
\begin{equation*}
    \|\delta_i\|_{\rm F} = \|W_{i-1}^\top \delta_{i-1} \odot \phi_{i-1}'(Z_i)\|_{\rm F} \leq \|W_{i-1}\|_{\rm F} \|\delta_{i-1}\|_{\rm F} = \|W_1\|_{\rm F} \|\delta_{i-1}\|_{\rm F}.
\end{equation*}
This implies that
\begin{equation*}
     \|\delta_i\|_{\rm F} \leq \|W_1\|_{\rm F}^{i-1} \|\delta_1\|_{\rm F} = 2 \|W_1\|_{\rm F}^{i-1} \sqrt{f(W)}
\end{equation*}
and similarly
\begin{equation}
\label{eq:bar_delta}
    \|\bar \delta_i\|_{\rm F} \leq 2 \|\bar W_1\|_{\rm F}^{i-1} \sqrt{f(\bar W)}.
\end{equation}

For the sequence $\delta_i-\bar \delta_i$, we have
\begin{align*}
    \|\delta_i - \bar \delta_i\|_{\rm F} = \| W_{i-1}^\top \delta_{i-1} \odot \phi_{i-1}'(Z_i) - \bar W_{i-1}^\top \bar \delta_{i-1} \odot \phi_{i-1}'(\bar Z_i)\|_{\rm F}
\end{align*}
and since all entries of $Z_i$ are non-zero and $\bar Z_i$ is taken sufficiently close to $Z_i$, these two points feature the same activation pattern, thus $\phi_{i-1}'(Z_i) = \phi_{i-1}'(\bar Z_i)$. This gives
\begin{align*}
    \|\delta_i - \bar \delta_i\|_{\rm F} & \leq \| W_{i-1}^\top \delta_{i-1} - \bar W_{i-1}^\top \bar \delta_{i-1}\|_{\rm F} \leq \|W_{i-1}\|_{\rm F} \| \delta_{i-1}-\bar \delta_{i-1}\|_{\rm F} + \|\bar \delta_{i-1}\|_{\rm F} \|W_{i+1}-\bar W_{i+1}\|_{\rm F} 
 \\ &\leq \|W_1\|_{\rm F} \| \delta_{i-1}-\bar \delta_{i-1}\|_{\rm F} + \|\bar \delta_{i-1}\|_{\rm F} \textnormal{dist}(W,\bar W).
\end{align*}

By induction, we have
\begin{align*}
    \|\delta_i - \bar \delta_i\|_{\rm F} & \leq \sum_{k=i-1}^1 \|\bar \delta_k\|_{\rm F} \| W_1 \|_{\rm F}^{i-1-k} \textnormal{dist}(W,\bar W) + \|W_1\|_{\rm F}^{i-1} \| \delta_1 - \bar \delta_1\|_{\rm F} \\ & \leq 2 \sqrt{f(\bar W)} \sum_{k=i-1}^1  \|\bar W_1\|_{\rm F}^{k-1} \| W_1 \|_{\rm F}^{i-1-k} \textnormal{dist}(W,\bar W) + \|W_1\|_{\rm F}^{i-1} \| \delta_1 - \bar \delta_1\|_{\rm F}. 
\end{align*}
The second inequality in the previous derivation follows by inequality \eqref{eq:bar_delta}.

For $\|\delta_1-\bar \delta_1\|_{\rm F}$, we have
\begin{align*}
  &\|\delta_1-\bar \delta_1\|_{\rm F} = 2 \|W_1 A_1-\bar W_1 \bar A_1\|_{\rm F} \leq 2\|W_1\|_{\rm F}\|A_1-\bar A_1\|_{\rm F}+ 2 \|\bar A_1\|_{\rm F} \|W_1-\bar W_1\|_{\rm F} \leq \\ &  2\|W_1\|_{\rm F} \left(\left(\sum_{k=2}^{\ell-1} \|W_1\|_{\rm F}^{\ell-k} \|\bar W_1\|^{k-2}\right) + \|\bar W_1\|_{\rm F}^{\ell-2} \right) \textnormal{dist}(W,\bar W) \|X\|_2 + 2\|\bar W_1\|_{\rm F}^{\ell-1} \textnormal{dist}(W,\bar W) \|X\|_2 = \\ & 2\left(\|W_1\|_{\rm F} \left(\left(\sum_{k=2}^{\ell-1} \|W_1\|_{\rm F}^{\ell-k} \|\bar W_1\|^{k-2}\right) + \|\bar W_1\|_{\rm F}^{\ell-2} \right) + \|\bar W_1\|_{\rm F}^{\ell-1} \right) \textnormal{dist}(W,\bar W) \|X\|_2.
\end{align*}

Thus,
\begin{align}
\label{eq:eq8}
    &\|\delta_i - \bar \delta_i\|_{\rm F}  \nonumber \leq 2 \sqrt{f(\bar W)} \sum_{k=i-1}^1  \|\bar W_1\|_{\rm F}^{k-1} \| W_1 \|_{\rm F}^{i-1-k} \textnormal{dist}(W,\bar W) + \\& 2 \|W_1\|_{\rm F}^{i-1} \left(\|W_1\|_{\rm F} \left(\left(\sum_{k=2}^{\ell-1} \|W_1\|_{\rm F}^{\ell-k} \|\bar W_1\|^{k-2}\right) + \|\bar W_1\|_{\rm F}^{\ell-2} \right) + \|\bar W_1\|_{\rm F}^{\ell-1} \right) \textnormal{dist}(W,\bar W) \|X\|_2. 
\end{align}

Combining inequalities \eqref{eq:basic_bound_partial_derivatives},\eqref{eq:bar_A_i},\eqref{eq:A_i-barA_i},\eqref{eq:bar_delta} and \eqref{eq:eq8},
we get
\begin{align*}
    &\left\|\frac{\partial f}{\partial W_i}(W)-\frac{\partial f}{\partial W_i}(\bar W) \right\|_{\rm F} \leq \\ &  2\|\bar W_1\|_{\rm F}^{i-1} \sqrt{f(\bar W)} \left(\left(\sum_{k=i+1}^{\ell-1} \|W_1\|_{\rm F}^{\ell-k} \|\bar W_1\|^{k-i-1}\right) + \|\bar W_1\|_{\rm F}^{\ell-i-1}\right)  \textnormal{dist}(W,\bar W) \|X\|_2 + \\ & 2 \|\bar W_1\|_{\rm F}^{\ell-i} \|X\|_2 \sqrt{f(\bar W)} \sum_{k=i-1}^1  \|\bar W_1\|_{\rm F}^{k-1} \| W_1 \|_{\rm F}^{i-1-k} \textnormal{dist}(W,\bar W) + \\ & 2\|\bar W_1\|_{\rm F}^{\ell-i} \|W_1\|_{\rm F}^{i-1} \left(\|W_1\|_{\rm F} \left(\left(\sum_{k=2}^{\ell-1} \|W_1\|_{\rm F}^{\ell-k} \|\bar W_1\|^{k-2}\right) + \|\bar W_1\|_{\rm F}^{\ell-2} \right) + \|\bar W_1\|_{\rm F}^{\ell-1} \right) \textnormal{dist}(W,\bar W) \|X\|^2_2,  
\end{align*}
thus
\begin{align*}
    &\frac{\left\|\frac{\partial f}{\partial W_i}(W)-\frac{\partial f}{\partial W_i}(\bar W) \right\|_{\rm F}}{\textnormal{dist}(W,\bar W)} \leq \\ &  2\|\bar W_1\|_{\rm F}^{i-1} \sqrt{f(\bar W)} \left(\left(\sum_{k=i+1}^{\ell-1} \|W_1\|_{\rm F}^{\ell-k} \|\bar W_1\|^{k-i-1}\right) + \|\bar W_1\|_{\rm F}^{\ell-i-1}\right) \|X\|_2 + \\ & 2 \|\bar W_1\|_{\rm F}^{\ell-i} \|X\|_2 \sqrt{f(\bar W)} \sum_{k=i-1}^1  \|\bar W_1\|_{\rm F}^{k-1} \| W_1 \|_{\rm F}^{i-1-k} + \\ & 2 \|\bar W_1\|_{\rm F}^{\ell-i} \|W_1\|_{\rm F}^{i-1} \left(\|W_1\|_{\rm F} \left(\left(\sum_{k=2}^{\ell-1} \|W_1\|_{\rm F}^{\ell-k} \|\bar W_1\|^{k-2}\right) + \|\bar W_1\|_{\rm F}^{\ell-2} \right) + \|\bar W_1\|_{\rm F}^{\ell-1} \right) \|X\|^2_2
\end{align*}
and taking the limit as $\bar W \longrightarrow W$, we get
\begin{align*}
    &\lim_{\bar W \rightarrow W}\frac{\left\|\frac{\partial f}{\partial W_i}(W)-\frac{\partial f}{\partial W_i}(\bar W) \right\|_{\rm F}}{\textnormal{dist}(W,\bar W)} \leq \\ &  2 (\ell-i) \|X\|_2 \|W_1\|_{\rm F}^{\ell-2} \sqrt{f(W)}  +2 (i-1) \|X\|_2 \|W_1\|_{\rm F}^{\ell-2} \sqrt{f(W)} + 2 (\ell-1) \|W_1\|_{\rm F}^{2 \ell-2} \|X\|_2^2 = \\ & 2 (\ell-1) \|X\|_2 \sqrt{f(W)} \|W_1\|_{\rm F}^{\ell-2} + 2 \ell \|W_1\|_{\rm F}^{2 \ell-2} \|X\|_2^2.
\end{align*}
This is because, when $\bar W \longrightarrow W$, it holds $\bar W_1\longrightarrow W_1$.

For the total gradient difference, we have
\begin{align*}
  \lim_{\bar W \rightarrow W}\frac{\left\|\nabla f(W)-\nabla f(\bar W) \right\|_{\rm F}}{\textnormal{dist}(W,\bar W)} & \leq \sum_{i=1}^\ell \lim_{\bar W \rightarrow W}\frac{\left\|\frac{\partial f}{\partial W_i}(W)-\frac{\partial f}{\partial W_i}(\bar W) \right\|_{\rm F}}{\textnormal{dist}(W,\bar W)} \\ &  \leq 2 \ell (\ell-1) \|W_1\|_{\rm F}^{\ell-2} \|X\|_2 \sqrt{f(W)} + 2 \ell^2 \|W_1\|_{\rm F}^{2 \ell-2} \|X\|_2^2.
\end{align*}
It holds
\begin{equation*}
    \|\nabla^2 f(W) \|_2 = \lim_{\bar W \rightarrow W}\frac{\left\|\nabla f(W)-\nabla f(\bar W) \right\|_{\rm F}}{\textnormal{dist}(W,\bar W)},
\end{equation*}
thus
\begin{equation}
\label{eq:Hessian_bound_non_linear}
    \|\nabla^2 f(W)\|_2 \leq 2 \ell (\ell-1) \|W_1\|_{\rm F}^{\ell-2} \|X\|_2 \sqrt{f(W)} + 2 \ell^2 \|W_1\|_{\rm F}^{2 \ell-2} \|X\|_2^2.
\end{equation}

Notice that this is the same upper bound as the one provided in \eqref{eq:Hessian_bound_deepLN}.

Thus, if $\|W\|_F^{\ell} \|X\|_2 \leq (1-\frac{1}{\ell}) \sqrt{f(W)}$, then we get $\|\nabla^2 f(W)\| \leq H_0^{\text{warm-up}} + H_1^{\text{warm-up}} (f(W)-f^*)$ with exactly the same constants as in the previous proposition, i.e.
\begin{equation}
\label{eq:warmup_constants_nonlinear_network}
H_0^{\text{warm-up}}:=4(\ell^2-\ell) \left(1-\frac{1}{\ell}\right)^{\frac{\ell-2}{\ell}} \|X\|_2^{\frac{2}{\ell}}\bigl(1+(f^*)^{\frac{\ell-1}{\ell}}\bigr), \qquad H_1^{\text{warm-up}}:= 4 (\ell^2-\ell) \left(1-\frac{1}{\ell}\right)^{\frac{\ell-2}{\ell}} \|X\|_2^{\frac{2}{\ell}}.
\end{equation}

We now move to a global bound and for that we need a global lower bound for the loss value. For $i=1 ,\dots, \ell-2$, we have
\begin{align*}
     \|W_i A_i\|_F^2 \geq \lambda_{\min}(W_i^T W_i) \|A_i\|_F^2 \geq h_i \|\phi_i(W_{i+1} A_{i+1})\|^2_F \geq h_i b_i^2 \|W_{i+1} A_{i+1}\|_F^2
\end{align*}
and by induction,
\begin{align}
\label{eq:deep_leaky_relu_basic}
    \|W_1 A_1\|_F^2 & \geq \nonumber\left(\Pi_{i=1}^{\ell-2} h_i b_i^2 \right) \|W_{\ell-1} A_{\ell-1}\|_F^2 \\ & \geq \nonumber\left(\Pi_{i=1}^{\ell-2} h_i b_i^2 \right) \lambda_{\min}(W_{\ell-1}^T W_{\ell-1}) \|A_{\ell-1}\|_F^2 \\ & = \nonumber \left(\Pi_{i=1}^{\ell-2} h_i b_i^2 \right) \lambda_{\min}(W_{\ell-1}^T W_{\ell-1}) \|\phi_{\ell-1}(W_{\ell} X)\|_F^2 \\ & \geq \nonumber \left(\Pi_{i=1}^{\ell-2} h_i b_i^2 \right) h_{\ell-1} b_{\ell-1}^2 \lambda_{\min}(X X^T) \|W_{\ell}\|_F^2 \\ & = \nonumber\left(\Pi_{i=1}^{\ell-2} h_i b_i^2 \right) h_{\ell-1} b_{\ell-1}^2 \lambda_{\min}(X X^T) \|W_1\|_F^2 \\ & = \left(\Pi_{i=1}^{\ell-1} h_i b_i^2 \right) \lambda_{\min}(X X^T) \|W_1\|_F^2.
\end{align}
We have repeatedly used the assumption that $\lambda_{\min}(W_i W_i^T) \geq h_i$ and that
\begin{equation*}
    \|\phi_i(S)\|^2_F \geq b_i^2 \|S\|_F^2,
\end{equation*}
for any matrix $S$.

To derive inequality \eqref{eq:deep_leaky_relu_basic}, we also used the weak balancedness assumption, that is, all $\|W_i\|_F$ are equal.

Defining \begin{equation}
\label{eq:K_gd_def}
K_{gd} := \left( \lambda_{\min}(XX^\top) \prod_{i=1}^{\ell-1} h_i b_i^2 \right)^{-\frac{1}{2}}
\end{equation}
we get
$$ \|W_1\|_F \leq K_{gd} \|W_1 A_1\|_F. $$

By the triangle inequality, $\norm{W_1 A_1}_F \le \norm{Y}_F + \sqrt{f(W)}$. Thus:
\begin{equation}
\label{eq:W1_bound_nonlinear}
\norm{W_1}_F \le K_{gd} \left( \norm{Y}_F + \sqrt{f(W)} \right).
\end{equation}

Now we substitute this into the sharp Hessian bound \eqref{eq:Hessian_bound_non_linear}.
\paragraph{Term 1:}
\begin{align*}
2\ell^2 \norm{W_1}_F^{2\ell-2} \norm{X}_2^2
\le 2\ell^2 K_{gd}^{2\ell-2} \left( \norm{Y}_F + \sqrt{f} \right)^{2\ell-2} \norm{X}_2^2 
\le 2\ell^2 K_{gd}^{2\ell-2} 2^{2\ell-3} \left( \norm{Y}_F^{2\ell-2} + f^{\ell-1} \right) \norm{X}_2^2.
\end{align*}
(Using Jensen's inequality $(a+b)^p \le 2^{p-1}(a^p+b^p)$ with $p=2\ell-2$).

\paragraph{Term 2:}
\begin{align*}
2(\ell^2-\ell) \norm{W_1}_F^{\ell-2} \norm{X}_2 \sqrt{f}
&\le 2(\ell^2-\ell) K_{gd}^{\ell-2} \left( \norm{Y}_F + \sqrt{f} \right)^{\ell-2} \norm{X}_2 \sqrt{f} \\
&\le 2(\ell^2-\ell) K_{gd}^{\ell-2} 2^{\ell-3} \left( \norm{Y}_F^{\ell-2} + f^{\frac{\ell-2}{2}} \right) \norm{X}_2 \sqrt{f} \\
&= 2(\ell^2-\ell) K_{gd}^{\ell-2} 2^{\ell-3} \left( \norm{Y}_F^{\ell-2}\sqrt{f} + f^{\frac{\ell-1}{2}} \right) \norm{X}_2.
\end{align*}

We now use the inequality $u \leq 1+ u^p$, which holds for all $u \geq 0$ and $p \geq 1$. In our case, it translates to 
\begin{equation*}
    \sqrt{f} \leq 1+f^{\ell-1}
\end{equation*}
and
\begin{equation*}
    f^{\frac{\ell-1}{2}} \leq 1 + f^{\ell-1}.
\end{equation*}

Term 2 can be bounded as
\begin{align*}
    & 2(\ell^2-\ell) K_{gd}^{\ell-2} 2^{\ell-3} \left( \norm{Y}_F^{\ell-2}\sqrt{f} + f^{\frac{\ell-1}{2}} \right) \norm{X}_2 \leq 2(\ell^2-\ell) K_{gd}^{\ell-2} 2^{\ell-3} \left(1+\|Y\|_F^{\ell-2} \right) \|X\|_2 \left(1 + f^{\ell-1} \right) = \\ & 2(\ell^2-\ell) K_{gd}^{\ell-2} 2^{\ell-3} \left(1+\|Y\|_F^{\ell-2} \right) \|X\|_2 + 2(\ell^2-\ell) K_{gd}^{\ell-2} 2^{\ell-3} \left(1+\|Y\|_F^{\ell-2} \right) \|X\|_2 f^{\ell-1}.
\end{align*}

Summing with the bound of Term 1 yields the inequality $\|\nabla^2 f(W)\|_2 \leq H_0 + H_1 (f(W)-f^*)$, with
\begin{align}
\label{eq:H01-post-final}
H_0 &:= 2\ell^2 K_{gd}^{2\ell-2} 2^{2\ell-3} \norm{Y}_F^{2\ell-2} \norm{X}_2^2 + 2(\ell^2-\ell) K_{gd}^{\ell-2} 2^{\ell-3} (1+ \norm{Y}_F^{\ell-2}) \norm{X}_2, \nonumber \\
H_1 &:= 2\ell^2 K_{gd}^{2\ell-2} 2^{2\ell-3} \norm{X}_2^2 + 2(\ell^2-\ell) K_{gd}^{\ell-2} 2^{\ell-3} (1+ \|Y\|_F^{\ell-2}) \norm{X}_2,
\end{align}
where $K_{gd}$ is defined as in Eq.~\eqref{eq:K_gd_def}.

\end{proof}

\crossentropy*

\begin{proof}

We start by calculating the gradients and Hessians of $f$. The Hessian of the regularization part is just $(\lambda_1+\lambda_2) I$. We denote the main part of the loss as 
\begin{equation*}
    \bar f(W) = -Y \log(P)^\top-(\1-Y) \log(\1-P)^\top.
\end{equation*}

Again, it holds
\begin{equation*}
    \|\nabla^2 f(W)\|_2 \leq \| \nabla^2 \bar f(W)\|_2 + (\lambda_1+\lambda_2).
\end{equation*}

Some useful notation is
\begin{align*}
    & A := W_2 X \\
    & H := \phi(A) \\
    & Z := W_1 H \\
    & P := \sigma(Z).
\end{align*}

The gradient of $\bar L$ with respect to $\textnormal{vec}(W_1)$ is 
\begin{equation*}
    \frac{\partial \bar f}{\partial Z} \cdot \frac{\partial Z}{\partial \textnormal{vec}(W_1)}. 
\end{equation*}

It holds
\begin{equation*}
    \frac{ \partial \bar f}{\partial P} = -Y \odot \frac{1}{P} + (\1-Y) \odot \frac{1}{\1-P}
\end{equation*}
where $1/\text{vector}$ is used to denote entry-wise inversion.

We also have
\begin{equation*}
    \frac{\partial P}{\partial Z} = \sigma'(Z) = P \odot (\1-P).
\end{equation*}

Thus,
\begin{equation*}
  \frac{\partial \bar f}{\partial Z} = \frac{\partial \bar f}{\partial P} \odot \frac{\partial P}{\partial Z} = P-Y.
\end{equation*}

We denote the vectorized form of this term by $R$ since it plays the role of a residual. Since $P-Y$ is a row vector, its vectorized form is just its transpose, however, we will often keep the standard form $R = \textnormal{vec}(P-Y)$ to ensure compatibility with previous calculations. 

It holds
\begin{equation*}
    \frac{\partial \bar f}{\partial \textnormal{vec}(W_1)} = \frac{\partial \bar f}{\partial Z} \frac{\partial Z}{\partial \textnormal{vec}(W_1)} = R^\top H^\top = R^\top \phi(W_2 X)^\top .
\end{equation*}

This is a row vector, thus we transpose it to bring it to column form:

\begin{equation*}
    \frac{\partial \bar f}{\partial \textnormal{vec}(W_1)} = H R = \textnormal{vec}((P-Y) H^\top) = \textnormal{vec}((P-Y) \phi(W_2 X)^\top)
\end{equation*}

For the partial derivative with respect to $\textnormal{vec}(W_2)$, we have
\begin{align*}
   &\frac{\partial \bar f}{\partial \textnormal{vec}(W_2)} = \frac{\partial \bar f}{\partial Z} \cdot \frac{\partial Z}{\partial \textnormal{vec}(W_2) } = R^\top \frac{\partial Z}{\partial \textnormal{vec}(W_2) }
\end{align*}
and
\begin{align*}
    \frac{\partial R}{\partial \textnormal{vec}(W_2)} = -(I_m \otimes W_1) \frac{\partial \textnormal{vec}(\phi(W_2 X))}{\partial \textnormal{vec}(W_2)} = -(I_m \otimes W_1) \frac{\partial \textnormal{vec}(\phi(W_2 X))}{\partial \textnormal{vec}(W_2 X)} \frac{\partial \textnormal{vec}(W_2 X)}{\partial \textnormal{vec}(W_2)}
\end{align*}

$\frac{\partial \textnormal{vec}(\phi(W_2 X))}{\partial \textnormal{vec}(W_2)}$ is the diagonal matrix $\textnormal{diag}(\textnormal{vec}(\phi'(W_2 X))$.

Since $\textnormal{vec}(W_2 X) = (X^\top \otimes I_{n_1}) \textnormal{vec}(W_2)$, the gradient $\frac{\partial \textnormal{vec}(W_2 X)}{\partial \textnormal{vec}(W_2)}$ is 
\begin{equation*}
    \frac{\partial \textnormal{vec}(W_2 X)}{\partial \textnormal{vec}(W_2)} = X^\top \otimes I_{n_1}.
\end{equation*}

Putting it all together, we have
\begin{equation*}
   \frac{\partial f}{\partial \textnormal{vec}(W_2)} = R^\top (I_m \otimes W_1) \textnormal{diag}(\textnormal{vec}(\phi'(W_2 X)) (X^\top \otimes I_{n_1}).
\end{equation*}

Writing that again as column vector yields
\begin{equation*}
    (X \otimes I_{n_1}) \textnormal{diag}(\textnormal{vec}(\phi'(W_2 X))) (I_m \otimes W_1^\top) R.
\end{equation*}

After some modifications, we can write
\begin{align*}
    &\textnormal{diag}(\textnormal{vec}(\phi'(W_2 X))) (I_m \otimes W_1^\top) R = \\ & \textnormal{diag}(\textnormal{vec}(\phi'(W_2 X))) \textnormal{vec}(W_1^\top (P-Y))  = \\ & \textnormal{vec}(W_1^\top (P-Y) \odot \phi'(W_2 X)).
\end{align*}
where $\odot$ is the Hadamard product.

This means that we can write the previous gradient as
\begin{equation*}
    \textnormal{vec}(((W_1^\top (P-Y)) \odot \phi'(W_2 X) ) X^\top).
\end{equation*}

We now move to the calculation of the Hessian.

For the first block, we have
\begin{align*}
    \frac{\partial^2 \bar f}{\partial \textnormal{vec}(W_1) \textnormal{vec}(W_1)^\top } &= \phi(W_2 X) \frac{\partial R}{\partial \textnormal{vec}(W_1)^\top} \\ & = \phi(W_2 X) \frac{\partial \textnormal{vec}(P-Y)}{\partial \textnormal{vec}(W_1)^\top} \\ & =  \phi(W_2 X) \textnormal{diag}(P \odot (\1-P)) \phi(W_2 X)^\top.
\end{align*}

For the off-diagonal blocks, it suffices to compute one of them, as they are symmetric.

We use the product rule (see \cite{magnus1985matrix}, Theorem 9)
\begin{equation*}
    \frac{\partial \textnormal{vec}(A(W) B (W))}{\partial \textnormal{vec}(W)^\top} = (B(W)^\top \otimes I) \frac{\partial \textnormal{vec}(A(W))}{\partial \textnormal{vec}(W)^\top} + (I \otimes A(W)) \frac{\partial \textnormal{vec}(B(W))}{\partial \textnormal{vec}(W)^\top}.
\end{equation*}
We have
\begin{align*}
    \frac{\partial}{\partial \textnormal{vec}(W_2)^\top} \frac{\partial \bar f}{\partial \textnormal{vec}(W_1)} 
& = (\phi(W_2 X) \otimes I_1) \frac{\partial \textnormal{vec} (P-Y) }{\partial \textnormal{vec}(W_2)^\top} \\& + (I_{n_1} \otimes (P-Y) ) \frac{\partial \textnormal{vec} (\phi(W_2 X)^\top) }{\partial \textnormal{vec}(W_2)^\top}. \end{align*}

In order to proceed, we need to write $\textnormal{vec}(\phi(W_2 X)^\top)$ in terms of $\textnormal{vec}(\phi(W_2 X))$, and this can be done formally using the so-called commutation matrix:
\begin{equation*}
   \textnormal{vec}(\phi(W_2 X)^\top) = K_{n_1 m}  \textnormal{vec}(\phi(W_2 X)).
\end{equation*}

For the first partial derivative in the sum, we have
\begin{align*}
  \frac{\partial \textnormal{vec} (P-Y) }{\partial \textnormal{vec}(W_2)^\top} &= \frac{\partial \textnormal{vec}(P) }{\partial \textnormal{vec}(Z)} \frac{\partial \textnormal{vec}(Z) }{\partial \textnormal{vec}(W_2)^\top} \\&= \textnormal{diag}(P \odot (\1-P)) \frac{\partial \textnormal{vec} (W_1\phi(W_2 X)) }{\partial \textnormal{vec}(W_2)^\top}\\ &  = \textnormal{diag}(P \odot (\1-P))(I_m \otimes W_1)\frac{\partial \textnormal{vec} (\phi(W_2 X)) }{\partial \textnormal{vec}(W_2)^\top} \\ & = \textnormal{diag}(P \odot (\1-P)) (I_m \otimes W_1) \textnormal{diag}(\textnormal{vec}(\phi'(W_2 X))) \frac{\partial \textnormal{vec}(W_2 X)}{\partial \textnormal{vec}(W_2)^\top} \\ & = \textnormal{diag}(P \odot (\1-P)) (I_m \otimes W_1) \textnormal{diag}(\textnormal{vec}(\phi'(W_2 X))) (X^\top \otimes I_{n_1}).
\end{align*}

As it is evident in the previous calculation
\begin{equation*}
    \frac{\partial \textnormal{vec} (\phi(W_2 X)) }{\partial \textnormal{vec}(W_2)^\top} = \textnormal{diag}(\textnormal{vec}(\phi'(W_2 X))) (X^\top \otimes I_{n_1}).
\end{equation*}

Putting it all together, we get
\begin{align*}
    \frac{\partial^2 \bar f}{\partial \textnormal{vec}(W_1) \textnormal{vec}(W_2)^\top } & = \phi(W_2 X)\textnormal{diag}(P \odot (\1-P) (I_m \otimes W_1) \textnormal{diag}(\textnormal{vec}(\phi'(W_2 X))) (X^\top \otimes I_{n_1}) \\ & +(I_{n_1} \otimes (P-Y)) K_{n_1 m} \textnormal{diag}(\textnormal{vec}(\phi'(W_2 X))) (X^\top \otimes I_{n_1}) \\ & =
    (\phi(W_2 X)\textnormal{diag}(P \odot (\1-P)) (I_m \otimes W_1) \\ & + (I_{n_1} \otimes (P-Y)K_{n_1 m})) \textnormal{diag}(\textnormal{vec}(\phi'(W_2 X))) (X^\top \otimes I_{n_1}).
\end{align*}

We conclude with the calculation of the last block.
To differentiate $\textnormal{vec}(((W_1^\top R) \odot \phi'(W_2 X)) X^\top)$, we can use the product rule for the Hadamard product, see \cite{magnus1985matrix} (Theorem 10):

\begin{align*}
    &\frac{\partial\textnormal{vec}((W_1^\top R) \odot \phi'(W_2 X))}{\partial \text{vec}(W_2)^\top} =  \textnormal{diag}(\text{vec}(\phi'(W_2 X)) \frac{\partial \text{vec}(W_1^\top R)}{\partial \text{vec}(W_2)^\top} + \textnormal{diag}(\textnormal{vec}(W_1^\top R)) \frac{\partial \phi'(W_2 X)}{\partial \text{vec}(W_2)^\top}.
\end{align*}

For the first term of the last sum, we have by previous calculations that
\begin{equation*}
    \frac{\partial \text{vec}(W_1^\top R)}{\partial \text{vec}(W_2)^\top} = (I_m \otimes W_1^\top) \textnormal{diag}(P \odot (\1-P)) (I_m \otimes W_1) \textnormal{diag}(\textnormal{vec}(\phi'(W_2 X))) (X^\top \otimes I_{n_1}).
\end{equation*}

For the second term of the last sum, we have
\begin{equation*}
    \frac{\partial \phi'(W_2 X)}{\partial \text{vec}(W_2)^\top} = \textnormal{diag}(\text{vec}(\phi''(W_2 X))) (X^\top \otimes I_{n_1}).
\end{equation*}

In total, we have
\begin{align*}
    \frac{\partial^2 \bar f}{\partial \textnormal{vec}(W_2) \text{vec}(W_2)^\top} =& (X \otimes I_{n_1}) \textnormal{diag}(\textnormal{vec}(\phi'(W_2 X))) (I_m \otimes W_1^\top) \textnormal{diag}(P \odot (\1-P)) \\ & (I_m \otimes W_1) \textnormal{diag}(\textnormal{vec}(\phi'(W_2 X))) (X^\top \otimes I_{n_1}) \\  + & (X \otimes I_{n_1}) \textnormal{diag}(\textnormal{vec}(W_1^\top R))\textnormal{diag}(\text{vec}(\phi''(W_2 X))) (X^\top \otimes I_{n_1}).
\end{align*}

This completes the calculation of all four blocks of the Hessian of $\bar f$.

To upper bound $\| \nabla^2 \bar f(W) \|_2$, we can write
\begin{equation*}
    \| \nabla^2 \bar f(W) \|_2 \leq \left\|\frac{\partial^2 \bar f}{\partial \textnormal{vec}(W_1) \textnormal{vec}(W_1)^\top } \right\|_2 + 2 \left\|\frac{\partial^2 \bar f}{\partial \textnormal{vec}(W_1) \textnormal{vec}(W_2)^\top } \right\|_2 + \left\|\frac{\partial^2 \bar f}{\partial \textnormal{vec}(W_2) \textnormal{vec}(W_2)^\top } \right\|_2. 
\end{equation*}

It holds
\begin{align*}
    \left\|\frac{\partial^2 \bar f}{\partial \textnormal{vec}(W_1) \textnormal{vec}(W_1)^\top } \right\|_2 & \leq \|\textnormal{diag}(P \odot (\1-P))\|_2 \|\phi(W_2 X) \phi(W_2 X)^\top\|_2 \\ &  \leq \|\textnormal{diag}(P \odot (\1-P))\|_2 \|\phi(W_2 X) \phi(W_2 X)^\top\|_{\rm F} \leq C_1^2 \|W_2\|_{\rm F}^2 \|X\|_2^2,
\end{align*}
since all entries of $P \odot (\1-P)$ are upper bounded by $1$ in absolute value.

For the off-diagonal blocks, it holds

\begin{align*}
    \left\|\frac{\partial^2 \bar f}{\partial \textnormal{vec}(W_1) \textnormal{vec}(W_2)^\top } \right\|_2 & \leq ( \|\phi(W_2 X)\|_2 \|W_1\|_2 + \|P - Y\|_2) C_2 \|X\|_2\\ & \leq C_2 (C_1 \|W_1\|_{\rm F} \|W_2\|_{\rm F} \|X\|_2 + \|P - Y\|_{\rm F}) \|X\|_2 \\ & \leq C_2 (C_1 \|W_1\|_{\rm F} \|W_2\|_{\rm F} \|X\|_2 + \sqrt{2f(W)}) \|X\|_2
\end{align*}
and
\begin{align*}
    \left\|\frac{\partial^2 \bar f}{\partial \textnormal{vec}(W_2) \textnormal{vec}(W_2)^\top } \right\|_2 & \leq \|X\|^2_2 C_2^2 \|W_1^\top\|_2 \|W_1\|_2 + \|X\|^2_2 C_3 \|W_1^\top (P-Y)\|_2 \\ & \leq \|X\|^2_2 C_2^2 \|W_1\|^2_{\rm F} + \|X\|^2_2 C_3 \|W_1\|_{\rm F} \|P-Y\|_{\rm F} \\ & \leq \|X\|^2_2 C_2^2 \|W_1\|^2_{\rm F} + \|X\|^2_2 C_3 \|W_1\|_{\rm F} \sqrt{2 f(W)}
\end{align*}

In the previous inequalities we used the standard inequality $\|P-Y\|_F \leq \sqrt{2 \bar f(W)} \leq \sqrt{2f(W)}$. 

We define the structural constants:
\begin{align}
\label{eq:CE_regularization_constants}
    A_1 &:= C_2^2 \norm{X}^2_2, \nonumber \\
    A_2 &:= C_1^2 \norm{X}^2_2, \nonumber\\
    A_{12} &:= 2 C_1 C_2 \norm{X}^2_2, \nonumber\\
    B_{\text{mix}} &:= \sqrt{2} C_3 \norm{X}^2_2, \nonumber\\
    B_{\text{res}} &:= 2\sqrt{2} C_2 \norm{X}_2.
\end{align}

Using the explicit block-wise bounds, we get that the Hessian spectral norm is bounded by:
\begin{equation*}
\label{eq:exact_decomp_ce}
    \norm{\nabla^2 f(W)}_2 \le (\lambda_1 + \lambda_2) + T_{\text{quad}}(W) + T_{\text{mixed}}(W) + T_{\text{res}}(W),
\end{equation*}
where:
\begin{align*}
    T_{\text{quad}}(W) &:= A_1 \norm{W_1}_{\mathrm F}^2 + A_2 \norm{W_2}_{\mathrm F}^2 + A_{12} \norm{W_1}_{\mathrm F} \norm{W_2}_{\mathrm F}, \\
    T_{\text{mixed}}(W) &:= B_{\text{mix}} \norm{W_1}_{\mathrm F} \sqrt{\bar{f}(W)}, \\
    T_{\text{res}}(W) &:= B_{\text{res}} \sqrt{\bar{f}(W)}.
\end{align*}

For (i), the condition
\begin{equation}
\label{eq:cross_entropy_local_condition}
    A_1 \norm{W_1}^2 + A_2 \norm{W_2}^2 + A_{12} \norm{W_1}\norm{W_2} \le  B_{\text{mix}} \norm{W_1} \sqrt{f} + B_{\text{res}} \sqrt{f}
\end{equation}
implies $T_{\text{quad}}(W) \leq T_{\text{mixed}}(W) + T_{\text{res}}(W)$.
Since weight norms are non-negative, we can also drop the terms involving $\norm{W_2}$ to find a bound for $\norm{W_1}$:
\begin{equation*}
    A_1 \norm{W_1}_F^2 - ( B_{\text{mix}} \sqrt{f}) \norm{W_1}_F - ( B_{\text{res}} \sqrt{f}) \le 0.
\end{equation*}
Solving this quadratic inequality for $\norm{W_1}_F$ (finding the positive root) gives
\begin{equation*}
    \norm{W_1}_F \le \frac{ B_{\text{mix}}}{A_1} \sqrt{f} + \sqrt{\frac{ B_{\text{res}}}{A_1}} f^{1/4}.
\end{equation*}
Substituting this back into $T_{\text{mixed}} = B_{\text{mix}} \norm{W_1}_F \sqrt{f}$ yields
\begin{equation*}
    T_{\text{mixed}} \le \frac{ B_{\text{mix}}^2}{A_1} f + B_{\text{mix}}\sqrt{\frac{ B_{\text{res}}}{A_1}} f^{3/4}.
\end{equation*}
The total bound is $(\lambda_1+\lambda_2) + 2 (T_{\text{mixed}} + T_{\text{res}})$.
We linearize $\sqrt{f}$ and $f^{3/4}$ (using $\sqrt{f} \leq (f-f^*)+f^*+1$ and $f^{3/4} \le \frac{3}{4}(f-f^*) +\frac{3}{4} f^* + \frac{1}{4}$) and group coefficients to obtain
\begin{align}
\label{eq:warmup_cross_entropy}
H_0^{\text{warm-up}} &:= (\lambda_1 + \lambda_2) + 2 \left( B_{\text{res}}\left(\frac{f^*}{2} + \frac{1}{2}\right) + \frac{ B_{\text{mix}}^2}{A_1}f^* + B_{\text{mix}}\sqrt{\frac{ B_{\text{res}}}{A_1}} \left(\frac{3}{4}f^* + \frac{1}{4}\right) \right), \nonumber \\ 
    H_1^{\text{warm-up}} &:= 2 \left( B_{\text{res}} + \frac{ B_{\text{mix}}^2}{A_1} + \frac{3}{4} B_{\text{mix}}\sqrt{\frac{ B_{\text{res}}}{A_1}} \right).
\end{align}

We proceed now to the global bound (ii).
It holds
\begin{equation*}
    A_{12} \norm{W_1}\norm{W_2} \le \frac{A_{12}}{2} \norm{W_1}^2 + \frac{A_{12}}{2} \norm{W_2}^2.
\end{equation*}
and
\begin{equation*}
    T_{\text{mixed}} = B_{\text{mix}} \norm{W_1} \sqrt{f} \le \frac{B_{\text{mix}}}{2} \norm{W_1}^2 + \frac{B_{\text{mix}}}{2} f.
\end{equation*}
By the definition of the function $f$, we can bound the weight norms as
$\norm{W_i}_F^2 \le \frac{2 f}{\lambda_i}$. This implies
\begin{align*}
    T_{\text{quad}} &\le \left( \frac{2 A_1 + A_{12}}{\lambda_1} + \frac{2 A_2 + A_{12}}{\lambda_2} \right) f. \\
    T_{\text{mixed}} &\le \left( \frac{B_{\text{mix}}}{\lambda_1} + \frac{B_{\text{mix}}}{2} \right) f.
\end{align*}

For the $T_{\text{res}}$, we bound it as
\begin{equation*}
T_{\text{res}} \leq B_{\text{res}} \sqrt{f(W)} \leq B_{\text{res}} ((f(W)-f^*)+f^*+1). 
\end{equation*}

In total, we have
\begin{equation*}
\|\nabla^2 f(W)\| \leq H_0 + H_1 (f(W)-f^*)
\end{equation*}
with

\begin{align}
\label{eq:CE_regularization_global_constants}
     \nonumber H_0 &:= (\lambda_1 + \lambda_2) + B_{\text{res}} (f^*+1), \\
     H_1 &:= B_{\text{res}} + \left( \frac{2 A_1 + A_{12} + B_{\text{mix}}}{\lambda_1} + \frac{2 A_2 + A_{12}}{\lambda_2} + \frac{B_{\text{mix}}}{2} \right).   
\end{align}

\end{proof}

\section{Missing proofs from Section \ref{sec:theory_transformers}}
\label{app:transformers_proofs}

First, we describe the setup in more detail. The input data is encoded into a single matrix $Z_0 \in \mathbb{R}^{(d+1) \times (n+1)}$. This matrix contains $n$ training tokens and one query token. The training tokens cover the first $n$ columns of the matrix, while the query token the last one. The label of the query's feature is initialized at $0$.
\begin{equation*}
    Z_0 = \begin{bmatrix}
        x_1 &x_2 &\dots &x_n &x_{\text{query}} \\ y_1 &y_2 &\dots &y_n &0
    \end{bmatrix} \in \mathbb{R}^{(d+1) \times (n+1)}.
\end{equation*}
The model's objective is to predict the true value for this entry.
The model is defined by 
$$
Z_{1} = Z_0 + \frac{1}{n} P Z_0 M \cdot \phi(Z_0^T Q Z_0),
$$ where the trainable parameters are $P$ and $Q$ and $M=\text{diag}(1,1,\dots,1,0)$ is a fixed mask and $\phi$ is a general activation applied to the attention scores. The output of the model is $$\hat{y} = [Z_1]_{(d+1),(n+1)}$$ and the cost function for one task with true target  $y_{\text{true}}$ is $$f(P,Q) = \ell(\hat{y},y_{\text{true}}),$$ where $\ell$ is some loss function (MSE for continuous variables or cross-entropy for binary ones). The most interesting property of transformers is their ability to learn in-context, i.e., minimize an in-context cost function defined below.

\begin{definition}
\label{def:in_context_learning}
\label{prop:transformers_in_context}
Let $D_x$ be a distribution over an input
space $X$, $H$ a set of functions $X \rightarrow Y$, and $D_H$ a distribution over functions in $H$.
Let $\ell : Y \times Y \rightarrow\mathbb{R}$ be a loss function, $S = \lbrace {(x_1, y_1, \dots , x_n, y_n) : x_i \in X , y_i \in Y} \rbrace$ be
the set of finite-length sequences of $(x, y)$ pairs and
$F_\theta = \lbrace f_\theta : S \times X \rightarrow Y, \theta \in \Theta \rbrace$
be a class of functions parameterized by $\theta$ in some set $\Theta$. For $n > 0$, we say that a model
$f : S \times X \rightarrow Y$ is trained on in-context examples of functions in $H$ under loss $\ell$ w.r.t.
$(D_x, D_H)$ if $f = f_\theta$, where $\theta$ minimizes
minimizes $$\mathbb{E}_{j=(x_1,h(x_1),\dots,x_n ,h(x_n ),x_{query})}
[\ell(f_\theta(j), h(x_{query}))]$$
where $x_i, x_{query}$ are chosen 
i.i.d. from $D_x$ and $h \sim D_H$ is independent. $j$ represents a prompt.
\end{definition}



\begin{proposition}
\label{prop:transformer_one_task}
    Consider the aforementioned $1$-layer transformer model with a regularized loss:
\begin{equation*}
    f(P,Q) = (\hat{y} - y_{true})^2 + \frac{\lambda_P}{2} \|P\|_F^2 + \frac{\lambda_Q}{2} \|Q\|_F^2.
\end{equation*}
Assume that the input $Z$ is bounded, and the activation function and its derivatives are globally bounded:
\begin{itemize}
    \item $\|Z\|_2 \leq C_Z$
    \item $\|\phi(x)\| \leq C_1 \|x\|$
    \item $\|\phi'(x)\| \leq C_2$
    \item $\|\phi''(x)\| \leq C_3$.
\end{itemize}
Then, it holds

(i) If $K_1 \|Q\|_F^2 + K_2 \|P\|_F \|Q\|_F + K_4 \|P\|_2^F \leq \sqrt{2} K_5 \|P\|_F \sqrt{f(P,Q)} + \sqrt{2} K_3 \sqrt{f(P,Q)}$, we have
\begin{equation*}
    \|\nabla^2 f(P,Q)\|_2 \leq H_0^{\text{warm-up}} + H_1^{\text{warm-up}} (f(P,Q)-f^*),
\end{equation*}
where $H_0^{\text{warm-up}}$ and $H_1^{\text{warm-up}}$ are defined as in Eq.~\eqref{eq:warmup_constants_transformer}.

(ii) For all $(P,Q)$ it holds
\begin{equation*}
    \|\nabla^2 f(P,Q)\|_2 \leq H_0 + H_1 (f(P,Q)-f^*),
\end{equation*}
where $H_0$ and $H_1$ are defined as in Eq.~\eqref{eq:global_constants_transformer}.    
\end{proposition}

\begin{proof}
To compute the Hessian, we first vectorize the parameters and the model. For simplicity, we set $k=d+1$ and $m=n+1$.
\begin{itemize}
    \item Parameters: $p = \vecop(P) \in \R^{k^2 \times 1}$ and $q = \vecop(Q) \in \R^{k^2 \times 1}$.
    \item Constants: Let $V = Z M \in \R^{k \times m}$ and $E = e_k e_m^T \in \R^{k \times m}$.
    \item Intermediates: $S(Q) = Z^T Q Z$ and $A(Q) = \phi(S(Q))$.
\end{itemize}
Using the trace identity $e_k^T X e_m = \tr(e_m e_k^T X) = \tr(E^T X)$, we rewrite $\hat{y}$:
$$
\hat{y} = \frac{1}{n} \tr(E^T P V A(Q)) = \frac{1}{n} \tr( (V A(Q) E^T) P ).
$$
Using $\tr(\mathbf{A}^T \mathbf{B}) = \text{vec}(\mathbf{A})^T \vecop(\mathbf{B})$, we get:
$$
\hat{y} = \frac{1}{n} \vecop( (V A(Q) E^T)^T )^T \vecop(P) = \frac{1}{n} \vecop(E A(Q)^T V^T)^T p.
$$
Let $b(q) = \vecop(E A(Q)^T V^T)$. The prediction is linear in $p$:
$$
\hat{y}(p, q) = \frac{1}{n} b(q)^T p.
$$
We now expand $b(q)$ using the Kronecker product ($\otimes$) and commutation matrix ($K$):
\begin{enumerate}
    \item $s(q) = \vecop(S(Q)) = \vecop(Z^T Q Z) = (Z^T \otimes Z^T) q$.
    Let $J_S = (Z^T \otimes Z^T)$.

    \item $a(q) = \vecop(A(Q)) = \phi(s(q)) = \phi(J_S q)$.

    \item $b(q) = \vecop(E A(Q)^T V^T) = (V \otimes E) \vecop(A(Q)^T)$.

    \item $\vecop(A(Q)^T) = K_{m,m} \vecop(A(Q)) = K_{m,m} a(q)$.
\end{enumerate}
Combining these, we define the constant matrix $J_A$:
$$
b(q) = (V \otimes E) K_{m,m} a(q) = \underbrace{(V \otimes E) K_{m,m}}_{J_A} \phi(J_S q).
$$
Our final vectorized prediction is:
$$
\hat{y}(p, q) = \frac{1}{n} (J_A \phi(J_S q))^T p.
$$
We define the loss $f$ as either the Mean Squared Error (MSE) or Cross-Entropy (CE).
The gradient and Hessian of the loss are given by the chain rule:
$$
\nabla f = r \nabla \hat{y} \quad \text{and} \quad H_f = \gamma_1 (\nabla \hat{y})(\nabla \hat{y})^T + \gamma_2 r H_{\hat{y}},
$$
where $r$ is the residual and $\gamma_1, \gamma_2$ are scalar coefficients depending on the loss.
\begin{itemize}
    \item \textbf{MSE:} $f = (\hat{y}-y)^2$. Then $r = 2(\hat{y}-y)$, $\gamma_1 = 2$, $\gamma_2 = 1$. (Or equivalently absorbing the 2 into $r$, giving coefficients 2 and 2).
    \item \textbf{CE:} $f = \text{BCE}(\sigma(\hat{y}), y)$. Then $r = \sigma(\hat{y})-y$, $\gamma_1 = \sigma(\hat{y})(1-\sigma(\hat{y})) \le 1/4$, $\gamma_2 = 1$.
\end{itemize}
In both cases, the coefficients are upper bounded by 2. Thus, we perform the calculation using the upper bound coefficients (2, 2) which covers both cases:
$$
(\nabla \hat{y})(\nabla \hat{y})^T + 2r H_{\hat{y}}.
$$
Finally, we define the diagonal matrices of derivatives:
$$
D'(q) = \diag(\vecop(\phi'(J_S q))) \quad \text{and} \quad D''(q) = \diag(\vecop(\phi''(J_S q))).
$$

We first compute $\nabla \hat{y}$.

$$
\nabla_p \hat{y} = \frac{\partial \hat{y}}{\partial p} = \frac{\partial}{\partial p} \left( \frac{1}{n} b(q)^T p \right) = \frac{1}{n} b(q) = \frac{1}{n} J_A \phi(J_S q).
$$
We also have
$$
\nabla_q \hat{y} = \frac{\partial \hat{y}}{\partial q} = \frac{\partial}{\partial q} \left( \frac{1}{n} p^T b(q) \right) = \frac{1}{n} \left( \frac{\partial b(q)}{\partial q} \right)^T p.
$$
We need the Jacobian of $b(q)$, $\frac{\partial b(q)}{\partial q^T}$.
$$
\frac{\partial b(q)}{\partial q^T} = \frac{\partial (J_A \phi(J_S q))}{\partial q^T} = J_A \frac{\partial (\phi(J_S q))}{\partial q^T} = J_A D'(q) \frac{\partial (J_S q)}{\partial q^T} = J_A D'(q) J_S
$$
Plugging this back in (and transposing):
$$
\nabla_q \hat{y} = \frac{1}{n} (J_A D'(q) J_S)^T p = \frac{1}{n} J_S^T D'(q)^T J_A^T p = \frac{1}{n} J_S^T D'(q) J_A^T p.
$$

We will compute the four blocks of $H_f$ by first finding the four blocks of $H_{\hat{y}}$.

\paragraph{$H_{\hat{y}, pp}$:}
We differentiate $\nabla_p \hat{y}$ w.r.t $p^T$:
$$
H_{\hat{y}, pp} = \frac{\partial}{\partial p^T} (\nabla_p \hat{y}) = \frac{\partial}{\partial p^T} \left( \frac{1}{n} b(q) \right) = \mathbf{0}.
$$
since $b(q)$ does not depend on $p$.

\paragraph{$H_{\hat{y}, pq}$:}
We differentiate $\nabla_p \hat{y}$ w.r.t $q^T$:
$$
H_{\hat{y}, pq} = \frac{\partial}{\partial q^T} (\nabla_p \hat{y}) = \frac{\partial}{\partial q^T} \left( \frac{1}{n} b(q) \right) = \frac{1}{n} \frac{\partial b(q)}{\partial q^T}.
$$
We have already computed $\frac{\partial b(q)}{\partial q^T}$:
$$
H_{\hat{y}, pq} = \frac{1}{n} J_A D'(q) J_S.
$$

\paragraph{$H_{\hat{y}, qp}$:}
This block is the transpose of the previous one:
$$
H_{\hat{y}, qp} = H_{\hat{y}, pq}^T = \left( \frac{1}{n} J_A D'(q) J_S \right)^T = \frac{1}{n} J_S^T D'(q) J_A^T.
$$

\paragraph{$H_{\hat{y}, qq}$:}
We differentiate $\nabla_q \hat{y}$ w.r.t $q^T$:
$$
H_{\hat{y}, qq} = \frac{\partial}{\partial q^T} (\nabla_q \hat{y}) = \frac{\partial}{\partial q^T} \left( \frac{1}{n} J_S^T D'(q) J_A^T p \right).
$$
Let $v = J_A^T p$ be a constant vector. We need to find $\frac{1}{n} J_S^T \left( \frac{\partial (D'(q) v)}{\partial q^T} \right)$.
Using the rule for differentiating a diagonal matrix $\frac{\partial (\diag(X) v)}{\partial y^T} = \diag(v) \frac{\partial X}{\partial y^T}$, we have:
$$
\frac{\partial (D'(q) v)}{\partial q^T} = \frac{\partial (\diag(\vecop(\phi'(J_S q))) v)}{\partial q^T} = \diag(v) \frac{\partial (\vecop(\phi'(J_S q)))}{\partial q^T}.
$$
The derivative of the vector $\vecop(\phi'(J_S q)$ is $D''(q) J_S$.
$$
\frac{\partial (D'(q) v)}{\partial q^T} = \diag(v) D''(q) J_S = \diag(J_A^T p) D''(q) J_S.
$$
Plugging this back into the expression for $H_{\hat{y}, qq}$:
$$
H_{\hat{y}, qq} = \frac{1}{n} J_S^T \diag(J_A^T p) D''(q) J_S.
$$

We now put together the four blocks of $H_f \approx 2 (\nabla \hat{y}) (\nabla \hat{y})^T + 2r H_{\hat{y}}$.

$$
H_{pp} = 2 (\nabla_p \hat{y}) (\nabla_p \hat{y})^T + 2r H_{\hat{y}, pp} = 2 \left( \frac{1}{n} b(q) \right) \left( \frac{1}{n} b(q) \right)^T + \mathbf{0}.
$$
$$
H_{pp} = \frac{2}{n^2} (J_A \phi(J_S q)) (J_A \phi(J_S q))^T
$$

$$
H_{qq} = 2 (\nabla_q \hat{y}) (\nabla_q \hat{y})^T + 2r H_{\hat{y}, qq}
$$
$$
H_{qq} = \frac{2}{n^2} \left( J_S^T D'(q) J_A^T p \right) \left(J_S^T D'(q) J_A^T p \right)^T + \frac{2r}{n} \left[ J_S^T \diag(J_A^T p) D''(q) J_S \right]
$$

$$
H_{pq} = 2 (\nabla_p \hat{y}) (\nabla_q \hat{y})^T + 2r H_{\hat{y}, pq}
$$
$$
H_{pq} = 2 \left( \frac{1}{n} J_A \phi(J_S q) \right) \left( \frac{1}{n} J_S^T D'(q) J_A^T p \right)^T + 2r \left( \frac{1}{n} J_A D'(q) J_S \right)
$$
$$
H_{pq} = \frac{2}{n^2} (J_A \phi(J_S q)) (p^T J_A D'(q) J_S) + \frac{2r}{n} (J_A D'(q) J_S)
$$

$$
H_{qp} = 2 (\nabla_q \hat{y}) (\nabla_p \hat{y})^T + 2r H_{\hat{y}, qp} = H_{pq}^T
$$
$$
H_{qp} = \frac{2}{n^2} (J_S^T D'(q) J_A^T p) (J_A \phi(J_S q))^T + \frac{2r}{n} (J_S^T D'(q) J_A^T).
$$

Now we pass in bounding the regulatized loss.
We use the notation $f$ for the regulatized and $\bar{f}$ unregularized loss.
The Hessian of the regularized loss $f$ is
$$
    \nabla^2 f(p, q) = \nabla^2 \bar{f}(p, q) + \begin{bmatrix} \lambda_P I & 0 \\ 0 & \lambda_Q I \end{bmatrix}.
$$
Using the triangle inequality for the spectral norm ($\|\cdot\|_2$):
\begin{equation*}
    \|\nabla^2 f(p, q)\|_2 \leq \|\nabla^2 \bar{f}(p, q)\|_2 + \left\| \begin{bmatrix} \lambda_P I & 0 \\ 0 & \lambda_Q I \end{bmatrix} \right\|_2 \leq \|\nabla^2 \bar{f}(p, q)\|_2+(\lambda_P + \lambda_Q).
\end{equation*}
We proceed by computing an upper bound for $\|\nabla^2 \bar{f}(p, q)\|_2$.

From our previous calculations, we have the four blocks:
\begin{align*}
    H_{pp} &= \frac{2}{n^2} (J_A \phi(J_S q)) (J_A \phi(J_S q))^T \\
    H_{qq} &= \frac{2}{n^2} (J_S^T D'(q) J_A^T p) (J_S^T D'(q) J_A^T p)^T + \frac{2r}{n} \left[ J_S^T \diag(J_A^T p) D''(q) J_S \right] \\
    H_{pq} &= \frac{2}{n^2} (J_A \phi(J_S q)) (p^T J_A D'(q) J_S) + \frac{2r}{n} (J_A D'(q) J_S) \\
    H_{qp} &= H_{pq}^T
\end{align*}
We now upper bound the spectral norm of the full Hessian by the sum of the norms of its blocks:
\begin{equation*}
    \| \nabla^2 \bar{f}(p, q) \|_2 \leq \|H_{pp}\|_2 + 2 \|H_{pq}\|_2 + \|H_{qq}\|_2.
\end{equation*}

Since $\|Z\|_2 \leq C_Z$, it holds $\|J_S\|_2 \leq C_Z^2$ and $\|J_A\|_2 \leq C_Z$.

\paragraph{Bounding $H_{pp}$:}
$H_{pp}$ is a rank-1 matrix.
\begin{align*}
    \|H_{pp}\|_2 &= \frac{2}{n^2} \|J_A \phi(J_S q)\|_2^2 \leq \frac{2}{n^2} \|J_A\|_2^2 \|\phi(J_S q)\|_2^2
    \leq \frac{2}{n^2} C_Z^6 C_1^2 \|q\|_2^2.
\end{align*}

\paragraph{Bounding $H_{pq}$:}
We use the triangle inequality.
\begin{align*}
    \|H_{pq}\|_2 &\leq \left\| \frac{2}{n^2} (J_A \phi(J_S q)) (p^T J_A D'(q) J_S) \right\|_2 + \left\| \frac{2r}{n} (J_A D'(q) J_S) \right\|_2 \\
    &\leq \frac{2}{n^2} \|J_A \phi(J_S q)\|_2 \cdot \|p^T J_A D'(q) J_S\|_2 + \frac{2|r|}{n} \|J_A D'(q) J_S\|_2 \\
    &\leq \frac{2}{n^2} (C_Z^3 C_1 \|q\|_2) \cdot (\|p\|_2 \|J_A\|_2 \|D'(q)\|_2 \|J_S\|_2) + \frac{2|r|}{n} (\|J_A\|_2 \|D'(q)\|_2 \|J_S\|_2) \\
    &\leq \frac{2}{n^2} (C_Z^3 C_1 \|q\|_2) \cdot (\|p\|_2 C_2 C_Z^3) + \frac{2|r|}{n} C_2 C_Z^3 \\
    &= \frac{2}{n^2} C_Z^6 C_1 C_2 \|p\|_2 \|q\|_2 + \frac{2}{n} C_2 C_Z^3 |r|.
\end{align*}

\paragraph{Bounding $H_{qq}$:}
We use the triangle inequality.
\begin{align*}
    \|H_{qq}\|_2 &\leq \frac{2}{n^2} \|J_S^T D'(q) J_A^T p\|_2^2 + \frac{2|r|}{n} \|J_S^T \diag(J_A^T p) D''(q) J_S\|_2 \\
    &\leq \frac{2}{n^2} (\|J_S\|_2 \|D'(q)\|_2 \|J_A\|_2 \|p\|_2)^2 + \frac{2|r|}{n} (\|J_S\|_2^2 \|\diag(J_A^T p)\|_2 \|D''(q)\|_2) \\
    &\leq \frac{2}{n^2} (C_2 C_Z^3 \|p\|_2)^2 + \frac{2|r|}{n} ( (C_Z^2)^2 \cdot (\|J_A\|_2 \|p\|_2) \cdot C_3 ) \\
    &= \frac{2}{n^2} C_2^2 C_Z^6 \|p\|_2^2 + \frac{2}{n} C_Z^5 C_3 |r| \|p\|_2.
\end{align*}

\paragraph{Overall Bound for $\bar{f}$:}
\begin{align}
\label{eq:transformers_hessian_bound}
    \|\nabla^2 \bar{f}\|_2 &\leq \|H_{pp}\|_2 + 2 \|H_{pq}\|_2 + \|H_{qq}\|_2 \nonumber \\ & 
    \leq \underbrace{\frac{2}{n^2} C_Z^6 C_1^2}_{K_1} \|q\|_2^2 + \underbrace{\frac{2}{n^2} C_Z^6 C_1 C_2}_{K_2} \|p\|_2 \|q\|_2 + \underbrace{\frac{2}{n} C_2 C_Z^3}_{K_3} |r| + \underbrace{\frac{2}{n^2} C_2^2 C_Z^6}_{K_4} \|p\|_2^2 + \underbrace{\frac{2}{n} C_Z^5 C_3}_{K_5} |r| \|p\|_2.
\end{align}

We bound the residual magnitude $|r|$ by the loss. For MSE, $|r| = \sqrt{\bar{f}}$. For Cross-Entropy, $|r| \le \sqrt{2\bar{f}}$. To unify notation, let $C_r=1$ for MSE and $C_r=\sqrt{2}$ for CE.
The Hessian spectral norm satisfies:
\begin{equation}
    \|\nabla^2 f(p,q)\|_2 \le (\lambda_P + \lambda_Q) + T_{\text{quad}}(p,q) + T_{\text{mixed}}(p,q) + T_{\text{res}}(p,q),
\end{equation}
where:
\begin{align*}
    T_{\text{quad}} &:= K_1 \|q\|_2^2 + K_2 \|p\|_2 \|q\|_2 + K_4 \|p\|_2^2, \\
    T_{\text{mixed}} &:= K_5 C_r \|p\|_2 \sqrt{\bar{f}}, \\
    T_{\text{res}} &:= K_3 C_r \sqrt{\bar{f}}.
\end{align*}

The local condition for (i) is equivalent to $T_{\text{quad}} \leq T_{\text{mixed}}+T_{\text{res}}.$ Thus,
\begin{equation*}
\|\nabla^2 f(p,q)\|_2 \leq (\lambda_P + \lambda_Q) + 2 (T_{\text{mixed}}(p,q) + T_{\text{res}}(p,q)).
\end{equation*}

Since the terms involving $\|q\|_2$ are non-negative ($K_1, K_2 \ge 0$), we can drop them to obtain a necessary condition for $\|p\|_2$:
\begin{equation*}
    K_4 \|p\|_2^2 \le K_5 C_r \sqrt{\bar{f}} \|p\|_2 + K_3 C_r \sqrt{\bar{f}}.
\end{equation*}
Rewriting this as $K_4 \|p\|_2^2 - (K_5 C_r \sqrt{\bar{f}}) \|p\|_2 - (K_3 C_r \sqrt{\bar{f}}) \le 0$, we solve for the positive root to bound $\|p\|_2$:
\begin{equation*}
    \|p\|_2 \le \frac{K_5 C_r}{K_4} \sqrt{\bar{f}} + \sqrt{\frac{K_3 C_r}{K_4}} \bar{f}^{1/4}.
\end{equation*}
We substitute this bound directly into the mixed term $T_{\text{mixed}} = K_5 C_r \|p\|_2 \sqrt{\bar{f}}$:
\begin{align*}
    T_{\text{mixed}} &\le K_5 C_r \left( \frac{K_5 C_r}{K_4} \bar{f} + \sqrt{\frac{K_3 C_r}{K_4}} \bar{f}^{3/4} \right) = \frac{(K_5 C_r)^2}{K_4} \bar{f} + C_{\text{cross}} \bar{f}^{3/4}, \quad \text{where } C_{\text{cross}} = K_5 C_r \sqrt{\frac{K_3 C_r}{K_4}}.
\end{align*}

Using $\bar{f}(W) \le f(W)$, we replace all instances of $\bar{f}$ with $f$. To obtain the affine bound $H_0^{\text{warm-up}} + H_1^{\text{warm-up}}(f(W) - f^*)$, we apply the following linearizations valid for $f \ge 0$ and $\beta > 0$:
\begin{enumerate}
    \item \textbf{Linear term:} $f = (f - f^*) + f^*$.
    \item \textbf{Cross term ($f^{3/4}$):} By Young's inequality ($ab \le a^p/p + b^q/q$ with $p=4/3, q=4$), we have $u^{3/4} \cdot 1 \le \frac{3}{4}u + \frac{1}{4}$. Thus:
    \begin{equation*}
        f^{3/4} \le \frac{3}{4}f + \frac{1}{4} = \frac{3}{4}(f - f^*) + \left( \frac{3}{4}f^* + \frac{1}{4} \right).
    \end{equation*}
    \item \textbf{Residual term ($\sqrt{f}$):} Using the robust smoothness inequality $\sqrt{u} \le \frac{u}{2\beta} + \frac{\beta}{2}$:
    \begin{equation*}
        \sqrt{f} \le \frac{f}{2\beta} + \frac{\beta}{2} = \frac{f - f^*}{2\beta} + \left( \frac{f^*}{2\beta} + \frac{\beta}{2} \right).
    \end{equation*}
\end{enumerate}

Substituting these into the general curvature bound results to
\begin{align*}
    \|\nabla^2 f\|_2 \le (\lambda_P + \lambda_Q)
     + 2 \frac{(K_5 C_r)^2}{K_4} \left((f - f^*) + f^* \right)
     + 2 C_{\text{cross}} \left( \frac{3}{4}(f - f^*) + \left( \frac{3}{4}f^* + \frac{1}{4} \right) \right)
     + 2 K_3 C_r \left( \frac{f - f^*}{2\beta} + \left( \frac{f^*}{2\beta} + \frac{\beta}{2} \right) \right).
\end{align*}

We now collect the coefficients of $(f(W) - f^*)$ into $H_0^{\text{warm-up}}$ and the constant terms into $H_1^{\text{warm-up}}$ and set $\beta=1$:

\begin{align}
\label{eq:warmup_constants_transformer}
&H_0^{\text{warm-up}} = (\lambda_P + \lambda_Q) \nonumber 
    + 2 \left( \frac{(K_5 C_r)^2}{K_4} f^* + C_{\text{cross}} \left( \frac{3}{4}f^* + \frac{1}{4} \right) + K_3 C_r \left( \frac{f^*}{2} + \frac{1}{2} \right) \right) \\
    &H_1^{\text{warm-up}} = 2 \left( \frac{(K_5 C_r)^2}{K_4} + \frac{3}{4} C_{\text{cross}} + \frac{K_3 C_r}{2} \right).
\end{align}

For the global bound (ii), we begin again with the exact decomposition of the Hessian bound:
\begin{equation*}
\label{eq:phase2_start}
    \|\nabla^2 f(W)\|_2 \le (\lambda_P + \lambda_Q) + T_{\text{quad}}(W) + T_{\text{mixed}}(W) + T_{\text{res}}(W).
\end{equation*}
Recall the definitions:
\begin{align*}
    T_{\text{quad}} &= K_1 \|q\|_2^2 + K_2 \|p\|_2 \|q\|_2 + K_4 \|p\|_2^2, \\
    T_{\text{mixed}} &= K_5 C_r \|p\|_2 \sqrt{\bar{f}}, \\
    T_{\text{res}} &= K_3 C_r \sqrt{\bar{f}}.
\end{align*}

We first apply the AM-GM inequality $ab \le \frac{1}{2}(a^2+b^2)$ to separate the coupled terms.

For $T_{\text{quad}}$:
\begin{align*}
    T_{\text{quad}} &= K_1 \|q\|_2^2 + K_2 (\|p\|_2 \|q\|_2) + K_4 \|p\|_2^2
    \\ & \le K_1 \|q\|_2^2 + \frac{K_2}{2} (\|p\|_2^2 + \|q\|_2^2) + K_4 \|p\|_2^2
    \\ & = \left( K_4 + \frac{K_2}{2} \right) \|p\|_2^2 + \left( K_1 + \frac{K_2}{2} \right) \|q\|_2^2.
\end{align*}

For $T_{\text{mixed}}$:
\begin{align*}
    T_{\text{mixed}} = K_5 C_r (\|p\|_2 \sqrt{\bar{f}})
    \le K_5 C_r \left( \frac{1}{2}\|p\|_2^2 + \frac{1}{2}\bar{f} \right)
    = \frac{K_5 C_r}{2} \|p\|_2^2 + \frac{K_5 C_r}{2} \bar{f}.
\end{align*}

We now use the regularization bounds derived from the total loss definition $f(W) \ge \frac{\lambda_P}{2}\|p\|_2^2 + \frac{\lambda_Q}{2}\|q\|_2^2$:
\begin{equation*}
    \|p\|_2^2 \le \frac{2 f(W)}{\lambda_P}, \qquad \|q\|_2^2 \le \frac{2 f(W)}{\lambda_Q}.
\end{equation*}
Substituting these into the disentangled terms (and using $\bar{f} \le f$):

For $T_{\text{quad}}$:
\begin{equation*}
    T_{\text{quad}} \le \left( \frac{2}{\lambda_P}\left( K_4 + \frac{K_2}{2} \right) + \frac{2}{\lambda_Q}\left( K_1 + \frac{K_2}{2} \right) \right) f(W).
\end{equation*}

For $T_{\text{mixed}}$:
\begin{equation*}
    T_{\text{mixed}} \le \left( \frac{2}{\lambda_P}\left( \frac{K_5 C_r}{2} \right) \right) f(W) + \frac{K_5 C_r}{2} f(W) = \left( \frac{K_5 C_r}{\lambda_P} + \frac{K_5 C_r}{2} \right) f(W).
\end{equation*}

Let us define a consolidated coefficient $C_{\text{linear}}$ for all terms that are strictly proportional to $f(W)$:
\begin{equation*}
    C_{\text{linear}} := \underbrace{\frac{2K_4 + K_2 + K_5 C_r}{\lambda_P}}_{\text{Terms from } \|p\|^2} + \underbrace{\frac{2K_1 + K_2}{\lambda_Q}}_{\text{Terms from } \|q\|^2} + \underbrace{\frac{K_5 C_r}{2}}_{\text{From } \bar{f}}.
\end{equation*}

For $T_{\text{res}} = K_3 C_r \sqrt{\bar{f}}$, we use $\sqrt{\bar{f}} \le \sqrt{f}$ and the linearization $\sqrt{f} \le \frac{f-f^*}{2\beta} + (\frac{f^*}{2\beta} + \frac{\beta}{2})$:
\begin{equation*}
    T_{\text{res}} \le K_3 C_r \left[ \frac{f - f^*}{2\beta} + \left( \frac{f^*}{2\beta} + \frac{\beta}{2} \right) \right].
\end{equation*}

Combining everything into the form $H_0 + H_1(f-f^*)$:
\begin{align*}
    \|\nabla^2 f\|_2 \le {}& (\lambda_P + \lambda_Q)
    + C_{\text{linear}} f(W) + T_{\text{res}}.
\end{align*}
We rewrite $C_{\text{linear}} f(W)$ as $C_{\text{linear}} (f - f^*) + C_{\text{linear}} f^*$.

Setting $\beta=1$ and $C_r = \sqrt{2}$, we arrive at the following expressions for the constants $H_0$ and $H_1$:
\begin{align}
\label{eq:global_constants_transformer}
& \nonumber H_0 = (\lambda_P + \lambda_Q) + \sqrt{2} K_3 \left( \frac{f^*}{2} + \frac{1}{2} \right) + f^* \left( \frac{2K_4 + K_2 + \sqrt{2} K_5}{\lambda_P} + \frac{2K_1 + K_2}{\lambda_Q} + \frac{\sqrt{2} K_5}{2} \right) \\
    &H_1 = \frac{\sqrt{2} K_3}{2} + C_{\text{linear}} = \frac{\sqrt{2} K_3}{2} + \left( \frac{2K_4 + K_2 + \sqrt{2} K_5}{\lambda_P} + \frac{2K_1 + K_2}{\lambda_Q} + \frac{\sqrt{2} K_5}{2} \right),
\end{align}
where
\begin{align*}
&K_1 = \frac{2}{n} C_Z^6 C_1^2 \\
&K_2 = \frac{2}{n^2} C_Z^6 C_1 C_2 \\
&K_3 = \frac{2}{n} C_2 C_Z^3 \\
&K_4 = \frac{2}{n^2} C_2^2 C_Z^6 \\
&K_5 = \frac{2}{n} C_Z^5 C_3.
\end{align*}

\end{proof}

We close the discussion of this section with our main result, i.e. proving an $(H_0,H_1)$-smoothness condition for the \emph{in-context} loss function over a distribution of tasks.
Before we get to its proof, we present some useful lemmas from the theory of random matrices.
\begin{lemma}
\label{le:1}
A sub-Gaussian random variable is also sub-exponential. 
\end{lemma}
\begin{lemma}
\label{le:2}
For a random matrix $Z$ whose entries are independent and sub-exponential (with zero mean), its spectral norm $\|Z\|_2$ exhibits sub-exponential tails. This means the probability of the norm being large decays very quickly.
\begin{equation*}
\mathbb{P}(\|Z\|_2 > t) \leq C \exp(-c t) \quad \text{for large } t,
\end{equation*}
for constants $C, c$.
\end{lemma}
\begin{lemma}[Finite Moments]
\label{le:finite moments}
    A random variable $X$ with sub-exponential tails (like $\|Z_j\|_2$) has finite moments:
$$
\mathbb{E}[|X|^k] < \infty \quad \text{for all } k \geq 1.
$$
\end{lemma}
\noindent For a rigorous treatment of these topics, see \cite{vershynin2018high}.

Now, we are ready for the main proof. This is the formal version of Proposition \ref{prop:transformers_in_context}.

\begin{restatable}{proposition}{transformers}
\label{prop:transformers_in_context_formal}
Consider the transformer model described in \Cref{sec:theory_transformers}, for in-context learning in continuous variables with MSE loss, or in binary variables with CE loss. $\phi$ is assumed to satisfy the conditions of Proposition \ref{prop:crossentropy}  and $f_j(P,Q)$ is the L2 regularized loss corresponding to the j-th prompt.  Consider the regularized in-context loss function
   \begin{equation*}
        f(P,Q) = \mathbb{E}_j[f_j(P,Q)], \hspace{1mm} \text{where} \hspace{1mm} f_j(P,Q)=(\hat{y}_j - y_{j,true})^2 + \frac{\lambda_P}{2} \|P\|_F^2 + \frac{\lambda_Q}{2} \|Q\|_F^2.
    \end{equation*}
    Assume that $D_x$ is sub-gaussian and the distribution of $y=h(x)$ is sub-exponential. Assume also that the variance $V_f(\theta):= \text{Var}_j(f_j(\theta)$ is finite and upper bounded by a constant $V$. Then, it holds
    
(i) If $$K_{1,j} \|Q\|_F^2 + K_{2,j} \|P\|_F \|Q\|_F + K_{4,j} \|P\|_F^2 \leq \sqrt{2} K_{5,j} \|P\|_F \sqrt{f(P,Q)} + \sqrt{2} K_{3,j} \sqrt{f(P,Q)},$$ for all $j$, where $K_{1,j},K_{2,j},K_{3,j},K_{4,j},K_{5,j}$ are the analogues of the constants $K_1,K_2,K_3,K_4,K_5$ from Eq.~\eqref{eq:transformers_hessian_bound} substituting $Z$ with the task-dependent data matrix $Z_j$,
then
\begin{equation*}
    \|\nabla^2 f(P,Q)\|_2 \leq 
    H_0^{\text{warm-up}}+H_1^{\text{warm-up}} (f(P,Q)-f^*)),
\end{equation*}
with constants $H_0^{\text{warm-up}}$ and $H_1^{\text{warm-up}}$ defined as in equation~\eqref{eq:transformers_in_context_warmup_constants}.

(ii) For any $(P,Q)$, we have
\begin{equation*}
    \norm{\nabla^2 f(P,Q)}_2 \le H_0 + H_1 (f(P,Q) - f^*).
\end{equation*}
The constants $H_0$ and $H_1$ are defined as in equation~\eqref{eq:transformers_in_context_global_constants} and explicitly incorporate the loss variance $\sqrt{V_f(\theta)}$.

\end{restatable}

\begin{proof}
For simplicity, we denote $\theta \equiv (P,Q)$.

We start with Jensen's Inequality for the spectral norm:
\begin{equation*}
    \norm{\nabla^2 f(\theta)}_2 = \norm{\EE_j [\nabla^2 f_j(\theta)]}_2 \le \EE_j \left[ \norm{\nabla^2 f_j(\theta)}_2 \right].
\end{equation*}

If it holds $K_{1,j} \|q\|_2^2 + K_{2,j} \|p\|_2 \|q\|_2 + K_{4,j} \|p\|_2^2 \leq K_{5,j} C_r \|p\|_2 \sqrt{\bar{f}} +
    K_{3,j} C_r \sqrt{\bar{f}}$ for all tasks $j$, the single-task bound from Proposition \ref{prop:transformer_one_task} applies pointwise:
\begin{equation}
    \norm{\nabla^2 f_j(\theta)}_2 \le C_{0,j}^{\text{warm-up}} + C_{1,j}^{\text{warm-up}} \sqrt{f_j(\theta)}.
\end{equation}
for some $C_{0,j}^{\text{warm-up}}$ and $C_{1,j}^{\text{warm-up}}$ random variables depending on the data of task $j$.
Taking the expectation:
\begin{equation*}
    \EE_j \left[ \norm{\nabla^2 f_j(\theta)}_2 \right] \le \EE_j [C_{0,j}^{\text{warm-up}}] + \EE_j \left[ C_{1,j}^{\text{warm-up}} \sqrt{f_j(\theta)} \right].
\end{equation*}
We define the mean constant $\bar{C}_0 = \EE_j [C_{0,j}^{\text{warm-up}}]$.
For the second term, we apply the Cauchy-Schwarz inequality for expectations:
\begin{equation*}
    \EE_j \left[ C_{1,j}^{\text{warm-up}} \sqrt{f_j(\theta)} \right] \le \sqrt{\EE_j [(C_{1,j}^{\text{warm-up}})^2]} \cdot \sqrt{\EE_j [(\sqrt{f_j(\theta)})^2]}.
\end{equation*}
Note that $\EE_j [(\sqrt{f_j})^2] = \EE_j [f_j] = f(\theta)$.
Let $\bar{C}_1^{\mathrm{rms}} = \sqrt{\EE_j [(C_{1,j}^{\text{warm-up}})^2]}$. These moments are finite because $C_{1,j}^{\text{warm-up}}$ is polynomial in $\|Z_j\|_2$, which has sub-exponential tails.
Thus, we have:
\begin{equation*}
    \norm{\nabla^2 f(\theta)}_2 \le \bar{C}_0 + \bar{C}_1^{\mathrm{rms}} \sqrt{f(\theta)}.
\end{equation*}
To obtain the affine form, we use the linearization inequality $\sqrt{u} \le \frac{u-u^*}{2\beta} + \left(\frac{u^*}{2\beta} + \frac{\beta}{2}\right)$ with $u=f(\theta)$:
\begin{equation*}
    \norm{\nabla^2 f(\theta)}_2 \le \bar{C}_0 + \bar{C}_1^{\mathrm{rms}} \left[ \frac{f(\theta)-f^*}{2\beta} + \frac{f^*}{2\beta} + \frac{\beta}{2} \right].
\end{equation*}
Collecting terms and setting $\beta=1$ yields the final constants:
\begin{equation}
\label{eq:transformers_in_context_warmup_constants}
H_0^{\text{warm-up}} = \bar{C}_0 + \bar{C}_1^{\mathrm{rms}} \left( \frac{f^*}{2} + \frac{1}{2} \right), \quad
    H_1^{\text{warm-up}} = \frac{\bar{C}_1^{\mathrm{rms}}}{2}.
\end{equation}

For the global bound, the single-task result holds unconditionally:
\begin{equation*}
    \norm{\nabla^2 f_j(\theta)}_2 \le H_{0,j} + H_{1,j} f_j(\theta).
\end{equation*}
Taking the expectation, we get
\begin{equation*}
    \EE_j \left[ \norm{\nabla^2 f_j(\theta)}_2 \right] \le \EE_j [H_{0,j}] + \EE_j \left[ H_{1,j} f_j(\theta) \right].
\end{equation*}
Let $\bar{H}_0 = \EE_j [H_{0,j}]$. For the second term, we apply Cauchy-Schwarz:
\begin{equation*}
    \EE_j \left[ H_{1,j}f_j(\theta) \right] \le \sqrt{\EE_j [(H_{1,j})^2]} \cdot \sqrt{\EE_j [f_j(\theta)^2]}.
\end{equation*}
Using the definition of variance $\EE [X^2] = (\EE[ X])^2 + \text{Var}(X)$, we have $\EE [f_j^2] = f(\theta)^2 + V_f(\theta)$.
Let $\bar{H}_1^{\mathrm{rms}} = \sqrt{\EE_j [(H_{1,j})^2]}$. Using $\sqrt{a+b} \le \sqrt{a} + \sqrt{b}$, we have
\begin{equation*}
    \sqrt{f(\theta)^2 + V_f(\theta)} \le f(\theta) + \sqrt{V_f(\theta)}.
\end{equation*}
Thus:
\begin{equation*}
    \norm{\nabla^2 f(\theta)}_2 \le \bar{H}_0 + \bar{H}_1^{\mathrm{rms}} \left( f(\theta) + \sqrt{V_f(\theta)} \right).
\end{equation*}
We write $f(\theta) = (f(\theta)-f^*) + f^*$. Absorbing the variance term into the constant and upper bounding it by $V$:
\begin{equation}
\label{eq:transformers_in_context_global_constants}
H_0 = \bar H_0 + \bar{H}_1^{\mathrm{rms}} \left( f^* + \sqrt{V} \right), \quad
    H_1 = \bar{H}_1^{\mathrm{rms}}.     
\end{equation}
The expressions $\bar{H}_0$ and $\bar{H}_1^{\mathrm{rms}}$ are finite by Lemma \ref{le:finite moments}.
\end{proof}

\section{Neural Networks are in general not $(L_0, L_1)$-smooth}\label{app:l0l1_failure}

In this section, we demonstrate that neural networks still violate the $(L_0, L_1)$-smoothness, even in the presence of L2 regularization or weight balancedness. We start with an example of a simple 2-layer neural network with L2 regularization when $(L_0, L_1)$-smoothness is violated for $L_0, L_1 \ge 0.$

\begin{proposition}
\label{prop:counterexample1}
    We consider a simple 2-layer neural network with MSE loss
    \[
    f(u,v) = \frac{1}{2}(u\sigma(v))^2 + \frac{\lambda_1}{2}u^2 + \frac{\lambda_2}{2}v^2,
    \]
    such that $\sigma(0)=0,\sigma^{\prime}(0)\neq 0$\footnote{These assumptions are satisfied for several activation functions such as \algname{tanh}, \algname{GELU}, \algname{SiLU}.}. Then $(L_0, L_1)$-smoothness does not hold for any $L_0, L_1 \ge 0.$
\end{proposition}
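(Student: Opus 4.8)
The plan is to exhibit a sequence of points along which the gradient grows at most linearly while the Hessian norm grows quadratically, mirroring the counterexamples for sums and affine transformations given earlier. First I would compute the first- and second-order derivatives explicitly. Writing $a \eqdef \sigma'(0) \neq 0$, a direct calculation gives
\[
\nabla f(u,v) = \begin{pmatrix} u(\sigma(v)^2 + \lambda_1) \\ u^2\sigma(v)\sigma'(v) + \lambda_2 v \end{pmatrix},
\]
together with the Hessian entries $f_{uu} = \sigma(v)^2 + \lambda_1$, $f_{uv} = 2u\,\sigma(v)\sigma'(v)$, and $f_{vv} = u^2\bigl(\sigma'(v)^2 + \sigma(v)\sigma''(v)\bigr) + \lambda_2$.

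The key idea is to restrict attention to the line $v = 0$, where the assumption $\sigma(0)=0$ annihilates every term carrying a factor $\sigma(v)$. Indeed, setting $v=0$ and using $\sigma'(0)=a$, one obtains $\nabla f(u,0) = (\lambda_1 u,\,0)$, while the off-diagonal entry vanishes, $f_{uv}=2u\,\sigma(0)\,a = 0$, so the Hessian is diagonal, $\nabla^2 f(u,0) = \mathrm{diag}\bigl(\lambda_1,\; a^2 u^2 + \lambda_2\bigr)$. Consequently $\|\nabla f(u,0)\| = \lambda_1 |u|$ whereas $\|\nabla^2 f(u,0)\|_2 = \max\{\lambda_1,\; a^2 u^2 + \lambda_2\}$.

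Finally I would take $u_m = m \to \infty$ and argue by contradiction. If $f$ were $(L_0,L_1)$-smooth, then for all large $m$ we would need $a^2 m^2 + \lambda_2 \le L_0 + L_1 \lambda_1 m$; since $a^2>0$, the left-hand side grows quadratically in $m$ while the right-hand side is only linear, which is impossible for any fixed $L_0,L_1\ge 0$. The one point that must be gotten right is the interplay between the two hypotheses on $\sigma$: the condition $\sigma(0)=0$ is what collapses the first gradient component to the purely linear regularizer term $\lambda_1 u$ and forces the second component to vanish, whereas $\sigma'(0)\neq 0$ is exactly what guarantees the genuine quadratic curvature $a^2 u^2$ in $f_{vv}$. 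This mismatch between linear gradient growth and quadratic curvature growth is precisely what no fixed pair $(L_0,L_1)$ can absorb, and it is the only real obstacle in the argument. (If $\lambda_1=0$ the contradiction is even sharper, since then $\nabla f(u,0)=0$ identically while $\|\nabla^2 f(u,0)\|_2 = a^2 u^2 + \lambda_2 \to \infty$.)
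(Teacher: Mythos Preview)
Your proposal is correct and follows essentially the same approach as the paper: compute the gradient and Hessian, restrict to the line $v=0$ so that $\sigma(0)=0$ kills all but the regularizer terms in the gradient while $\sigma'(0)\neq 0$ forces genuine quadratic growth in the $(2,2)$-entry of the Hessian, and then derive a contradiction from the quadratic-versus-linear mismatch as $u\to\infty$. The paper phrases the final step as dividing both sides by $u$ rather than taking a sequence $u_m=m$, but this is cosmetic.
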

\begin{proof}
    For this example, the gradient and the Hessian are
    \[
    \nabla f(u,v) = \begin{bmatrix}
        u\sigma^2(v) + \lambda_1u \\
        u^2\sigma(v)\sigma^{\prime}(v) + \lambda_2v
    \end{bmatrix}, \; 
    \nabla^2 f(u,v) = \begin{bmatrix}
        \sigma^2(v) +\lambda_1 & 2u\sigma(v)\sigma^{\prime}(v) \\
        2u\sigma(v)\sigma^{\prime}(v) & u^2((\sigma^{\prime}(v))^2 + \sigma(v)\sigma^{\prime\prime}(v) ) + \lambda_2.
    \end{bmatrix}
    \]
    Let us evaluate them at the point $(u,0)$. Note that $\sigma(0)=0, \sigma^{\prime}(0)\neq 0$ by the assumption of the proposition. We obtain
    \[
    \nabla f(u,v) = \begin{bmatrix}
        \lambda_1 u\\
        0
    \end{bmatrix}, \; 
    \nabla^2 f(u,v) = \begin{bmatrix}
        \lambda_1 & 0 \\
        0 & u^2(\sigma^{\prime}(0))^2 + \lambda_2.
    \end{bmatrix}
    \]
    Therefore, we obtain that
    \[
    \|\nabla^2f(u,0)\|_2 = \max\{\lambda_1, u^2(\sigma^{\prime}(0))^2 + \lambda_2\}, \quad \|\nabla f(u,0)\|= \lambda_1|u|.
    \]
    Thus, if $(L_0, L_1)$-smoothness was true for this function, then there were constants $L_0, L_1 \ge 0$ such that 
    \begin{equation}\label{eq:lqjndojqwndqowjdnq}
    \|\nabla^2f(u,0)\|_2 =  \max\{\lambda_1, u^2(\sigma^{\prime}(0))^2 + \lambda_2\} \le L_0 + L_1\lambda_1|u|.
    \end{equation}
    Let $u \ge \frac{\sqrt{\lambda_1}}{|\sigma^{\prime}(0)|}$. Then $\|\nabla^2f(u,0)\|_2 = u^2(\sigma^{\prime}(0))^2 + \lambda_2$. Therefore, dividing both sides of \eqref{eq:lqjndojqwndqowjdnq} by $u$ we obtain
    \[
    u(\sigma^{\prime}(0))^2 \le \frac{L_0}{u} + L_1\lambda_1.
    \]
    Taking $u\to +\infty,$ we get that LHS goes to $+\infty$ while RHS goes to a constant. Therefore, $(L_0, L_1)$-smoothness is violated for any $L_0, L_1\ge0.$
\end{proof}

Next, we demonstrate that $(L_0, L_1)$-smoothness is violated under a balancedness condition as well.

\begin{proposition}
\label{prop:counterexample2}
    We consider a $2$-layer neural network with MSE loss
    \begin{equation*}
        f(W_1,W_2) = \|Y- W_1 \phi(W_2 X)\|_F^2
    \end{equation*}
    and leaky-ReLU or linear activation function, i.e. $\phi(x)=\max \lbrace x, bx \rbrace$, with $0<b\leq 1$. Then, $(L_0,L_1)$-smoothness does not hold under weak balancedness for any $L_0,L_1 \geq 0$.
\end{proposition}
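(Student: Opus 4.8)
The plan is to construct an explicit family of balanced weight configurations along which the Hessian norm blows up while the gradient stays bounded (ideally zero), exactly mirroring the strategy already used in Proposition~\ref{prop:counterexample1} and in the earlier counterexamples for sums and affine maps. The key tension to exploit is the same one that makes $(L_0,L_1)$-smoothness fail in general: at a point where the network output equals the label (zero residual), the loss gradient vanishes, yet the Hessian of a least-squares objective retains its Gauss--Newton part $J^\top J$, whose norm can be made arbitrarily large by scaling the weights. Weak balancedness ($\|W_1\|_{\rm F}=\dots=\|W_\ell\|_{\rm F}$) is only a mild constraint, so I expect to have plenty of freedom to engineer such a configuration.

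**Concretely, I would** pick the simplest nontrivial dimensions — take $X$, $Y$ a single scalar example (or a rank-one setup) — and choose $W_1, W_2$ so that (i) $\|W_1\|_{\rm F}=\|W_2\|_{\rm F}$ (weak balancedness), and (ii) the prediction $W_1\phi(W_2X)$ matches $Y$ exactly, forcing the residual $R = W_1\phi(W_2X)-Y$ to be zero. With zero residual, the gradient $\nabla f=0$, so the $(L_0,L_1)$ bound would force $\|\nabla^2 f\|_2\le L_0$, a fixed constant. Then I would introduce a scaling parameter $t>0$ and build a one-parameter family $(W_1(t),W_2(t))$ that preserves both the balancedness and the zero-residual conditions while letting $\|W_1(t)\|_{\rm F}\to\infty$. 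For instance, fixing a target $Y$ and exploiting the product structure of the prediction, one can scale one factor up and compensate the other, or place a sign cancellation (as in the commented-out proposition with $u_1=t, u_2=-t$) to keep the output at $Y$ while the magnitudes grow.

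**For the Hessian,** I would invoke the explicit two-layer Hessian block structure computed in the proof of Proposition~\ref{prop:elltwoweightdecay}. With $R=0$, the residual-dependent (second-order / non-Gauss--Newton) terms in the off-diagonal and $(2,2)$ blocks vanish, leaving only the positive-semidefinite Gauss--Newton contributions. In particular the $(1,1)$ block equals $2(\phi(W_2X)\phi(W_2X)^\top\otimes I_c)$, whose spectral norm grows like $\|\phi(W_2X)\|^2$, and this scales as $\|W_2(t)\|_{\rm F}^2\to\infty$ along the family. Hence $\|\nabla^2 f(W_1(t),W_2(t))\|_2\to\infty$ while $\|\nabla f\|=0$, contradicting any bound $\|\nabla^2 f\|_2\le L_0+L_1\|\nabla f\|=L_0$. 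This yields the claim for all $L_0,L_1\ge 0$.

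**The main obstacle** will be simultaneously enforcing weak balancedness \emph{and} exact zero residual while still driving the norms to infinity, since these are three competing constraints on finitely many parameters. I expect to resolve this by working in dimensions large enough to have spare degrees of freedom — e.g. taking $W_1$ and $W_2$ to have a shared block/rank-one structure where one can scale an ``output'' direction up and a complementary direction down to keep Frobenius norms matched, while routing the actual prediction through a fixed exactly-interpolating component. A clean alternative, if exact interpolation proves awkward under balancedness, is to keep the residual \emph{bounded} (not zero) along the family and show the Hessian still grows strictly faster than the gradient (which scales linearly in $t$ while the Gauss--Newton Hessian scales quadratically), so that $\|\nabla^2 f\|_2/(L_0+L_1\|\nabla f\|)\to\infty$; this weaker-but-sufficient route avoids the interpolation headache entirely and is the fallback I would reach for if the balancedness constraint makes zero residual over-determined.
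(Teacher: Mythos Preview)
Your proposal is correct and will work. The sign-cancellation construction you reference (e.g.\ $W_1=(t,-t)$, $W_2=(t,t)^\top$, $X=1$, $Y=0$) gives balanced weights with zero residual, hence zero gradient, while the $(1,1)$ Hessian block $2\,\phi(W_2X)\phi(W_2X)^\top\otimes I_c$ has norm $4t^2\to\infty$; since this is a principal block, it lower-bounds $\|\nabla^2 f\|_2$ and you are done.

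The paper's published proof takes a different route. Rather than forcing the residual to zero, it picks specific $3\times 3$ data $X=I$, $Y=\mathrm{diag}(1,2,3)$ and weights
\[
W_1=\begin{bmatrix}t&0\\0&0\\0&0\end{bmatrix},\qquad
W_2=\begin{bmatrix}1/t&0&0\\\sqrt{t^2-1/t^2}&0&0\end{bmatrix},
\]
so that $\|W_1\|_F=\|W_2\|_F=t$ and the residual $Y-W_1W_2X=\mathrm{diag}(0,2,3)$ is \emph{nonzero but bounded}. The gradient nevertheless vanishes exactly, because the residual is orthogonal to the column space of $W_1$ and to the row space of $W_2$ (so $W_1^\top(Y-W_1W_2X)=0$ and $(Y-W_1W_2X)W_2^\top=0$). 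The Hessian then blows up through the $(2,2)$ block via $W_1^\top W_1=\mathrm{diag}(t^2,0)$, rather than the $(1,1)$ block you target. What the paper's construction buys is that the counterexample sits at a point with $f(W)>f^*$, i.e.\ away from any global minimizer, which is a slightly less degenerate witness; your approach is more elementary and shorter, but lives at a zero-loss point (with $Y=0$ in the cleanest instantiation), which some readers might view as a corner case.
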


\begin{proof}
We start by computing the gradients and Hessian of $f$.

For all computations, we work with a vectorized version of $f$:
\begin{equation*}
    f(W_1,W_2)=\|\textnormal{vec}(Y)-\textnormal{vec}(W_1 \phi(W_2 X))\|_{\rm F}^2.
\end{equation*}

Let us denote $$R := \textnormal{vec}(Y)-\textnormal{vec}(W_1 \phi(W_2 X)) = \textnormal{vec}(Y) - (\phi(W_2 X)^\top \otimes I_c) \textnormal{vec}(W_1). $$

For the second inequality, we used a classic property between vectorization and the Kronecker product.

The derivative with respect to $\textnormal{vec}(W_1)$ is 
\begin{align*}
    &\frac{\partial f}{\partial \textnormal{vec}(W_1)} =  
  \frac{\partial f}{\partial R} \cdot \frac{\partial R}{\partial \textnormal{vec}(W_1) }  = -2 R^\top (\phi(W_2 X)^\top \otimes I_c) = -2 R^\top (\phi(W_2 X)^\top \otimes I_c).
\end{align*}

Transposing in order to bring the vector in column form, we get
\begin{equation}
\label{eq:fisrt_grad_of_2layer}
    \frac{\partial f}{\partial \textnormal{vec}(W_1)} = 2 (\phi(W_2 X) \otimes I_c) R = - 2 \textnormal{vec}((Y-W_1 \phi(W_2 X)) \phi(W_2 X)^\top).
\end{equation}

The gradient with respect to $W_2$ is similarly
\begin{align*}
   &\frac{\partial f}{\partial \textnormal{vec}(W_2)} = \frac{\partial f}{\partial R} \cdot \frac{\partial R}{\partial \textnormal{vec}(W_2) }. 
\end{align*}
$\frac{\partial f}{\partial R}$ is again $2 R^\top$. In order to deal with $\frac{\partial R}{\partial \textnormal{vec}(W_2) }$, we write
\begin{equation*}
    R = \textnormal{vec}(Y) - (I_m \otimes W_1) \textnormal{vec}(\phi(W_2 X)).
\end{equation*}

Thus,
\begin{align*}
    \frac{\partial R}{\partial \textnormal{vec}(W_2)} = -(I_m \otimes W_1) \frac{\partial \textnormal{vec}(\phi(W_2 X))}{\partial \textnormal{vec}(W_2)} = -(I_m \otimes W_1) \frac{\partial \textnormal{vec}(\phi(W_2 X))}{\partial \textnormal{vec}(W_2 X)} \frac{\partial \textnormal{vec}(W_2 X)}{\partial \textnormal{vec}(W_2)}
\end{align*}

$\frac{\partial \textnormal{vec}(\phi(W_2 X))}{\partial \textnormal{vec}(W_2)}$ is the diagonal matrix $\textnormal{diag}(\textnormal{vec}(\phi'(W_2 X))$.

Since $\textnormal{vec}(W_2 X) = (X^\top \otimes I_{n_1}) \textnormal{vec}(W_2)$, the gradient $\frac{\partial \textnormal{vec}(W_2 X)}{\partial \textnormal{vec}(W_2)}$ is 
\begin{equation*}
    \frac{\partial \textnormal{vec}(W_2 X)}{\partial \textnormal{vec}(W_2)} = X^\top \otimes I_{n_1}.
\end{equation*}

Putting it all together, we have
\begin{equation}
\label{eq:second_grad_2layer}
   \frac{\partial f}{\partial \textnormal{vec}(W_2)} = -2 R^\top (I_m \otimes W_1) \textnormal{diag}(\textnormal{vec}(\phi'(W_2 X)) (X^\top \otimes I_{n_1}).
\end{equation}

Writing that again as column vector yields
\begin{equation*}
    -2 (X \otimes I_{n_1}) \textnormal{diag}(\textnormal{vec}(\phi'(W_2 X))) (I_m \otimes W_1^\top) R.
\end{equation*}

After some modifications, we can write
\begin{align*}
    &\textnormal{diag}(\textnormal{vec}(\phi'(W_2 X))) (I_m \otimes W_1^\top) R = \\ & \textnormal{diag}(\textnormal{vec}(\phi'(W_2 X))) \textnormal{vec}(W_1^\top (Y-W_1 \phi(W_2 X))))  = \\ & \textnormal{vec}((W_1^\top (Y-W_1 \phi(W_2 X))\odot \phi'(W_2 X)).
\end{align*}
where $\odot$ is the Hadamard product.

This means that we can write the previous gradient as
\begin{equation*}
    -2 \textnormal{vec}(((W_1^\top (Y-W_1 \phi(W_2 X))) \odot \phi'(W_2 X) ) X^\top).
\end{equation*}

For the first block of the Hessian, we differentiate $\frac{\partial f}{\partial \textnormal{vec}(W_1)}$ with respect to $\partial \textnormal{vec}(W_1)^\top$. Since $$\frac{\partial f}{\partial \textnormal{vec}(W_1)} = -2 \textnormal{vec}((Y-W_1 \phi(W_2 X)) \phi(W_2 X)^\top) = -2 (\phi(W_2 X) \otimes I_c) \textnormal{vec}(Y-W_1 \phi(W_2 X)) , $$ we have
\begin{align*}
    \frac{\partial^2 f}{\partial \textnormal{vec}(W_1) \textnormal{vec}(W_1)^\top } &= -2 (\phi(W_2 X) \otimes I_c) \frac{\partial \textnormal{vec}(Y-W_1 \phi(W_2 X))}{\partial \textnormal{vec}(W_1)^\top} \\ & = 2(\phi(W_2 X) \otimes I_c)  (\phi(W_2 X)^\top \otimes I_c) \frac{\partial \textnormal{vec}(W_1)}{\partial \textnormal{vec}(W_1)^\top} \\ & = 2 (\phi(W_2 X) \phi(W_2 X)^\top \otimes I_c).
\end{align*}

For the off-diagonal blocks, it suffices to compute only one, as they are symmetric to each other. We use the product rule (see \cite{magnus1985matrix}, Theorem 9)
\begin{equation*}
    \frac{\partial \textnormal{vec}(A(W) B (W))}{\partial \textnormal{vec}(W)^\top} = (B(W)^\top \otimes I) \frac{\partial \textnormal{vec}(A(W))}{\partial \textnormal{vec}(W)^\top} + (I \otimes A(W)) \frac{\partial \textnormal{vec}(B(W))}{\partial \textnormal{vec}(W)^\top}.
\end{equation*}
We have
\begin{align*}
    \frac{\partial}{\partial \textnormal{vec}(W_2)^\top} \frac{\partial f}{\partial \textnormal{vec}(W_1)} 
& = -2 (\phi(W_2 X) \otimes I_c) \frac{\partial \textnormal{vec} (Y-W_1\phi(W_2 X)) }{\partial \textnormal{vec}(W_2)^\top} \\& - 2 (I_{n_1} \otimes (Y-W_1\phi(W_2 X)) ) \frac{\partial \textnormal{vec} (\phi(W_2 X)^\top) }{\partial \textnormal{vec}(W_2)^\top}. 
\end{align*}

In order to proceed, we need to write $\textnormal{vec}(\phi(W_2 X)^\top)$ in terms of $\textnormal{vec}(\phi(W_2 X))$, and this can be done formally using the so-called commutation matrix:
\begin{equation*}
   \textnormal{vec}(\phi(W_2 X)^\top) = K_{n_1 m}  \textnormal{vec}(\phi(W_2 X)).
\end{equation*}

For the first partial derivative in the sum, we have
\begin{align*}
  \frac{\partial \textnormal{vec} (Y-W_1\phi(W_2 X)) }{\partial \textnormal{vec}(W_2)^\top} & = -\frac{\partial \textnormal{vec} (W_1\phi(W_2 X)) }{\partial \textnormal{vec}(W_2)^\top}  = -(I_m \otimes W_1)\frac{\partial \textnormal{vec} (\phi(W_2 X)) }{\partial \textnormal{vec}(W_2)^\top} \\ & = -(I_m \otimes W_1) \textnormal{diag}(\textnormal{vec}(\phi'(W_2 X))) \frac{\partial \textnormal{vec}(W_2 X)}{\partial \textnormal{vec}(W_2)^\top} \\ & = -(I_m \otimes W_1) \textnormal{diag}(\textnormal{vec}(\phi'(W_2 X))) (X^\top \otimes I_{n_1}).
\end{align*}

As it is evident in the previous calculation
\begin{equation*}
    \frac{\partial \textnormal{vec} (\phi(W_2 X)) }{\partial \textnormal{vec}(W_2)^\top} = \textnormal{diag}(\textnormal{vec}(\phi'(W_2 X))) (X^\top \otimes I_{n_1}).
\end{equation*}

Putting it all together, we get
\begin{align*}
    &\frac{\partial^2 f}{\partial \textnormal{vec}(W_1) \textnormal{vec}(W_2)^\top } = \\& 2 (\phi(W_2 X) \otimes W_1) \textnormal{diag}(\textnormal{vec}(\phi'(W_2 X))) (X^\top \otimes I_{n_1}) \\ - &2 (I_{n_1} \otimes (Y-W_1\phi(W_2 X))) K_{n_1 m} \textnormal{diag}(\textnormal{vec}(\phi'(W_2 X))) (X^\top \otimes I_{n_1}) = \\ &
    2 (\phi(W_2 X) \otimes W_1 +   (I_{n_1} \otimes (W_1\phi(W_2 X)-Y))K_{n_1 m}) \textnormal{diag}(\textnormal{vec}(\phi'(W_2 X))) (X^\top \otimes I_{n_1}).
\end{align*}

We also have
\begin{equation*}
    \frac{\partial^2 f}{\partial \textnormal{vec}(W_2) \textnormal{vec}(W_1)^\top } =\left(\frac{\partial^2 f}{\partial \textnormal{vec}(W_1) \textnormal{vec}(W_2)^\top }\right)^\top.
\end{equation*}

For the second derivative of $L$ with respect to $W_2$, we remind that
\begin{equation*}
    \frac{\partial f}{\partial \textnormal{vec}(W_2)} = -2 (X \otimes I_{n_1}) \textnormal{diag}(\textnormal{vec}(\phi'(W_2 X))) (I_m \otimes W_1^\top) R.
\end{equation*}

Differentiating that with respect to $\textnormal{vec}(W_2)^\top$ involves a product rule, as $W_2$ appears in $\textnormal{diag}(\textnormal{vec}(\phi'(W_2 X)))$ and in $R$. It is more convenient to bring $\frac{\partial f}{\partial \textnormal{vec}(W_2)}$ back in fully vectorized form as:
\begin{equation*}
   \frac{\partial f}{\partial \textnormal{vec}(W_2)} =
    -2 \textnormal{vec}(((W_1^\top (Y-W_1 \phi(W_2 X))) \odot \phi'(W_2 X) ) X^\top). 
\end{equation*}

We have

\begin{align*}
    &-2 \frac{\partial \textnormal{vec}(((W_1^\top (Y-W_1 \phi(W_2 X))) \odot \phi'(W_2 X) ) X^\top)}{\partial \text{vec}(W_2)^\top}  = \\ &  -2 (X \otimes I_{n_1}) \left( \frac{\partial\textnormal{vec}(W_1^\top(Y-W_1 \phi(W_2 X)) \odot \phi'(W_2 X))}{\partial \text{vec}(W_2)^\top} \right).
\end{align*}

Now we can use the product rule for the Hadamard product, see \cite{magnus1985matrix} (Theorem 10):

\begin{align*}
    &\frac{\partial\textnormal{vec}((W_1^\top(Y-W_1 \phi(W_2 X))) \odot \phi'(W_2 X))}{\partial \text{vec}(W_2)^\top} = \\ & \textnormal{diag}(\text{vec}(\phi'(W_2 X)) \frac{\partial \text{vec}(W_1^\top(Y-W_1 \phi(W_2 X)))}{\partial \text{vec}(W_2)^\top} + \textnormal{diag}(\textnormal{vec}(W_1^\top(Y-W_1 \phi(W_2 X)))) \frac{\partial \phi'(W_2 X)}{\partial \text{vec}(W_2)^\top}.
\end{align*}

For the first term of the last sum, we have by previous calculations that
\begin{equation*}
    \frac{\partial \text{vec}(W_1^\top(Y-W_1 \phi(W_2 X)))}{\partial \text{vec}(W_2)^\top} = -(I_m \otimes 
 W_1^\top W_1) \textnormal{diag}(\textnormal{vec}(\phi'(W_2 X))) (X^\top \otimes I_{n_1}).
\end{equation*}

For the second term of the last sum, we have
\begin{equation*}
    \frac{\partial \phi'(W_2 X)}{\partial \text{vec}(W_2)^\top} = \textnormal{diag}(\text{vec}(\phi''(W_2 X))) (X^\top \otimes I_{n_1}).
\end{equation*}

In total, we have
\begin{align}
\label{eq:Hessian_2layer_22block}
    &\frac{\partial^2 f}{\partial \nonumber \textnormal{vec}(W_2) \text{vec}(W_2)^\top} =  \\& 2 (X \otimes I_{n_1}) \textnormal{diag}(\textnormal{vec}(\phi'(W_2 X))) \nonumber(I_m \otimes 
 W_1^\top W_1) \textnormal{diag}(\textnormal{vec}(\phi'(W_2 X))) (X^\top \otimes I_{n_1}) \\ - & 2 (X \otimes I_{n_1}) \textnormal{diag}(\textnormal{vec}(W_1^\top(Y-W_1 \phi(W_2 X))))\textnormal{diag}(\text{vec}(\phi''(W_2 X))) (X^\top \otimes I_{n_1}).
\end{align}

This completes the calculation of gradients and Hessian of $f$.

Now, we can simply consider $X = \begin{bmatrix}
        1 & 0 & 0 \\ 0 & 1 & 0 \\ 0 & 0 & 1
    \end{bmatrix}$
and
$Y = \begin{bmatrix}
        1 & 0 & 0 \\ 0 & 2 & 0 \\ 0 & 0 & 3
    \end{bmatrix}$.
Take also $W_1 = \begin{bmatrix}
        t & 0 \\ 0 & 0 \\ 0 & 0
    \end{bmatrix}$
    and 
    $W_2 = \begin{bmatrix}
       \frac{1}{t} & 0 & 0 \\ \sqrt{t^2 - 1/t^2} & 0 & 0   
    \end{bmatrix},$
for $t > 1$ (notice that the entries of $W_2$ are positive, thus it is not affected by leaky-ReLU). It holds $\|W_1\|_F = t = \|W_2\|_F$, thus we indeed satisfy the weak balancedness condition. It also holds
\begin{equation*}
 Y- W_1 W_2 X = \begin{bmatrix}
     0 & 0 & 0 \\ 0 & 2 & 0 \\ 0 & 0 & 3 
 \end{bmatrix}.   
\end{equation*}
We can use that to compute
\begin{align*}
    W_1^T (Y - W_1 W_2 X) = \begin{bmatrix}
       \frac{1}{t} & 0 & 0 \\ 0 & 0 & 0   
    \end{bmatrix} \begin{bmatrix}
     0 & 0 & 0 \\ 0 & 2 & 0 \\ 0 & 0 & 3 
 \end{bmatrix} = \begin{bmatrix}
     0 & 0 & 0 \\ 0 & 0 & 0
 \end{bmatrix}
\end{align*}
and 
\begin{align*}
    (Y - W_1 W_2 X) X^T W_2^T =
        \begin{bmatrix}
     0 & 0 & 0 \\ 0 & 2 & 0 \\ 0 & 0 & 3 
 \end{bmatrix} \begin{bmatrix}
       \frac{1}{t} & \sqrt{t^2 - 1/t^2} \\ 0  & 0 \\ 0 & 0   
    \end{bmatrix} = \begin{bmatrix}
     0 & 0  \\ 0 & 0  \\ 0 & 0
 \end{bmatrix}. 
 \end{align*}
 It holds $\nabla_{W_1} f(W_1,W_2) = (Y - W_1 W_2 X) X^T W_2^T$ and $\nabla_{W_2} f (W_1, W_2) = W_1^T (Y - W_1 W_2 X)$, thus we have $\|\nabla f\| = 0$, while the Frobenius norm of the Hessian (thus also its spectral norm) goes to infinity at $t$ goes to infinity by Eq.~\eqref{eq:Hessian_2layer_22block}, since
 \begin{equation*}
     W_1^T  W_1 = \begin{bmatrix}
         t^2 & 0 \\ 0 & 0
     \end{bmatrix}.
 \end{equation*}
\end{proof}

\section{Useful Lemmas}
\label{sec:useful_lemmas}

To obtain convergence results, we need to bound the smoothness between two arbitrary points $w,y\in\R^d$. This can be done using $(H_0,H_1)$-smoothness. If we parametrize $w(t) \eqdef x + t(y-w)$, then from the new smoothness assumption, the smoothness constant along the segment $[x,w(t)]$ can be bounded by a function of $\chi(t) \eqdef \int_0^t (H_0 + H_1(f(w(\theta))-f^*))\diff \theta$; see the next derivations for detailed proof. In particular, the bound on the smoothness constant along $[x,y]$ is related to $\chi(1).$ Our proof techniques are inspired by the results in \citet{li2023convex}. However, due to a more general smoothness inequality, our derivations involve a more careful analysis, because we need to deal with additional terms that do not appear in the case of $(L_0, L_1)$-smoothness. This highlights the difficulty of obtaining convergence guarantees, in particular, a gradient upper bound (\Cref{lem:gradient_bound}) and quadratic upper bound (\Cref{lem:quadratic_bound}).

We start with a restatement of Lemma A.3 in \citet{li2023convex}, which can be seen as a generalization of Gr{\"o}nwall’s inequality.

The next lemma provides sufficient conditions when the smoothness constant along the segment $[x,y]$ can be bounded by some constant, which uses information at $w$ only. The concurrent work \citet{liu2025theoretical} provides similar derivations based on \citet{li2023convex}.

\begin{lemma}[Lemma A.3 in \cite{li2023convex}]\label{lem:lemmaA3} Let $\alpha\colon [a,b] \to [0, \infty)$ and $\beta\colon [0,\infty) \to (0,\infty)$ be two continuous functions. Suppose $\alpha^\prime(t) \le \beta(\alpha(t))$ almost everywhere over $(a,b).$  We have for all $K\in[a,b]$,
\[
\int_{\alpha(a)}^{\alpha(t)} \frac{1}{\beta(u)}\diff u \le t-a.
\]
    
\end{lemma}

\begin{lemma}\label{lem:c1c2_bound} Let $f$ be $(H_0,H_1)$-smooth, $\chi(t) \eqdef \int_0^t (H_0 + H_1(f(w(\theta))-f^*))\diff \theta$, and define $c_1 \coloneqq H_0 + H_1(c_2 + f(w) - f^*)$ for some $c_2 > 0$. Then it holds 
    \[
    \|\nabla f(w)\|\cdot \|y-w\| + \frac{c_1}{2}\|y-w\|^2 \le c_2 \implies \chi(1) \leq c_1.
    \]

\end{lemma}
\begin{proof}
    Let $w(t) = x + t(y-w)$. Then, by Taylor's theorem for a gradient, we have
    \begin{align}
        \|\nabla f(w(t)) - \nabla f(w)\| &= \left\|\int_0^t \nabla^2f(w(\theta))(w(t)-w) \diff \theta\right\|\notag\\
        &\overset{(i)}{\le} \int_0^t \|\nabla^2f(w(\theta))\|\diff\theta\cdot \|w(t)-w\|\notag\\
        &\overset{(ii)}{\le} \|w(t)-w\|\int_0^t (H_0 + H_1(f(w(\theta)) - f^*))\diff \theta = \|w(t)-w\|\chi(t),\label{eq:noqjn1oub}
    \end{align}
    where $(i)$ follows from the definition of a spectral norm, $(ii)$ --- from \Cref{asmp:H0H1_smoothness}, and $\chi(t) \eqdef \int_0^t (H_0 + H_1(f(w(\theta))-f^*))\diff \theta$.  Eventually, we want to bound $\|\nabla f(y) - \nabla f(w)\|$ for any $y=w(1)$ and $w$. Therefore, our goal is to bound $\chi(1)$. Differentiating $\chi(t)$ we have 
    \begin{align}\label{eq:pqkwnqlk}
        \chi^\prime(t) &= (H_0 + H_1(f(w(t)) - f^*))\notag\\
        &= H_0 + H_1(f(w) - f^*) + H_1(f(w(t)) - f(w)).
    \end{align}
    Now we need to bound the difference $f(w(t)) - f(w)$ using Taylor's theorem for the function 
    \begin{align}
        f(w(t)) - f(w) &= \int_0^t \langle\nabla f(w(\theta)), w(t)- x\rangle \diff \theta\\
        &\overset{(iii)}{\le} \langle \nabla f(w), w(t)-w\rangle + \|w(t)-w\|\int_0^t \|\nabla f(w(\theta)) - \nabla f(w)\|\diff \theta\notag\\
        &\overset{(iv)}{\le}\langle \nabla f(w), w(t)-w\rangle + \|w(t)-w\|\int_0^t\chi(\theta)\|w(\theta)-w\| \diff \theta\notag\\
        &\overset{(v)}{\le}\langle \nabla f(w), w(t)-w\rangle + \|w(t)-w\|\chi(t)\int_0^t\theta\|y-w\| \diff \theta\notag\\
        &\overset{(vi)}{\le} \langle \nabla f(w), w(t)-w\rangle + \frac{1}{2}\|y-w\|^2\chi(t),\label{eq:o3nwjdn}
    \end{align}
   where $(iii)$ follows from Cauchy-Shawrz inequality, $(iv)$ --- from \eqref{eq:noqjn1oub}, $(v)$ --- from monotonicity of $\chi(t)$ by definition, $(vi)$ --- from the fact that $t\in[0,1].$ Using Cauchy-Schwarz inequality again, we obtain
    \begin{align}\label{eq:o3nwjdnffff}
        f(w(t)) - f(w) \le \|\nabla f(w)\|\cdot \|w(t)-w\| + \frac{1}{2}\|y-w\|^2\chi(t) &\le \|\nabla f(w)\|\cdot \|y-w\|\notag\\
        &\hspace{1cm}+\; \frac{1}{2}\|y-w\|^2\chi(t),
    \end{align}
    Therefore, we obtain from \eqref{eq:pqkwnqlk} and \eqref{eq:o3nwjdnffff} that
    \begin{align*}
        \chi^\prime(t) &\le H_0 + H_1(f(w) - f^*) + H_1\|\nabla f(w)\|\cdot \|y-w\| + \frac{1}{2}H_1\|y-w\|^2\chi(t).
    \end{align*}
    Now we use \Cref{lem:lemmaA3} with $\alpha(t)=\chi(t), \beta(t) \coloneqq H_0 + H_1(f(w) - f^*) + H_1\|\nabla f(w)\|\cdot \|y-w\| + \frac{t}{2}H_1\|y-w\|^2, a=0, b=1$. We obtain
    \[
    \int_0^{\chi(1)}\frac{1}{\beta(u)}\diff u \le 1.
    \]
    Note that since $\beta(t)$ is monotonically increasing in $t$, then for any $c_1 > 0$ such that $\beta(c_1) \le c_1$ we have 
    \[
    \int_0^{\chi(1)} \frac{1}{\beta(u)}\diff u \le 1 \le \frac{c_1}{\beta(c_1)} \le \int_0^{c_1} \frac{1}{\beta(u)}\diff u.
    \]
    This implies that $\chi(1) \le c_1$ by monotonicity of the integral. Note that $\beta(c_1) \le c_1$ can be rewritten as 
    \begin{align*}
        \beta(c_1) &\le c_1\\
        H_0 + H_1(f(w) - f^*) + H_1\|\nabla f(w)\|\cdot \|y-w\| + \frac{c_1}{2}H_1\|y-w\|^2 &\le H_0 + H_1(c_2 + f(w) - f^*)\\
        \|\nabla f(w)\|\cdot \|y-w\| + \frac{c_1}{2}\|y-w\|^2 &\le c_2.
    \end{align*}
    Therefore, $\chi(1) \le c_1$ if $\beta(c_1) \le c_1$ which is equivalent to
    \[
    \|\nabla f(w)\|\cdot \|y-w\| + \frac{c_1}{2}\|y-w\|^2 \le c_2.
    \]
    
\end{proof}

Our next lemma provides a gradient bound which can be seen as a generalization of the classic $\|\nabla f(w)\|^2 \le 2L(f(w)-f^*)$ inequality, which holds in the $L$-smooth regime. The proof of \Cref{lem:gradient_bound} requires a careful choice of constants $c_1$ and $c_2$ from \Cref{lem:c1c2_bound}.

\begin{lemma}\label{lem:gradient_bound} Let $f$ be $(H_0,H_1)$-smooth. Then we have 
\[
\|\nabla f(w)\|^2 \le \frac{9}{4}(H_0 + 3H_1(f(w) - f^*))(f(w) - f^*).
\]
\end{lemma}
\begin{proof}
    Let $c_2 := 2(f(w) - f^*),$ then $c_1 = H_0 + H_1(c_2 + f(w) - f^*) = H_0 + 3H_1(f(w) - f^*).$ From \Cref{lem:c1c2_bound} we obtain that $\chi(1) \le c_1$ if 
    \begin{align*}
       & \|\nabla f(w)\|\cdot\|y-w\| + \frac{c_1}{2}\|y-w\|^2 \le 2(f(w) - f^*) \Longleftrightarrow\; \\ & \|\nabla f(w)\|\cdot \|y-w\| + \frac{1}{2}(H_0 + 3H_1(f(w) - f^*))\|y-w\|^2 \le 2(f(w) - f^*).\\ 
    \end{align*}
    This is a quadratic polynomial of $\|y-w\|$ which has one negative and one positive solution. Therefore, to satisfy the constraint above, we should have 
    \begin{align*}
        \|y-w\| \le \frac{-\|\nabla f(w)\| + \sqrt{\|\nabla f(w)\|^2 + 4c_1(f(w) - f^*)}}{c_1} \eqcolon r.
    \end{align*}
    In other words, we have $\chi(1) \le c_1$ if points $y$ and $w$ satisfy the constraint above: $\|y-w\| \le r.$ 

    We choose $y:=w-\eta \frac{\nabla f(w)}{\|\nabla f(w)\|}$, with $\eta \leq r$. Then, we have $\|y-w\| \leq r$, thus it holds $\chi(1) \leq c_1$.
    
    From \eqref{eq:o3nwjdn}, the bound $\chi(1) \le c_1$ implies 
    \[
    f(y) - f(w) \le \langle\nabla f(w), y-w\rangle + \frac{c_1}{2}\|y-w\|^2.
    \]
    Since $f(y) \ge f^*$ we can continue the inequality above as follows
     \[
    f^* - f(w) \le \langle\nabla f(w), y-w\rangle + \frac{c_1}{2}\|y-w\|^2.
    \]
    Substituting $y=w-\eta \frac{\nabla f(w)}{\|\nabla f(w)\|}$, we get
    \begin{align}\label{eq:kfnqljlonwd}
    -(f(w) - f^*) \le -\eta\|\nabla f(w)\| + \frac{c_1}{2}\eta^2 \Longleftrightarrow \frac{c_1}{2}\eta^2 - \eta\|\nabla f(w)\| + (f(w) -f^*) \ge 0.
    \end{align}
 \eqref{eq:kfnqljlonwd} is a quadratic polynomial of $\eta$, which attains the minimum at $\eta = \frac{\|\nabla f(w)\|}{c_1}.$ We consider two cases: 
    \begin{enumerate}
        \item $\frac{\|\nabla f(w)\|}{c_1} \le r:$ Plugging this bound into \eqref{eq:kfnqljlonwd} implies
        \[
        \frac{c_1}{2}\frac{\|\nabla f(w)\|^2}{c_1^2} - \|\nabla f(w)\| \frac{\|\nabla f(w)\|}{c_1} + (f(w) - f^*) \ge 0 \Leftrightarrow \|\nabla f(w)\|^2 \le 2c_1(f(w) - f^*). 
        \]

    \item $\frac{\|\nabla f(w)\|}{c_1} > r:$ In this case, the minimum of the polynomial in \eqref{eq:kfnqljlonwd} for $\eta = [0,r]$ is attained at $\eta=r$. Thus, from \eqref{eq:kfnqljlonwd} we obtain
    \[
    \frac{c_1}{2}r^2 - r\|\nabla f(w)\| + (f(w) - f^*) \ge 0 \Leftrightarrow \|\nabla f(w)\| \le \frac{c_1}{2}r + \frac{f(w) - f^*}{r}.
    \]
    We plug in the definition of $r$ in the bound above and obtain
    \begin{align*}
        &\|\nabla f(w)\| \le \frac{-\|\nabla f(w)\| + \sqrt{\|\nabla f(w)\|^2 + 4c_1(f(w) - f^*)}}{2}\\
        &\hspace{5cm}\qquad +\; \frac{c_1(f(w)-f^*)}{-\|\nabla f(w)\| + \sqrt{\|\nabla f(w)\|^2 + 4c_1(f(w) - f^*)}},\\
        &2\|\nabla f(w)\|(-\|\nabla f(w)\| + \sqrt{\|\nabla f(w)\|^2 + 4c_1(f(w) - f^*)})\\
        &\hspace{3cm}\le (-\|\nabla f(w)\| + \sqrt{\|\nabla f(w)\|^2 + 4c_1(f(w) - f^*)})^22c_1(f(w)-f^*),\\
        &-2\|\nabla f(w)\|^2 + 2\|\nabla f(w)\|\sqrt{\|\nabla f(w)\|^2 + 4c_1(f(w)-f^*)} \le \|\nabla f(w)\|^2 + \|\nabla f(w)\|^2 \\
        &\qquad +\;
        4c_1(f(w)-f^*) 
        - 2\|\nabla f(w)\|\sqrt{\|\nabla f(w)\|^2 + 4c_1(f(w)- f^*)} + 2c_1(f(w) - f^*)\\
        &4\|\nabla f(w)\|\sqrt{\|\nabla f(w)\|^2 + 4c_1(f(w)-f^*)} \le 4\|\nabla f(w)\|^2 + 6c_1(f(w) - f^*)\\
        &4\|\nabla f(w)\|^2(\|\nabla f(w)\|^2 + 4c_1(f(w)-f^*)) \le 4\|\nabla f(w)\|^4 + 9c_1^2(f(w)-f^*)^2\\
        &\hspace{8cm}+\; 12c_1\|\nabla f(w)\|^2(f(w)-f^*),\\
        &4c_1\|\nabla f(w)\|^2(f(w)-f^*) \le 9c_1^2(f(w) - f^*)^2,\\
        &\|\nabla f(w)\|^2 \le \frac{9c_1}{4}(f(w) - f^*).
    \end{align*}
    \end{enumerate}
    
    We obtain that, in both cases, we have the following bound
    \begin{align*}
        \|\nabla f(w)\|^2 
        &\le \max\left\{\frac{9}{4}, 2\right\}(H_0 + 3H_1(f(w) - f^*))(f(w) - f^*)\\
        &\le \frac{9}{4}(H_0 + 3H_1(f(w) - f^*))(f(w) - f^*).
    \end{align*}

\end{proof}

Finally, we provide a quadratic upper bound on the change of the function value. This inequality will be useful later to demonstrate that \algname{GD} returns iterates with decreasing function value.

\begin{lemma}\label{lem:quadratic_bound} Let $f$ be $(H_0,H_1)$-smooth. Then for any $y,w\in\R^d$ such that $\|y-w\| \le \frac{1}{6\sqrt{H_1}}$ we have 
\[
f(y) \le f(w) + \langle\nabla f(w), y-w \rangle + \frac{2H_0 + 2H_1(f(w) - f^*)}{2}\|y-w\|^2.
\]
    
\end{lemma}
\begin{proof}
    From \Cref{lem:c1c2_bound} we know that $\chi(1) \le c_1$ if 
    \begin{align}\label{eq:blmwnpkqnf}
        \|\nabla f(w)\|\cdot \|y-w\| + \frac{c_1}{2}\|y-w\|^2 \le c_2.
    \end{align}
    If we choose $y$ and $w$ such that 
    \[
    \|y-w\| \le \min\left\{\frac{c_2}{2\|\nabla f(w)\|}, \sqrt{\frac{c_2}{c_1}}\right\} \eqcolon r,
    \]
    then \eqref{eq:blmwnpkqnf} is satisfied. Indeed, this is due to
    \[
    \|\nabla f(w)\|\cdot \|y-w\| \le \|\nabla f(w)\| \frac{c_2}{2\|\nabla f(w)\|} = \frac{c_2}{2},
    \]
    and 
    \[
    \frac{c_1}{2}\|y-w\|^2 \le \frac{c_1}{2}\left(\sqrt{\frac{c_2}{c_1}}\right)^2 = \frac{c_2}{2}.
    \]
    Therefore, 
    \[
        \|\nabla f(w)\|\cdot\|y-w\| + \frac{c_1}{2}\|y-w\|^2 \le c_2.
    \]
    Let $a_1 \eqdef \frac{9}{4}(H_0 + 3H_1(f(w) - f^*))(f(w) - f^*)$ and $a_2 = c_2(H_0 + H_1(c_2 + f(w) - f^*)) = c_1c_2.$ Then from \Cref{lem:gradient_bound} we have $\|\nabla f(w)\|^2 \le a_1.$ Therefore, we obtain that 
    \[
    r \ge \frac{2c_2}{\sqrt{a_1} + 2\sqrt{c_1c_2}} = \frac{2c_2}{\sqrt{a_1} + 2\sqrt{a_2}}.
    \]

    Let 
    \[
    c_2 = \max\left\{f(w) - f^*, \frac{H_0}{3H_1}\right\}.
    \]
    \begin{enumerate}
    \item If $H_0 \le 3H_1(f(w) - f^*),$ then $\frac{H_0}{3H_1} \le f(w) - f^*$. Therefore, $c_2 = f(w) - f^*.$ This implies that $a_1 \le \frac{27}{2}H_1(f(w) - f^*)^2,$ and 
    \begin{align*}
    a_2 = c_2(H_0 + H_1(c_2 + f(w) - f^*)) &= (f(w) - f^*)(H_0 + 2H_1(f(w) - f^*)) \\
    &\le 5H_1(f(w) - f^*)^2.
    \end{align*}
    Moreover, in this case, 
    \[
    c_1 = H_0+H_1(c_2 + f(w) - f^*) = H_0 + 2H_1(f(w) - f^*) \le 5H_1(f(w)-f^*).
    \]
    Therefore, in this case, we obtain 
    \begin{align*}
    r \ge \frac{2(f(w) - f^*)}{\frac{3}{2}\sqrt{6H_1}(f(w) - f^*) + 2\cdot \sqrt{5H_1}(f(w) - f^*} = \frac{2}{\frac{3}{2}\sqrt{6H_1} + 2\sqrt{5H_1}} &\ge \frac{0.245}{\sqrt{H_1}}\\
    &\ge \frac{1}{5\sqrt{H_1}}.
    \end{align*}
    
    \item If $H_0 > 3H_1(f(w) - f^*)$, then $\frac{H_0}{3H_1}> f(w)-f^*$. Therefore,  $c_2 = \frac{H_0}{3H_1}.$ This implies that 
    \[
    a_1 = \frac{9}{4}(H_0 + 3H_1(f(w) - f^*))(f(w) - f^*) \le \frac{9}{2}H_0(f(w) - f^*) \le \frac{3H_0^2}{2H_1},\]
    \begin{align*}
    a_2 &= c_2(H_0 + H_1(c_2 + f(w) - f^*)) = \frac{H_0}{3H_1}(H_0 + H_1(\nicefrac{H_0}{3H_1} + f(w) - f^*))\\
    &\le \frac{H_0}{3H_1}(H_0 + H_1\cdot \nicefrac{2H_0}{3H_1}) = \frac{5H_0^2}{9H_1},
    \end{align*}
    and 
    \[
    c_1 = H_0 + H_1(c_2 + f(w)-f^*) \le H_0 + H_1\left(\frac{H_0}{3H_1} + \frac{H_0}{3H_1}\right) = \frac{5}{3}H_0.
    \]
    Therefore, in this case, we obtain
    \[
    r \ge \frac{2\frac{H_0}{3H_1}}{\frac{H_0}{2}\sqrt{\frac{6}{H_1}} + \frac{2H_0}{3}\sqrt{\frac{5}{H_1}}} = \frac{2/3}{\sqrt{H_1}(\frac{\sqrt{6}}{2} + \frac{2\sqrt{5}}{3})} \ge \frac{0.193}{\sqrt{H_1}} \ge \frac{1}{6\sqrt{H_1}}.
    \]

    \end{enumerate}
    Combining both cases, we obtain that if 
    \[
    \|y-w\| \le \min\left\{\frac{1}{5\sqrt{H_1}}, \frac{1}{6\sqrt{H_1}}\right\} = \frac{1}{6\sqrt{H_1}} \le r,
    \]
    we obtain $\chi(1) \le c_1 \le \max\{H_0 + 2H_1(f(w) - f^*), \nicefrac{5H_0}{3}\} \le \frac{5H_0}{3} + 2H_1(f(w) - f^*) \le 2H_0 + 2H_1(f(w) - f^*).$ From \eqref{eq:o3nwjdn}, this implies 
    \[
    f(y) - f(w) \le \langle\nabla f(w), y-w \rangle + \frac{2H_0 + 2H_1(f(w) - f^*)}{2}\|y-w\|^2.
    \]
\end{proof}

\section{Missing Proofs for \Cref{sec:convergence_analysis}}
\label{sec:conv_proofs_non_convex}

\subsection{Convergence for General Non-Convex Functions}

\begin{theorem}\label{th:general_nonconvex_adaptive} Let $f$ be $(H_0,H_1)$-smooth. Then the iterates of \algname{GD} $w_{k+1} = w_k - \eta_k \nabla f(w_k)$ where $\eta_k = \frac{1}{10H_0 + 20H_1(f(w_k) - f^*)}$ satisfy
\[
 \min_{k<K}\|\nabla f(w_k)\|^2 \le \frac{20(H_0+2H_1(f(w_0)-f^*))(f(w_0)-f^*)}{K}\frac{1}{1+\frac{10H_1(f(w_0)-f^*)(K-1)(K-2)}{K^2(10H_0+20H_1(f(w_0)-f^*))}}. 
\]
If $K \ge 6,$ then the rate can be simplified
\[
 \min_{k<K}\|\nabla f(w_k)\|^2 \le \frac{20(H_0+2H_1(f(w_0)-f^*))(f(w_0)-f^*)}{K}\frac{1}{1+\frac{H_1(f(w_0)-f^*)}{(2H_0+4H_1(f(w_0)-f^*))}}. 
\]
\end{theorem}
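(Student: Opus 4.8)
The plan is to combine the one-step descent of Lemma~\ref{lem:local_smoothness_gradient1} with the monotone decrease of the loss, which forces the adaptive step-size $\eta_k$ to grow, and then to bootstrap that growth into the improved denominator factor. Throughout I write $\Delta_k \eqdef f(w_k)-f^*$ and $M_k \eqdef H_0+2H_1\Delta_k$, so that $\eta_k = 1/(10M_k)$. \emph{First} I would verify the hypothesis $\eta_k\|\nabla f(w_k)\|\le 1/\sqrt{H_1}$ required to invoke Lemma~\ref{lem:local_smoothness_gradient1}. Feeding the gradient bound of Lemma~\ref{lem:gradient_bound1} together with the elementary estimates $H_0+3H_1\Delta_k\le \tfrac32 M_k$ and $\Delta_k\le M_k/(2H_1)$ into $\eta_k=1/(10M_k)$ gives $\eta_k\|\nabla f(w_k)\|\le \tfrac{3\sqrt3}{40}\,H_1^{-1/2} < H_1^{-1/2}$, so the lemma applies at every iterate.

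\emph{Second}, applying Lemma~\ref{lem:local_smoothness_gradient1} with $L_k\eqdef H_0+H_1\Delta_k$ yields
\[
\Delta_{k+1} \le \Delta_k - \eta_k\|\nabla f(w_k)\|^2\bigl(1-L_k\eta_k\bigr).
\]
Since $L_k\eta_k = L_k/(10M_k)\le \tfrac1{10}$, the factor satisfies $1-L_k\eta_k\ge \tfrac12$, hence $\Delta_{k+1}\le \Delta_k-\tfrac12\eta_k\|\nabla f(w_k)\|^2$. In particular $\{\Delta_k\}$ is nonincreasing, so $M_k\le M_0$ and $\eta_k\ge\eta_0=1/(10M_0)$. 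Telescoping and writing $G\eqdef\min_{k<K}\|\nabla f(w_k)\|^2$ gives $\tfrac12 G\sum_{k<K}\eta_k \le \tfrac12\sum_{k<K}\eta_k\|\nabla f(w_k)\|^2\le \Delta_0$, and bounding each $\eta_k\ge\eta_0$ produces the crude estimate $G\le 20M_0\Delta_0/K$.

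\emph{Third} -- and this is the main obstacle -- I would upgrade $\eta_k\ge\eta_0$ to a bound that grows with $k$. The descent inequality gives (conservatively) $\Delta_0-\Delta_k\ge \tfrac12\eta_0 G\,(k-1)$, whence $1/\eta_k = 10M_k = 1/\eta_0 - 20H_1(\Delta_0-\Delta_k)\le 1/\eta_0 - 10H_1\eta_0 G(k-1)$; inverting with $1/(1-x)\ge 1+x$ yields $\eta_k\ge \eta_0\bigl(1+10H_1\eta_0^2 G(k-1)\bigr)$. Substituting this and $\|\nabla f(w_k)\|^2\ge G$ back into $\tfrac12\sum_{k<K}\eta_k\|\nabla f(w_k)\|^2\le\Delta_0$ and using $\sum_{k<K}(k-1)_+ = \tfrac12(K-1)(K-2)$ gives
\[
G\bigl(\eta_0 K + 5H_1\eta_0^3 G\,(K-1)(K-2)\bigr) \le 2\Delta_0.
\]
The remaining step is to linearize the quadratic term: replacing the inner $G$ by the crude bound $G\le 2\Delta_0/(\eta_0 K)$ turns it into $10H_1\eta_0^2\Delta_0\,\tfrac{(K-1)(K-2)}{K}\,G$, and solving the resulting linear inequality for $G$, together with $\eta_0=1/(10M_0)$ and $M_0=H_0+2H_1\Delta_0$, reproduces exactly the stated bound with denominator $1+\tfrac{10H_1\Delta_0(K-1)(K-2)}{K^2(10H_0+20H_1\Delta_0)}$. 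The delicate points are the index bookkeeping that yields $(K-1)(K-2)$ rather than $K(K-1)$ and the self-consistent reuse of the crude bound inside the improvement term.

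\emph{Finally}, for the simplified form I would observe that $g(K)\eqdef (K-1)(K-2)/K^2 = 1-3/K+2/K^2$ is increasing for $K\ge 2$ and satisfies $g(6)=5/9\ge\tfrac12$, so $g(K)\ge\tfrac12$ for all $K\ge 6$. Consequently the denominator is at least $1+\tfrac{H_1\Delta_0}{2H_0+4H_1\Delta_0}$, which gives the simplified bound with $\tfrac{H_1(f(w_0)-f^*)}{2H_0+4H_1(f(w_0)-f^*)}$ in the denominator.
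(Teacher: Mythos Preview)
Your first, second, and fourth steps are fine and match the paper closely. The gap is in the third step, specifically the ``linearize the quadratic'' move. You arrive at the correct quadratic inequality
\[
G\bigl(\eta_0 K + 5H_1\eta_0^3 G\,(K{-}1)(K{-}2)\bigr) \le 2\Delta_0,
\]
but then replace the \emph{inner} $G$ by its crude upper bound $2\Delta_0/(\eta_0 K)$. That substitution enlarges the left-hand side (the quadratic term is nonnegative), so the resulting linear inequality does \emph{not} follow. Concretely, setting $x=G\eta_0 K/(2\Delta_0)$ and $\alpha=\tfrac{H_1\Delta_0(K-1)(K-2)}{K^2 M_0}$, your quadratic becomes $\alpha x^2+x\le 1$, whose positive root is $\tfrac{-1+\sqrt{1+4\alpha}}{2\alpha}$; this is strictly larger than the target $1/(1+\alpha)$ for every $\alpha>0$, so the stated bound cannot be deduced from it.

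The paper avoids this by never applying $1/(1-x)\ge 1+x$. Instead it keeps $\eta_k\ge g(\Lambda_k)$ with $g(u)=\bigl(1/\eta_0-10H_1 u\bigr)^{-1}$ and applies Jensen to the convex $g$:
\[
\textstyle\sum_{k<K}\eta_k \;\ge\; Kg(\bar\Lambda), \qquad \bar\Lambda=\tfrac1K\sum_{k<K}\Lambda_k\;\ge\;\eta_0 G\,\tfrac{(K-1)(K-2)}{2K}.
\]
Because $1/g(\bar\Lambda)$ is \emph{affine} in $\bar\Lambda$, the bound $G\le 2\Delta_0/\sum_k\eta_k\le 2\Delta_0(1/\eta_0-10H_1\bar\Lambda)/K$ is already linear in $G$, and rearranging gives exactly the claimed denominator $1+\alpha$. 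Equivalently, if you want to keep your route through $\Delta_0-\Delta_k$, do \emph{not} linearize: retain $\eta_k\ge \eta_0/(1-10H_1\eta_0^2 G(k{-}1)_+)$ and apply Jensen to $u\mapsto 1/(1-u)$ before inverting; that again yields the linear inequality in $G$ and the stated bound.
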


\begin{proof}
    Note that $\|w_{k+1} - w_k\| = \eta_k\|\nabla f(w_k)\|.$ Now we use \Cref{lem:gradient_bound} to obtain
    \[
    \eta_k\|\nabla f(w_k)\| \le \eta_k \frac{3}{2}\sqrt{(H_0 + 3H_1(f(w_k)-f^*))(f(w_k) -f^*)}.
    \]

    \begin{enumerate}
        \item If $H_0 \le 3H_1(f(w_k) - f^*)$, then 
        \begin{align}\label{eq:iknqwdoqwd}
        \frac{3}{2}\eta_k\sqrt{6H_1(f(w_k) - f^*)^2} \le \frac{3}{2}\eta_k \sqrt{6H_1}(f(w_k) - f^*).
        \end{align}
        We need to upper bound the above by $\frac{1}{6\sqrt{H_1}}$ to be able to use \Cref{lem:quadratic_bound}. We satisfy \eqref{eq:iknqwdoqwd} by the choice of the step-size $\eta_k$
        \[
        \eta_k\|\nabla f(w_k)\| \le \frac{3}{2}\eta_k \sqrt{6H_1}(f(w_k) -f^*) \le \frac{1}{6\sqrt{H_1}} \Leftrightarrow \eta_k \le \frac{1}{\frac{15}{2}\sqrt{6}H_1(f(w_k) - f^*)},
        \]
        where the last inequality is satisfied since 
        \[
        \eta_k = \frac{1}{10H_0 + 20H_1(f(w_k) - f^*)} \le \frac{1}{20H_1(f(w_k)-f^*)} \le \frac{1}{\frac{15}{2}\sqrt{6}H_1(f(w_k)-f^*)}.
        \]

        \item If $H_0 > 3H_1(f(w_k) -f^*)$, then
        \begin{align}\label{eq:jfnojqnqef}
        \eta_k\|\nabla f(w_k)\| \le \frac{3}{2}\eta_k\sqrt{2H_0(f(w_k)-f^*)} \le \frac{3}{2}\eta_k \sqrt{2H_0\cdot \frac{H_0}{3H_1}} = \eta_k \frac{\sqrt{3}H_0}{\sqrt{2H_1}}.
        \end{align}
        We need to upper bound the above by $\frac{1}{6\sqrt{H_1}}$ to be able to use \Cref{lem:quadratic_bound}. We satisfy \eqref{eq:jfnojqnqef} by the choice of the step-size $\eta_k$
        \[
        \eta_k\|\nabla f(w_k)\| \le \eta_k\frac{\sqrt{3}H_0}{\sqrt{2H_1}} \le \frac{1}{6\sqrt{H_1}} \Leftrightarrow \eta_k \le \frac{\sqrt{6}}{18H_0},
        \]
        where the last inequality is satisfied since 
        \[
        \eta_k = \frac{1}{10H_0 + 20H_1(f(w_k)-f^*)} \le \frac{1}{10H_0} \le \frac{\sqrt{6}}{18H_0}.
        \]
    \end{enumerate}
    Therefore, the choice of the step-size allows to use \Cref{lem:quadratic_bound} since the restriction $\|w_{k+1}-w_k\|\le \frac{1}{6\sqrt{H_1}}$ is satisfied. Therefore, we have
    \begin{align}
        f(w_{k+1}) &\overset{(i)}{\le} f(w_k) + \langle\nabla f(w_k), w_{k+1}-w_k \rangle + \frac{2H_0+2H_1(f(w_k)-f^*)}{2}\|w_{k+1}-w_k\|^2\notag\\
        &\overset{(ii)}{\le} f(w_k) -\eta_k\|\nabla f(w_k)\|^2 + (H_0 + H_1(f(w_k)-f^*))\eta_k^2\|\nabla f(w_k)\|^2\notag\\
        &= f(w_k) - \eta_k\|\nabla f(w_k)\|^2(1-\eta_k(H_0 + H_1(f(w_k) -f^*))\notag\\
        &\overset{(iii)}{\le} f(w_k) - \frac{\eta_k}{2}\|\nabla f(w_k)\|^2,\label{eq:pqmdnoqnwdq}
    \end{align}
    where $(i)$ follows from \Cref{lem:quadratic_bound}, $(ii)$ --- from \Cref{lem:gradient_bound}, $(iii)$ --- from the choice of the step-size $\eta_k \le\frac{1}{10H_0 + 20H_1(f(w_k)-f^*)}$. This implies that \algname{GD} achieves a monotone decrease of the function value. By the choice of the step-size $\eta_k = \frac{1}{10H_0 + 20H_1(f(w_k)-f^*)}$, we obtain that $\eta_k$ is increasing with $k.$ Rearranging the last inequality we obtain $\|\nabla f(w_k)\|^2 \le \frac{2}{\eta_k}(f(w_k) - f(w_{k+1}).$ Summing this inequality over iterations $\{0,\hdots, K-1\}$ we obtain

    \begin{align*}
        \frac{1}{K}\sum_{k=0}^{K-1}\|\nabla f(w_k)\|^2 & \le \frac{1}{K}\sum_{k=0}^{K-1}\frac{2}{\eta_k}(f(w_k)-f(w_{k+1}))\\
        &= \frac{1}{K}\sum_{k=0}^{K-1}(20H_0 + 40H_1(f(w_k)-f^*))(f(w_k)-f(w_{k+1}))\\
        &= \frac{20H_0}{K}\sum_{k=0}^{K-1}f(w_k)-f(w_{k+1})\\
        &\hspace{1.5cm}+\; \frac{40H_1}{K}\sum_{k=0}^{K-1}(f(w_k)-f^*)^2 - (f(w_k)-f^*)(f(w_{k+1})-f^*)\\
        &\overset{(iv)}{\le} \frac{20H_0}{K}\sum_{k=0}^{K-1}f(w_k)-f(w_{k+1})\\
        &\hspace{3cm}+\; \frac{40H_1}{K}\sum_{k=0}^{K-1}(f(w_k)-f^*)^2 - (f(w_{k+1})-f^*)^2\\
        &\le \frac{20H_0(f(w_0)-f^*)}{K} + \frac{40H_1(f(w_0)-f^*)^2}{K}.
    \end{align*}

    The current rate is the same as with a constant step-size $\eta = \frac{1}{10H_0 + 20H_1(f(w_0)-f^*)}$, i.e., we do not show improvement. Now our goal is to obtain a tighter rate for \algname{GD} using the fact that the sequence $\{\eta_k\}$ is increasing. By \eqref{eq:pqmdnoqnwdq}, we obtain
    \[
    f(w_k) \le f(w_0) - \sum_{j=0}^{k-1}\frac{\eta_j}{2}\|\nabla f(w_j)\|^2 \Rightarrow f(w_k) - f^* \le (f(w_0) - f^*) - \sum_{j=0}^{k-1}\frac{\eta_j}{2}\|\nabla f(w_j)\|^2.
    \]
    Therefore, 
    \[
    \frac{1}{\sum_{k=0}^{K-1}\eta_k}\sum_{k=0}^{K-1}\eta_k\|\nabla f(w_k)\|^2 \le \frac{2(f(w_0)-f^*)}{\sum_{k=0}^{K-1}\eta_k}.
    \]
    To provide a tighter bound, we should take into account that the step-sizes are increasing since $f(w_k)-f^*$ is decreasing. Remember that  $\eta_k = \frac{1}{10H_0+20H_1(f(w_k)-f^*)}$, then 
    \begin{align*}
        \sum_{k=0}^{K-1}\eta_k
        &= \sum_{k=0}^{K-1}\frac{1}{10H_0+20H_1(f(w_k)-f^*)}\\
        &\ge \sum_{k=0}^{K-1}\frac{1}{10H_0 + 20H_1\left(f(w_0)-f^* -\sum_{j=0}^{k-1}\frac{\eta_j}{2}\|\nabla f(w_j)\|^2\right)}.
    \end{align*}
    Let us denote $\Lambda_k = \sum_{j=0}^{k-1}\eta_j\|\nabla f(w_j)\|^2$, then
    \[
    \sum_{k=0}^{K-1}\eta_k \ge \sum_{k=0}^{K-1}\frac{1}{10H_0 + 20H_1(f(w_0)-f^*) - 10H_1 \Lambda_k}.
    \]
    Since the function $u \to g(u) \eqdef \frac{1}{10H_0 + 20H_1(f(w_0)-f^*) - 10H_1u}$ is convex in the set $\{u\in\R \mid g(u) > 0\},$ then by Jensen's inequality we have 
    \[
    \frac{1}{K}\sum_{k=0}^{K-1}g(\Lambda_k) \ge g\left(\frac{1}{K}\sum_{k=0}^{K-1}\Lambda_k\right).
    \]
    In our case, we obtain 
    \[
    \sum_{k=0}^{K-1}\eta_k \ge \sum_{k=0}^{K-1}g(\Lambda_k) \ge \frac{K}{10H_0 + 20H_1(f(w_0)-f^*) - \frac{10H_1}{K}\sum_{k=0}^{K-1}\Lambda_k}.
    \]
    Now we estimate 
    \begin{align*}
    \sum_{k=0}^{K-1}\Lambda_k= \sum_{k=0}^{K-1}\sum_{j=0}^{k-1}\eta_j\|\nabla f(w_j)\|^2 
    \ge \min_{k < K}\|\nabla f(w_k)\|^2\sum_{k=0}^{K-1}\sum_{j=0}^{k-1}\eta_j  \ge \min_{k < K}\|\nabla f(w_k)\|^2\eta_0 \frac{(K-1)K}{2},
    \end{align*}
    where we use the fact that $\eta_0 \le \eta_k$ for all $k \ge 0.$ This leads to the following bound
    \begin{align*}
        \min_{k<K}\|\nabla f(w_k)\|^2 &\le \frac{1}{\sum_{k=0}^{K-1}\eta_k}\sum_{k=0}^{K-1}\eta_k\|\nabla f(w_k)\|^2\\
        &\le \frac{2(f(w_0)-f^*)}{\frac{K}{10H_0 + 20H_1(f(w_0)-f^*) - \frac{10H_1}{K}\eta_0 \frac{(K-1)(K-2)}{2}\min_{k}\|\nabla f(w_k)\|^2 }}\\
        &\le \frac{2(10H_0 + 20H_1(f(w_0)-f^*))(f(w_0)-f^*)}{K}\\
        &\hspace{2cm} -\; \frac{10H_1(f(w_0)-f^*)(K-1)(K-2)\eta_0\min_k\|\nabla f(w_k)\|^2}{K^2}.
    \end{align*}
    Rearranging the terms, we obtain
    \begin{align*}
         \min_{k<K}\|\nabla f(w_k)\|^2 \le \frac{20(H_0+2H_1(f(w_0)-f^*))(f(w_0)-f^*)}{K}\frac{1}{1+\frac{10H_1(f(w_0)-f^*)(K-1)(K-2)}{K^2(10H_0+20H_1(f(w_0)-f^*))}}. 
    \end{align*}
    If $K\ge 6$, then $\frac{10(K-1)(K-2)}{K^2}\ge 5,$ which leads to the simplified rate.
\end{proof}

\subsection{Convergence under Aiming Condition}

\theoremaimingdeterministic*
\begin{proof}
    We start by \eqref{eq:pqmdnoqnwdq}
    \begin{align}\label{eq:ejfonqejfq}
        f(w_{k+1}) &\le f(w_k) - \frac{\eta_k}{2}\|\nabla f(w_k)\|^2 = f(w_k) - \frac{\theta}{20H_0+40H_1(f(w_k)-f^*)}\|\nabla f(w_k)\|^2.
    \end{align}
    Next, we show that the distance to the set of global minimizers $\cS$ of the function $f$ does not increase. Indeed, we have 
        \begin{align*}
            {\rm dist}(w_{k+1},\cS)^2 &\overset{(i)}{=} \|w_{k+1}-\pi_{\cS}(w_k)\|^2 \\ 
            &= \|w_k-\pi_{\cS}(w_k)\|^2 -2\eta_k\langle w_k-\pi_{\cS}(w_k), \nabla f(w_k)\rangle + \eta_k^2\|\nabla f(w_k)\|^2\\
            &\overset{(ii)}{\le} {\rm dist}(w_k,\cS)^2 -2\eta_k\theta(f(w_k)-f^*) + \eta_k^2\|\nabla f(w_k)\|^2\\
            &\overset{(iii)}{\le} {\rm dist}(w_k,\cS)^2 -2\eta_k\theta(f(w_k)-f^*)\\
            &\hspace{4cm}+\; \frac{9\eta_k^2}{4}(H_0 + 3H_1(f(w_k)-f^*))(f(w_k)-f^*)\\
            &= {\rm dist}(w_k,\cS)^2 - 2\eta_k(f(w_k)-f^*)\left(\theta - \frac{9}{8}\eta_k(H_0 + 3H_1(f(w_k)-f^*))\right),
        \end{align*}
        where $(i)$ follows from the definition of the projection, $(ii)$ follows from the definition of the Aiming condition, $(iii)$ --- from \Cref{lem:gradient_bound}. Now we use the choice of the step-size $\eta_k = \frac{\theta}{10 H_0+20 H_1(f(w_k)-f^*)}$ to obtain
        \begin{align}\label{eq:wnflekoadqojwodnqwqndq}
            {\rm dist}(w_{k+1},\cS)^2 &\le {\rm dist}(w_k,\cS)^2 - \eta_k\theta(f(w_k)-f^*).
        \end{align}
        Therefore, we have that ${\rm dist}(w_{k+1},\cS)^2 \le {\rm dist}(w_k,\cS)^2$ for any $k\ge 0.$ Now we consider two cases:
    \begin{itemize}
        \item $f(w_k) - f^* \ge \frac{H_0}{2H_1}$ (large function value).  In this case, we can lower bound the step-size as 
        \[
        \eta_k = \frac{\theta}{10H_0 + 20H_1(f(w_k)-f^*)} \ge \frac{\theta}{40H_1(f(w_k)-f^*)}.
        \]
        Therefore, from \eqref{eq:wnflekoadqojwodnqwqndq}, we obtain
        \begin{align*}
             {\rm dist}(w_{k+1},\cS)^2 &\le {\rm dist}(w_k,\cS)^2 - \eta_k\theta(f(w_k)-f^*) \\
             &\le {\rm dist}(w_k,\cS)^2 - \frac{\theta}{40H_1(f(w_k)-f^*)}\theta(f(w_k)-f^*)\\
             &= {\rm dist}(w_k,\cS)^2 - \frac{\theta^2}{40H_1}.
        \end{align*}
        Since ${\rm dist}(w_k,\cS)^2  \ge 0$, we can stay in this regime at most $T$ iterations, such that
        \[
        0 \leq {\rm dist}(w_T,\cS)^2 \le {\rm dist}(w_0,\cS)^2 - \frac{\theta^2}{40H_1} T  \Rightarrow T \eqdef \frac{40H_1{\rm dist}(w_0,\cS)^2 }{\theta^2}.
        \]

        \item  $f(w_k) - f^* \le \frac{H_0}{2H_1}$ (small function value). In this case, we can lower bound the step-size as 
        \[
            \eta_k = \frac{\theta}{10H_0+20H_1(f(w_k)-f^*)} \ge \frac{\theta}{20H_0}.
        \]
        Therefore, from \eqref{eq:wnflekoadqojwodnqwqndq}, we obtain
        \begin{align*}
             {\rm dist}(w_{k+1},\cS)^2 &\le {\rm dist}(w_k,\cS)^2 - \eta_k\theta(f(w_k)-f^*) \\
             &\le {\rm dist}(w_k,\cS)^2 - \frac{\theta^2}{40H_0}(f(w_k)-f^*).
        \end{align*}
        Rearranging the terms, we obtain  
        \begin{align}\label{eq:enfeojqqwdqwlk}
            f(w_k) - f^* \le \frac{40H_0}{\theta^2}({\rm dist}(w_k,\cS)^2 - {\rm dist}(w_{k+1},\cS)^2).
        \end{align}
        Averaging the inequalities \eqref{eq:enfeojqqwdqwlk} for $k\in\{T,\dots, K\}$, we obtain 
        \begin{align*}
        \frac{1}{K-T+1}\sum_{k=T}^{K}(f(w_k)-f^*) &\le \frac{40H_0({\rm dist}(w_0,\cS)^2-{\rm dist}(w_{K+1},\cS)^2)}{\theta^2(K-T+1)} \\
        &\le \frac{40H_0{\rm dist}(w_0,\cS)^2}{\theta^2(K-T+1)}.
        \end{align*}
        Since $f(w_k)-f^*$ is decreasing by \eqref{eq:ejfonqejfq}, we have
        \[
        f(w_K) - f^* \le \frac{40H_0{\rm dist}(w_0,\cS)^2}{\theta^2(K-T+1)}.
        \]
    \end{itemize}
    To achieve $\varepsilon$ accuracy, we need the number of iterations $K$ to be
    \begin{align*}
        f(w_K) - f^* \le \frac{40H_0{\rm dist}(w_0,\cS)^2}{\theta^2(K-T+1)} \le \varepsilon \Rightarrow K &\ge \frac{40H_0{\rm dist}(w_0,\cS)^2}{\theta^2\varepsilon} + T\\
        &= \frac{40H_0{\rm dist}(w_0,\cS)^2}{\theta^2\varepsilon} + \frac{40H_1{\rm dist}(w_0,\cS)^2}{\theta^2}. 
    \end{align*}
\end{proof}

The next theorem demonstrates that when the function suboptimality is large, we should expect a linear decrease. This gives another intuition behind the improvement from the warm-up schedule. This result demonstrates that linear convergence can be expected even beyond the PL case.
\begin{theorem}
    Assume that $f$ is $(H_0,H_1)$-smooth, and it satisfies the Aiming condition with constant $\theta$ around the set of global minimizers $\cS.$ Assume that $f(w_k) - f^* \ge \frac{H_0}{2H_1}$. Then the iterates of \algname{GD} $w_{k+1} = w_k - \eta_k\nabla f(w_k)$ with a step-size $\eta_k = \frac{\theta}{10H_0+20H_1(f(w_k)-f^*)}$ satisfy
    \[ f(w_{k+1})-f^* \leq 
    \left(1-\frac{\theta^3}{80H_1{\rm dist}(w_0,\cS)^2}\right)(f(w_k)-f^*).
    \]
\end{theorem}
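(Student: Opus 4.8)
The plan is to establish the one-step contraction directly, by assembling three ingredients that are all available from the surrounding analysis: the per-iteration descent guarantee, a lower bound on the gradient norm coming from the Aiming condition, and a lower bound on the adaptive step-size coming from the large-function-value hypothesis.

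First I would start from the descent inequality already derived in \eqref{eq:ejfonqejfq} for the step-size $\eta_k = \frac{\theta}{10H_0+20H_1(f(w_k)-f^*)}$, namely
\[
f(w_{k+1}) - f^* \le (f(w_k)-f^*) - \frac{\eta_k}{2}\|\nabla f(w_k)\|^2.
\]
Next I would lower bound $\|\nabla f(w_k)\|$ using the Aiming condition together with Cauchy--Schwarz:
\[
\|\nabla f(w_k)\|\,{\rm dist}(w_k,\cS) \ge \langle \nabla f(w_k),\, w_k - \pi_{\cS}(w_k)\rangle \ge \theta(f(w_k)-f^*),
\]
so that $\|\nabla f(w_k)\|^2 \ge \theta^2(f(w_k)-f^*)^2/{\rm dist}(w_k,\cS)^2$. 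Since the distance to $\cS$ is non-increasing along the iterates --- this is exactly \eqref{eq:wnflekoadqojwodnqwqndq}, which gives ${\rm dist}(w_{k+1},\cS)^2 \le {\rm dist}(w_k,\cS)^2$ and hence ${\rm dist}(w_k,\cS)\le {\rm dist}(w_0,\cS)$ --- I may replace ${\rm dist}(w_k,\cS)$ by ${\rm dist}(w_0,\cS)$ in the denominator.

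The large-value assumption $f(w_k)-f^* \ge \frac{H_0}{2H_1}$ then controls the step-size from below: it gives $10H_0 \le 20H_1(f(w_k)-f^*)$, hence $10H_0 + 20H_1(f(w_k)-f^*) \le 40H_1(f(w_k)-f^*)$, and therefore $\eta_k \ge \frac{\theta}{40H_1(f(w_k)-f^*)}$. Substituting both bounds into the descent inequality, one factor of $(f(w_k)-f^*)$ cancels against the step-size denominator and one against the squared-gradient numerator, leaving
\[
f(w_{k+1}) - f^* \le (f(w_k)-f^*) - \frac{\theta^3}{80H_1{\rm dist}(w_0,\cS)^2}(f(w_k)-f^*),
\]
which is precisely the claimed contraction.

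I do not expect a serious obstacle: the argument is essentially a recombination of previously proven facts. The two points requiring care are (i) applying the Aiming condition correctly to pass from the gradient norm to the sub-optimality --- this uses the projection $\pi_{\cS}(w_k)$, and the only inequality spent there is Cauchy--Schwarz, which is exactly what converts the inner product into the product of norms; and (ii) justifying that ${\rm dist}(w_k,\cS) \le {\rm dist}(w_0,\cS)$, which is legitimate because the distance decrement \eqref{eq:wnflekoadqojwodnqwqndq} was established for the same step-size and holds irrespective of whether the function value is large or small. The large-value hypothesis is the mechanism that makes the step-size lower bound scale cleanly like $\theta/(H_1(f(w_k)-f^*))$, and is precisely why the $H_0$ term disappears from the linear rate.
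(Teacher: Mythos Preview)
Your proof is correct and follows exactly the same route as the paper: you invoke the descent inequality \eqref{eq:ejfonqejfq}, the Aiming condition via Cauchy--Schwarz together with the distance monotonicity \eqref{eq:wnflekoadqojwodnqwqndq} to lower bound $\|\nabla f(w_k)\|^2$, and the large-value hypothesis to lower bound $\eta_k$, then combine to obtain the contraction. The only cosmetic difference is that the paper writes the step-size bound as $\frac{\eta_k}{2}\ge \frac{\theta}{80H_1(f(w_k)-f^*)}$ directly rather than bounding $\eta_k$ first and dividing by two.
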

\begin{proof}
    First, we use the previously derived decrease in the function value \eqref{eq:ejfonqejfq}
    \[
    f(w_{k+1}) - f^* \le f(w_k) - f^* - \frac{\theta}{20H_0+40H_1(f(w_k)-f^*)}\|\nabla f(w_k)\|^2,
    \]
    and in the distance \eqref{eq:wnflekoadqojwodnqwqndq}
    \[
    {\rm dist}(w_{k+1},\cS)^2 \le {\rm dist}(w_k,\cS)^2 - \eta_k\theta(f(w_k)-f^*).
    \]
    In particular, ${\rm dist}(w_k,\cS)^2 \le {\rm dist}(w_0,\cS)^2$. From the Aiming condition, we have 
    \begin{align}\label{eq:kjofnwejfwefw}
    \theta(f(w_k)-f^*) \le\langle\nabla f(w_k),w_k-\pi_{\cS}(w_k)\rangle &\le \|\nabla f(w_k)\|\cdot {\rm dist}(w_k,\cS)\notag\\
    &\le \|\nabla f(w_k)\|\cdot {\rm dist}(w_0,\cS).
    \end{align}
    Therefore, we obtain
    \begin{align*}
        f(w_{k+1})-f^* &\le f(w_k) - f^* - \frac{\theta}{20H_0 + 40H_1(f(w_k)-f^*)}\|\nabla f(w_k)\|^2\\
        &\overset{(i)}{\le} f(w_k) - f^* - \frac{\theta}{80H_1(f(w_k)-f^*)}\|\nabla f(w_k)\|^2\\
        &\overset{(ii)}{\le} f(w_k) - f^* - \frac{\theta}{80H_1(f(w_k)-f^*)}\frac{\theta^2(f(w_k)-f^*)^2}{{\rm dist}(w_0,\cS)^2}\\
        &= \left(1-\frac{\theta^3}{80H_1{\rm dist}(w_0,\cS)^2}\right)(f(w_k)-f^*).
    \end{align*}
    where $(i)$ follows from the bound $f(w_k)-f^* \ge \frac{H_0}{2H_1}$, $(ii)$ -- from \eqref{eq:kjofnwejfwefw}.
\end{proof}

\subsection{Convergence under Polyak-{\L}ojasiewicz Condition}

\theorempleterministic*

\begin{proof}
    We start with the Eq.~\eqref{eq:pqmdnoqnwdq} and use $\mu$-PL inequality
    \begin{align*}
        f(w_{k+1}) &\le f(w_k) - \frac{\eta_k}{2}\|\nabla f(w_k)\|^2\\
        &\le f(w_k) - \mu\eta_k(f(w_k)-f^*)\\
        &= f(w_k) - \frac{\mu(f(w_k)-f^*)}{10H_0 + 20H_1(f(w_k)-f^*)}.
    \end{align*}
    Now we consider two cases.
    \begin{itemize}
        \item $f(w_k) - f^* \ge \frac{H_0}{2H_1}$ (large function value). In this case, we have
        \begin{align*}
            f(w_{k+1}) &\le f(w_k) - \frac{\mu(f(w_k)-f^*)}{10H_0+20H_1(f(w_k)-f^*))}\\
            &\le f(w_k) - \frac{\mu(f(w_k)-f^*)}{40H_1(f(w_k)-f^*)}\\
            &= f(w_k) - \frac{\mu}{40H_1}.
        \end{align*}
        
         Since \algname{GD} decreases the function value (see \eqref{eq:pqmdnoqnwdq}), we have $f(w_t) - f^* \ge \frac{H_0}{2H_1}$ for all $K\in\{0,\dots,k\}.$ Therefore, 
        \[
        f(w_{k+1}) - f^*\le f(w_0)-f^* - \frac{\mu}{40H_1}(k+1).
        \]
        
        However, we cannot reduce the function value infinitely many times, since it is lower bounded. We can stay in this regime as long as $f(w_t)-f^*\ge \frac{H_0}{2H_1}$, therefore, \algname{GD} stays in this regime for at most $k\le \frac{40H_1}{\mu}\left(f(w_0)-f^* -\frac{H_0}{2H_1}\right) -1 \le \frac{40H_1}{\mu}(f(w_0)-f^*) - \frac{20H_0}{\mu}$ iterations. In other words, the cardinality of the set $\cT\eqdef\{k\in\{0,\dots, K-1\}\colon f(w_k) - f^* \ge \frac{H_0}{2H_1}\}$ is bounded by $T = \frac{40H_1}{\mu}(f(w_0)-f^*) - \frac{20H_0}{\mu}.$

        \item $f(w_k) -f^* \le \frac{H_0}{2H_1}$ (small function value). In this case, we have 
        \begin{align}\label{eq:nfoqjwndqqweqw}
            f(w_{k+1}) &\le f(w_k) - \frac{\mu(f(w_k)-f^*)}{10H_0+20H_1(f(w_k)-f^*))}\notag\\
            &\le f(w_k) - \frac{\mu(f(w_k)-f^*)}{20H_0}.
        \end{align}
        Since the function along the trajectory of \algname{GD} does not increase (see \eqref{eq:pqmdnoqnwdq}), we stay in this regime for the rest of the training. Therefore, summing up \eqref{eq:nfoqjwndqqweqw} for all iterations $k\in\{T,\dots, K-1\}$ we obtain
        \begin{align*}
           f(w_K) - f^* &\le \left(1-\frac{\mu}{20H_0}\right)(f(w_{K-1})-f^*)\\
           &\le \hdots\\
           &\le \left(1-\frac{\mu}{20H_0}\right)^{K-T}(f(w_T)-f^*).
        \end{align*}
        Since $f(w_T)-f^* \le f(w_0)-f^* - \frac{\mu T}{40H_1},$ we get the rate
        \begin{align*}
            & f(w_K)-f^*\\
            \le\; &\left(1-\frac{\mu}{20H_0}\right)^{K-T}\left(f(w_{0})-f^* - \frac{\mu }{40H_1}\left(\frac{40H_1}{\mu}(f(w_0)-f^*) - \frac{20H_0}{\mu}\right)\right)\\
            =\; &\left(1-\frac{\mu}{20H_0}\right)^{K-T}\frac{H_0}{2H_1}.
        \end{align*}

    \end{itemize}
    To achieve $f(w_K) - f^*\le \varepsilon$ we need to satisfy
    \begin{align*}
        f(w_K)-f^*\le \left(1-\frac{\mu}{20H_0}\right)^{K-T}\frac{H_0}{2H_1} \le \varepsilon \Rightarrow K &\ge T + \frac{20H_0}{\mu}\log\frac{H_0}{2H_1\varepsilon}\\
        &= \frac{40H_1}{\mu}(f(w_0)-f^*) + \frac{20H_0}{\mu}\log\frac{H_0}{2H_1\varepsilon}.
    \end{align*}
\end{proof}

\subsection{Convergence in the Stochastic Setting}

\theoremaimingstochastic*
\begin{proof}
    We show that the distance to the set of global minimizers $\cS$ of the function $f$ does not increase. Indeed, we have
    \begin{align*}
        {\rm dist}(w_{k+1},\cS)^2 &= \|w_{k+1}-\pi_{\cS}(w_{k+1})\|^2\\
        &\le \|w_{k+1}-\pi_{\cS}(w_k)\|^2\\
        &= \|w_k - \pi_{\cS}(w_k)\|^2 - 2\eta_k\langle w_k-\pi_{\cS}(w_k),\nabla f_{S_k}(w_k) \rangle + \eta_k^2\|\nabla f_{S_k}(w_k)\|^2\\
        &\overset{(i)}{\le} \|w_k-\pi_{\cS}(w_k)\|^2 - 2\theta\eta_k(f_{S_k}(w_k) - f_{S_k}^*) + \eta_k^2\|\nabla f_{S_k}(w_k)\|^2\\
        &\overset{(ii)}{\le} {\rm dist}(w_k,\cS)^2 - 2\theta\eta_k(f_{S_k}(w_k)-f_{S_k}^*)\\
        &\hspace{4cm} +\; \frac{9}{4}\eta_k^2(H_0+3H_1(f_{S_k}(w_k)-f_{S_k}^*))(f_{S_k}(w_k)-f_{S_k}^*)\\
        &\overset{(iii)}{=} {\rm dist}(w_k,\cS)^2 - 2\eta_k(f_{S_k}(w_k)-f_{S_k}(w^*))\\
        &\hspace{3cm}+\; \frac{9}{4}\eta_k^2(H_0+3H_1(f_{S_k}(w_k)-f_{S_k}(w^*)))(f_{S_k}(w_k)-f_{S_k}(w^*))\\
        &\hspace{-0.5cm}={\rm dist}(w_k,\cS)^2 - 2\eta_k(f_{S_k}(w_k)-f_{S_k}(w^*))\left(\theta-\frac{9}{8}\eta_k(H_0+3H_1(f_{S_k}(w_k)-f_{S_k}(w^*)))\right)
    \end{align*}
    where $(i)$ follows from \Cref{asmp:aiming_condition}, $(ii)$ --- from \Cref{lem:gradient_bound}, $(iii)$ --- from the interpolation condition. Now we use the choice of the step-size \[
    \eta_k = \frac{\theta}{10H_0+20H_1(f_{S_k}(w_k)-f_{S_k}^*)} = \frac{\theta}{10H_0+20H_1(f_{S_k}(w_k)-f_{S_k}(w^*))}
    \]
    to obtain
    \begin{align}\label{eq:konqjwdonqwojdqnwod}
        {\rm dist}(w_{k+1},\cS)^2 \le {\rm dist}(w_k,\cS)^2 - \eta_k\theta(f_{S_k}(w_k)-f_{S_k}(w^*)).
    \end{align}
    Therefore, we have that ${\rm dist}(w_{k+1},\cS)^2 \le {\rm dist}(w_k,\cS)^2$ for any $k\ge 0.$ Now we consider two cases:
    \begin{itemize}
        \item $f_{S_k}(w_k)-f_{S_k}(w^*)\ge \frac{H_0}{2H_1}$ (large function value). In this case, we can lower bound the step-size $\eta_k$ as
        \[
        \eta_k = \frac{\theta}{10H_0+20H_1(f_{S_k}(w_k)-f_{S_k}(w^*))} \ge \frac{\theta}{40H_1(f_{S_k}(w_k)-f_{S_k}(w^*))}.
        \]
        Therefore, from \eqref{eq:konqjwdonqwojdqnwod}, we obtain
        \begin{align}
            {\rm dist}(w_{k+1},\cS)^2 &\le {\rm dist}(w_k,\cS)^2 - \eta_k\theta(f_{S_k}(w_k)-f_{S_k}(w^*))\notag\\
            &\le {\rm dist}(w_k,\cS)^2 - \frac{\theta^2}{40H_1(f_{S_k}(w_k)-f_{S_k}(w^*))}(f_{S_k}(w_k)-f_{S_k}(w^*))\notag\\
            &= {\rm dist}(w_k,\cS)^2 - \frac{\theta^2}{40H_1}.\label{eq:qdqlwjndqlwkdb}
        \end{align}

        \item $f_{S_k}(w_k)-f_{S_k}(w^*) \le \frac{H_0}{2H_1}$ (small function value). In this case, we can lower bound the step-size $\eta_k$ as
        \[
        \eta_k = \frac{\theta}{10H_0+20H_1(f_{S_k}(w_k)-f_{S_k}(w^*))} \ge \frac{\theta}{20H_0}.
        \]
        Therefore, from \eqref{eq:konqjwdonqwojdqnwod}, we obtain
        \begin{align}\label{eq:fkehqbjdqbwkd}
            {\rm dist}(w_{k+1},\cS)^2 &\le 
            {\rm dist}(w_k,\cS)^2 - \eta_k\theta(f_{S_k}(w_k)-f_{S_k}(w^*))\notag\\
            &\le {\rm dist}(w_k,\cS)^2 - \frac{\theta^2}{20H_0}(f_{S_k}(w_k)-f_{S_k}(w^*)).
        \end{align}
    \end{itemize}
    To combine descent inequalities \eqref{eq:qdqlwjndqlwkdb} and \eqref{eq:fkehqbjdqbwkd}, we introduce the even $E(w_k)\eqdef \left\{f_{S_k}(w_k)-f_{S_k}(w^*) \ge \frac{H_0}{2H_1} \mid w_k\right\}$ for given $w_k$ and its indicator function $\1_{E(w_k)}$, i.e., for given $w_k$, $\1_{E(w_k)} = 1$ if $f_{S_k}(w_k)-f_{S_k}(w^*) \ge \frac{H_0}{2H_1},$ and $\1_{E(w_k)} = 0$ if $f_{S_k}(w_k)-f_{S_k}(w^*) < \frac{H_0}{2H_1}.$ Then the descent in the general case can be written as
    \begin{align}\label{eq:ionqowjbdnqwj}
        {\rm dist}(w_{k+1},\cS)^2 \le {\rm dist}(w_k,\cS)^2 - \1_{E(w_k)}\frac{\theta^2}{40H_1} - (1-\1_{E(w_k)})\frac{\theta^2}{20H_0}(f_{S_k}(w_k)-f_{S_k}(w^*)).
    \end{align}
    We denote $\Ek{\cdot}$ as $\E{\cdot \mid w_k}$ -- the expectation conditioned on $w_k.$ Thus, we have from \eqref{eq:ionqowjbdnqwj} that 
    \begin{align}
       \Ek{{\rm dist}(w_{k+1},\cS)^2 }
            &\le {\rm dist}(w_k,\cS)^2 - \frac{\theta^2}{20H_0}\Ek{(1-\1_{E(w_k)})(f_{S_k}(w_k)-f_{S_k}(w^*))}\notag\\
            &\hspace{7cm} -\; \Ek{\1_{E(w_k)}}\frac{\theta^2}{40 H_1}\notag\\
            &= {\rm dist}(w_k,\cS)^2 - \frac{\theta^2}{20H_0}\Ek{(1-\1_{E(w_k)})(f_{S_k}(w_k)-f_{S_k}(w^*))}\notag\\
            &\hspace{8cm} -\; p_k\frac{\theta^2}{40 H_1},
    \end{align}
    where $p_k \eqdef \Ek{\1_{E(w_k)}} = \Prob(E(w_k)) = \Prob(f_{S_k}(w_k) - f_{S_k}(w^*) \ge \frac{H_0}{2H_1}).$ We emphasize that $p_k$ is a random variable. If $p_k > 0$, then there is at least one $i\in[n]$, so that $f_{i}(w_k)-f_{i}(w^*) \ge \frac{H_0}{2H_1}$ for given $w_k.$ Thus, we have $p_k \ge \frac{1}{n}.$ In the opposite case, we have $p_k=0,$ and $1-\1_{E(w_k)} = 1$ for given $w_k.$ Putting all together, we continue as follows 
    \begin{align*}
        \Ek{{\rm dist}(w_{k+1},\cS)^2}
            &\le {\rm dist}(w_k,\cS)^2 - \frac{\theta^2}{20H_0}\1_{\{p_k=0\}}(f(w_k)-f(w^*)) - \1_{\{p_k > 0\}}p_k\frac{\theta^2}{40 H_1}\notag\\
            &\le {\rm dist}(w_k,\cS)^2 - \frac{\theta^2}{20H_0}\1_{\{p_k=0\}}(f(w_k)-f(w^*)) - \1_{\{p_k > 0\}}\frac{\theta^2}{40n H_1}\notag\\
            &\le {\rm dist}(w_k,\cS)^2 - \min\left\{\frac{\theta^2}{20H_0}(f(w_k)-f(w^*)), \frac{\theta^2}{40n H_1}\right\}.
    \end{align*}
    Taking full expectation and rearranging terms, we obtain
    \begin{align*}
         \sum_{k=0}^{K}\E{\min\left\{\frac{\theta^2}{20H_0}(f(w_k)-f(w^*)), \frac{\theta^2}{40n H_1}\right\}} 
         &\le \sum_{k=0}^{K+1} \E{{\rm dist}(w_k,\cS)^2} - \E{{\rm dist}(w_{k+1},\cS)^2}\\
         &\le {\rm dist}(w_0,\cS)^2.
    \end{align*}
    Dividing both sides by $\frac{\theta^2}{20H_0(K+1)}$, we obtain 
    \begin{align*}
         \frac{1}{K+1}\sum_{k=0}^{K}\E{\min\left\{f(w_k)-f(w^*), \frac{H_0}{2n H_1}\right\}} 
         &\le \frac{20H_0{\rm dist}(w_0,\cS)^2}{\theta^2(K+1)}.
    \end{align*}
    The rate above implies 
    \begin{align*}
         \min_{k<K+1}\E{\min\left\{f(w_k)-f(w^*), \frac{H_0}{2n H_1}\right\}} 
         &\le \frac{20H_0{\rm dist}(w_0,\cS)^2}{\theta^2(K+1)}.
    \end{align*}
\end{proof}

\section{Missing Proofs for GD in the Convex Setting}

In this case, we demonstrate the convergence to the minimizer $w^*$ of the convex function $f.$

\begin{proof}
   The proof mainly follows the proof of \Cref{th:aiming_deterministic_adaptive_step-size} by setting $\theta=1$ and $\cS=\{w^*\}.$
\end{proof}

\section{Lower Bounds}

\label{sec:lower_bounds_proofs}

\lowerbounds*

\begin{proof}

Consider constants $H_1, M > 1$ and the function
\begin{align*}
f(w)=
    \begin{cases}
  \frac{e^{-\sqrt{H_1} w}}{e}, & \text{if} \hspace{1mm} w<-\frac{1}{\sqrt{H_1}} \\
  \frac{H_1 w^2}{2}+\frac{1}{2}, & \text{if} \hspace{1mm} w \in \left[-\frac{1}{\sqrt{H_1}},\frac{1}{\sqrt{H_1}} \right] \\
  \frac{e^{\sqrt{H_1} w}}{e}, & \text{if} \hspace{1mm} w>\frac{1}{\sqrt{H_1}}.
\end{cases}
\end{align*}

This function is $(H_0, H_1)$-smooth with $H_0 \geq H_1$ and convex, thus it also belongs to the objective function class.

We consider \algname{GD} for the function $f$ starting from the point
\begin{equation*}
    w_0 = \frac{\log M+1}{\sqrt{H_1}}>1.
\end{equation*}

Notice that $f(w_0) = M$ and $\|\nabla f(w_0)\| = M \sqrt{H_1}$. 

If we choose the step-size $\eta$ of \algname{GD} larger than $2w_0/M\sqrt{H_1}$, it holds
\begin{equation*}
    w_1=w_0-\eta \nabla f(w_0) < w_0 - (2 w_0 /M\sqrt{H_1}) M \sqrt{H_1} = -w_0.
\end{equation*}

Thus, $w_1$ is negative and further from the optimum (which is $0$) compared to $w_0$.

By the structure of the function, we can show that $x_2$ will be even further. Since the function is totally symmetric, the effect of one step of \algname{GD} starting from $w_1$ is the same as if it would start from $-w_1$. Thus, it suffices to show that $\tilde w_1 = -w_1 - \eta \nabla f(-w_1)$ is further from $0$ compared to $-w_1$. Since $|w_1|>|w_0|$, it holds $-w_1>w_0$. We consider the function
\begin{equation*}
    g(y) = |y-\eta \nabla f(y)|-|y|
\end{equation*}
for $y>\frac{1}{\sqrt{H_1}}$. Then, we have
\begin{equation*}
    g(y) = \left |y-\eta \sqrt{H_1} \frac{e^{\sqrt{H_1}y}}{e} \right|-|y|.
\end{equation*}

It is simple to see that in the part where this function is positive and $y>\frac{1}{\sqrt{H_1}}$, it is also increasing. Since $g(w_0)>0$, $w_0>\frac{1}{\sqrt{H_1}}$ and $-w_1>w_0$, we have that $g(-w_1)>0$. This means that $|\tilde w_1|>|w_1|$. Using an induction argument, we can show that the iterates of \algname{GD} under such step-size diverge.

We conclude, that the step-size $\eta$ for our function class must satisfy
\begin{equation}
\label{eq:eta_upper_bound}
    \eta \leq \frac{2w_0}{M \sqrt{H_1}} =  \frac{2 \log f(w_0)+2}{f(w_0) H_1}.
\end{equation}

This step-size bound will be used to derive the lower complexity bounds in all cases.

To establish lower bounds for the general and convex cases, we construct a function that contains a long, flat ``runway'' region where the gradient is small but non-zero. This forces any first-order method to take many small steps to traverse it.

For a parameter $\delta>0$ (to be chosen later) and $H_0, H_1>0$, we define the following function $f_\delta(w)$;

The function is symmetric, $f_\delta(w) = f_\delta(-w)$, and defined for $x \geq 0$ as:
\begin{equation}
\label{func:non- convex_convex_lower_bound}
f_\delta(w) = 
\begin{cases} 
\frac{H_0}{2}w^2 & \text{if } 0 \leq w \leq X_1 \\
m(w - X_1) + \delta & \text{if } X_1 < w \leq X_2 \\
A e^{\sqrt{H_1}(w-X_2)} + B & \text{if } w > X_2. 
\end{cases}
\end{equation}

To make this function twice differentiable, we choose
\begin{align*}
&m = \sqrt{2 H_0 \delta}\\
&X_1 = \sqrt{2\delta / H_0}\\
&X_2 = X_1 + (1 - \delta)/m\\
&A = m / \sqrt{H_1}\\
&B = 1 - A.
\end{align*}

$f$ is $(H_0,H_1)$-smooth and its minimum is $f^* = f(0) = 0$.

\paragraph{Lower bound in the general non-convex case:}

We look for a point $w_K$ such that $\|\nabla f(w_K)\| \leq \epsilon$.
To establish the lower bound, we set the gradient on the runway to be slightly larger than our target $\epsilon$, for instance, $\|\nabla f(w)\| = m = 2\epsilon$.

This choice requires us to set the construction parameter $\delta$ as follows:

$$
 \sqrt{2H_0\delta} = 2\epsilon \implies \delta = \frac{2\epsilon^2}{H_0}. 
$$

An algorithm must traverse the linear runway to enter the quadratic bowl, which is the only region where $\|\nabla f(w)\| \leq \epsilon$ is achievable.

\algname{GD} update on the runway is $w_{k+1} = w_k - \eta \nabla f(w_k) = w_k - \eta m$, which implies that
\begin{equation*}
    w_K = w_0 - \eta K m.
\end{equation*}

Thus, if $w_0 = X_2$ (we start at the beginning of the runway) and $K < \frac{X_2-X_1}{\eta m}$, then $w_K > X_1$ and we get $\|\nabla f(w_K)\|=2\epsilon>\epsilon$. Thus, in order to get $\|\nabla f(w_K)\| \leq \epsilon$, we need to have 
\begin{equation*}
    K \geq \frac{X_2-X_1}{\eta m} = \frac{1-\delta}{\eta m^2} = \frac{1-\frac{2 \epsilon^2}{H_0}}{4 \eta \epsilon^2}.
\end{equation*}

Choosing $H_0=1$ (we can choose any positive constant) and plugging in the upper bound \eqref{eq:eta_upper_bound} for the step-size $\eta$, we get that $K$ must satisfy
\begin{equation*}
   K \geq \frac{f(w_0)H_1}{8(\log f(w_0)+1)} \frac{1-2 \epsilon^2}{\epsilon^2}. 
\end{equation*}

Noticing that $f(w_0)=1$ and $f^*=0$, it holds $f(w_0)-f^*=1$ and we get the desired lower bound:
\begin{equation*}
    K \geq \frac{H_1 (f(w_0)-f^*)}{\log (f(w_0)-f^*)+1} \frac{f(w_0)-f^*-2\epsilon^2}{8 \epsilon^2}.
\end{equation*}

\paragraph{Lower bound in the convex case:}
For this scenario, the target accuracy $\epsilon$ directly maps to our construction parameter. We set $\delta = \epsilon$ \eqref{func:non- convex_convex_lower_bound}.
The function $f_\epsilon(w)$ is convex and is constructed such that the linear runway begins at the point $(X_1, \epsilon)$. An algorithm starting at some point $w_0=X_2$ where $f(w_0) =1$ must traverse the runway from $X_2$ down to $X_1$ to achieve the desired accuracy.

On this runway, the gradient has a constant magnitude $m = \sqrt{2H_0\epsilon}$.
Similarly as before, we have that if 
$K < \frac{X_2-X_1}{\eta m}$, then $w_K > X_1$ and we get $f(w_K)-f^*> \epsilon$. Thus, we need to have
\begin{equation*}
    K \geq \frac{X_2-X_1}{\eta m} = \frac{1-\epsilon}{\eta m^2} = \frac{f(w_0)-f^*-\epsilon}{2 \eta H_0 \epsilon}
\end{equation*}
to achieve $\epsilon$ accuracy for the function value.

Substituting, the upper bound \eqref{eq:eta_upper_bound} for $\eta$ and $H_0=1$, we get the desired result.

\paragraph{Lower bound in the PL case: }
The linear runway construction is not $\mu$-PL. For the third case, we need to construct a different function. We construct a fixed function, independent of $\epsilon$.

Let $C_0>0$ and $0< \mu \leq 1$. We define a fixed connection point $w_c = \sqrt{2C_0/\mu}$. The function is symmetric and defined for $w \geq 0$ as:
\begin{equation}
f(w) = 
\begin{cases} 
\frac{\mu}{2}w^2 & \text{if } 0 \leq w \leq w_c \\
A e^{\sqrt{H_1}(x-w_c)} + B & \text{if } w > w_c 
\end{cases}
\end{equation}
where $A = \sqrt{2C_0\mu/H_1}$ and $B = C_0 - A$ are chosen to ensure the function is $C^1$ at $w_c$. This function is $\mu$-strongly convex (thus also $\mu$-PL) and belongs to the class of $(H_0,H_1)$ functions.

Our goal is to find again a point $w_K$ such that $f(w_K) - f^* \leq \epsilon$.

We analyze the performance of \algname{GD} on the quadratic part of this function, $f(w) = \frac{\mu}{2}w^2$. An algorithm starting at $w_0 = w_c$ will have an initial function value of $f(w_0) = C_0$. The update rule with a fixed step-size $\eta$ is:

$$
 w_{k+1} = w_k - \eta \nabla f(w_k) = w_k - \eta (\mu w_k) = (1 - \eta \mu) w_k. 
$$
After $K$ iterations, we have $w_K = (1-\eta\mu)^K w_0$. We want to find the number of iterations $K$ needed to ensure $f(w_K) \leq \epsilon$.

$$
 f(w_K) = \frac{\mu}{2}w_K^2 = \frac{\mu}{2} (1-\eta\mu)^{2K} w_0^2 = f(w_0) (1-\eta\mu)^{2K} \leq \epsilon .
$$
For this to hold, we need

$$
 f(w_0) (1-\eta\mu)^{2K} \leq \epsilon \implies (1-\eta\mu)^{2K} \leq \frac{\epsilon}{f(w_0)}. 
$$
Taking the logarithm of both sides and using the inequality $\log(1-z) \leq -z$:

$$
 2K \log(1 - \eta\mu) \leq \log\left(\frac{\epsilon}{f(w_0)}\right), \hspace{2mm} \text{if} \hspace{2mm} -2K(\eta\mu) \leq -\log\left(\frac{f(w_0)}{\epsilon}\right). 
$$
Solving for $K$, we get:

$$
 K \geq \frac{1}{2\eta\mu} \log\left(\frac{f(w_0)}{\epsilon}\right). 
$$
Substituting the upper bound \eqref{eq:eta_upper_bound} for the step-size $\eta$ and $f^*=0$, we get the desired lower complexity bound.

\end{proof}

\section{Experimental Details and Additional Ablations}\label{sec:additional_exp}

\subsection{Experimental Setup}

\paragraph{Language Modeling.} Our training of language models is based on the Plain LM GitHub repository \citep{ajroldi2024plainlm} with small changes. The implementation is based on NanoGPT \citep{karpathy2022nanogpt}, and it includes recent improvements such as RMSNorm \citep{zhang2019root}, Rotational Positional Embeddings \citep{su2024roformer}, and SwiGLU activations \citep{shazeer2020glu}. All details are reported in \Cref{tab:summary_experiments}.

\begin{table*}[t]
    \centering
    \caption{Detailed training details of language models and model configurations for the results in \Cref{fig:language_models_comparison}. The implementation is based on \citet{ajroldi2024plainlm}.}
    \label{tab:summary_experiments}
    \resizebox{\textwidth}{!}{
        \begin{tabular}{cccccccc}
            \toprule
            {\bf Model} &
            {\bf Configuration}  &
            \makecellnew{{\bf MLP} \\ {\bf Type}} &
            \makecellnew{{\bf Backbone}} &
            {\bf Normalization} & 
            \makecellnew{{\bf Position} \\ {\bf Embeddings}} &
            {\bf Precision} &
            {\bf Dropout}
            
            \\ \toprule

            70M & 
            \makecellnew{\# Layers: 6 \\ 
            \# heads: 8 \\
            hidden size: 512 \\ 
            seq. length: 1024 \\
            batch size: 256\\ 
            weight decay: 0\\
            cooldown steps: 20 \% \\
            grad clip: 1.0 \\ 
            tokens: 1.2B} &
            \makecellnew{SwiGLU \\ \citep{shazeer2020glu} }&
            \makecellnew{PreLN transformer \\ \citep{xiong2020layer} \\ with skip connections}& 
            \makecellnew{RMSnorm \\ \citep{zhang2019root}} &
            \makecellnew{MLP and Attention \\
            layers with variance: \\
            $0.02/\sqrt{\# \text{ layers}}$\\
            Other layers: \\ 
            $0.02$ std. dev. \\
            Biases are always \\ 
            initialized at zero} &
            \makecellnew{Mixed \\ precision \\ FP16 } & 
            \makecellnew{Disabled for both\\  hidden and \\
            attention layers}
            \\ \midrule

            160M & 
            \makecellnew{\# Layers: 12 \\ 
            \# heads: 12 \\
            hidden size: 1024\\
            seq. length: 2048 \\
            batch size: 256\\ 
            weight decay: 0.1\\
            cooldown steps: 20 \% \\
            grad clip: 1.0 \\ 
            tokens: 1.2B} &
            \makecellnew{SwiGLU \\ \citep{shazeer2020glu} }&
            \makecellnew{PreLN transformer \\ \citep{xiong2020layer} \\ with skip connections}& 
            \makecellnew{RMSnorm \\ \citep{zhang2019root}} &
            \makecellnew{MLP and Attention \\
            layers with variance: \\
            $0.02/\sqrt{\# \text{ layers}}$\\
            Other layers: \\ 
            $0.02$ std. dev. \\
            Biases are always \\ 
            initialized at zero} &
            \makecellnew{Mixed \\ precision \\ FP16 } & 
            \makecellnew{Disabled for both\\  hidden and \\
            attention layers}
            \\ \midrule

            410M & 
            \makecellnew{\# Layers: 6 \\ 
            \# heads: 8 \\
            hidden size: 512 \\
            seq. length: 2048 \\
            batch size: 256\\ 
            weight decay: 0.1 \\
            cooldown steps: 20 \% \\
            grad clip: 1.0 \\
            tokens: 3.2B} &
            \makecellnew{SwiGLU \\ \citep{shazeer2020glu} }&
            \makecellnew{PreLN transformer \\ \citep{xiong2020layer} \\ with skip connections}& 
            \makecellnew{RMSnorm \\ \citep{zhang2019root}} &
            \makecellnew{MLP and Attention \\
            layers with variance: \\
            $0.02/\sqrt{\# \text{ layers}}$\\
            Other layers: \\ 
            $0.02$ std. dev. \\
            Biases are always \\ 
            initialized at zero} &
            \makecellnew{Mixed \\ precision \\ FP16 } & 
            \makecellnew{Disabled for both\\  hidden and \\
            attention layers}
            \\

            \bottomrule

        \end{tabular}
        } 
\end{table*}

\paragraph{Image Classification.} The implementation of vision tasks is based on the GitHub repository \citep{ajroldi2025vision} with minor changes. Similarly, we report the training details of ViT training in \Cref{tab:summary_experiments_vit}. It includes LayerNorm \citep{ba2016layer}, GELU activations \citep{hendrycks2016gaussian}, and drop path.

\begin{table*}[t]
    \centering
    \caption{Detailed training details of image classification and model configurations for the results in \Cref{fig:vit_comparison}. The implementation is based on \citet{ajroldi2025vision}.}
    \label{tab:summary_experiments_vit}
    \resizebox{\textwidth}{!}{
        \begin{tabular}{ccccccc}
            \toprule
            {\bf Model} &
            {\bf Configuration}  &
            \makecellnew{{\bf MLP} \\ {\bf Type}} &
            \makecellnew{{\bf Backbone}} &
            {\bf Normalization} & 
            \makecellnew{{\bf Position} \\ {\bf Embeddings}} &
            \makecellnew{{\bf Stochastic Depth} \\ {\bf via DropPath}}
            
            \\ \toprule

            ViT-Tiny & 
            \makecellnew{\# Patch size: 4 \\ 
            \# heads: 8 \\
            Embedding size: 192 \\ 
            \# layers: 12 \\
            \# heads: 3\\ 
            MLP ratio: 3\\
            Class token: True\\
            Drop path rate: 0.1\\
            grad clip: Null} &
            \makecellnew{GELU \\ \citep{hendrycks2016gaussian} }&
            \makecellnew{PreLN transformer \\ \citep{xiong2020layer} \\ with skip connections}& 
            \makecellnew{LayerNorm \\ \citep{ba2016layer}} &
            \makecellnew{LayerNorm: 1\\
            Biases: 0\\
            Other layers: \\ 
            $0.02$ std. dev.} & 
            \makecellnew{Residual branches are \\ randomly dropped with \\ a linearly increasing drop\\ rate across depth}
            \\ \midrule

            \bottomrule

        \end{tabular}
        } 
\end{table*}

\begin{remark}
    The results in Figures~\ref{fig:llm_verification} and \ref{fig:resnet50_vit_verification} are done with gradient clipping 1.0 and a small LR $10^{-4}$ to make small steps in the loss landscape from the initialization. Such an approach allows for tracking better the smoothness-loss dependency around the initialization. 
\end{remark}

\subsection{Effect of Initialization}\label{sec:initialization}
In this section, we examine how the choice of initialization influences the empirical verification of our condition on the 70M language model trained with clipping 1.0 and a fixed learning rate. We evaluate two initialization strategies. The first follows the approach used in modern GPT-style architectures, where the variance of the weights in the MLP or GLU blocks and in the attention output layers is scaled as $0.02/\textrm{n_layer}$, with 
$\textrm{n_layer}$ denoting the layer index. We refer to this strategy as ``scaling with depth.'' The second strategy uses a fixed variance of $0.02$, independent of depth, which is the default choice in many implementations. We refer to this strategy as ``no scaling with depth.'' We estimate the smoothness throughout training using the same methodology as in \Cref{sec:experiments_main}. We highlight that in this set of experiments, we use gradient clipping to $1$, which is a standard training technique used in practice, which allows to restrict the update magnitude and make constant steps in the landscape.

We present the results in \Cref{fig:init_study} varying the fixed learning rate hyperparameter of \algname{SGD}. We observe the following results
\begin{itemize}
    \item After an initial sharpness reduction (flattening) phase, \algname{SGD} enters a gradual sharpening phase where the sharpness grows. In this regime, our condition does not describe the smoothness well anymore. After the gradual sharpening phase, \algname{SGD} enters \algname{EoS} stage where the sharpness oscillates around $2/\textrm{LR}$ stability threshold (only observed for LR 1e-2). 
    \item Importantly, the transition from sharpness reduction to the gradual sharpening phase happens at the same loss value regardless of LR choice, which indicates a strong connection between the smoothness and the loss value.
    \item Our condition describes well the sharpness reduction phase, also observed in \citep{kalra2024warmup}. In contrast to that work, we describe the reduction phase analytically. Using the proposed learning-rate warm-up strategy can avoid instabilities due to high values of the sharpness at the beginning and lead to better final performance.
    \item ``Scaling with depth'' strategy initializes the model closer to the origin. This results in a larger initial sharpness in comparison with ``no scaling with depth'' scheme. This aligns with our theoretical calculations in Proposition \ref{prop:1x1x1 network}. We hypothesize that GPT-style initialization requires a longer learning-rate warm-up phase due to such high values of the sharpness at the beginning.
\end{itemize}

\begin{figure}
    \centering
    \begin{tabular}{ccc}
    \hspace{-5mm}\makecellnew{\includegraphics[width=0.33\linewidth]{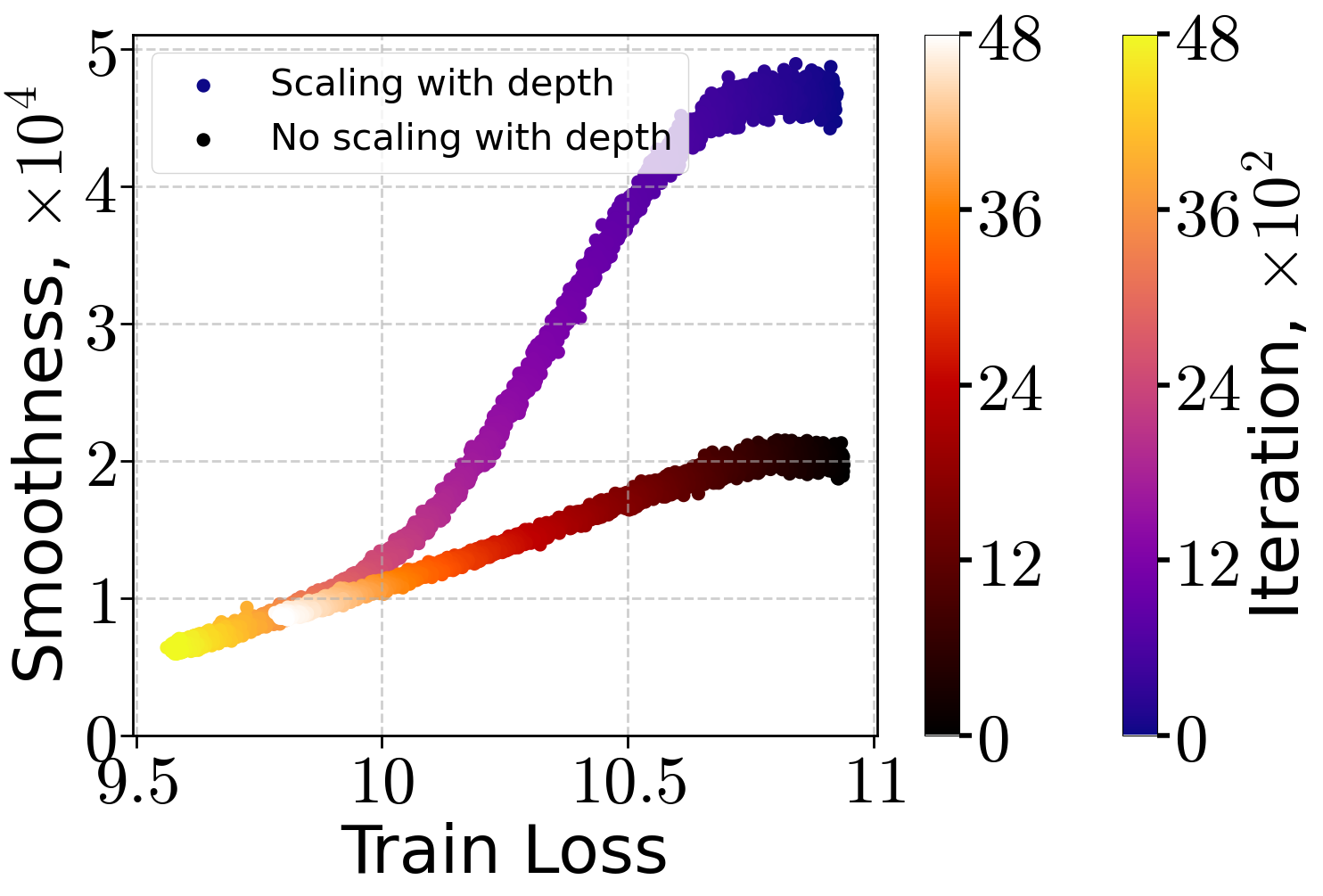} \\ LR 1e-4
    } 
    &
    \hspace{-8mm}
    \makecellnew{\includegraphics[width=0.33\linewidth]{figures/rebuttals_70M_sgd_1e-3_init.png}\\ LR 1e-3} 
    &
    \hspace{-8mm}
    \makecellnew{\includegraphics[width=0.33\linewidth]{figures/rebuttals_70M_sgd_1e-2_init.png}\\ LR 1e-2}
    \end{tabular}
    \caption{Training of 70M model on FineWeb dataset with \algname{SGD} varying fixed learning rate and initialization scheme. Colored points correspond to depth-scaled initialization, while black points correspond to fixed-variance initialization. Color indicates training progress.}
    \label{fig:init_study}
\end{figure}

\subsection{Additional Results on Verification of the Proposed Condition}

\subsubsection{Results Varying Random Seed}

In this section, we demonstrate that the obtained results in \Cref{fig:llm_verification,fig:resnet50_vit_verification} are consistent when changing the random seed. Random seed changes the initialization of the models, thus leading to exploration of various parts of the landscape. We report the results in \Cref{fig:llm_verification_appendix}. Across these runs, we observe an approximately linear decrease in the smoothness proxy with training loss early in training.

\begin{figure}
    \centering
    \begin{tabular}{ccc}
    \includegraphics[width=0.3\linewidth]{figures/70M_sgd_1e-4_100.png} & 
    \includegraphics[width=0.3\linewidth]{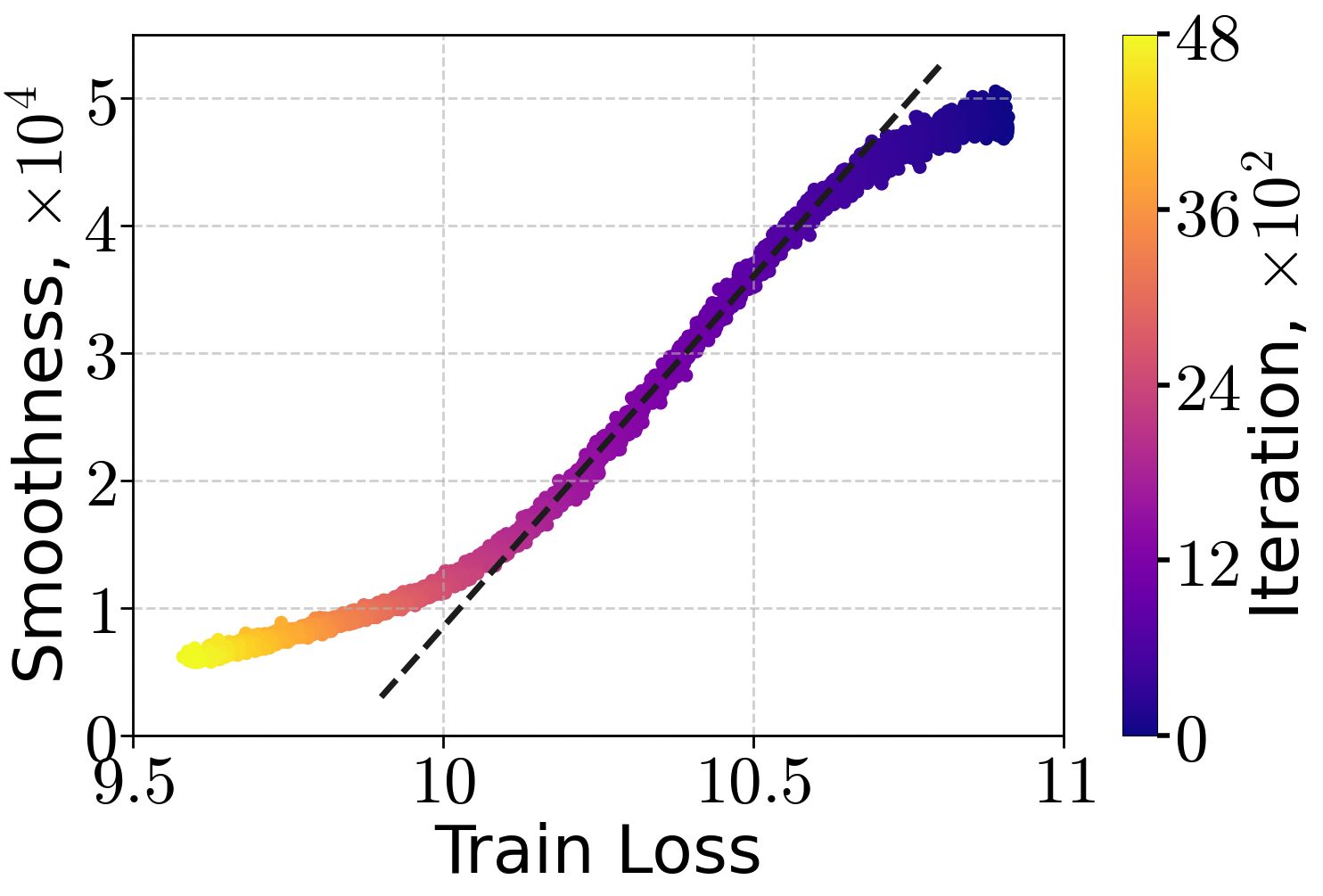} & 
    \includegraphics[width=0.3\linewidth]{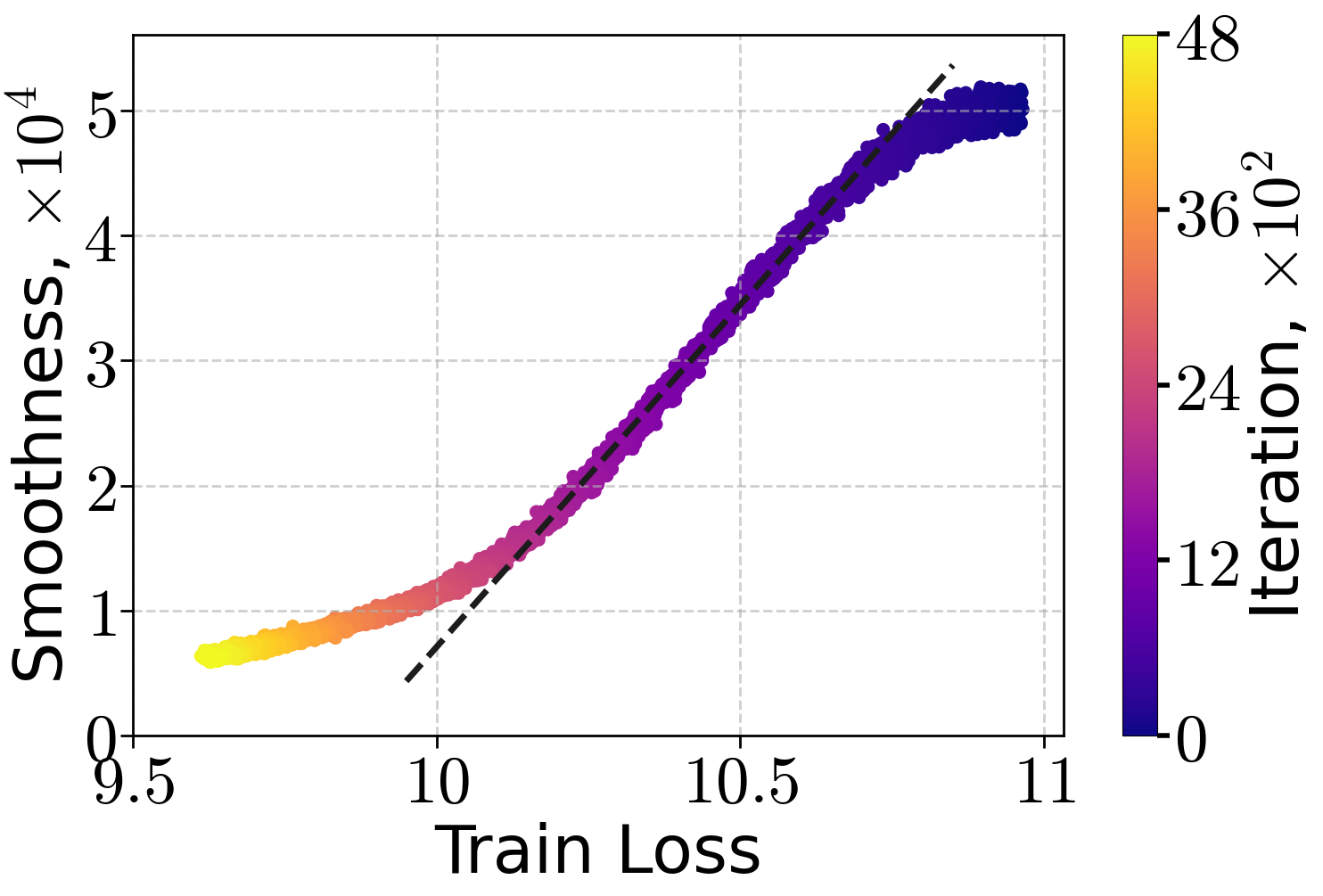} 
    \\
    &
    70M on FineWeb
    &
    \\
    \includegraphics[width=0.3\linewidth]{figures/160M_sgd_1e-4_100.png} &
    \includegraphics[width=0.3\linewidth]{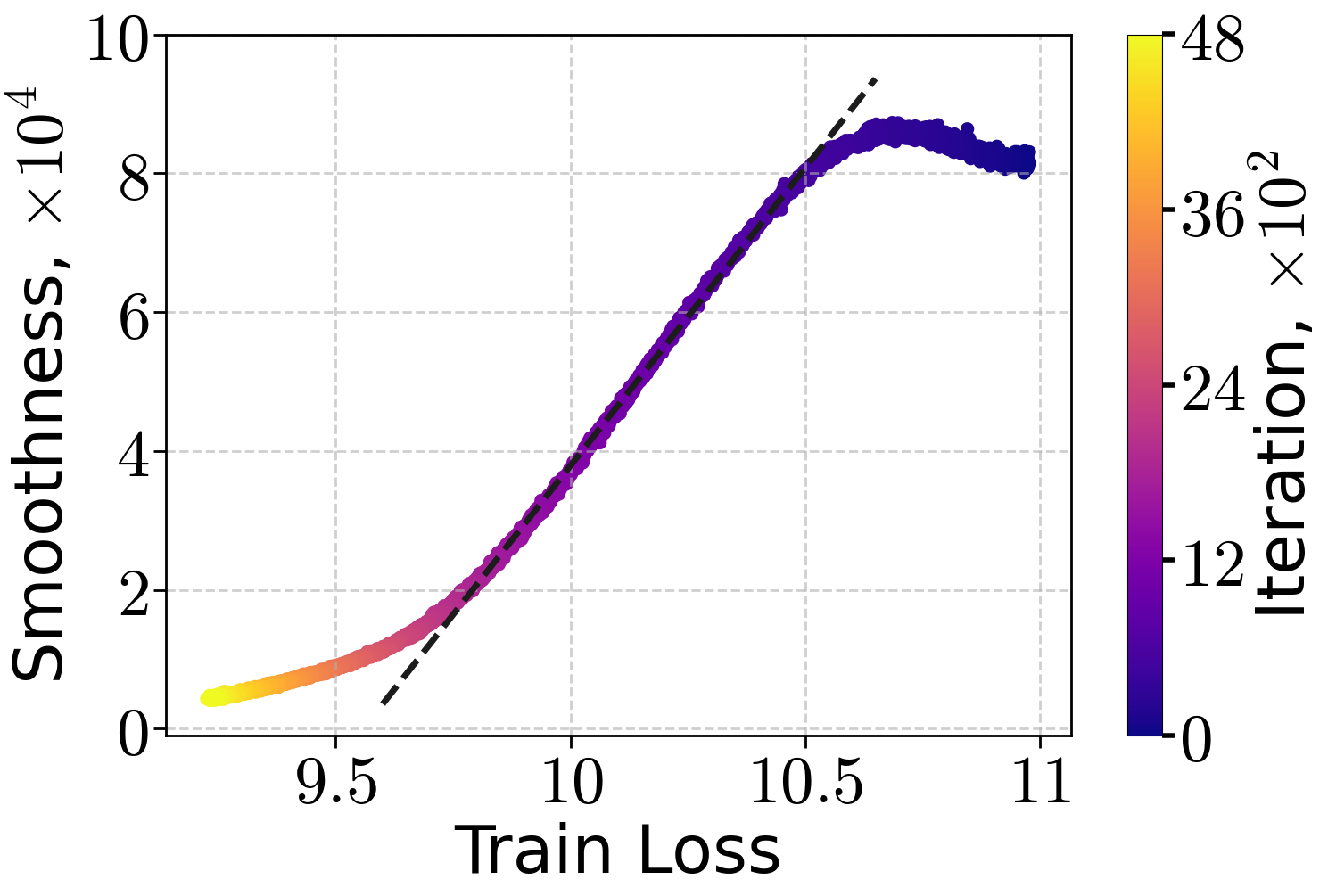} &
    \includegraphics[width=0.3\linewidth]{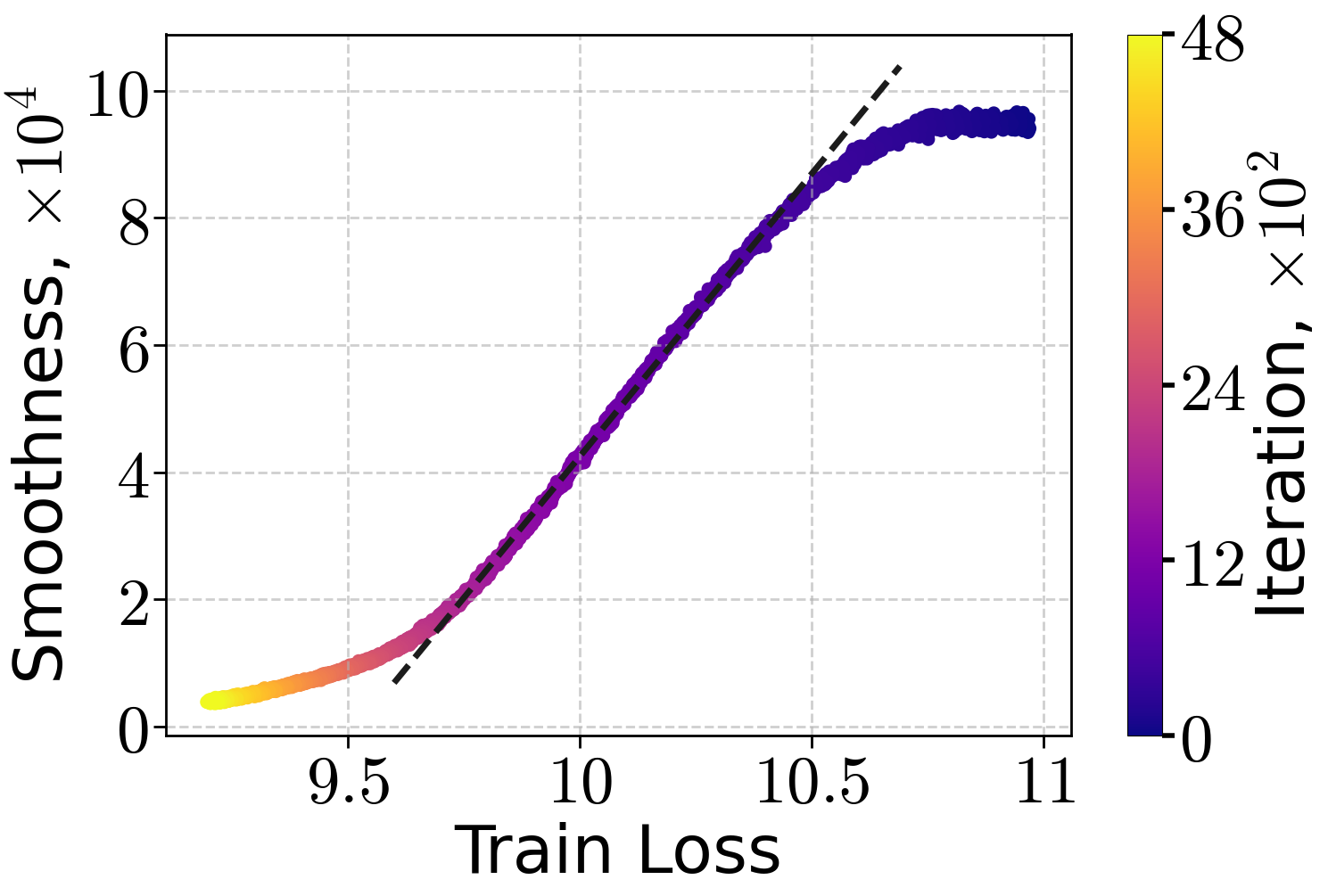} 
    \\
    &
    160M on FineWeb
    &
    \\
    \includegraphics[width=0.3\linewidth]{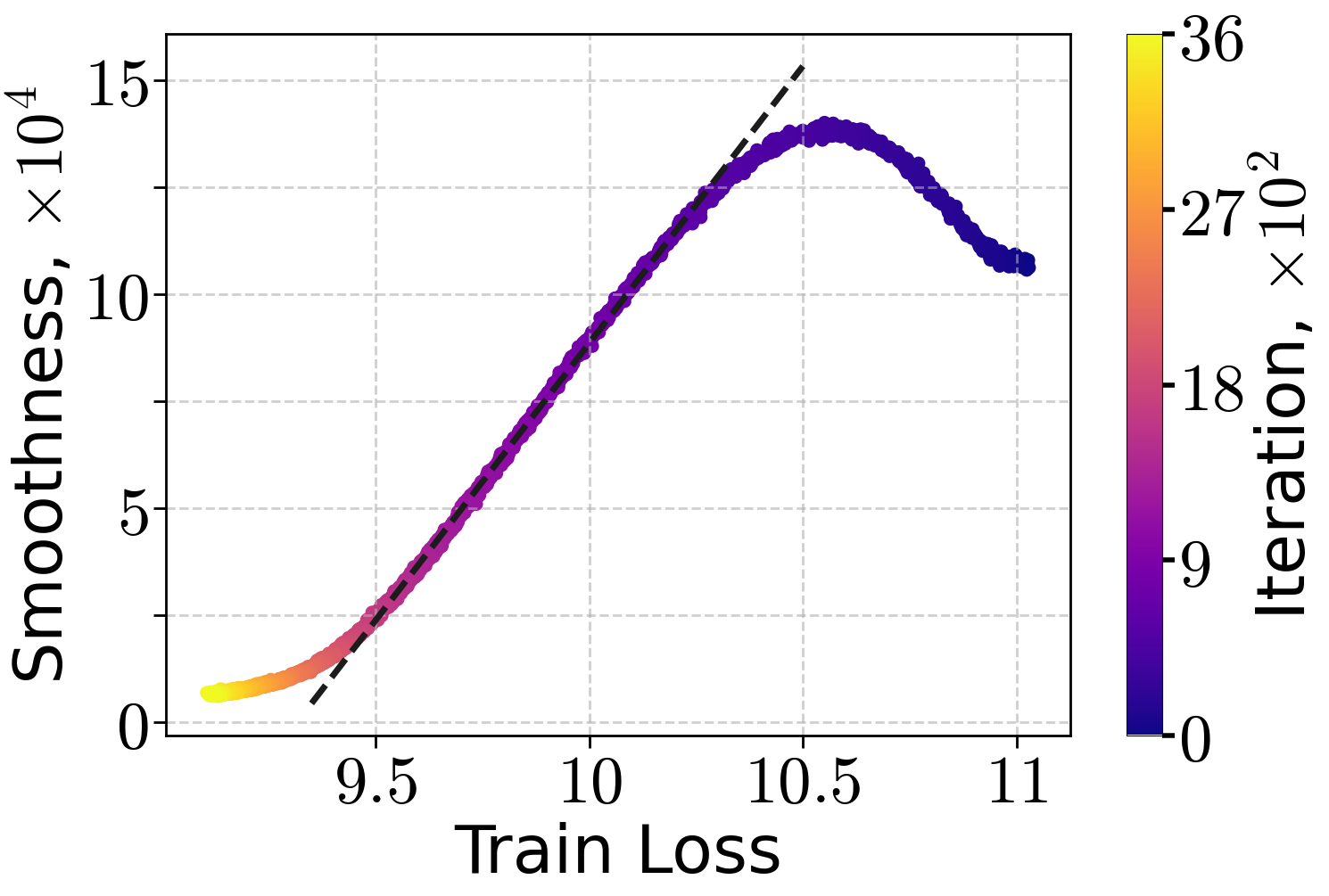} & 
    \includegraphics[width=0.3\linewidth]{figures/410M_sgd_1e-4_101.png} & 
    \includegraphics[width=0.3\linewidth]{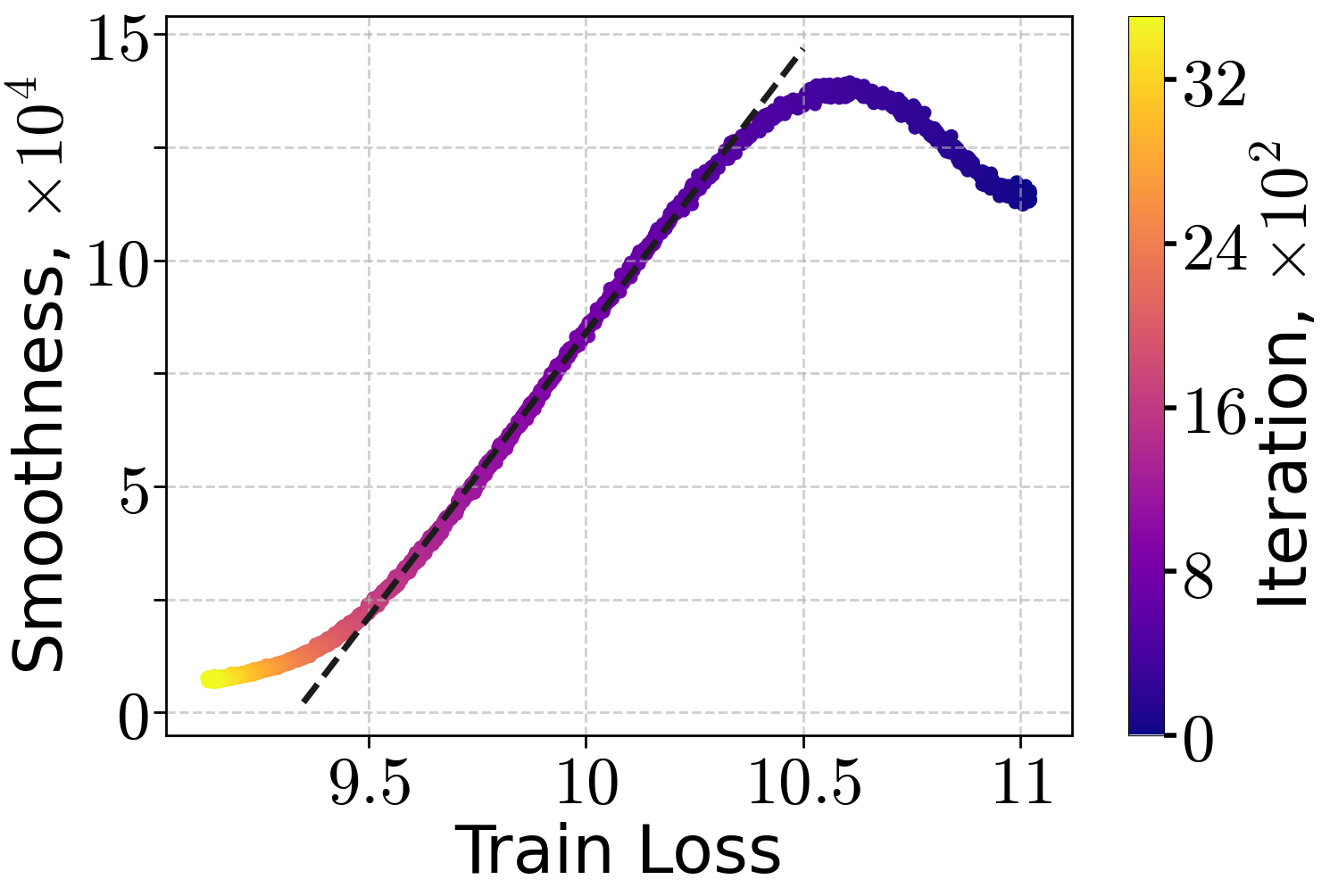} 
    \\
    &
    410M on FineWeb
    &
    \end{tabular}
    \caption{Local smoothness approximation versus training loss for language models of varying sizes and random seed on the FineWeb dataset. Models are trained with \algname{SGD} at a constant learning rate of $10^{-4}$. Each dot represents the estimated local smoothness and stochastic training loss at a given iteration, with color indicating training progress, while the black dashed line shows the best linear fit. For much of early training, the relation is well-approximated by a line, aside from the very initial phase where smoothness behaves differently. This deviation likely arises because the linear fit reflects only an upper bound, suggesting that a more complex functional dependence may be necessary.}
    \label{fig:llm_verification_appendix}
\end{figure}

\subsubsection{Verification with Adam}

Next, we switch to \algname{Adam} optimizer to verify the proposed $(H_0, H_1)$-smoothness condition. We test the results on language models of size 70M, 160M, and 410M. The results are reported in \Cref{fig:adam_llm_verification_appendix}. Similar to the setting in the main body, we use a small constant learning rate $10^{-7}$, which allows moving slowly in the landscape. We observe that \algname{Adam} also demonstrates a linear dependency between local smoothness approximation and train loss. However, we observe that \algname{Adam} stays in this linear decaying part of the landscape for fewer iterations, especially for larger models, than \algname{SGD} does. This might suggest that for \algname{Adam} the warm-up phase should be shorter.

\begin{figure}
    \centering
    \begin{tabular}{ccc}
    \includegraphics[width=0.3\linewidth]{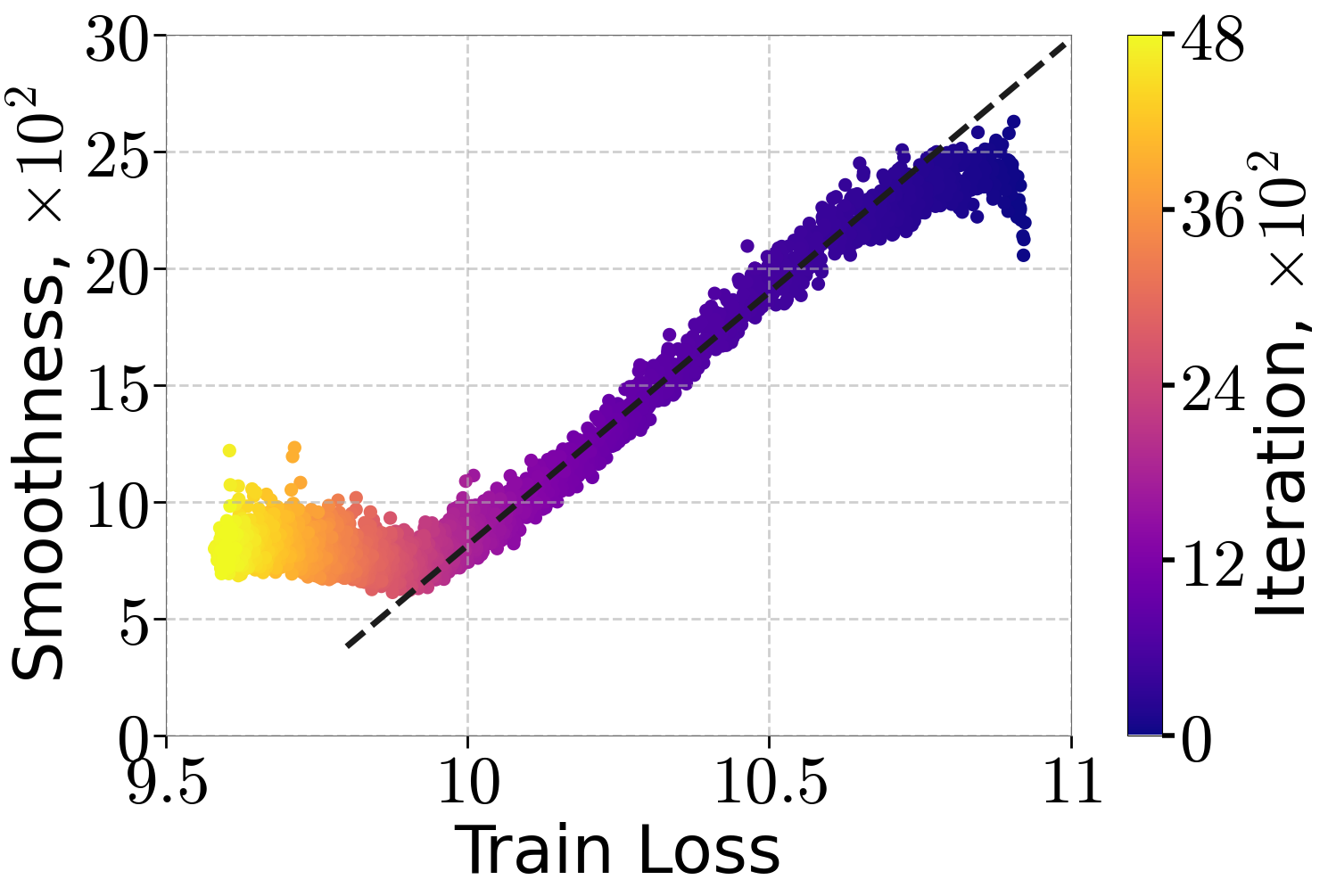} & 
    \includegraphics[width=0.3\linewidth]{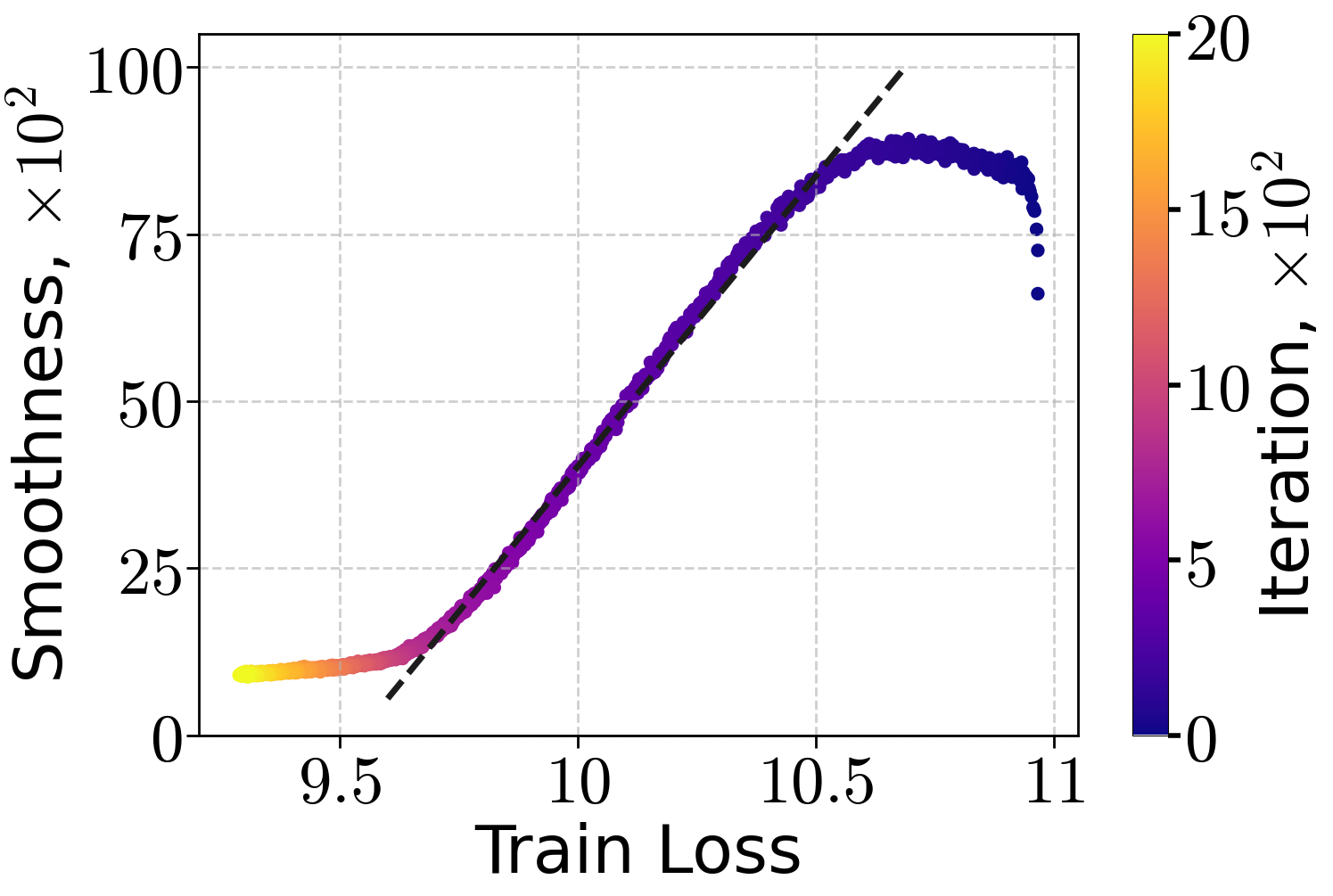} & 
    \includegraphics[width=0.3\linewidth]{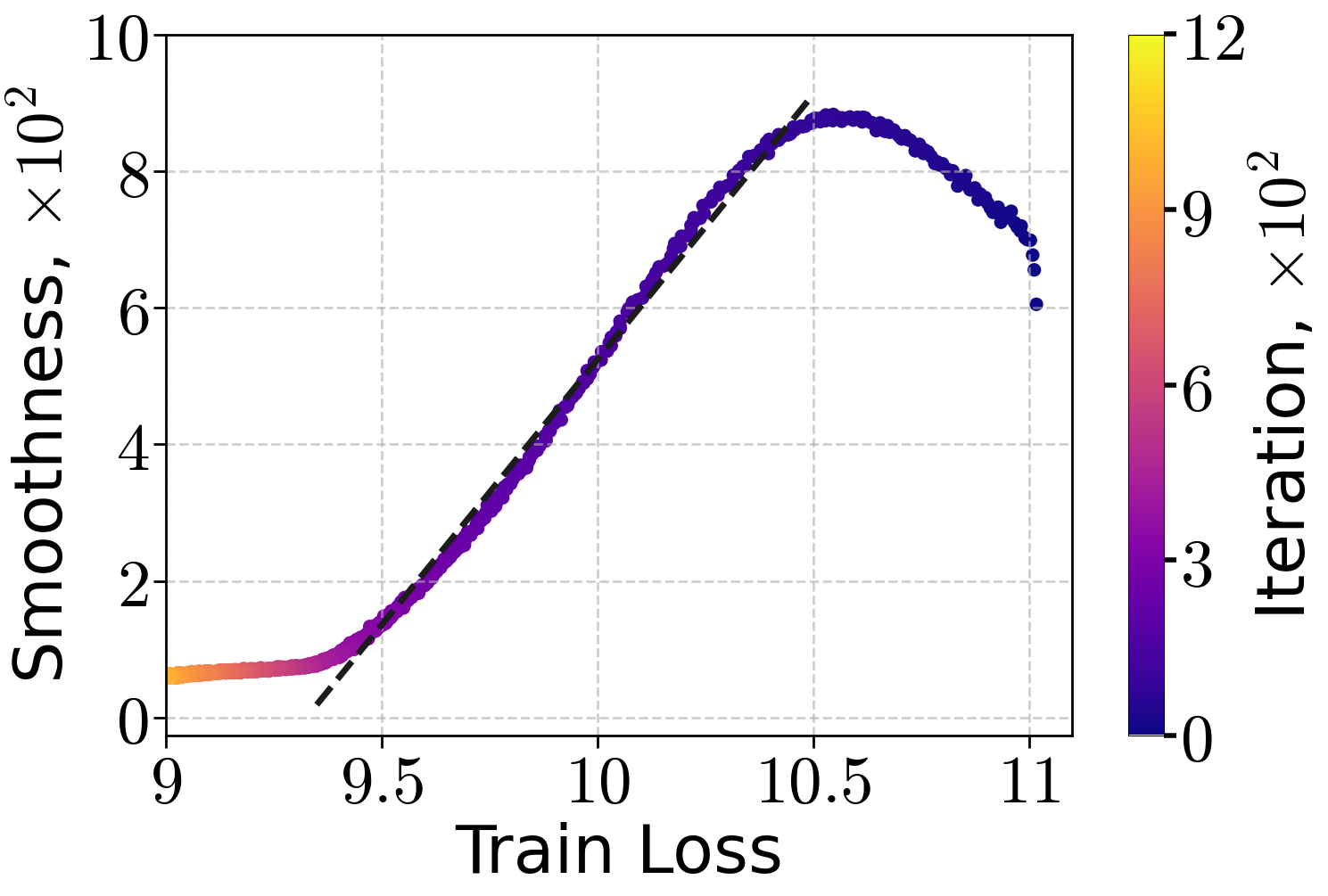} 
    \\
    70M on FineWeb &
    160M on FineWeb &
    410M on FineWeb
    \end{tabular}
    \caption{Local smoothness approximation versus training loss for language models of varying sizes and random seed on the FineWeb dataset. Models are trained with \algname{Adam} at a constant learning rate of $10^{-7}$. Each dot represents the estimated local smoothness and stochastic training loss at a given iteration, with color indicating training progress, while the black dashed line shows the best linear fit. For much of early training, the relation is well-approximated by a line, aside from the very initial phase where smoothness behaves differently. This deviation likely arises because the linear fit reflects only an upper bound, suggesting that a more complex functional dependence may be necessary.}
    \label{fig:adam_llm_verification_appendix}
\end{figure}

\subsection{Ablation Studies}

\subsubsection{Performance Varying Warm-up Length}

\paragraph{Language Modeling.} In this section, we investigate how warm-up length influences training. As shown in Figures \ref{fig:70M_llm_warmup_ablation_warmup}-\ref{fig:410M_llm_warmup_ablation_warmup}, using a 10–20\% linear warm-up yields the best validation perplexity, demonstrating that warm-up improves the final performance of the models. We also find that warm-up enables convergence even with relatively large peak learning rates $10^{-2}$ for the 70M model and $3 \cdot 10^{-3}$ for the 160M model, whereas training without warm-up performs significantly worse at these values. Similar trends have been reported by \citet{wortsman2023small}. Finally, we observe that the $(H_0, H_1)$ warm-up is less robust to the choice of peak learning rate for the 70M model, resulting in higher validation perplexity. However, once the peak learning rate is properly tuned (within $10^{-3}$–$3 \cdot 10^{-3}$), it becomes less sensitive to the choice of the constant $C$.

\begin{figure}
    \centering
    \begin{tabular}{cccc}
    \hspace{-3mm}\includegraphics[width=0.23\linewidth]{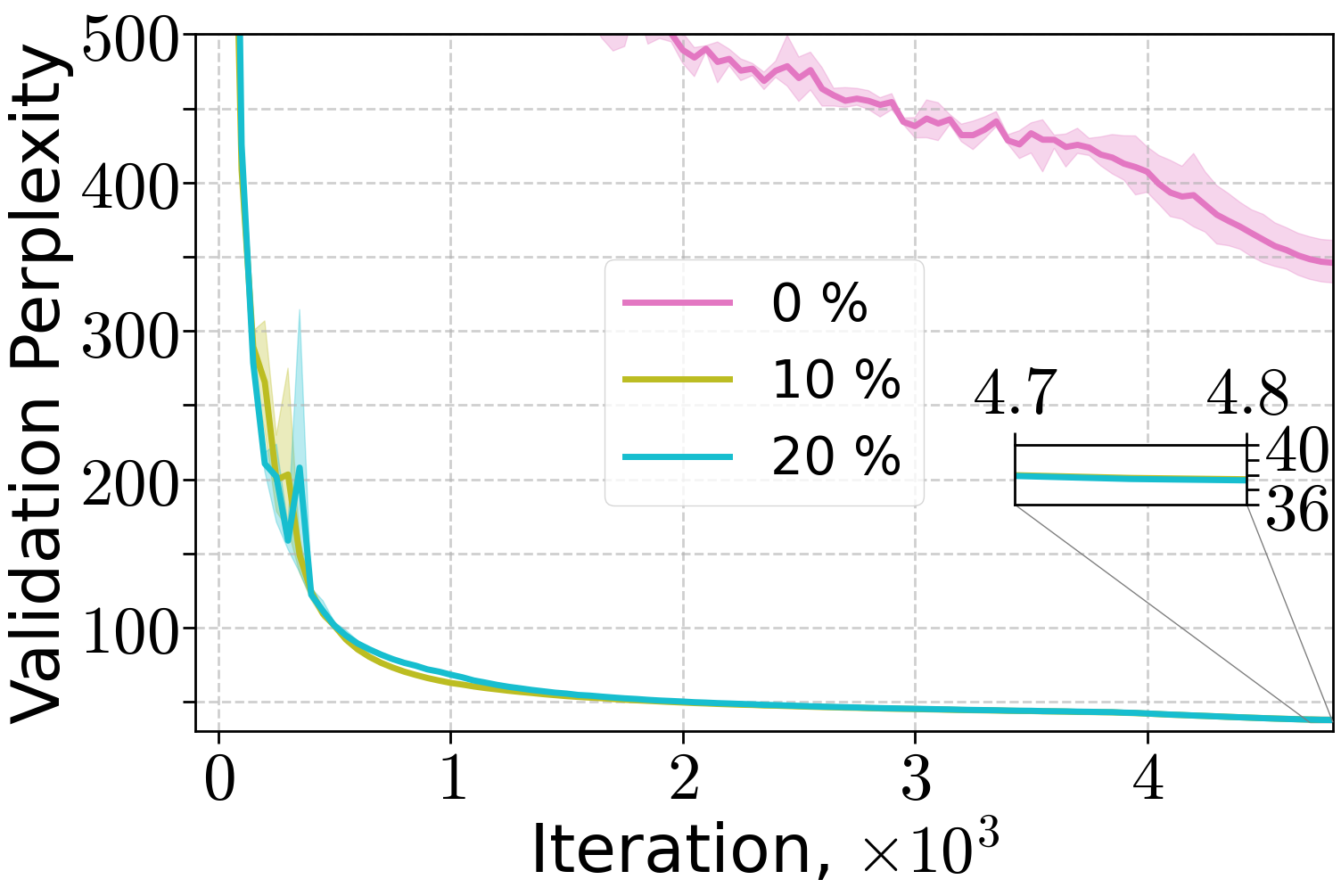} &
    \includegraphics[width=0.23\linewidth]{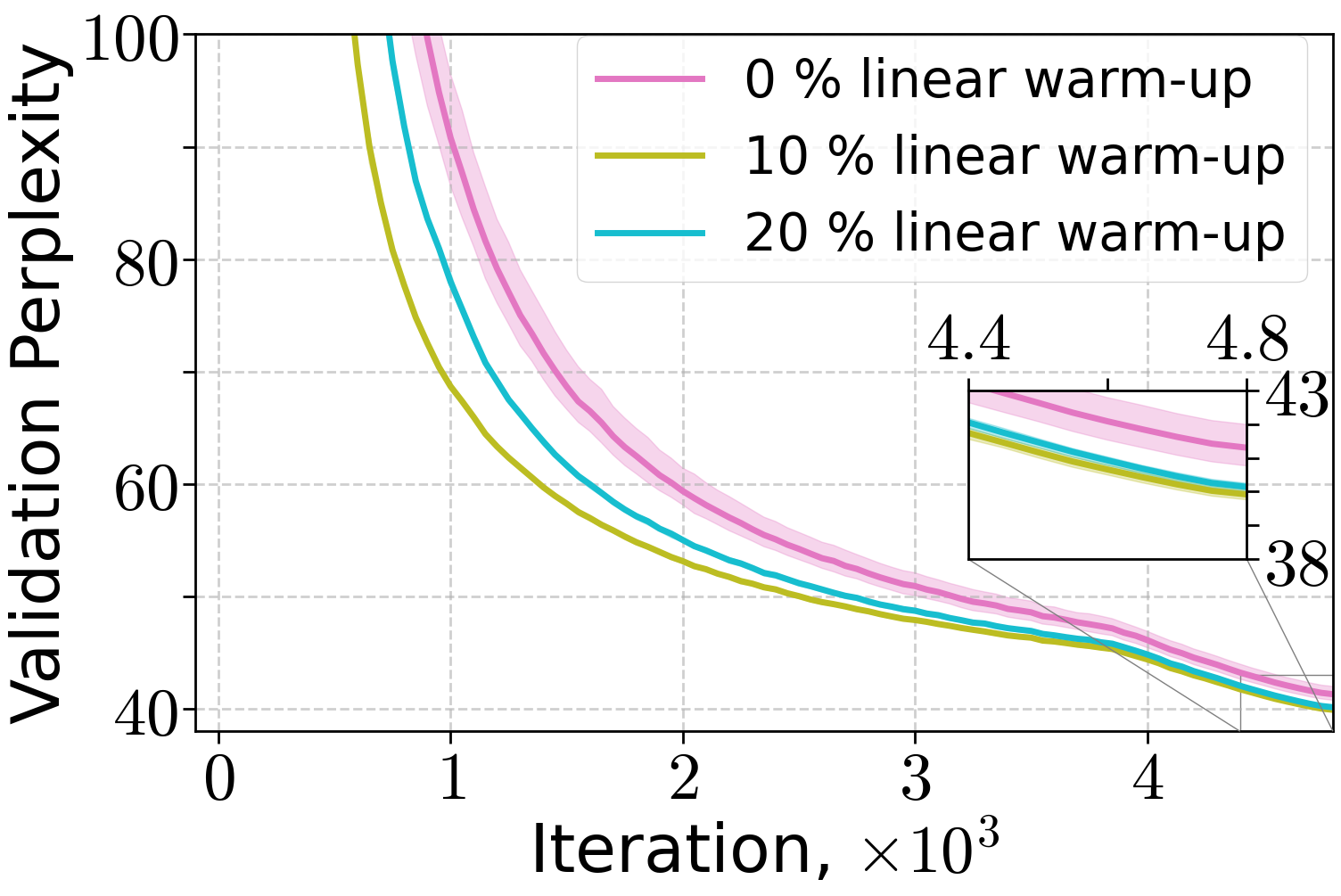} & 
     \includegraphics[width=0.23\linewidth]{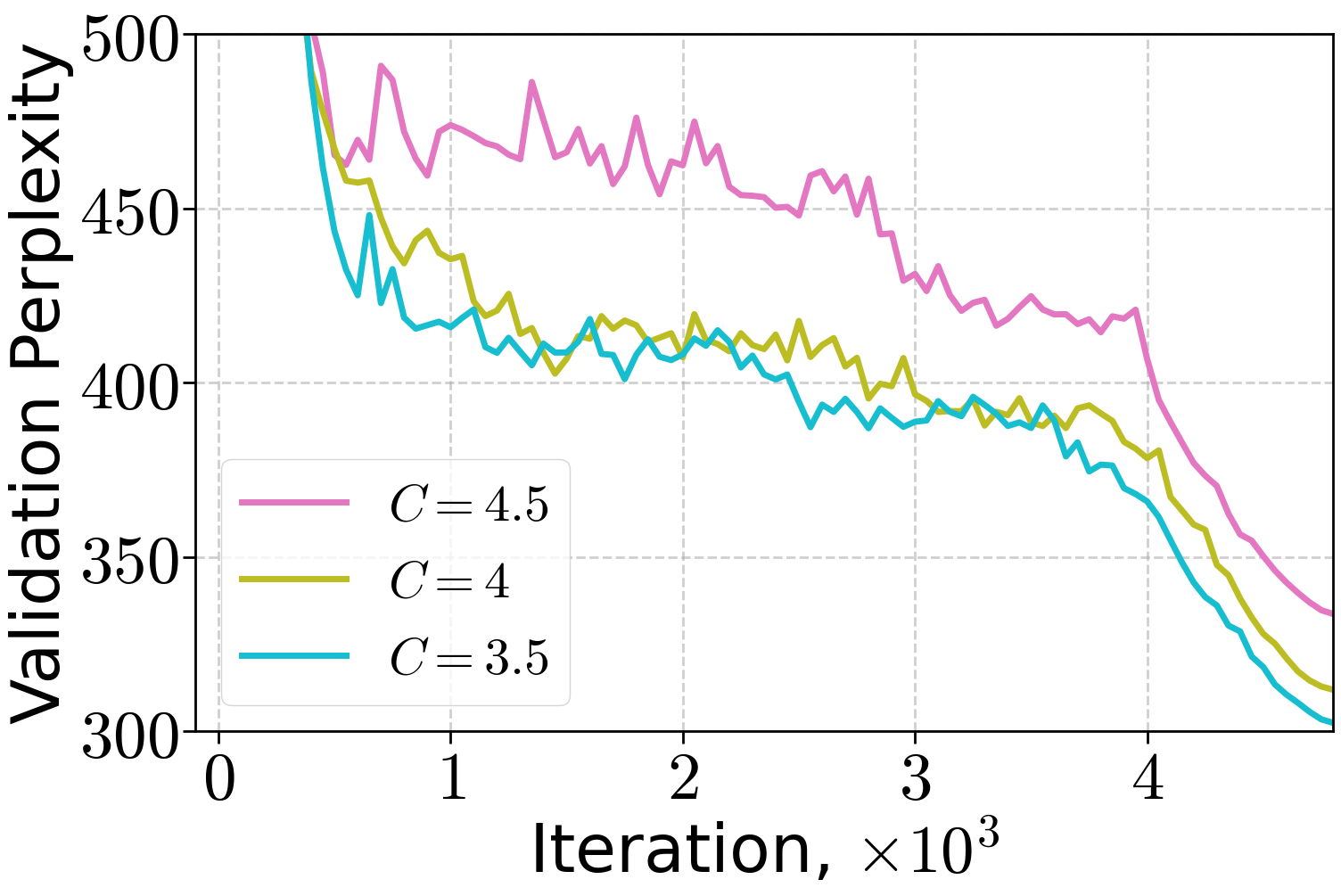} &
     \includegraphics[width=0.23\linewidth]{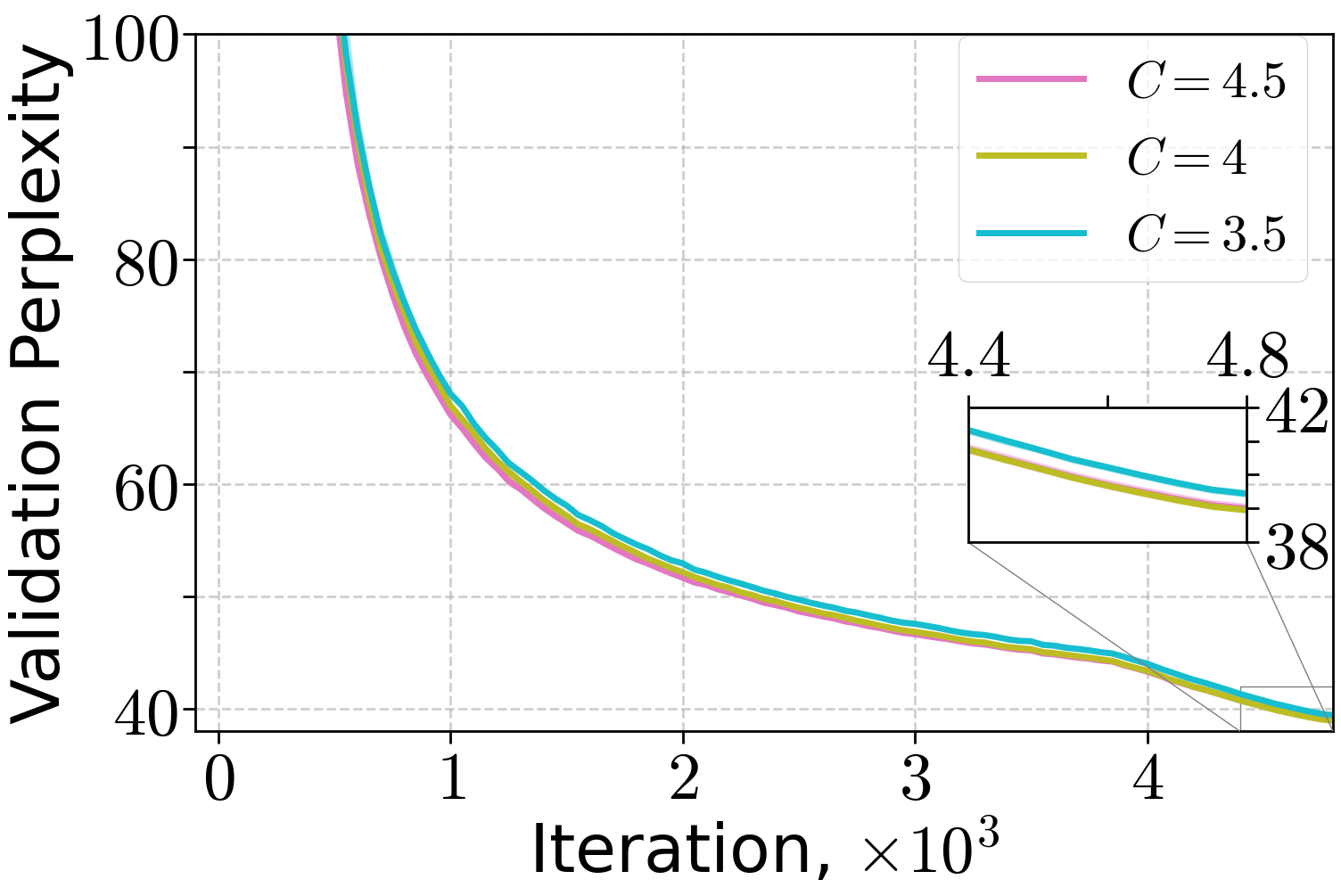} 
    \\
    \makecellnew{Linear warm-up\\ LR $10^{-2}$ }&
    \makecellnew{Linear warm-up \\ LR $10^{-3}$} &
    \makecellnew{$(H_0, H_1)$ warm-up\\ LR $10^{-2}$} &
     \makecellnew{$(H_0, H_1)$ warm-up\\ LR $10^{-3}$}
    \end{tabular}
    
    \caption{Training of 70M language model on FineWeb dataset varying the length of linear warm-up (two left figures) and threshold $C$ of $(H_0, H_1)$ warm-up (two right figures) for the peak learning rate $10^{-2}$ and $10^{-3}.$}
    \label{fig:70M_llm_warmup_ablation_warmup}
\end{figure}

\begin{figure}
    \centering
    \begin{tabular}{cccc}
    \hspace{-3mm}\includegraphics[width=0.23\linewidth]{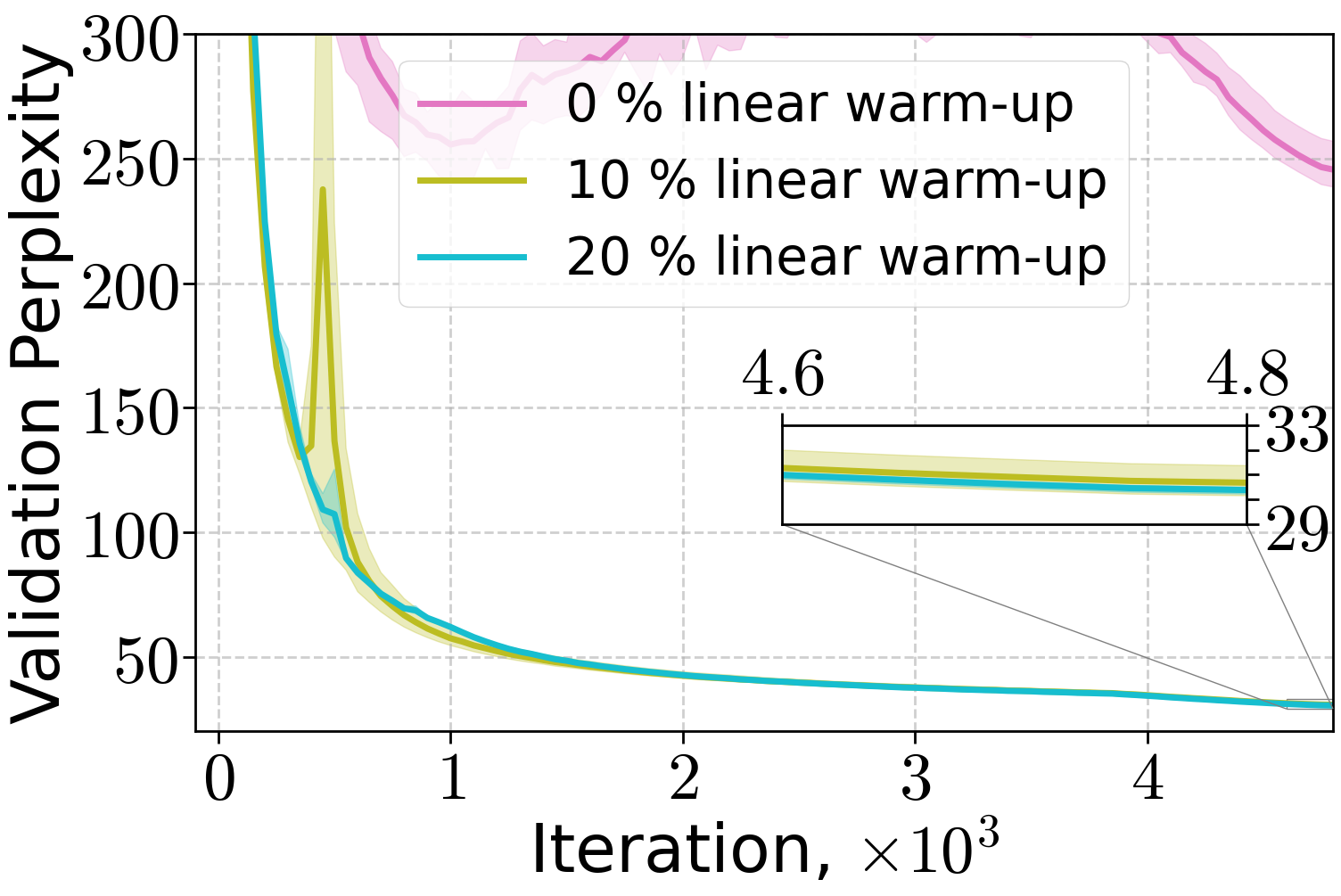} &
    \includegraphics[width=0.23\linewidth]{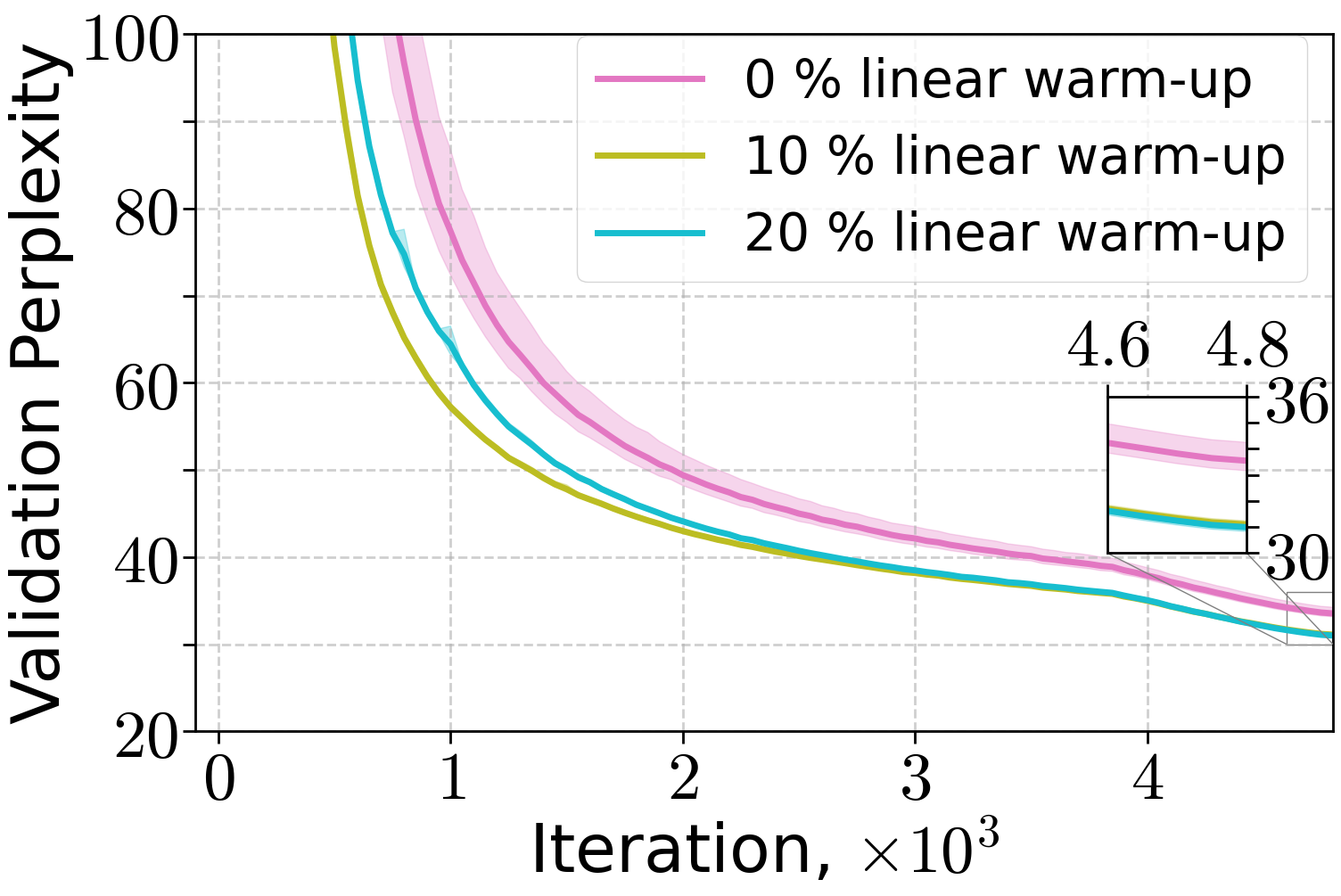} & 
     \includegraphics[width=0.23\linewidth]{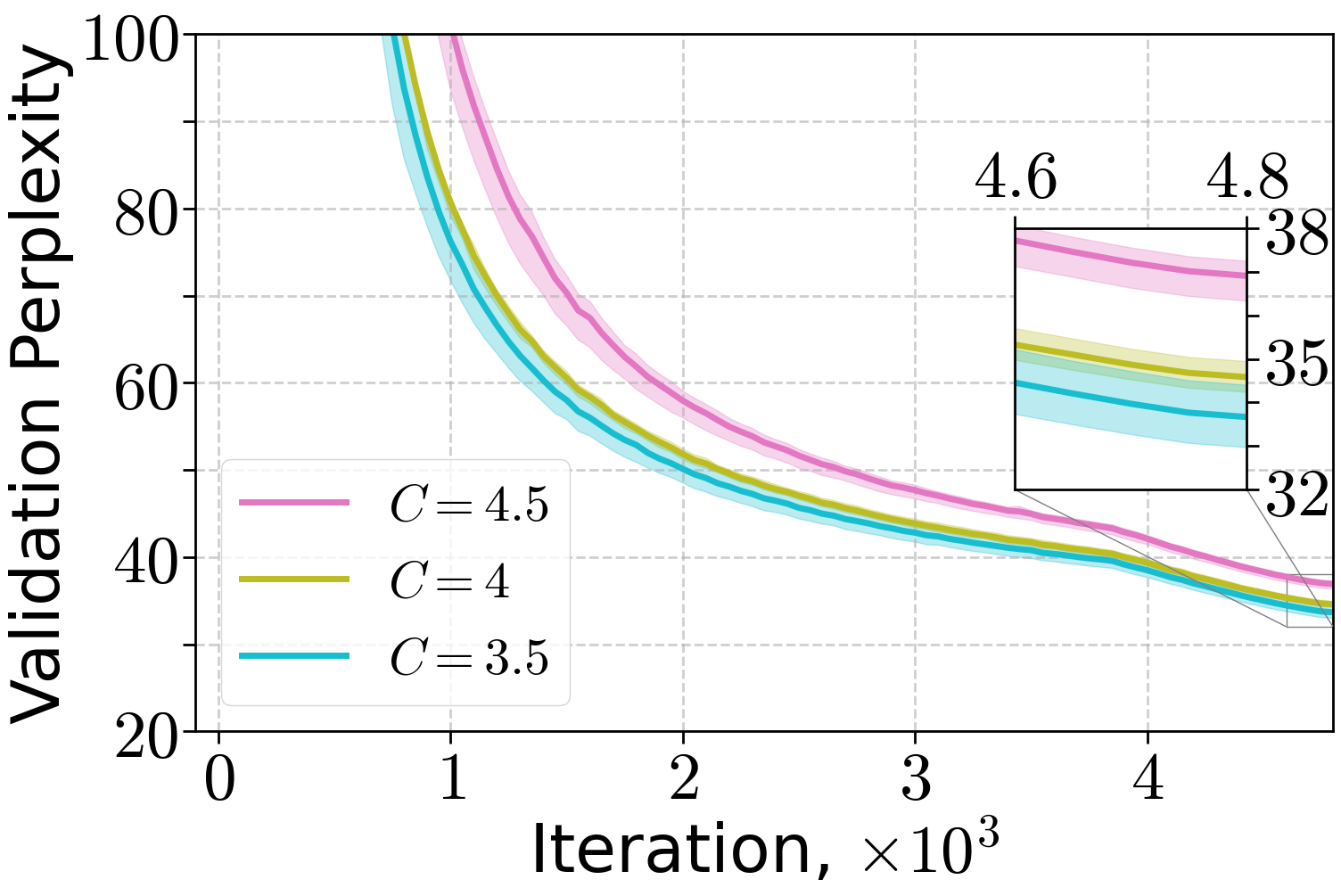} &
     \includegraphics[width=0.23\linewidth]{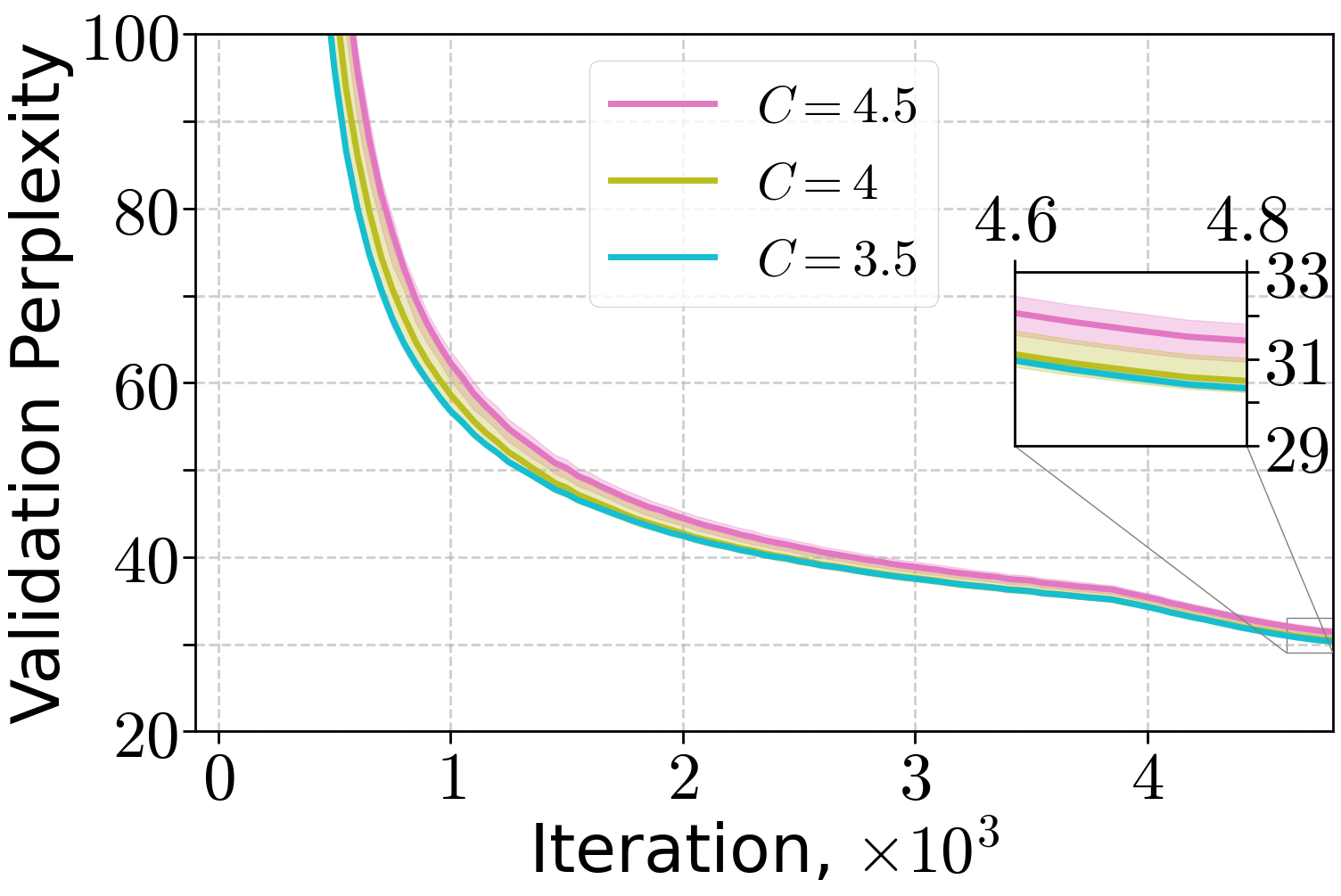} 
    \\
    \makecellnew{Linear warm-up\\ LR $3\cdot 10^{-3}$ }&
    \makecellnew{Linear warm-up \\ LR $10^{-3}$} &
    \makecellnew{$(H_0, H_1)$ warm-up\\ LR $3\cdot 10^{-3}$} &
     \makecellnew{$(H_0, H_1)$ warm-up\\ LR $10^{-3}$}
    \end{tabular}
    
    \caption{Training of 160M language model on FineWeb dataset varying the length of linear warm-up (two left figures) and threshold $C$ of $(H_0, H_1)$ warm-up (two right figures) for the peak learning rate $3\cdot 10^{-3}$ and $10^{-3}.$}
    \label{fig:160M_llm_warmup_ablation_warmup}
\end{figure}

\begin{figure}
    \centering
    \begin{tabular}{cccc}
    \hspace{-3mm}\includegraphics[width=0.23\linewidth]{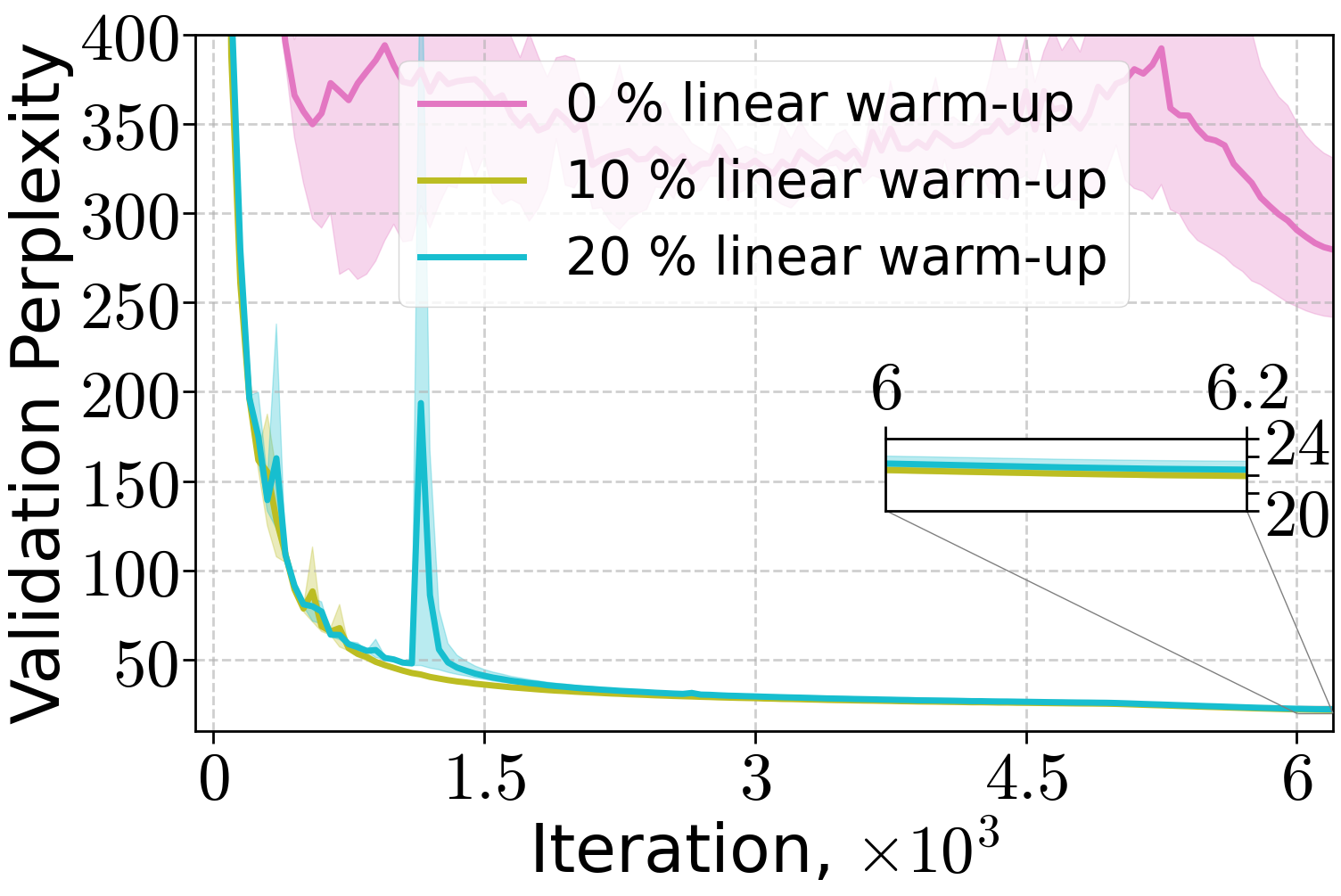} &
    \includegraphics[width=0.23\linewidth]{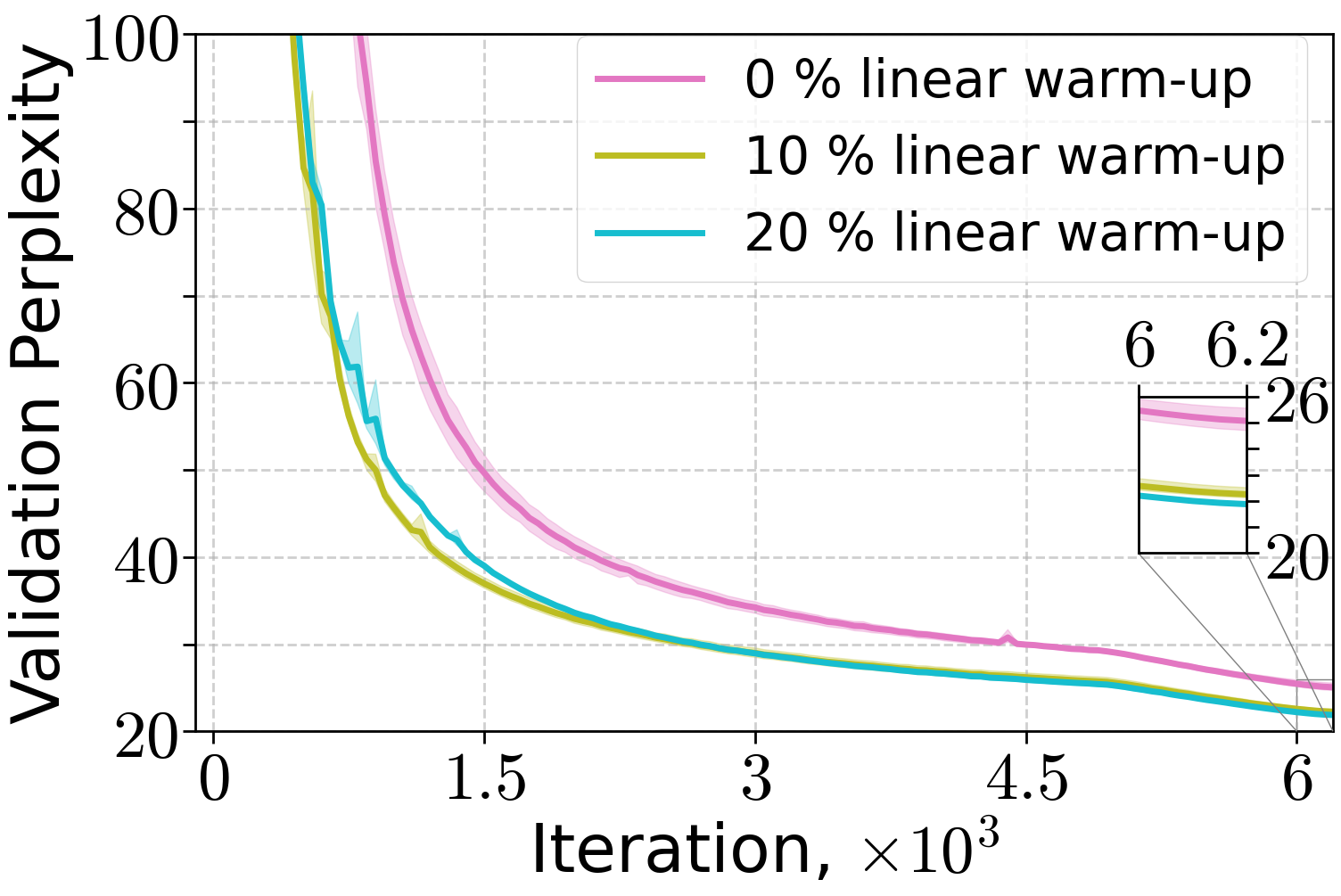} & 
     \includegraphics[width=0.23\linewidth]{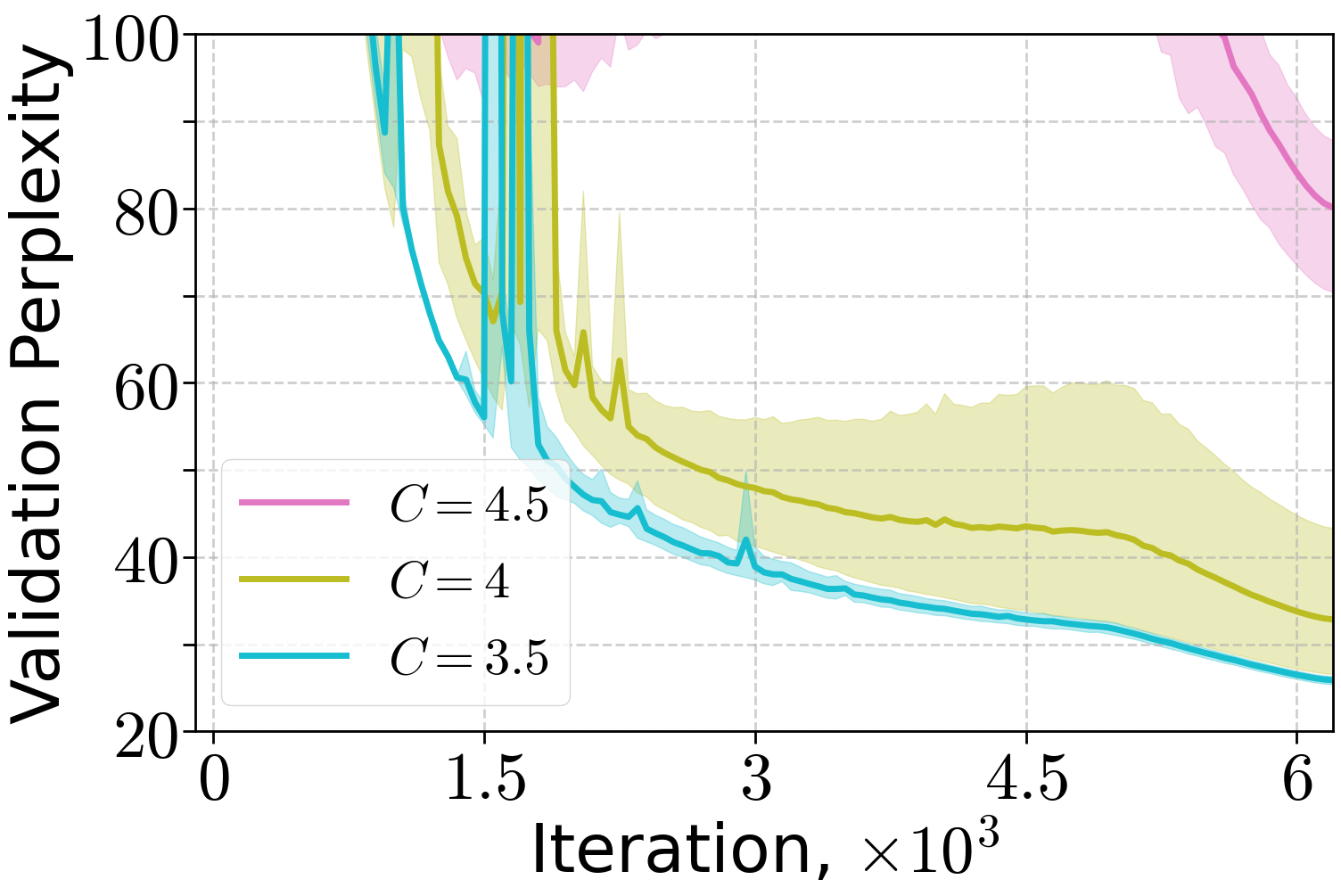} &
     \includegraphics[width=0.23\linewidth]{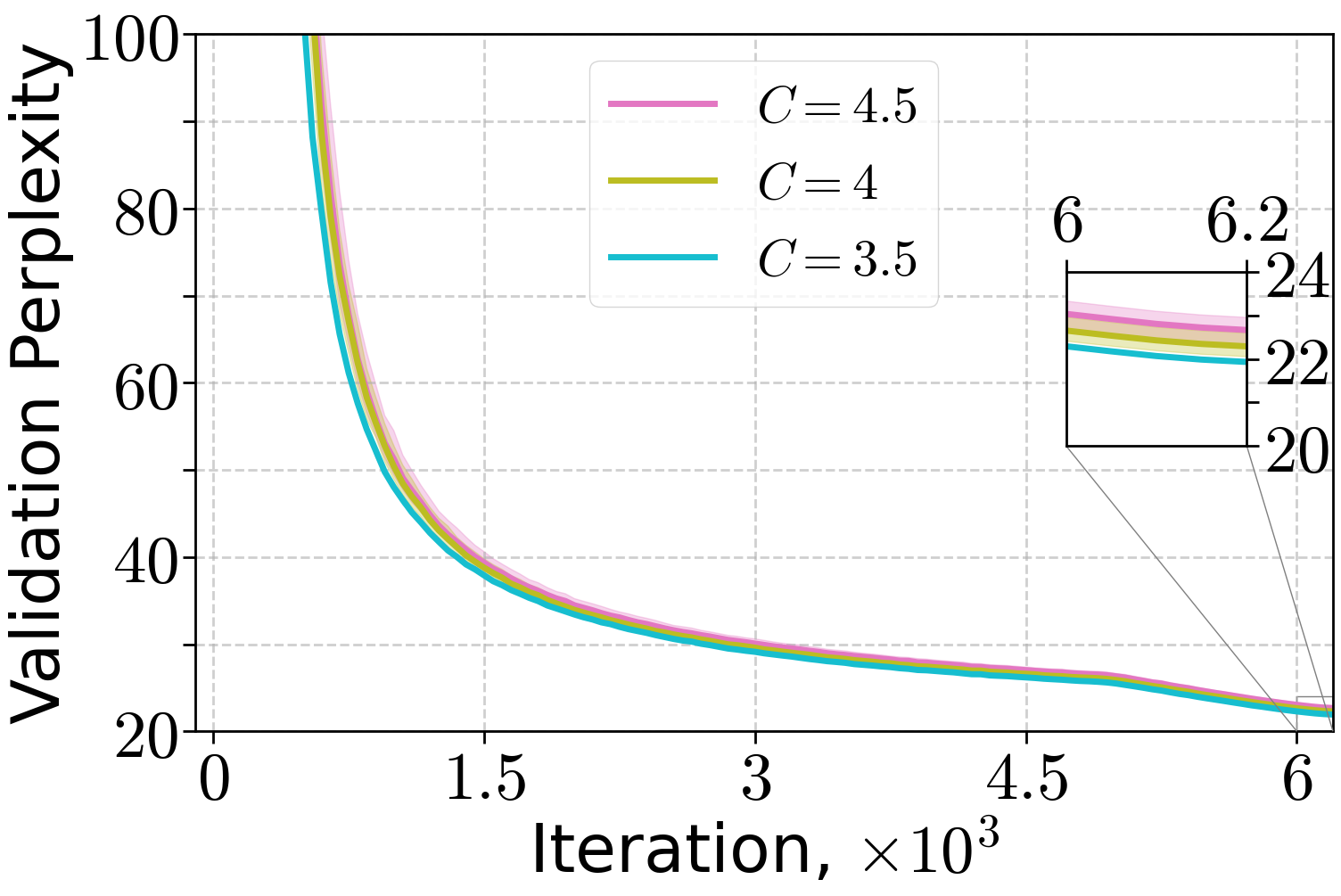} 
    \\
    \makecellnew{Linear warm-up\\ LR $3\cdot 10^{-3}$ }&
    \makecellnew{Linear warm-up \\ LR $10^{-3}$} &
    \makecellnew{$(H_0, H_1)$ warm-up\\ LR $3\cdot 10^{-3}$} &
     \makecellnew{$(H_0, H_1)$ warm-up\\ LR $10^{-3}$}
    \end{tabular}
    
    \caption{Training of 410M language model on FineWeb dataset varying the length of linear warm-up (two left figures) and threshold $C$ of $(H_0, H_1)$ warm-up (two right figures) for the peak learning rate $3\cdot 10^{-3}$ and $10^{-3}.$}
    \label{fig:410M_llm_warmup_ablation_warmup}
\end{figure}

\paragraph{Image Classification with ViT.} Now we turn to the same test, but when training the ViT model on the ImageNet-32 dataset. In contrast to language modeling results, ViT with linear and $(H_0, H_1)$ warm-up strategies demonstrates similar performance. We report the results in \Cref{fig:vit_llm_warmup_ablation_warmup}.

\begin{figure}
    \centering
    \begin{tabular}{cccc}
    \hspace{-3mm}\includegraphics[width=0.23\linewidth]{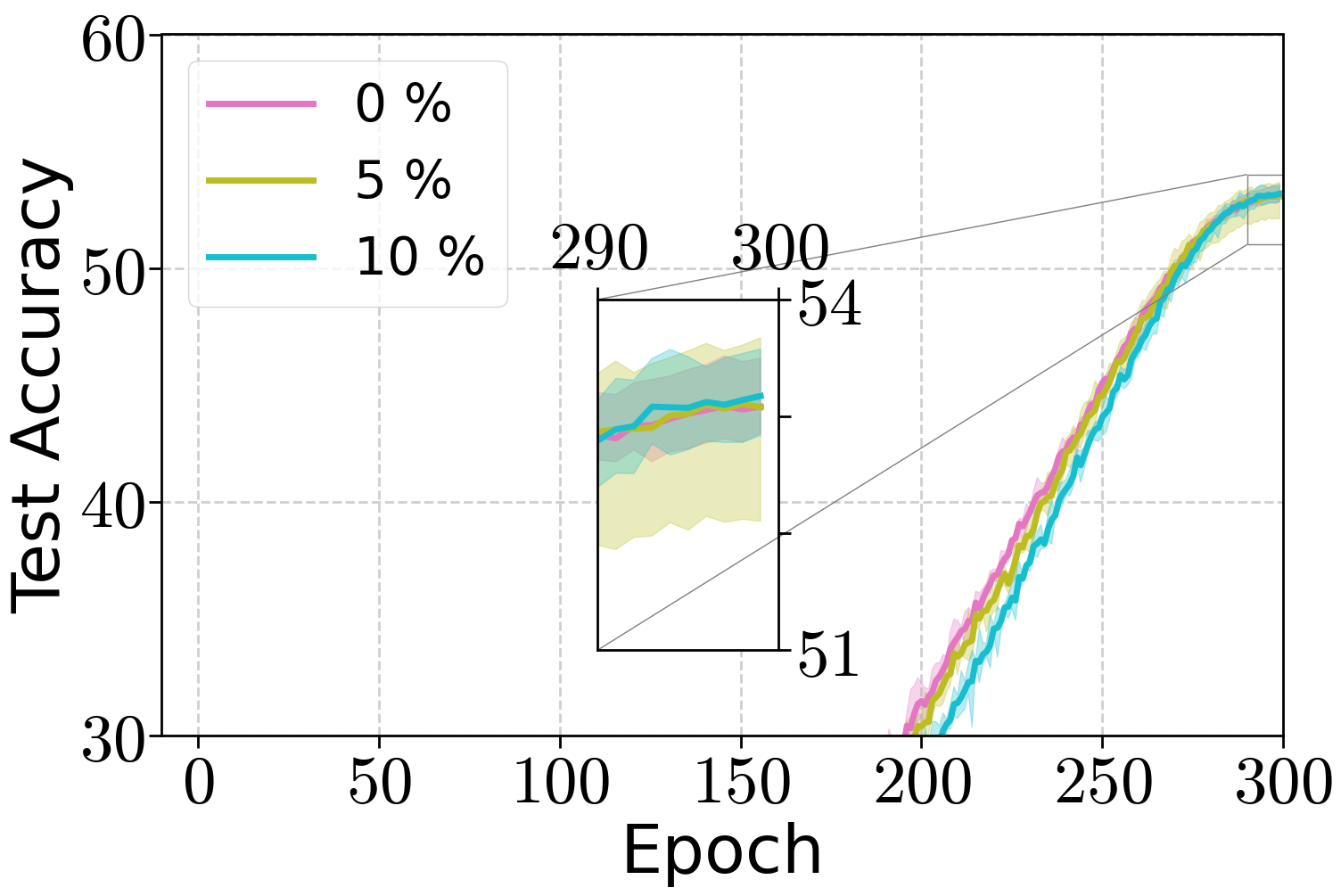} &
    \includegraphics[width=0.23\linewidth]{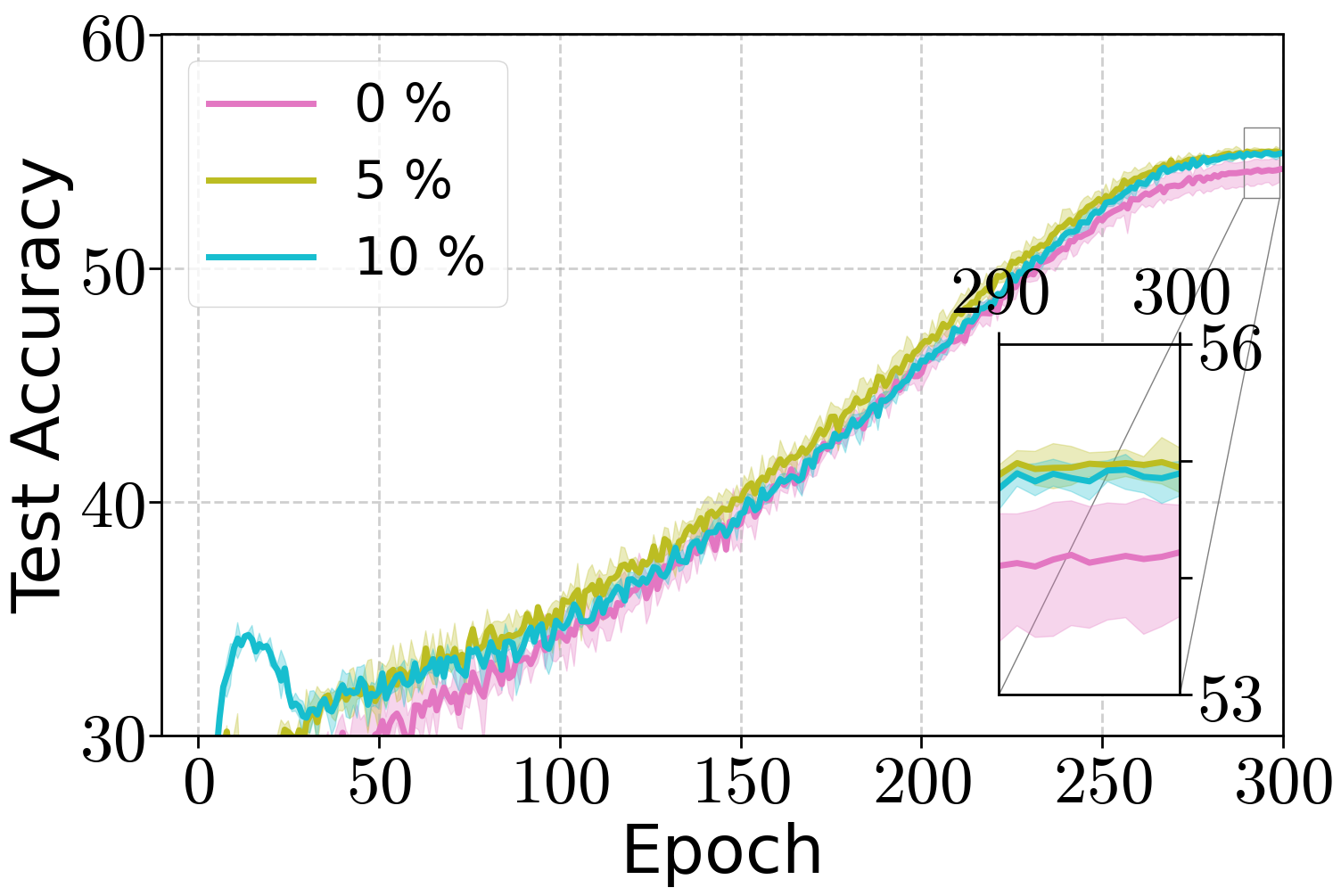} & 
     \includegraphics[width=0.23\linewidth]{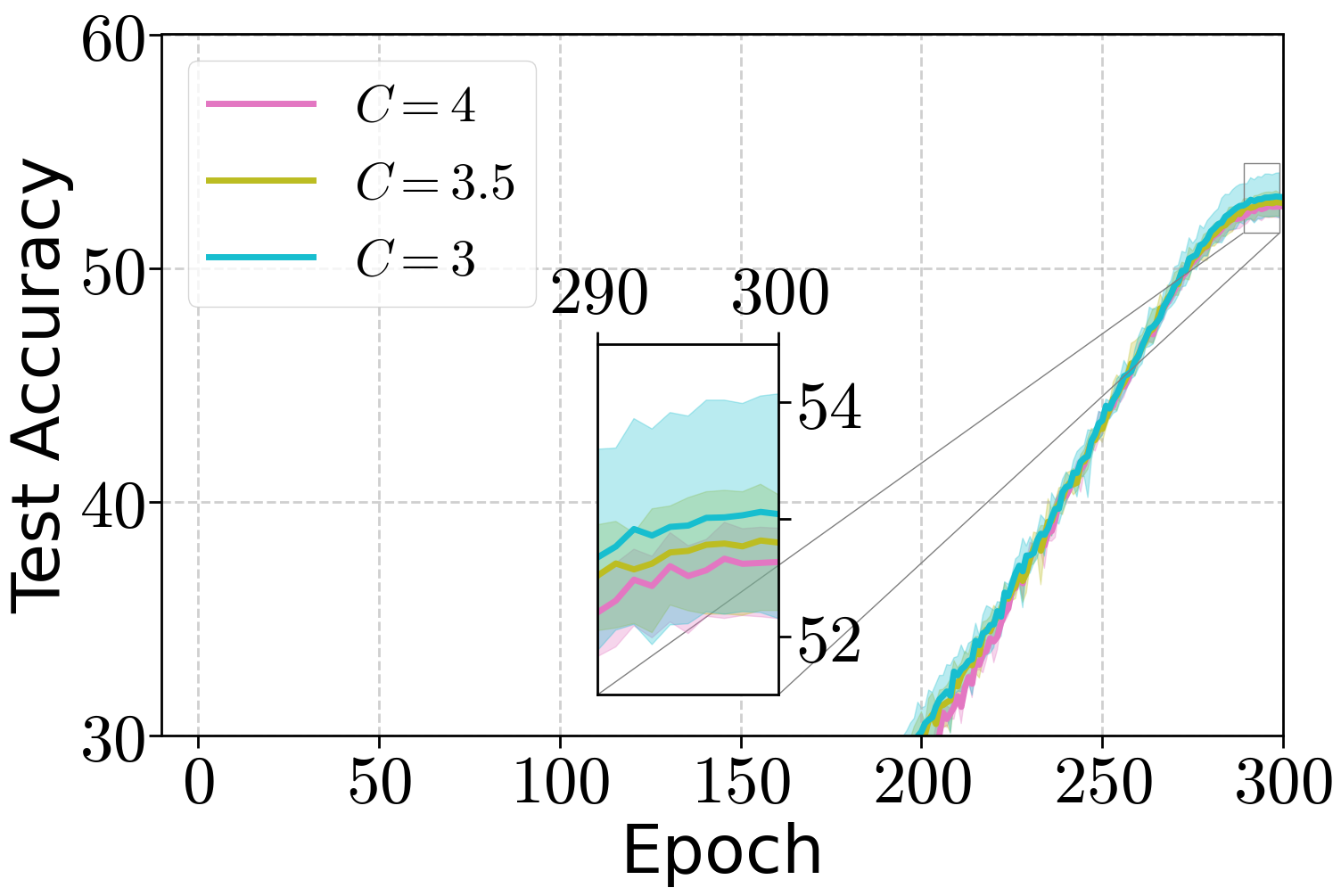} &
     \includegraphics[width=0.23\linewidth]{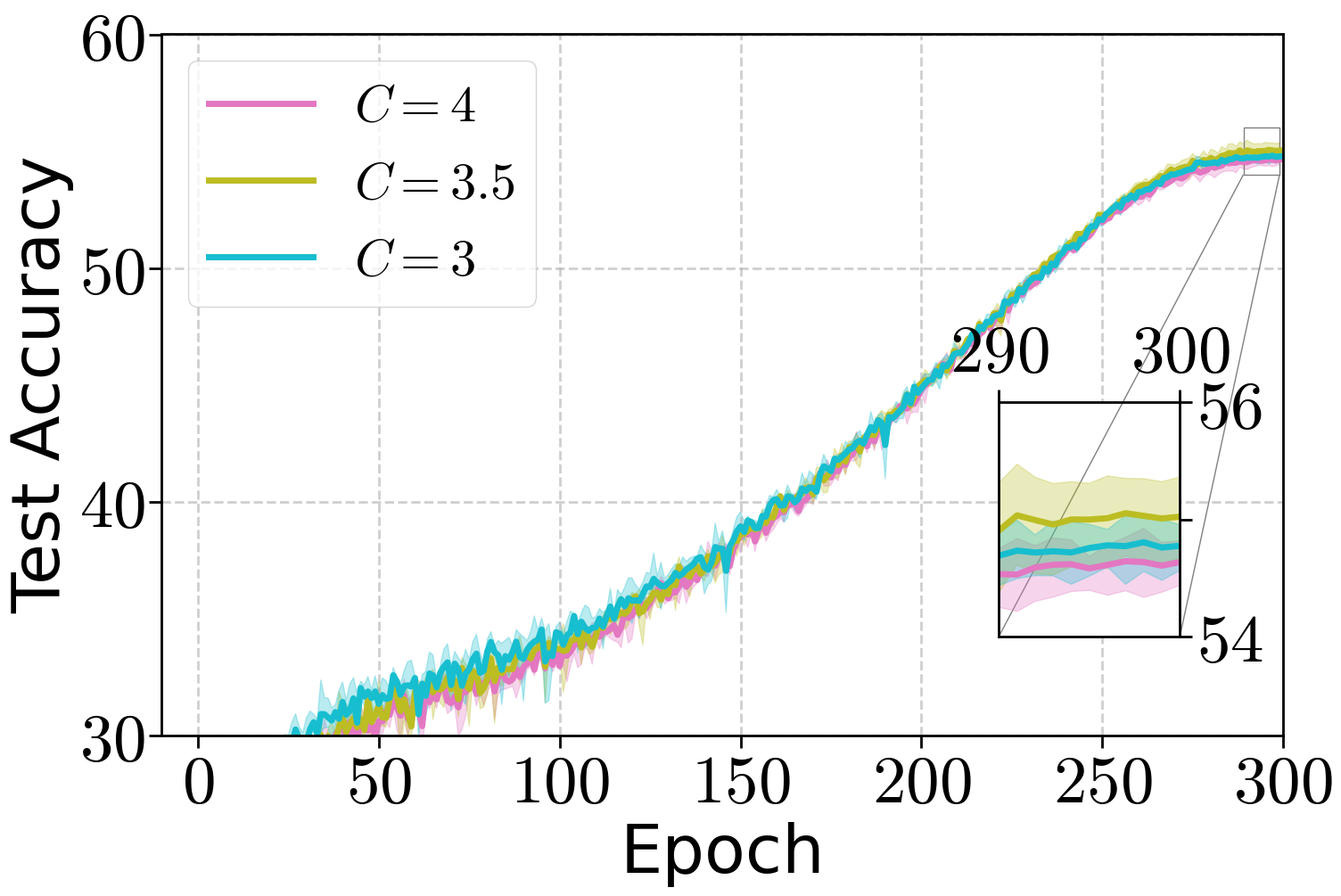} 
    \\
    \makecellnew{Linear warm-up\\ LR $3\cdot 10^{-2}$ }&
    \makecellnew{Linear warm-up \\ LR $10^{-2}$} &
    \makecellnew{$(H_0, H_1)$ warm-up\\ LR $3\cdot 10^{-2}$} &
     \makecellnew{$(H_0, H_1)$ warm-up\\ LR $10^{-2}$}
    \end{tabular}
    
    \caption{Training of ViT model on ImageNet-32 dataset varying the length of linear warm-up (two left figures) and threshold $C$ of $(H_0, H_1)$ warm-up (two right figures) for the peak learning rate $3\cdot 10^{-2}$ and $10^{-2}.$}
    \label{fig:vit_llm_warmup_ablation_warmup}
\end{figure}

\subsubsection{Performance Varying Peak Learning Rate}

\paragraph{Language Modeling.}
We now present performance curves under different peak learning rates for all warm-up strategies: 10\% linear warm-up and $(H_0, H_1)$ warm-up with $C=4$. As shown in \Cref{fig:70M_160M_llm_warmup_ablation_lr}, smaller models are less sensitive to high peak learning rates when using $(H_0, H_1)$ warm-up. However, for the largest 410M model, even slightly exceeding the optimal peak learning rate produces large spikes with $(H_0, H_1)$ warm-up, though \algname{AdamW} eventually recovers. In contrast, linear warm-up proves more robust to peak learning rate selection.

\begin{figure}
    \centering
    \begin{tabular}{cccc}
    \includegraphics[width=0.23\linewidth]{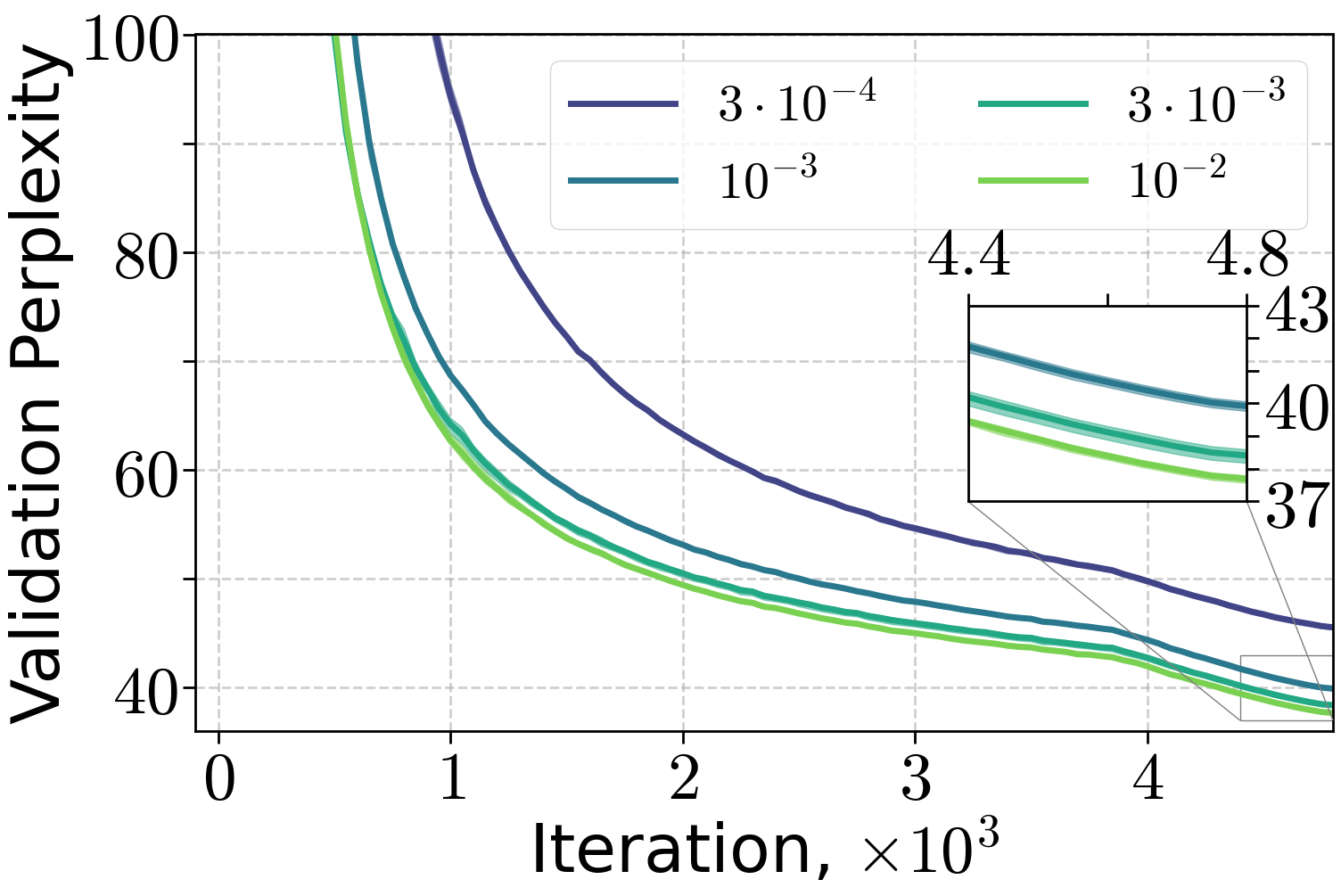} &
    \includegraphics[width=0.23\linewidth]{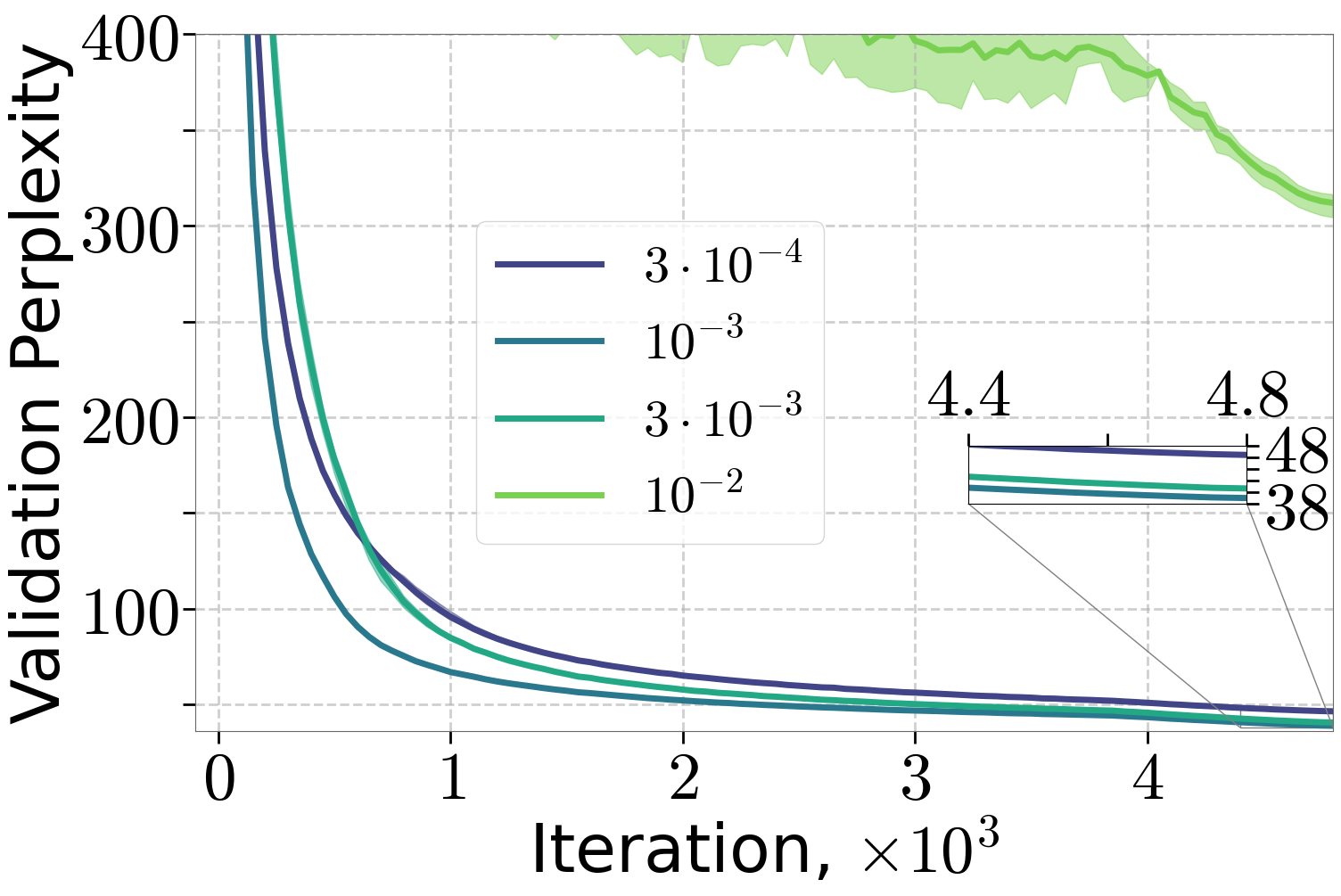} & 
    \includegraphics[width=0.23\linewidth]{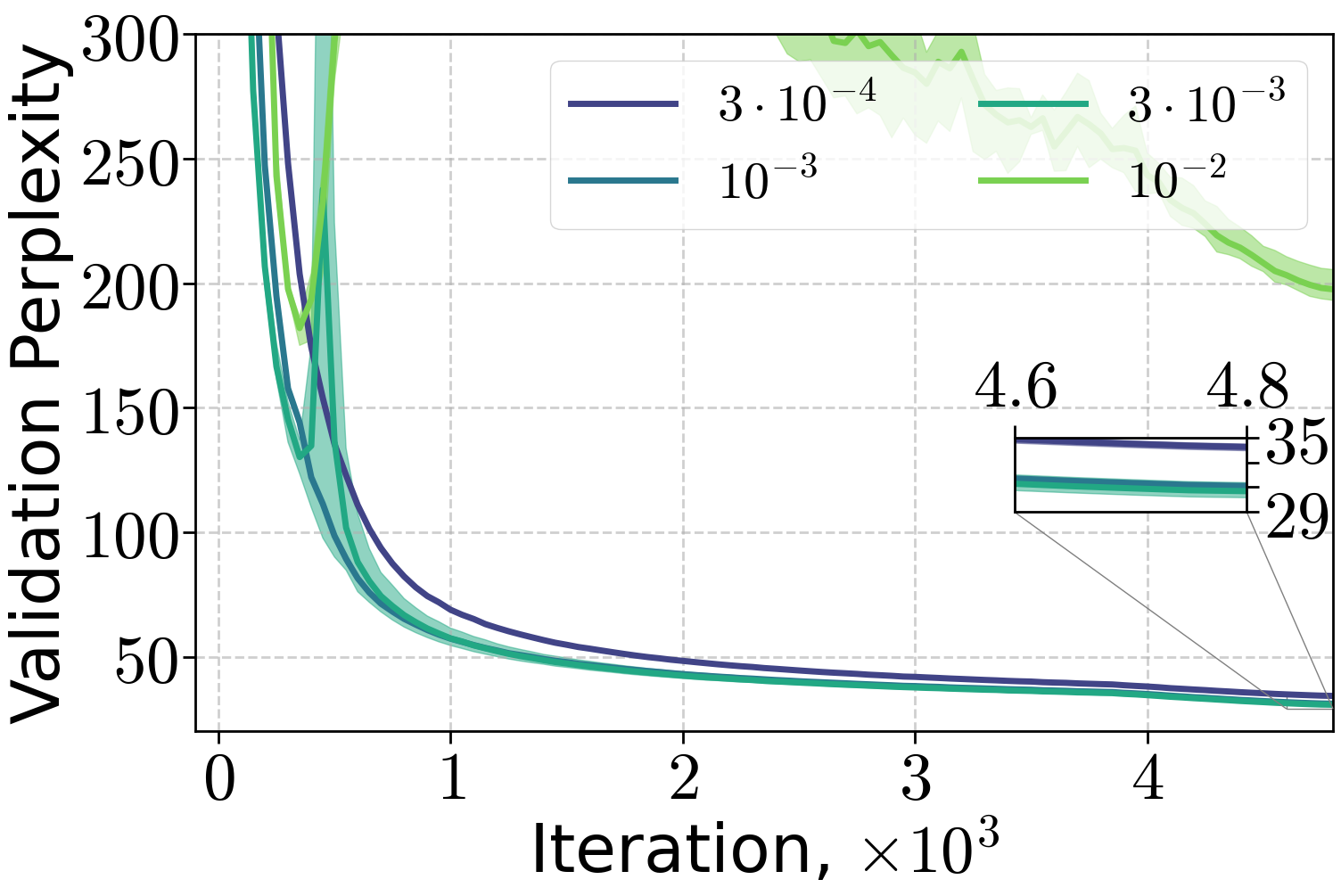} &
    \includegraphics[width=0.23\linewidth]{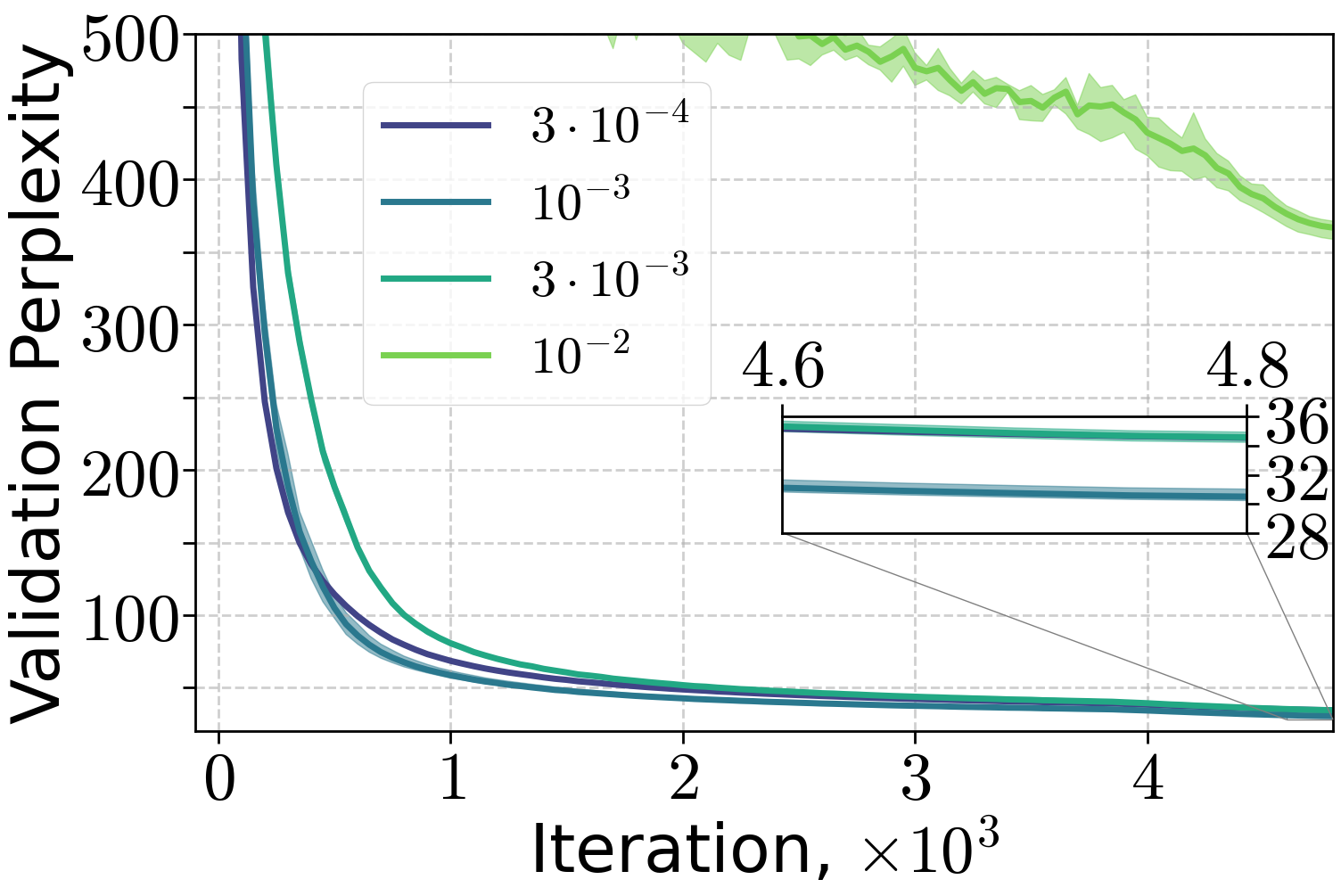} 
    \\
    \makecellnew{70M, Linear\\ warm-up 10 \% }&
     \makecellnew{70M, $(H_0, H_1)$\\ warm-up $C=4$} &
     \makecellnew{160M, Linear \\ warm-up 10 \% }&
     \makecellnew{160M, $(H_0, H_1)$\\  warm-up $C=4$} \\
     &
     \includegraphics[width=0.23\linewidth]{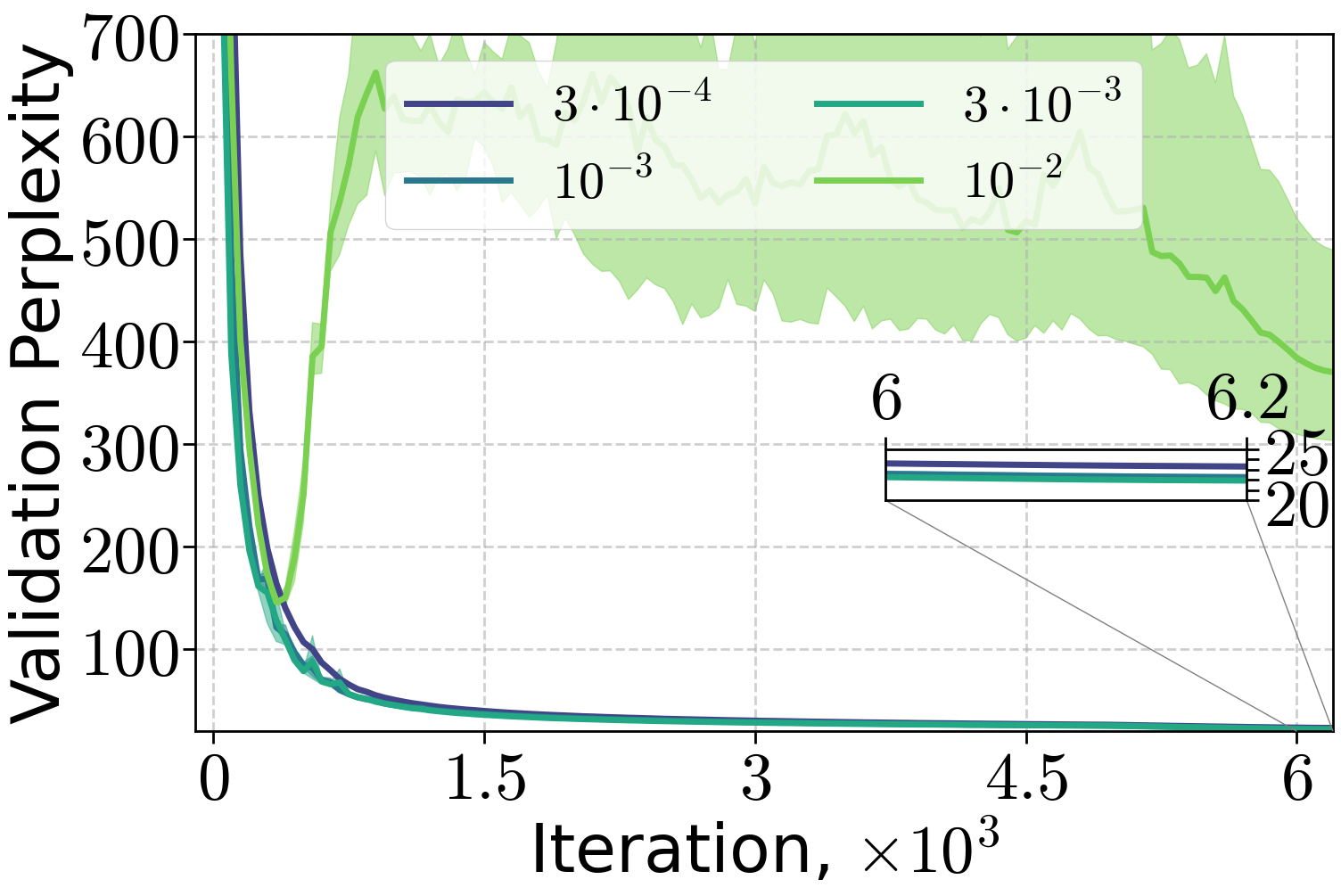} &
    \includegraphics[width=0.23\linewidth]{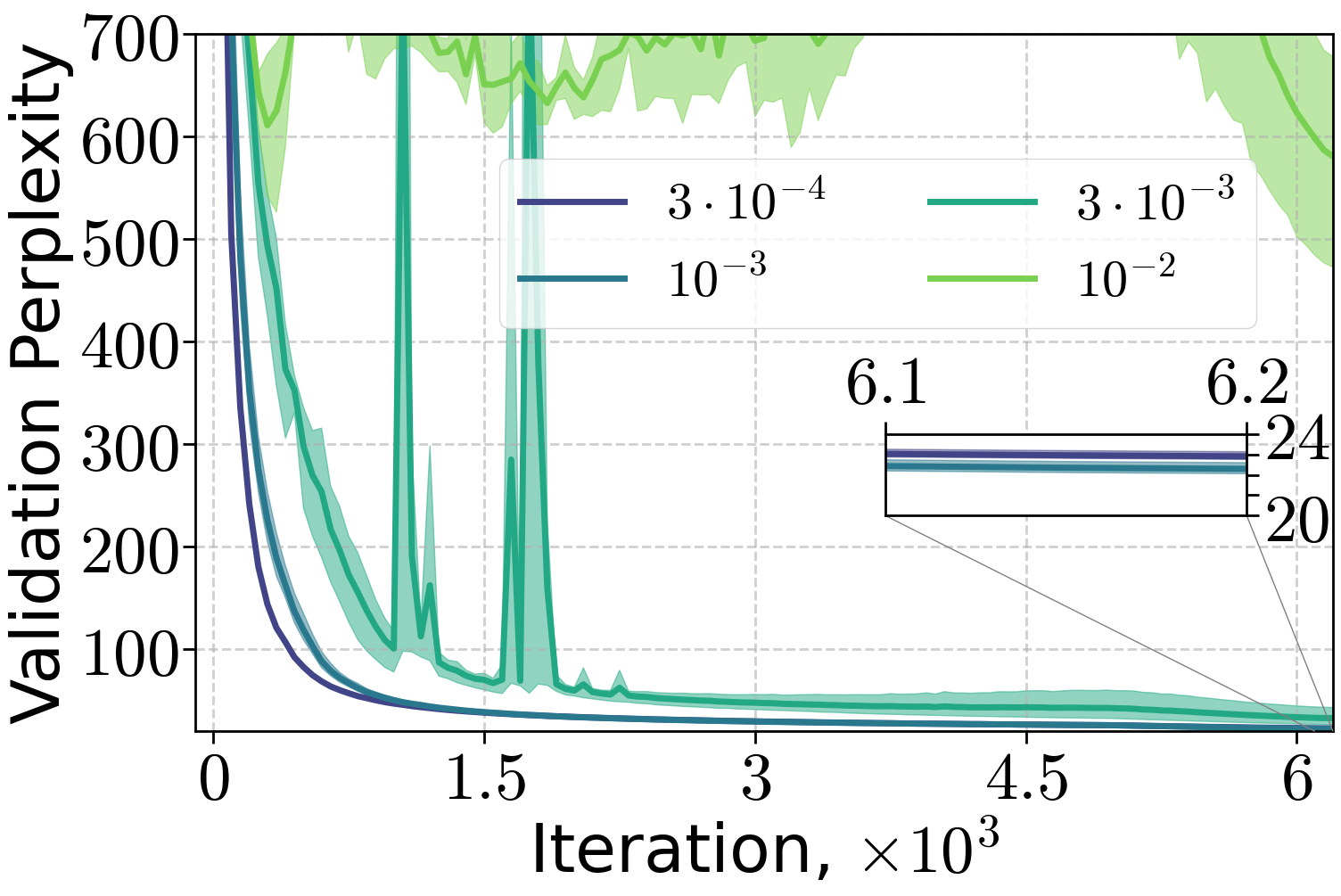} \\
     & 
     \makecellnew{410M, Linear\\ warm-up 10 \% } &
     \makecellnew{410M, $(H_0, H_1)$\\ warm-up $C=4$} &
     
    \end{tabular}
    
    \caption{Training of 70M and 160M language models on FineWeb dataset, varying the peak learning rate with 10 \% linear warm-up and $(H_0, H_1)$ warm-up with $C=4$.}
    \label{fig:70M_160M_llm_warmup_ablation_lr}
\end{figure}

\paragraph{Image Classification with ViT.}

Now we conduct similar tests as in the previous section. We report the results for three warm-up strategies: 5\% linear warm-up and $(H_0, H_1)$ warm-up with $C=3.$ In this case, we observe that both warm-up schedules achieve similar performance; see \Cref{fig:vit_warmup_ablation_lr}.

\begin{figure}
    \centering
    \begin{tabular}{cc}
    \includegraphics[width=0.23\linewidth]{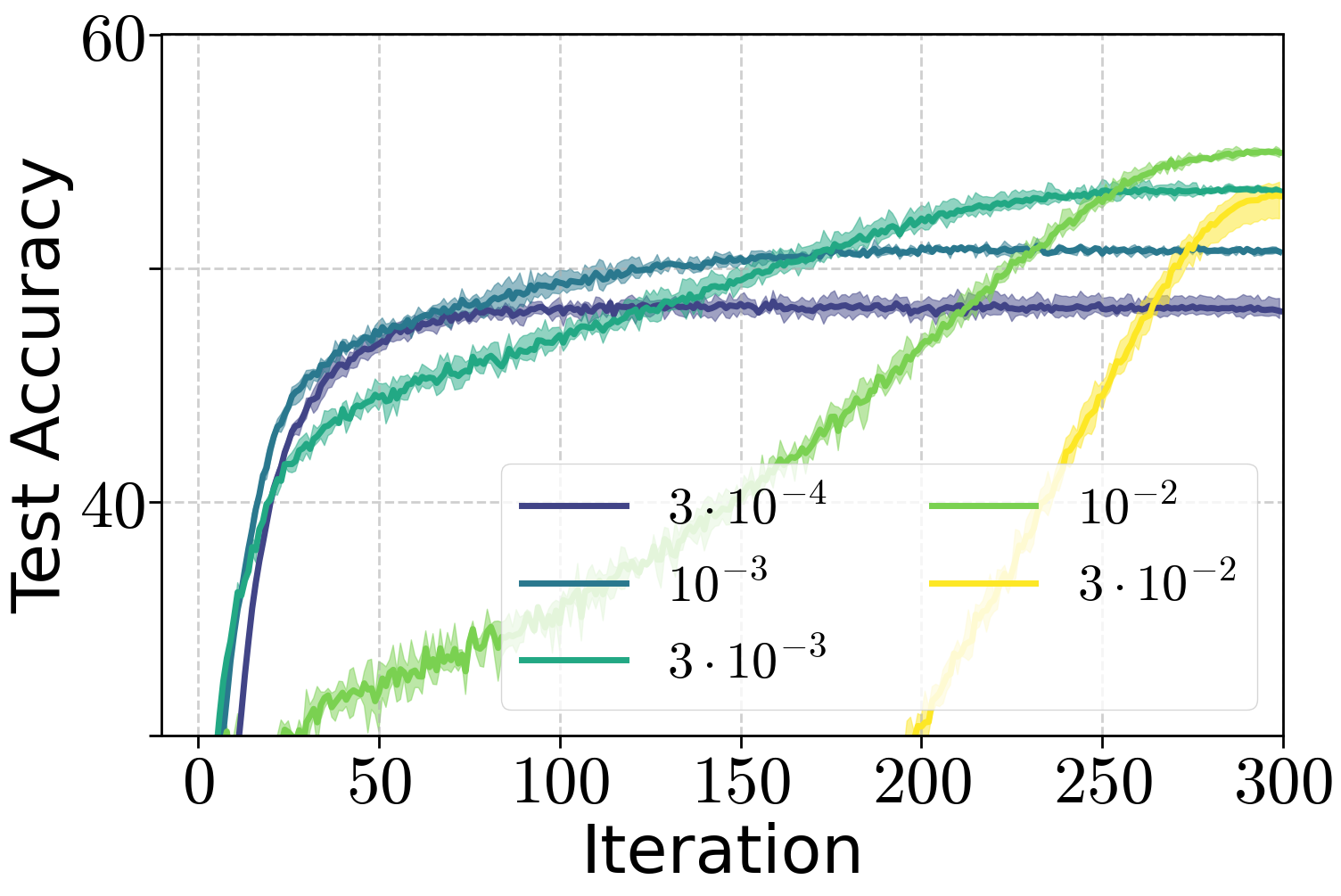} &
    \includegraphics[width=0.23\linewidth]{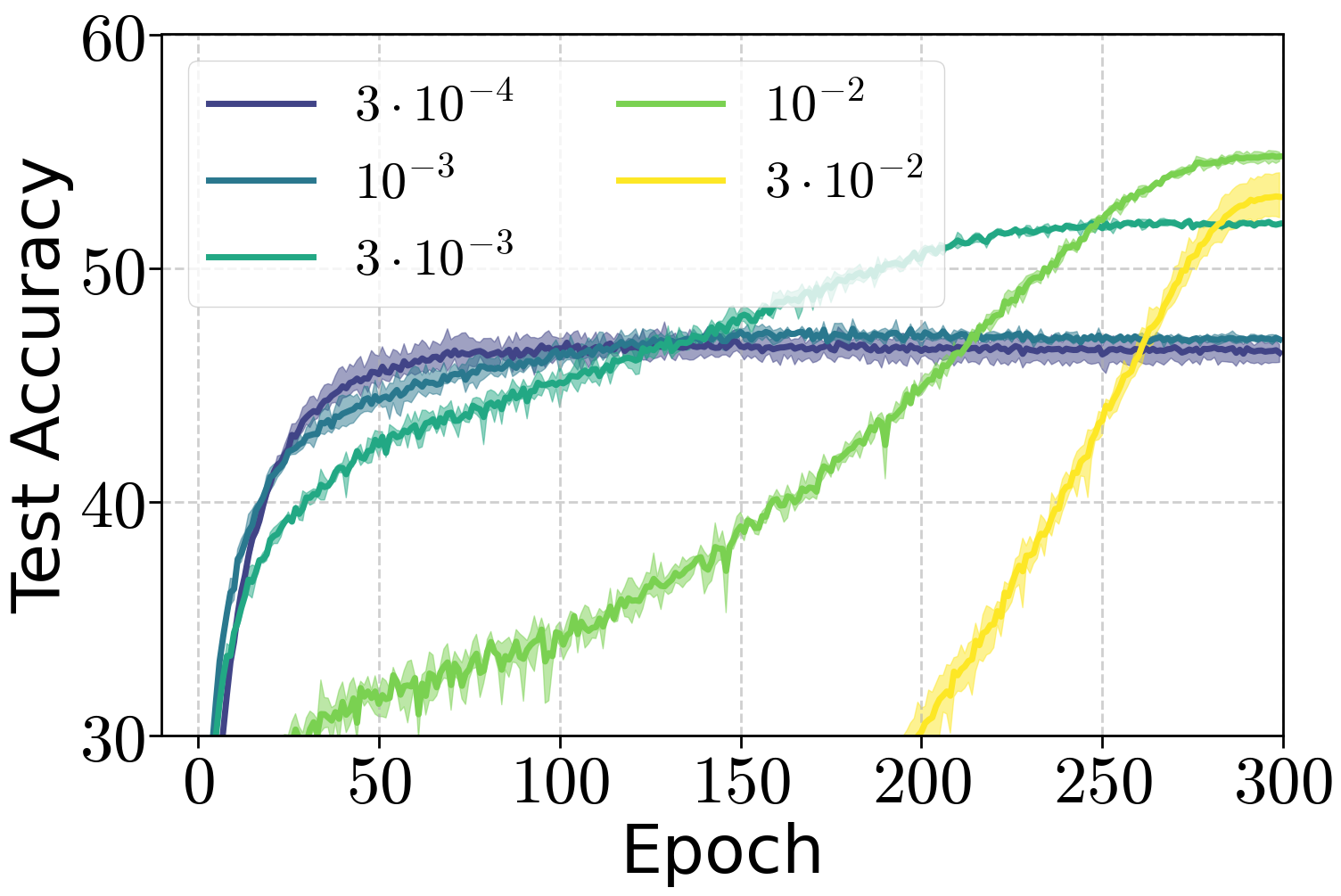} \\
    \makecellnew{ViT, Linear\\ warm-up 5 \% } &
     \makecellnew{ViT, $(H_0, H_1)$\\ warm-up $C=3$} 
     
    \end{tabular}
    
    \caption{Training of ViT model on ImageNet-32 dataset, varying the peak learning rate with 5 \% linear warm-up and $(H_0, H_1)$ warm-up with $C=3$.}
    \label{fig:vit_warmup_ablation_lr}
\end{figure}

\end{document}